\crefname{appendix}{Supplemental Material}{Supplemental Materials}
\theoremstyle{plain}
\newtheorem{definition}{Definition}
\theoremstyle{plain}
\newtheorem{theorem}{Theorem}
\newtheorem{lemma}{Lemma}
\newtheorem{corollary}{Corollary}
\newcommand{\cesaro}[1]{\left\langle #1  \right\rangle}
\newcommand\lrstack{%
  {\,\mathrel{\vcenter{\mathsurround0pt
    \ialign{##\crcr
      \noalign{\nointerlineskip}\scalebox{0.9}{$\scriptstyle\rightarrow$}\crcr
      \noalign{\nointerlineskip}\scalebox{0.9}{$\scriptstyle\leftarrow$}\crcr
    }%
  }}\,}%
}
\newcommand\lrstacksmall{%
  {\mathrel{\vcenter{\mathsurround0pt
    \ialign{##\crcr
      \noalign{\nointerlineskip}\scalebox{0.5}{$\rightarrow$}\crcr
      \noalign{\nointerlineskip}\scalebox{0.5}{$\leftarrow$}\crcr
    }%
  }}\hspace{0.08em}}%
}
\newcommand{\kb}{{k_\mathrm{B}}}
\newcommand{\isaac}{\color{orange}}
\newcommand{\blk}{\color{black}}
\definecolor{mblue}{rgb}{0.37,0.51,0.71}
\definecolor{morange}{rgb}{0.88,0.61,0.14}
\definecolor{mgreen}{rgb}{0.56,0.69,0.19}
\definecolor{mred}{rgb}{0.92,0.39,0.21}
\begin{document}

\title{The Work Capacity of Channels with Memory:\\Maximum Extractable Work in Percept-Action Loops}

\author{Lukas J. Fiderer}
\email{lukasjfiderer@gmail.com}
\author{Paul C. Barth}
\author{Isaac D. Smith}
\author{Hans J. Briegel}
\affiliation{Universit\"{a}t Innsbruck, Institut f\"{u}r Theoretische Physik, Technikerstraße 21a, A-6020 Innsbruck, Austria}

\begin{abstract}
Predicting future observations plays a central role in machine learning, biology, economics, and many other fields. It lies at the heart of organizational principles such as the variational free energy principle and has even been shown---based on the second law of thermodynamics---to be necessary for reaching the fundamental energetic limits of sequential information processing. While the usefulness of the predictive paradigm is undisputed, complex adaptive systems that {\em interact} with their environment are more than just predictive machines: they have the power to act upon their environment and cause change. In this work, we develop a framework to analyze the thermodynamics  of information processing in percept-action loops---a model of agent–environment interaction---allowing us to investigate the thermodynamic implications of actions and percepts on equal footing. To this end, we introduce the concept of work capacity---the maximum rate at which an agent can expect to extract work from its environment. Our results reveal that neither of two previously established design principles for work-efficient agents---maximizing predictive power and forgetting past actions---remains optimal in environments where actions have observable consequences. Instead, a trade-off emerges: work-efficient agents must balance prediction and forgetting, as remembering past actions can reduce the available free energy.  This highlights a fundamental departure from the thermodynamics of passive observation, suggesting that prediction and energy efficiency may be at odds in active learning systems.
\end{abstract}
\maketitle

\section{Introduction}
Percept-action loops---cycles in which an agent perceives its environment, processes and stores information, and acts to influence future perception---underlie adaptive behavior in both biological and artificial systems. Such loops can be observed across various domains, from humans learning chess, to animals foraging, to artificial intelligence models engaging in dialogue.
Despite the diverse range of examples, certain principles governing the energetics of these processes are shared across domains.

Energetic considerations in biology have been linked to a wide range of animal behaviors and physiological processes. An example from the former includes the energy-saving flight patterns of albatrosses \cite{sachs2012flying} and from the latter information processing in the brain, where energy consumption associated with neural signaling is minimized through efficient coding strategies \cite{lennie2003cost,yu2017energy}. At the molecular level, ribosomes have been shown to perform simple decoding computations at energy costs within an order of magnitude of Landauer’s bound—significantly outperforming even the most advanced supercomputers \cite{kempes2017thermodynamic}. Indeed, in artificial intelligence, the energetic cost of supercomputers is becoming an increasing concern, particularly in the training of large neural networks \cite{thompson2020computational}, resulting in performance-power trade-offs in large language models \cite{mcdonald2022great}.

\begin{figure}
    \centering
    \includegraphics[width=0.8\linewidth]{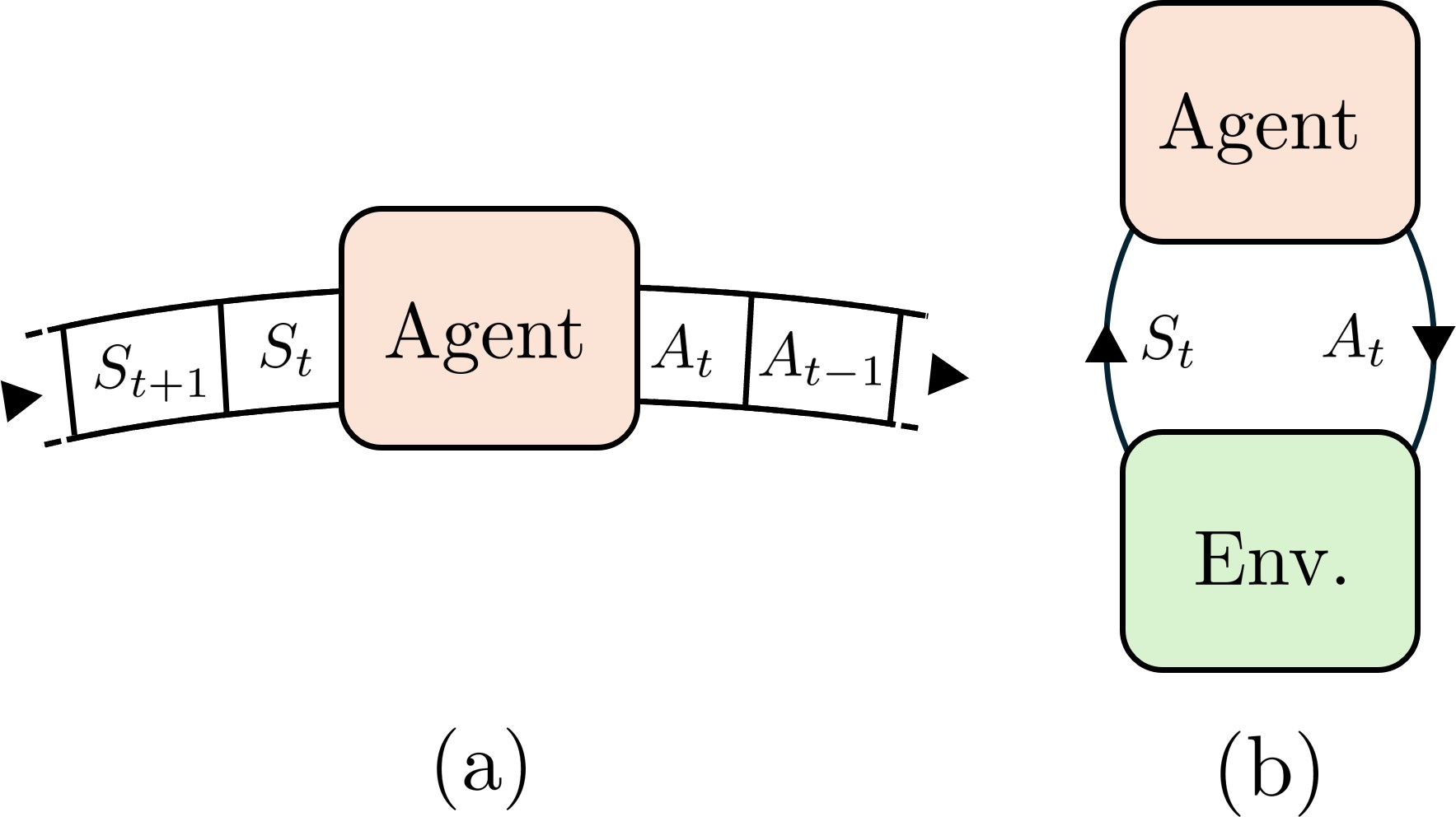}
    \caption{Tape setting (a) and percept-action loop setting (b). In the tape setting, (a), an agent processes symbols $S_t$ from a pre-existing tape. Outgoing symbols $A_t$ do not influence future inputs. In the percept-action loop setting, (b), the agent interacts with an environment (Env.) in rounds. In round $t$, the agent provides an \emph{action} symbol $A_t$ and receives a \emph{percept} symbol $S_t$ from the environment. Both the agent and environment can have memory, allowing future percepts to depend on past actions.}
    \label{fig:1}
\end{figure}

This raises fundamental questions: What are the energetic limits of adaptive information processing in percept-action loops? And how should efficient agents be designed?

These questions can be tackled by reducing the problem to an information-theoretic model of percept-action loops. By abstracting away implementation-dependent details, we derive energetic bounds that arise solely from the intrinsic cost of information processing, as analyzed through nonequilibrium thermodynamics~\cite{seifert2012stochastic,parrondo2015thermodynamics}. Indeed, inspired by Maxwell’s demon, nonequilibrium thermodynamics has been applied to investigate energetics in the \emph{tape setting} (see \Cref{fig:1}), where agents sequentially process and modify symbols on a pre-existing tape \cite{mandal2012work, mandal2013maxwell, barato2013autonomous, barato2014unifying, hoppenau2014energetics, merhav2015sequence, boyd2017correlation, garner2017thermodynamics, boyd2018thermodynamics, garner2021fundamental, boyd2022thermodynamic, elliott2022quantum, huang2023engines, thompson2025energetic}. In this framework, predictable correlations in the input serve as an energetic resource, while generating correlations in the output incurs an energy cost. However, existing works typically assume stationary input patterns and exclude feedback between agent and environment, leaving the thermodynamics of genuine percept-action loops largely unexplored (but see \cite{zambon2024quantum} for a recent exception, investigating quantum processes with feedback).

In this work, we model both agent and environment as hidden Markov channels. By combining results from stochastic thermodynamics \cite{seifert2012stochastic,parrondo2015thermodynamics} with the information theory of hidden Markov models \cite{ephraim2002hidden}, we obtain a framework which goes beyond prior work situated in the tape setting by relaxing the assumption of stationary input patterns and incorporating feedback between agent and environment. This framework is the primary contribution of this work.

The central quantity of this work is the work capacity---the optimal rate of energy production achievable by any agent---which, analogous to communication capacity, is an intrinsic information-theoretic property of the environment channel.

The investigation of work capacity in the framework developed here leads to two key results: (i) in the absence of feedback, where the agent’s percepts are not influenced by its actions, we extend prior results \cite{boyd2018thermodynamics} beyond the stationary regime, showing that agents can reach the work capacity of the environment if and only if they are maximally predictive of their percepts while choosing actions randomly, without retaining memory of them; (ii) in the presence of feedback, maximally predictive agents are generally inefficient. This counterintuitive result highlights crucial distinctions between cyclic information processing in percept-action loops and linear information processing on a tape.

In the following sections, we first introduce the percept-action loop framework (\Cref{sec:2}) and define what it means for an agent to be maximally predictive (\Cref{sec:3}). We then present our results on the work capacity of channels (\Cref{sec:4}) and the design principles of work-efficient agents (\Cref{sec:5}). Finally, we discuss directions for future research and conclude by situating our findings in a broader context (\Cref{sec:6}).

\section{Framework} \label{sec:2}

We consider a classical (as opposed to quantum) agent interacting with a classical environment in discrete time steps (in the following called \emph{rounds}) indexed by \( t \in \mathbb{N}_0 \), where \( \mathbb{N}_0 \) denotes the nonnegative integers. In each round \( t \), the agent selects an action \( A_t \) and subsequently receives a percept \( S_t \) (see \Cref{fig:1}b). Since both the agent and environment may be stochastic, \( A_t \) and \( S_t \) are random variables taking values in finite alphabets \( \mathcal{A} \) and \( \mathcal{S} \). Embedding the smaller alphabet into the larger, lets us set \( \mathcal{A} = \mathcal{S} \), which will be assumed in the following.

Throughout this work, random variables are denoted by capital letters, their realizations by lowercase letters, their alphabets---such as the sets of possible actions and percepts---by calligraphic letters, and sequences of random variables---interpreted as random variables on a product space---by \( A_{t:n} = (A_t, A_{t+1}, \dotsc, A_{n-1}) \). Infinite sequences, known as stochastic processes, are written as \( \bm{A} = A_{0:\infty} \), with analogous notation for their realizations, \(a_{t:n} = (a_t,a_{t+1},\dotsc, a_{n-1})\in \mathcal{A}^n\) and \( \bm{a} = a_{0:\infty} \in\mathcal{A}^{\mathbb{N}_0}\).

The environment and agent are described by channels (conditional probability distributions) \( \nu^{\mathrm{env}}_{\bm{S}|\bm{A}} \) and \( \eta^{\mathrm{agt}}_{\bm{A}|\bm{S}} \), which stochastically map actions \( \bm{A} \) to percepts \( \bm{S} \) and vice versa. We assume these channels are \textit{causal} (respecting time ordering such that future outputs cannot influence past inputs) and admit a \textit{finite memory} implementation. The finite-memory assumption is both practical and ensures well-behaved asymptotics in percept-action loops. These constraints define what is commonly referred to in the literature as a \textit{finite-state}~\cite{gallager1968information} or \textit{hidden Markov channel}~\cite{ephraim2002hidden} (see 
\Cref{supp:3}
for a more in-depth exposition).
\begin{definition}\label{def:environment_channel}
		A channel $\nu^{\mathrm{env}}_{\bm{S}|\bm{
				A}}$ is an \emph{\textbf{environment channel}}, denoted as
                \begin{align}
\normalfont \texttt{env}\coloneqq\nu^{\mathrm{env}}_{\bm{S}|\bm{A}}, \label{eq:def_env}
\end{align} if there exists a finite set of states $\mathcal{Z}$, a distribution $p_{Z_{0}}$ over $\mathcal{Z}$, and a
        transition matrix $\Phi = \left(\phi(j|i)\right)_{j,i}$ with $i\in\mathcal{A}\times \mathcal{Z}$ and $j\in\mathcal{S}\times \mathcal{Z}$ such that
		\begin{align}
		\nu^{\mathrm{env}}_{\bm{S}|\bm{
				A}}(\bm{s}|\bm{
			a})=\sum_{\bm{z}}p^{\mathrm{env}}_{Z_{0}}(z_{0})\prod_{t=0}^{\infty}\phi^{\mathrm{env}}
			\left(s_t,z_{t+1}|a_t,z_t\right)
		\end{align} 
		where the sum runs over all $\bm{z}\in\mathcal{Z}^{\mathbb{N}_0}$.
		Then, the tuple
        \begin{align}
            \normalfont{\texttt{envM}} \coloneqq (\Phi^{\mathrm{env}}, p^{\mathrm{env}}_{Z_{0}}) \label{eq:envM}
        \end{align}
        is called a (hidden Markov) \emph{\textbf{environment model}} of $\nu^{\mathrm{env}}_{\bm{Y}|\bm{
				X}}$ and $z\in\mathcal{Z}$ the \emph{\textbf{hidden states}} of the model.
	\end{definition}
While a channel describes only the input-output behavior, a hidden Markov model provides an explicit memory-based mechanism that generates temporal correlations. Agents are defined analogously, with the key distinction that the agent initiates the percept-action loop by selecting a first action \( A_0 \) (see \Cref{fig:2}):

\begin{figure}
    \centering
    \includegraphics[width=0.99\linewidth]{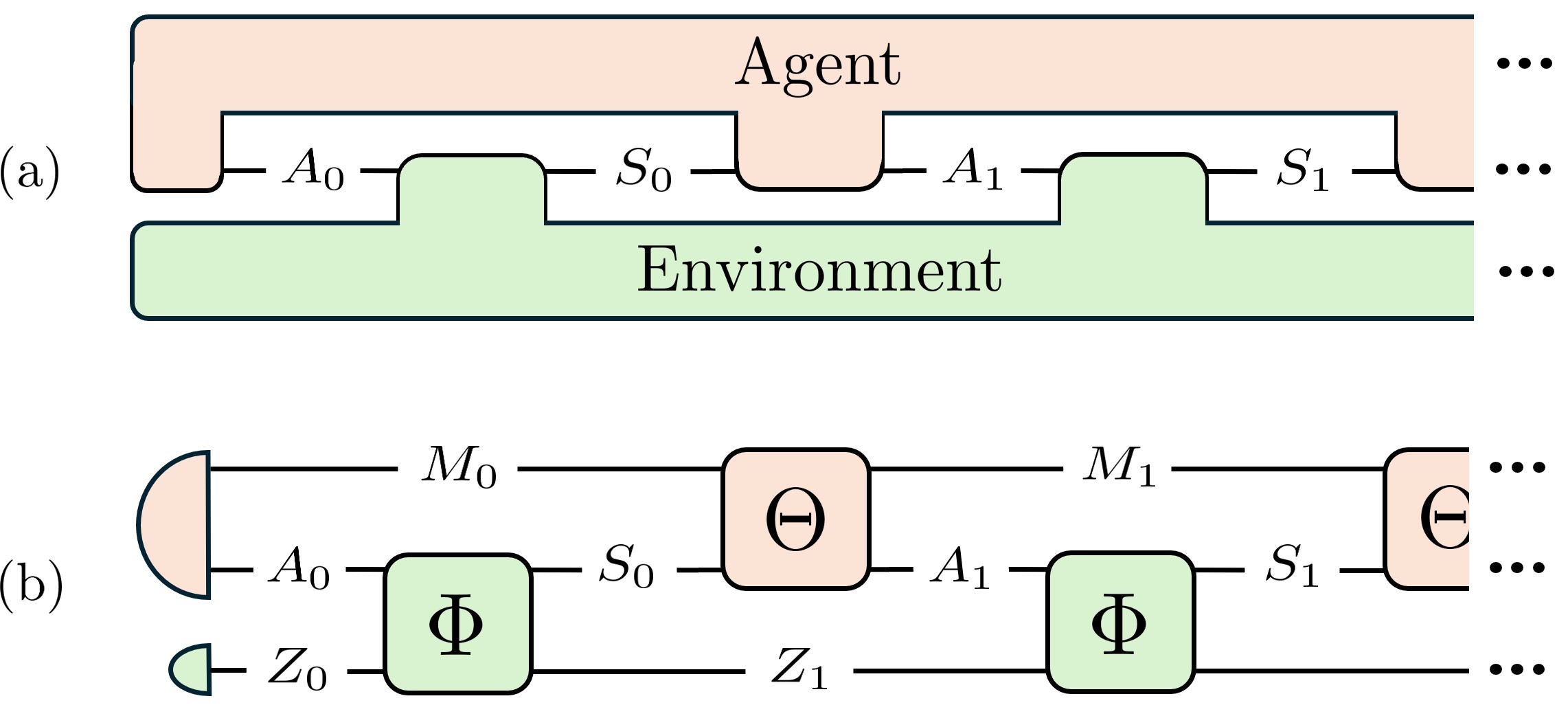}
    \caption{Circuit representation of percept-action loops, with time flowing from left to right. (a) The agent and environment are modeled as channels with memory. (b) The agent and environment are represented by their hidden Markov models, characterized by finite adaptive memories \( M_t \) and \( Z_t \). The transition matrices \( \Theta \) and \( \Phi \) remain fixed over time.}
    \label{fig:2}
\end{figure}
    
\begin{definition}\label{def:agent_channel}
		A channel $\eta^{\mathrm{agt}}_{\bm{A}|\bm{S}}$
is an \emph{\textbf{agent channel}}, denoted as
 \begin{align}
\normalfont \texttt{agt}\coloneqq\eta^{\mathrm{agt}}_{\bm{A}|\bm{S}},
\end{align}
if there exists a finite set of states $\mathcal{M}$, a distribution $p^{\mathrm{agt}}_{A_0 M_{0}}$ over $\mathcal{A}\times \mathcal{M}$, and a
        transition matrix $\Theta^{\mathrm{agt}} = \left(\theta(j|i)\right)_{j,i}$ with $i\in\mathcal{S}\times \mathcal{M}$ and $j\in\mathcal{A}\times \mathcal{M}$ such that
		\begin{align}
		\eta^{\mathrm{agt}}_{\bm{A}|\bm{
				S}}(\bm{a}|\bm{
			s})=\sum_{\bm{m}}p^{\mathrm{agt}}_{A_0 M_0}(a_0, m_0)\prod_{t=0}^{\infty}\theta^{\mathrm{agt}}
			\left(a_{t+1},m_{t+1}|s_t,m_t\right),
		\end{align} 
		where the sum runs over all $\bm{m}\in\mathcal{M}^{\mathbb{N}_0}$.
		Then, the tuple
        \begin{align}
            \normalfont \texttt{agtM}\coloneqq (\Theta^{\mathrm{agt}}, p^{\mathrm{agt}}_{A_0M_{0}}) \label{eq:agtM}
        \end{align}
        is called a (hidden Markov) \emph{\textbf{agent model}}, of {\normalfont \texttt{agt}} and $m\in\mathcal{M}$ the \emph{\textbf{memory states}} of the model.
	\end{definition}
     An agent can be understood as possessing two types of memory: (i) \emph{algorithmic memory}, which remains fixed for all times and stores the agent’s transition matrix \( \Phi \), effectively representing the agent’s algorithm (analogous to DNA in a biological context), and (ii) \emph{adaptive memory} \( \mathcal{M} \), which stores information about the past percept-action sequence and, through the action of \( \Phi \), influences future actions.

      With the definitions of agent and environment channels, as well as their hidden Markov models, we define percept-action loops as tuples consisting of an agent and an environment. To highlight that the agent and environment mutually interact, these are denoted as \( \textrm{\texttt{agt}}\lrstack \texttt{env} \) or \( \textrm{\texttt{agtM}}\lrstack \texttt{env} \), depending on whether the agent is described by its channel or its model (similarly, \( \texttt{env} \) can be replaced with \( \texttt{envM} \)).  

Each percept-action loop model corresponds to an associated stochastic process. For instance, \( \textrm{\texttt{agt}}\lrstack \texttt{env} \) determines the input-output behavior of the agent and environment, thereby defining the percept-action process $\bm{A}\bm{S}=((A_0,S_0),(A_1,S_1),\dotsc)$ with distribution  
\[
p_{\bm{A}\bm{S}} = \nu^{\mathrm{env}}_{\bm{S}|\bm{A}} \eta^{\mathrm{agt}}_{\bm{A}|\bm{S}},
\]
see also \Cref{fig:2}a. Importantly, the stochastic process corresponding to \( \textrm{\texttt{agtM}}\lrstack \texttt{envM} \), which  has a distribution \( p_{\bm{MASZ}} \) including both the agent’s and the environment’s hidden memory, can be shown to form a finite-state Markov chain. This constitutes  the \emph{global Markov chain} of a percept-action loop (see 
\Cref{supp:4} 
for a proof):  
\[
M_0A_0S_0Z_0 \rightarrow M_1A_1S_1Z_1 \rightarrow \dotsm.
\] 

This Markovian property allows us to leverage existing results on finite-state Markov chains, ensuring that the asymptotic dynamics of percept-action loops are well-behaved \cite{ash2008basic, iosifescu2014finite} (see 
\Cref{supp:2} for an overview).

\section{Maximally Predictive Agents} \label{sec:3}

For a given input-output behavior of the agent and environment, $\texttt{agt} \lrstack \texttt{env}$, what does it mean for an agent to be as predictive as possible of its future percepts? To approach this question, it is helpful to begin with the following observation. In order to endow the agent with knowledge that reduces \emph{uncertainty} about future percepts, a natural first step is to encode $\texttt{agtM} \lrstack \texttt{env} = (\Theta^{\mathrm{agt}}, p^{\mathrm{agt}}_{A_0M_{0}}, \nu^{\mathrm{env}}_{\bm{S}|\bm{A}})$ into its fixed algorithmic memory. In what follows, we assume this is always the case.

With this setup, the agent has access to the distribution of the underlying process, $p_{\bm{MAS}}$, which results in an uncertainty $H(S_t)$ about percept $S_t$, where $H(S_t)$ denotes Shannon entropy in units of bits. If, in addition, the agent takes its memory $M_t$ into account before observing $S_t$, this memory reduces the agent’s expected (with respect to memory states) uncertainty to $H(S_t|M_t)$. This reduction in uncertainty,  
\begin{align}
    I\left[M_t; S_t\right] = H(S_t)-H(S_t|M_t), \label{eq:information_about_percept}
\end{align}
is simply the mutual information $I\left[M_t; S_t\right]$ between $S_t$ and $M_t$, quantifying how much $M_t$ enables the agent to predict $S_t$ (see 
\Cref{supp:1} for some background on information measures).

To enhance its predictive capabilities, the agent can store information from past percepts $S_{0:t}$ and actions $A_{0:t+1}$ in its memory. Since the information that $S_{0:t}A_{0:t+1}$ provides about $S_t$ is given by $I\left[S_{0:t}A_{0:t+1};S_t\right]$, we arrive at the following 
 \begin{definition} \label{def:predictive_main}
Let $\normalfont\texttt{agt}\lrstack \texttt{env}$ be a percept-action loop. A model {\normalfont\texttt{agtM}} for {\normalfont\texttt{agt}} is said to be \emph{maximally predictive}, or for short \emph{\textbf{predictive}}, of percept $S_t$ in round $t$ if 
	\begin{align}
		I\left[A_{{0:t+1}}S_{{0:t}};S_t|M_t\right]=0, \label{eq:predictiveness_definition_single_step_main}
	\end{align}
	and an agent model is said to be \emph{asymptotically mean (\textbf{a.m.}) \textbf{predictive}} if	
		\begin{align}
		\cesaro{I\left[A_{{0:t+1}}S_{{0:t}};S_t|M_t\right]}_t=0, \label{eq:predictive_main}
	\end{align}
    where 
    \begin{align}
    \cesaro{\bullet }_t \coloneqq \lim\limits_{n\rightarrow \infty}\frac{1}n\sum_{t=0}^{n-1}\bullet \label{eq:cesaro}
\end{align}
denotes the Ces\'aro limit, the limit of the arithmetic mean.
\end{definition}

  \begin{figure}
        \centering
        \resizebox{0.9\columnwidth}{!}{
        \begin{tikzpicture}[>=stealth',shorten >=1pt,node distance=2.5cm, scale= 0.83,on grid,auto, state/.style={circle, draw, minimum size=0.8cm, inner sep=1pt},font=\tiny]
        
        \def \n {1} 
        
        \def \S {S} 
        \def \A {A}  
        \def \M {M}               
        \def \Z {Z}               
        \def \t {t} 
        
        \def\XCol{{0,2,0,0,0,0}} 
        \def\SCol{{1,3,,0,0,0}} 
        \def\ACol{{1,1,0,0,0,0}} 
        \def\ZCol{{0,0,0,0,0,0}} 

        \foreach \step in {0,...,\n} {
          \pgfmathtruncatemacro{\ms}{-\step}
          \pgfmathtruncatemacro{\aftestate}{-\step+1}
          \pgfmathtruncatemacro{\prevstate}{\step-1}
        
          \ifnum \step= 0
            \pgfmathsetmacro{\xc}{\XCol[\n]}
            \pgfmathsetmacro{\sc}{\SCol[\n]}
            \pgfmathsetmacro{\ac}{\ACol[\n]}
            \pgfmathsetmacro{\zc}{\ZCol[\n]}
        
            \node[state, fill={\ifnum\xc=1 mred!50\else\ifnum\xc=2 mblue!50\else\ifnum\xc=3 mgreen!50\else white\fi\fi\fi}] (Xn-\step) at (0.75,0.75) {\(\M_\t\)};
            \node[state, fill={\ifnum\ac=1 mred!50\else\ifnum\ac=2 mblue!50\else\ifnum\ac=3 mgreen!50\else white\fi\fi\fi}] (An-\step) at (0,-1.5) {\(\A_\t\)};
            \node[state, fill={\ifnum\sc=1 mred!50\else\ifnum\sc=2 mblue!50\else\ifnum\sc=3 mgreen!50\else white\fi\fi\fi}] (Bn-\step) at (1.5,-1.5) {\(\S_\t\)};
            \node[state, fill={\ifnum\zc=1 mred!50\else\ifnum\zc=2 mblue!50\else\ifnum\zc=3 mgreen!50\else white\fi\fi\fi}] (Zn-\step) at (-0.75,-3.75) {\(\Z_\t\)};
            \node[circle, draw=black, fill=black!40, inner sep=2pt] (Tn-0) at (0.75,-2.25) {};
            \node[circle, draw=black, fill=black!40, inner sep=2pt] (Fn-0) at (-0.75,-0.75) {};
          \else
        
            \pgfmathsetmacro{\xc}{\XCol[\n+\step]}
            \pgfmathsetmacro{\sc}{\SCol[\n+\step]}
            \pgfmathsetmacro{\ac}{\ACol[\n+\step]}
            \pgfmathsetmacro{\zc}{\ZCol[\n+\step]}
        
            \node[state, fill={\ifnum\xc=1 mred!50\else\ifnum\xc=2 mblue!50\else\ifnum\xc=3 mgreen!50\else white\fi\fi\fi}] (Xn-\step) [right of = Xn-\prevstate] {\(\M_{\t+\step}\)};
            \node[state, fill={\ifnum\ac=1 mred!50\else\ifnum\ac=2 mblue!50\else\ifnum\ac=3 mgreen!50\else white\fi\fi\fi}] (An-\step) [right of = An-\prevstate] {\(\A_{\t+{\step}}\)};
            \node[state, fill={\ifnum\zc=1 mred!50\else\ifnum\zc=2 mblue!50\else\ifnum\zc=3 mgreen!50\else white\fi\fi\fi}] (Zn-\step) [right of = Zn-\prevstate] {\(\Z_{\t+\step}\)};
            \node[circle, draw=black, fill=black!40, inner sep=2pt] (Fn-\step) [right of = Fn-\prevstate] {};
        
            \ifnum \step< \n
              \node[state, fill={\ifnum\sc=1 mred!50\else\ifnum\sc=2 mblue!50\else\ifnum\sc=3 mgreen!50\else white\fi\fi\fi}] (Bn-\step) [right of = Bn-\prevstate] {\(\S_{\t+{\step}}\)};
              \node[circle, draw=black, fill=black!40, inner sep=2pt] (Tn-\step) [right of = Tn-\prevstate] {};
            \fi
        
            \pgfmathsetmacro{\xc}{\XCol[\n-\step]}
            \pgfmathsetmacro{\sc}{\SCol[\n-\step]}
            \pgfmathsetmacro{\ac}{\ACol[\n-\step]}
            \pgfmathsetmacro{\zc}{\ZCol[\n-\step]}
            \node[state, fill={\ifnum\xc=1 mred!50\else\ifnum\xc=2 mblue!50\else\ifnum\xc=3 mgreen!50\else white\fi\fi\fi}] (Xn-\ms) [left of = Xn-\aftestate] {\(\M_{\t-\step}\)};
            \node[state, fill={\ifnum\ac=1 mred!50\else\ifnum\ac=2 mblue!50\else\ifnum\ac=3 mgreen!50\else white\fi\fi\fi}] (An-\ms) [left of = An-\aftestate] {\(\A_{\t-{\step}}\)};
            \node[state, fill={\ifnum\sc=1 mred!50\else\ifnum\sc=2 mblue!50\else\ifnum\sc=3 mgreen!50\else white\fi\fi\fi}] (Bn-\ms) [left of = Bn-\aftestate] {\(\S_{\t-{\step}}\)};
            \node[state, fill={\ifnum\zc=1 mred!50\else\ifnum\zc=2 mblue!50\else\ifnum\zc=3 mgreen!50\else white\fi\fi\fi}] (Zn-\ms) [left of = Zn-\aftestate] {\(\Z_{\t-\step}\)};
            \node[circle, draw=black, fill=black!40, inner sep=2pt] (Tn-\ms) [left of = Tn-\aftestate] {};
            \node[circle, draw=black, fill=black!40, inner sep=2pt] (Fn-\ms) [left of = Fn-\aftestate] {};
        
            \path[->, thick] (Tn-\prevstate) edge (Zn-\step);
            \path[->, thick] (Tn-\prevstate) edge (Bn-\prevstate);
            \path[->, thick] (Fn-\step) edge (Xn-\step);
            \path[->, thick] (Fn-\step) edge (An-\step);
            \path[->, thick] (Tn-\ms) edge (Zn-\aftestate);
            \path[->, thick] (Tn-\ms) edge (Bn-\ms);
            \path[->, thick] (Zn-\ms) edge (Tn-\ms);
            \path[->, thick] (An-\ms) edge (Tn-\ms);
            \path[->, thick] (Xn-\ms) edge (Fn-\aftestate);
            \path[->, thick] (Bn-\ms) edge (Fn-\aftestate);
          \fi
        
          \path[->, thick] (Fn-\ms) edge (Xn-\ms);
          \path[->, thick] (Fn-\ms) edge (An-\ms);
        }
        
        \foreach \step in {1,...,\n} {
          \pgfmathtruncatemacro{\prevstate}{\step-1}
          \path[->, thick] (Xn-\prevstate) edge (Fn-\step);
          \path[->, thick] (Zn-\prevstate) edge (Tn-\prevstate);
          \path[->, thick] (Bn-\prevstate) edge (Fn-\step);
          \path[->, thick] (An-\prevstate) edge (Tn-\prevstate);
        }
        
        \path[dotted, very thick, black!50] (Xn-\n) edge ++(0.7,-0.7);
        \path[dotted, very thick, black!50] (An-\n) edge ++(0.7,-0.7);
        \path[dotted, very thick, black!50] (Zn-\n) edge ++(1,1);
        \path[dotted, very thick, black!50] (Fn--\n) edge ++(-0.7,-0.7);
        \path[dotted, very thick, black!50] (Fn--\n) edge ++(-0.7,0.7);
        \path[dotted, very thick, black!50] (Zn--\n) edge ++(-1,1);
        
        \end{tikzpicture}
        }
        \caption{Bayesian network for a percept-action loop. Shown is a fragment for rounds $t-1$, $t$, and the beginning of round $t+1$. This type of Bayesian network plays an important role in the information-theoretic framework underlying our results (see 
        \Cref{supp:5} for details). Note that to faithfully represent the dynamics of the agent and environment, auxiliary nodes (gray and reduced in size) are included. The colorized nodes illustrate the condition for an agent to be maximally predictive in round $t$: the agent's memory (blue) must store all information from past actions and percepts $S_{0:t}A_{0:t+1}$ (red) that is relevant for predicting the current percept $S_t$ (green). }
        \label{fig:3}
    \end{figure}
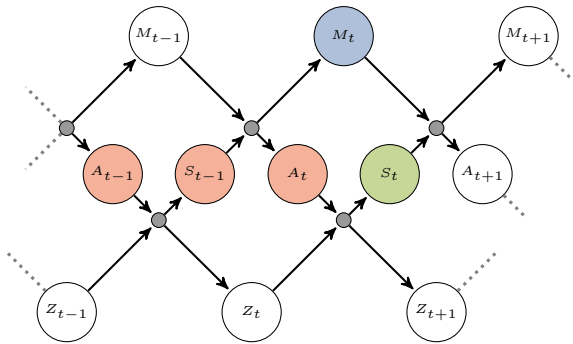

 Note that \cref{eq:predictiveness_definition_single_step_main} expresses that the agent's memory $M_t$ contains at least all the information from the past, $S_{0:t}A_{0:t+1}$, which helps predicting the next percept, $S_t$, (see \Cref{fig:3}), while \cref{eq:predictive_main} requires this condition to hold asymptotically on average \footnote{For example, \cref{eq:predictive_main} is satisfied if \cref{eq:predictiveness_definition_single_step_main} holds true for all times, or it can be satisfied when the summands in \cref{eq:predictive_main} decay sufficiently quickly.}.

Interestingly, although environments, as per \Cref{def:environment_channel}, are constrained to a finite number of hidden states, an agent may require a countably infinite number of memory states to be predictive as $t\rightarrow \infty$. This is because the agent's memory, which can be seen as a function of the past, $A_{0:t+1}S_{0:t}$, must serve as a sufficient statistic for $S_t$ for \cref{eq:predictiveness_definition_single_step_main} to vanish \cite{cover2005elements}. Computational mechanics  shows that there are channels that admit sufficient statistics only with a countably infinite number of states \cite{barnett2015computational}. In this work, instead of allowing agents infinite memory, we consider so-called unifilar environment channels \cite{ash1965information,gallager1968information}, for which there always exist predictive and a.m.\,predictive agents.
 \begin{definition}
      An environment model $\normalfont\texttt{envM} = (\Phi, p_{Z_0})$
		 is said to be \emph{\textbf{unifilar}} if 
         \begin{itemize}
             \item $p_{Z_0}$ is a delta distribution and
             \item $H(Z_{t+1} | A_t,S_t,Z_t) =0$ for all $t\in\mathbb{N}_0$.
         \end{itemize} 
         An environment channel {\normalfont\texttt{env}} is said to be unifilar if there exists a unifilar model for it.
 \end{definition}
        Unifilar models have the useful property that the values of $A_t$, $S_t$, and $Z_t$ fully determine the value of $Z_{t+1}$ for all rounds $t$, enabling an agent, given the initial value of $Z_0$, to perfectly track the hidden state of the environment. The following theorem is based on this insight:
\begin{theorem} \label{th:unifilar_predictive}
    Let $\normalfont\texttt{agt}\lrstack \texttt{env}$ be any percept-action loop. If the environment channel is unifilar, then there exists an a.m.~predictive agent model $\normalfont\texttt{agtM}$ for $\normalfont\texttt{agt}$.
\end{theorem}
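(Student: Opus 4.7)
The plan is to construct a concrete predictive agent model by augmenting an arbitrary hidden Markov model of $\texttt{agt}$ with the environment's hidden state and the most recent action, exploiting unifilarity to make this bookkeeping deterministic. Since $\texttt{env}$ is unifilar, fix a unifilar model $\texttt{envM}=(\Phi,p_{Z_0})$ with initial state $z^\ast_0$ and a deterministic function $f\colon\mathcal{A}\times\mathcal{S}\times\mathcal{Z}\to\mathcal{Z}$ such that $Z_{t+1}=f(A_t,S_t,Z_t)$ almost surely; such an $f$ exists because $H(Z_{t+1}\mid A_t,S_t,Z_t)=0$.

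Next, pick any model $(\Theta,p_{A_0M_0})$ of $\texttt{agt}$, which exists by \Cref{def:agent_channel}, and define an augmented model on the (finite) memory alphabet $\mathcal{M}'=\mathcal{M}\times\mathcal{Z}\times\mathcal{A}$ by
\begin{align}
p'_{A_0M'_0}\bigl(a,(m,z,a')\bigr) &= p_{A_0M_0}(a,m)\,\delta_{z,z^\ast_0}\,\delta_{a',a},\\
\theta'\bigl(b,(m'',z'',b')\mid s,(m,z,a)\bigr) &= \theta(b,m''\mid s,m)\,\delta_{z'',f(a,s,z)}\,\delta_{b',b}.
\end{align}
The $\mathcal{M}$-coordinate evolves exactly as in the original model, while the $\mathcal{Z}$- and $\mathcal{A}$-coordinates deterministically mirror the environment's hidden state and the current action.

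I would then verify three claims. First, the augmented model realises the channel $\texttt{agt}$: summing the defining product for the induced agent channel over the $\mathcal{Z}$- and $\mathcal{A}$-coordinates collapses all Kronecker deltas and reduces to the original formula in \Cref{def:agent_channel}. Second, in the joint law of $\texttt{agtM}'\lrstack\texttt{envM}$, an induction on $t$ shows that the $\mathcal{Z}$-entry of $M'_t$ almost surely equals $Z_t$ and the $\mathcal{A}$-entry equals $A_t$, because $\Phi$ and $\theta'$ use the same update $f$ from the same initial state, and the same action produced by $\theta$ is stored in both coordinates. Third, by the environment's emission rule the distribution of $S_t$ conditional on any superset of $\{A_t,Z_t\}$ depends only on $(A_t,Z_t)$; since these are functions of $M'_t$, we obtain
\begin{align}
p(S_t\mid A_{0:t+1},S_{0:t},M'_t) = p(S_t\mid M'_t),
\end{align}
i.e.\ $I[A_{0:t+1}S_{0:t};S_t\mid M'_t]=0$ for every $t$. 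This makes $\texttt{agtM}'$ predictive, and the Cesàro average in \eqref{eq:predictive_main} is then trivially zero, so $\texttt{agtM}'$ is in particular a.m.\ predictive.

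The only genuinely non-trivial step is the synchronisation claim: one must check that running the augmented agent inside the actual percept-action loop preserves the invariant that the $\mathcal{Z}$- and $\mathcal{A}$-components of $M'_t$ coincide with $Z_t$ and $A_t$, despite the feedback between agent and environment. This is immediate by induction once one observes that the $\mathcal{Z}$-update inside $\theta'$ and the environment's next-state rule are the same deterministic function $f$ of the common variables $(A_t,S_t,Z_t)$, and that the initial values agree by construction. Everything else—channel equivalence and the vanishing of the conditional mutual information—then follows mechanically from the definitions of agent channel and predictiveness.
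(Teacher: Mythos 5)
Your construction is essentially identical to the paper's proof: the paper likewise augments an arbitrary agent model with two extra memory registers that deterministically track the environment's hidden state (via the unifilarity map) and the most recent action, verifies channel equivalence by marginalizing out the added coordinates, establishes the synchronisation invariant, and concludes predictiveness from the fact that $S_t$ depends on the past only through $(A_t,Z_t)\subseteq M'_t$ (the paper phrases this last step as a d-separation in its Bayesian network). The argument is correct; your explicit $\delta_{a',a}$ in the initial distribution is in fact slightly cleaner than the paper's statement of the initialization.
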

See 
\Cref{supp:6} for a proof. Before presenting our results on work capacity, we first demonstrate that \Cref{def:predictive_main} for predictive agents recovers the definition previously used in the context of stationary processes in the tape setting \cite{boyd2018thermodynamics}. Note that a process $\bm{X}$ is \emph{stationary} if $p_{X_{n:m}} = p_{X_{n+t:m+t}}$ for all $n,t \in \mathbb{N}_0$ and $m > n$. The tape setting can be embedded within the percept-action loop framework \footnote{Technically, this also requires allowing for an infinite number of hidden states in the environment to generate all percept processes allowed in the tape setting.} by making the environment channel effectively generate the tape pattern, i.e., it acts as a finite-state source of percepts unaffected by actions: $\nu_{\bm{S}|\bm{A}}(\bm{s}|\bm{a}) = \nu_{\bm{S}|\bm{A}}(\bm{s}|\bm{a}')$ for all $\bm{a}, \bm{a}'\in\mathcal{A}^{\mathbb{N}_0}$. Channels with this property are also known as \emph{product channels} \cite{gray2009probability}.

The following theorem reveals a remarkable property of predictive agents in the stationary regime: being a.m.\,predictive of the next percept is equivalent to being predictive of \emph{all} future percepts at all times.
\begin{theorem}
    \label{th:predictive_consistent_with_Boyd_main}
    Let $\normalfont\texttt{agtM}\lrstack \texttt{env}$ be such that the joint process $\bm{M}\bm{A}\bm{S}$ of actions, percepts, and agent memory is stationary. Then, {\normalfont \texttt{agtM}} is a.m.\,predictive, i.e., 
    \begin{align}
       \cesaro{I[A_{0:t+1}S_{0:t};S_t|M_t]}_t=0
    \end{align}
    if and only if
    \begin{align}
        I[A_{0:t+1} S_{0:t}; S_{t:\infty}|M_t] = 0 \quad \forall t \in \mathbb{N}_0.
    \end{align}
    If in addition {\normalfont \texttt{env}} is a product channel, {\normalfont\texttt{agtM} }is a.m.\,predictive if and only if
    \begin{align}
        I[S_{0:t}; S_{t:\infty}|M_t] = 0 \quad \forall t \in \mathbb{N}_0.  \label{eq:th_predictive_consistent_show_2_main}
    \end{align}
\end{theorem}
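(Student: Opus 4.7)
The plan is to prove the two equivalences in sequence, using the first to derive the second. The easy directions are immediate: in the first equivalence, $I[A_{0:t+1}S_{0:t}; S_{t:\infty}|M_t]=0$ implies $I[A_{0:t+1}S_{0:t}; S_t|M_t] = 0$ by monotonicity of mutual information, so the Ces\`aro average is trivially zero; in the second equivalence, the forward direction is obtained from the first by restricting the first argument of the mutual information.

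For the forward direction of the first equivalence, I would proceed in two stages. First, I would promote the Ces\`aro statement to a pointwise one: under stationarity of $\bm{MAS}$, index-shifting gives $I[A_{0:t+1}S_{0:t}; S_t|M_t] = I[A_{1:t+2}S_{1:t+1}; S_{t+1}|M_{t+1}]$, and adding $(A_0, S_0)$ to the first argument can only weakly increase a conditional mutual information, so $f(t) := I[A_{0:t+1}S_{0:t}; S_t|M_t]$ is non-decreasing in $t$. A non-negative non-decreasing sequence with zero Ces\`aro mean must vanish identically, hence $f(t)\equiv 0$. Second, I would extend from $S_t$ to $S_{t:\infty}$ via the telescoping chain rule
\[
I[A_{0:t+1}S_{0:t}; S_{t:\infty}|M_t] = \sum_{k\geq t} I[A_{0:t+1}S_{0:t}; S_k|M_t, S_{t:k}],
\]
and show each summand vanishes by inserting the intermediate agent variables $(M_k, A_{t+1:k+1})$ into the conditioning. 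On the one hand, the hidden-Markov agent structure forces $(M_k, A_{t+1:k+1})$ to be generated from $(M_t, S_{t:k})$ by repeated application of the fixed transition $\theta$ using only agent-internal randomness, so they are independent of $(A_{0:t+1}, S_{0:t})$ given $(M_t, S_{t:k})$. On the other hand, the step-$k$ predictiveness combined with the mutual independence of agent and environment internal randomness yields $p(S_k | M_k, \cdot)=p(S_k|M_k)$ when the "$\cdot$" consists of any past agent- or environment-side variables. Combining these two facts, $p(S_k | M_t, S_{t:k}, A_{0:t+1}, S_{0:t})$ collapses to $p(S_k | M_t, S_{t:k})$, so each summand is zero.

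For the converse of the second equivalence, I would decompose
\[
I[A_{0:t+1}S_{0:t}; S_{t:\infty}|M_t] = I[S_{0:t}; S_{t:\infty}|M_t] + I[A_{0:t+1}; S_{t:\infty}|M_t, S_{0:t}].
\]
The first term vanishes by hypothesis. For the second, I would exploit the product-channel property: the environment's percepts have a joint distribution unaffected by actions, so $\bm{S}$ is generated autonomously, while $A_{0:t+1}$ and $M_t$ are both stochastic functions of $S_{0:t}$ and agent-internal randomness that is independent of the environmental $S_{t:\infty}$ given $S_{0:t}$. Therefore $A_{0:t+1} \perp S_{t:\infty} \mid M_t, S_{0:t}$, the second term vanishes, and the first equivalence then yields a.m.\,predictiveness.

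The main obstacle is the second stage of the first equivalence, where the one-step predictiveness, naturally phrased with $M_k$ in the conditioning set, must be converted into a statement with $M_t$ and $S_{t:k}$ in the conditioning. This requires a careful d-separation argument in the percept-action loop's Bayesian network (\Cref{fig:3}), leveraging that agent-internal and environment-internal randomnesses are independent, so that conditioning on additional agent-side variables leaves the distribution of the environmental state $Z_k$ — which is what actually determines $S_k$ through $\phi$ — unchanged.
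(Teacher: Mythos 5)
Your proposal is correct, and its overall architecture coincides with the paper's: stationarity plus the chain rule make $b_t \coloneqq I[A_{0:t+1}S_{0:t};S_t|M_t]$ nonnegative and monotone, so a vanishing Ces\`aro mean forces $b_t\equiv 0$; the pointwise one-step condition is then lifted to all of $S_{t:\infty}$; and the product-channel case follows by splitting off the action term and killing it with a d-separation. Where you genuinely diverge is in the lifting step, which the paper isolates as \Cref{lem:predictiveness_equivalence}: there the telescoping runs over the pairs $(A_{j+1},S_j)$ conditioned on $M_tA_{t+1:j+1}S_{t:j}$, and the vanishing of the relevant terms is verified by bookkeeping information atoms in a five-set information diagram. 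You instead telescope only over $S_k$ conditioned on $M_tS_{t:k}$ and close the argument by composing two conditional independences --- $(M_k,A_{t+1:k+1})\perp(A_{0:t+1},S_{0:t})\mid(M_t,S_{t:k})$ and $S_k\perp(M_t,A_{0:k+1},S_{0:k})\mid M_k$ --- via marginalization over $M_k$. Both routes rest on the same d-separations in the loop's Bayesian network (your second independence is step-$k$ predictiveness augmented by exactly the paper's condition \cref{eq:pred4}), but your version avoids the information-diagram machinery and is arguably cleaner; the price is that you must still verify those d-separations carefully, which you correctly identify as the crux. One caution: your blanket claim that $p(S_k\mid M_k,\cdot)=p(S_k\mid M_k)$ for ``any past agent- or environment-side variables'' is too strong as phrased --- it fails if ``$\cdot$'' includes the hidden environment state $Z_k$, which directly drives $S_k$ --- but it does hold for the variables you actually insert ($M_t$ and past actions and percepts), so the proof goes through.
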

See 
\Cref{supp:6} for a proof utilizing the Markov conditions of the underlying Bayesian network (see \Cref{fig:3}).
The second part of the theorem connects our definition of a.m.\,predictive agents to the one by Boyd \emph{et al.}\,\cite{boyd2018thermodynamics} who define predictive agents via \cref{eq:th_predictive_consistent_show_2_main} and another condition which is automatically fulfilled for the type of channels considered in this work (see~\cite{still2012thermodynamics} for a different notion of predictive agents) \footnote{In fact, the other condition is $I[S_{0:t}; M_t \mid S_{t:\infty}] = 0$, which corresponds to a d-separation in the Bayesian network underlying the percept–action loop (see 
\Cref{supp:5} for details on d-separation).}.

\section{Work capacity of channels}\label{sec:4}
So far, we have treated agents and environments as abstract information-processing systems. However, as Landauer famously quipped, \emph{information is physical} \cite{landauer1991information}: any implementation of an agent must ultimately rely on physical memory and dynamics subject to thermodynamic laws. To analyze the energetic limits of such implementations, we adopt a framework from stochastic thermodynamics that models the agent’s information processing---described by its transition matrix 
$\Phi^{\mathrm{agt}}$---as a physical process acting on memory \cite{seifert2012stochastic, parrondo2015thermodynamics}. We briefly outline its assumptions.

In this framework, memory is represented by a physical system coupled to a thermal reservoir at temperature $T$. The system possesses a few degrees of freedom, the information-bearing degrees of freedom, which are assumed to be meta-stable, i.e., their equilibration time $\tau_{\mathrm{info}}$ is much larger than that of the system's other degrees of freedom, $\tau_{\mathrm{others}}$. Information processing on the information-bearing degrees of freedom is carried out through an isothermal protocol, i.e., a protocol executed at constant temperature $T$, with a time scale such that $\tau_{\mathrm{others}} \ll \tau_{\mathrm{protocol}} \ll \tau_{\mathrm{info}}$. The protocol has access to a work reservoir for storing (or retrieving) work.

\begin{figure}
    \centering
    \includegraphics[width=0.75\columnwidth]{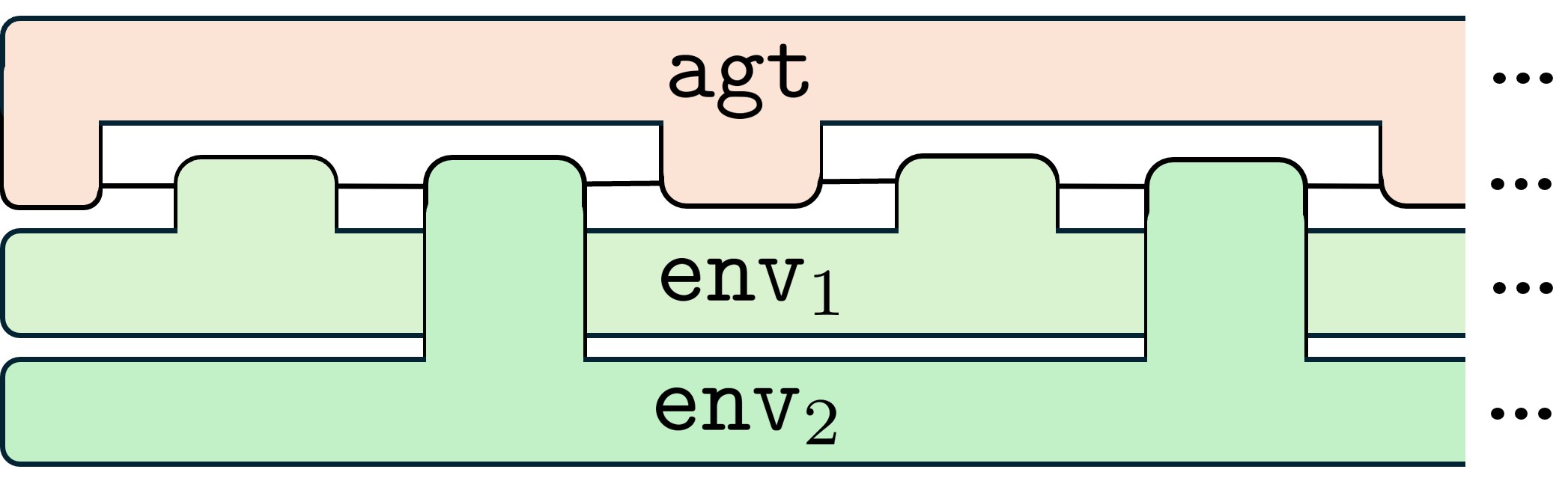}
    \caption{An agent \texttt{agt} interacting with the cascade of two environment channel $\texttt{env}_1$ and $\texttt{env}_2$.}
    \label{fig:4}
\end{figure}

Under these assumptions, it can be shown \cite{seifert2012stochastic, parrondo2015thermodynamics} that, similar to equilibrium thermodynamics, the second law of thermodynamics sets an upper bound on the expected amount of work extractable from a system with information-bearing degrees of freedom $\mathcal{X}$. This upper bound is a state function, known as the nonequilibrium free energy, $F = U - \kb T \ln 2\, H(X)$, where $U$ is the memory system's internal energy, $\kb$ is the Boltzmann constant. Note that we refer to the work as \emph{expected} because it is the work that can be expected to be extracted \emph{on average} based on the available knowledge about the input state $p_{X}$.

In addition, in order to focus on the energetic limits of information processing alone, we assume that the internal energy landscape over information-bearing degrees of freedom is flat and remains unchanged before and after executing the isothermal protocol, i.e., the internal energy $U$ does not contribute to the extractable work. Such a memory model is also known as an information reservoir \cite{deffner2013information}. Then, the second law yields an upper bound on the expected extractable work $W$:
\begin{align}
    W \leq H\left(X_{\mathrm{out}}\right) - H\left(X_{\mathrm{in}}\right), \label{eq:second_law}
\end{align}
from implementing $\Phi$ on $\mathcal{X}$, mapping $X_{\mathrm{in}}$ to $X_{\mathrm{out}}$. Here and throughout, all work expressions are understood to be in units of $\kb T \ln 2$.

The upper bound in \cref{eq:second_law}, imposed by the second law, is tight in the sense that it can, in principle, be saturated with protocols under idealized conditions. Concrete examples of such protocols are given in \cite{garner2017thermodynamics, boyd2018thermodynamics, boyd2022thermodynamic}. While our work is primarily concerned with the fundamental limits imposed by the second law, it should be noted that more realistic and resource-constrained assumptions can be incorporated \cite{kolchinsky2017dependence, shiraishi2018speed, kolchinsky2021work, wolpert2024stochastic}. Within our framework, this is most easily achieved when the extractable work can still be expressed through a state function, such as in \cite{kolchinsky2021work}, by replacing $F$ with the new state function.

With this, the \emph{work rate}, i.e., the asymptotically expected work per round that an agent model $\texttt{agtM}$ can extract using the environment channel $\texttt{env}$ is (see 
\Cref{supp:7.1} for a derivation)
\begin{align}
    W(\texttt{agtM} \lrstack \texttt{env}) =  \cesaro{\left(H\left(A_t| M_t\right) - H\left(S_t|M_t\right)\right)}_t, \label{eq:extractable_work_main}
\end{align}
with the Ces\'aro limit $\braket{\bullet}_t$ defined in \cref{eq:cesaro}. The existence of work rate is not guaranteed for arbitrary processes, as it is possible that the Ces\'aro limit in \cref{eq:extractable_work_main} does not exist \footnote{An illustrative example of a sequence $(a_t)_t$ where the Ces\'aro limit $\braket{a_t}_t$ fails to exist is $0110000\dotsc$, where one zero is followed by twice as many ones, followed again by twice as many zeros, and so on. This results in an oscillating arithmetic mean $1/n \sum_{t=1}^n a_t$ as $n \rightarrow \infty$.}. Note, however, that here the limit exists because the global Markov chain of the percept-action loop is asymptotically well-behaved (see 
\Cref{supp:2} for details). We then arrive at the following

\begin{table}
    \centering
    \renewcommand{\arraystretch}{1.2} 
    \setlength{\tabcolsep}{12pt}      
    \begin{tabular}{l c}
        \toprule
        \textbf{Environment Channel \texttt{env}} & $C^{\mathrm{work}}(\texttt{env})$ \\
        \midrule
        Noiseless & $0$ \\
        Memoryless Invariant & $\displaystyle \max_{p_{A_0}}\big[ H(S_0) - H(A_0) \big]$ \\
        Unifilar Product & $\log |\mathcal{A}| - h(\bm{S})$ \\
        \bottomrule
    \end{tabular}
    \caption{Work capacity for different classes of environment channels (see \cref{eq:entropy_rate_main} for a definition of $h(\bm{S})$ and 
    \Cref{supp:7.3} for a proof).}
    \label{tab:capacity}
\end{table}

\begin{definition} 
The \emph{\textbf{work capacity}} $C^{\mathrm{work}}$ of an environment channel $\normalfont\texttt{env}$ is defined as
	\begin{align}
		\normalfont C^{\mathrm{work}}(\texttt{env}) \coloneqq  \max_{\texttt{agtM}\in\mathbb{A}^{\lrstacksmall \texttt{env}}}  W(\texttt{agtM}\lrstack \texttt{env}). \label{eq:work_capacity_main}
	\end{align}
    where $\normalfont \mathbb{A}^{\lrstacksmall \texttt{env}}$ denotes the set of all agent models which can interact with $\normalfont \texttt{env}$. 
\end{definition}
Intuitively, the work capacity captures the maximum rate at which an agent---optimally tailored to the environment channel---can expect to extract work, based on the second law of thermodynamics.
The existing protocols for implementing transition matrices \cite{garner2017thermodynamics, boyd2018thermodynamics, boyd2022thermodynamic} can be leveraged to construct optimal protocols for the agent model $\texttt{agtM}$ which maximizes \cref{eq:work_capacity_main}, making it, in principle, saturable (see 
\Cref{supp:7.2} for details).

Returning to the question posed at the beginning of this section, the energetic limits of agents, in terms of work rate, are determined by the work capacity of the environment channel.

Next we will provide some general properties of work capacity:
\begin{theorem} \label{th:properties_of_work_capacity_main}
	For any environment channel $\normalfont \texttt{env}=\nu^{\mathrm{env}}_{\bm{S}|\bm{A}}$, work capacity $\normalfont C^{\mathrm{work}}(\texttt{env})$ has the following properties:
	\begin{enumerate}[label=(\roman*)]
		\item \emph{(Existence)} $\normalfont C^{\mathrm{work}}(\texttt{env})$ exists,
		\item \emph{(Bounds)} $\normalfont 0\leq C^{\mathrm{work}}(\texttt{env})\leq   \ln |\mathcal{S}|$,
		\item \emph{(Subadditivity under channel cascade, see  \Cref{fig:4})}
		\begin{align*}
			\normalfont C^{\mathrm{work}}(\texttt{env}_2\circ \texttt{env}_1)\leq  C^{\mathrm{work}}(\texttt{env}_1) + C^{\mathrm{work}}(\texttt{env}_2).
		\end{align*}
	\end{enumerate} 
\end{theorem}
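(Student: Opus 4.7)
The plan is to dispatch the three items in order, leveraging that every coupling $\texttt{agtM}\lrstack \texttt{envM}$ induces a finite-state Markov chain on $M_tA_tS_tZ_t$ whose asymptotic behavior is controlled by the Markov-chain ergodicity results summarized in \Cref{supp:2}. For (i), the summand $H(A_t|M_t)-H(S_t|M_t)$ is a bounded function of the marginal $p_{M_tA_tS_t}$, and the global Markov chain decomposes into finitely many recurrent communicating classes each with its own stationary distribution, so the arithmetic mean in \cref{eq:extractable_work_main} converges for every fixed agent. Boundedness by (ii) then makes the supremum over $\mathbb{A}^{\lrstacksmall \texttt{env}}$ finite, and attainment of the maximum follows from compactness of the space of agent transition matrices at each fixed memory size combined with a reduction showing that memory beyond a channel-dependent threshold yields no gain, so the optimization collapses to a finite family of compact problems. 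For (ii), the upper bound is immediate: $H(A_t|M_t)\leq H(A_t)\leq\log|\mathcal{A}|=\log|\mathcal{S}|$ and $H(S_t|M_t)\geq 0$ give $H(A_t|M_t)-H(S_t|M_t)\leq\log|\mathcal{S}|$ pointwise in $t$, and the Cesàro limit inherits the bound; the lower bound is witnessed by the memoryless agent with $|\mathcal{M}|=1$ that outputs $A_t$ uniformly on $\mathcal{A}$, yielding $H(A_t|M_t)=\log|\mathcal{A}|\geq H(S_t)\geq H(S_t|M_t)$, hence $W\geq 0$.

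The heart of the theorem is (iii), which I would prove by a virtual-agent construction. Given any cascade agent $\texttt{agtM}$, let $S^{(1)}_t$ be the intermediate percept produced by $\texttt{env}_1$ and let $Z^{(i)}_t$ denote the hidden state of $\texttt{env}_i$. Define $\texttt{agtM}^{(1)}$, which interacts with $\texttt{env}_1$ alone and carries augmented memory $(M_t,Z^{(2)}_t)$: it applies $\texttt{agtM}$'s own $\Theta$ to the $M_t$ component, and upon receiving $S^{(1)}_t$ internally simulates one step of $\texttt{env}_2$ to generate a virtual $S_t$ which drives the $\Theta$-update of $M_t$ and to update $Z^{(2)}_t$. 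Dually, define $\texttt{agtM}^{(2)}$, which interacts with $\texttt{env}_2$ alone and carries augmented memory $(M_t,Z^{(1)}_t)$: it internally draws $A_t$ from $\Theta$ and $S^{(1)}_t$ from $\texttt{env}_1$'s transition, emits $S^{(1)}_t$ as its action, and updates its memory upon receiving $S_t$. Both are legitimate finite-memory hidden Markov agents in the sense of \Cref{def:agent_channel}, and by construction the joint distributions of the relevant variables in their couplings to $\texttt{env}_1$ and $\texttt{env}_2$ agree with the corresponding marginals in the cascade. Their per-round work integrands are
\begin{align*}
W^{(1)}_t &= H(A_t|M_t,Z^{(2)}_t)-H(S^{(1)}_t|M_t,Z^{(2)}_t),\\
W^{(2)}_t &= H(S^{(1)}_t|M_t,Z^{(1)}_t)-H(S_t|M_t,Z^{(1)}_t),
\end{align*}
and the bound reduces to showing $H(A_t|M_t)-H(S_t|M_t)\leq W^{(1)}_t+W^{(2)}_t$ in Cesàro average. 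I would prove this by splitting the left-hand side at the pivot $S^{(1)}_t$ and invoking the data processing inequality inside the Bayesian network of \Cref{fig:3} (extended by a second environment layer for $\texttt{env}_2$), exploiting that $A_t$ influences $S_t$ only through $S^{(1)}_t$. Taking Cesàro limits, bounding each bracket by $C^{\mathrm{work}}(\texttt{env}_i)$ via \cref{eq:work_capacity_main}, and then maximizing over $\texttt{agtM}$ yields the stated subadditivity.

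The main obstacle will be the per-round inequality in (iii): establishing the correct direction of the several conditional mutual informations that appear after the split --- notably comparing $I[A_t;Z^{(2)}_t|M_t]$, $I[S^{(1)}_t;Z^{(i)}_t|M_t]$, and $I[S_t;Z^{(1)}_t|M_t]$ --- requires a careful d-separation analysis in the extended Bayesian network together with the data processing inequality, and is the technical content of the cascade bound. The remaining steps --- Cesàro convergence and supremum-to-maximum in (i), the elementary bounds in (ii), and the formal specification of the finite transition matrices underlying $\texttt{agtM}^{(1)}$ and $\texttt{agtM}^{(2)}$ --- reduce to bookkeeping once the ergodicity results of \Cref{supp:2} are in hand.
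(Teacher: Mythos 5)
Parts (i) and (ii) of your proposal are essentially sound and close to the paper's argument: existence for each fixed agent follows from the finite-state global Markov chain's asymptotics (the paper routes this through the continuity of the entropy functional and the Ces\`aro-limit corollary for powers of the transition matrix), and your bounds in (ii) are the canonical entropy bounds, with a uniform-random memoryless agent as the witness for nonnegativity where the paper uses the identity (echo) agent --- both work. One caveat: your claim that the maximum over agent models is \emph{attained} because ``memory beyond a channel-dependent threshold yields no gain'' is asserted without proof and is not obviously true (the paper itself notes that related optimality properties can require unboundedly many memory states); the paper's statement (i) only asserts existence of the Ces\`aro limit, which is all your subsequent steps actually need.

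The genuine gap is in (iii), and it sits exactly where you flag it. Your virtual-agent decomposition reduces subadditivity to the per-round inequality
\begin{align}
H(A_t|M_t)-H(S_t|M_t)\;\leq\; \bigl[H(A_t|M_tZ^{(2)}_t)-H(S^{(1)}_t|M_tZ^{(2)}_t)\bigr]+\bigl[H(S^{(1)}_t|M_tZ^{(1)}_t)-H(S_t|M_tZ^{(1)}_t)\bigr],
\end{align}
which rearranges to
\begin{align}
I[A_t;Z^{(2)}_t|M_t]\;+\;\bigl[H(S^{(1)}_t|M_tZ^{(2)}_t)-H(S^{(1)}_t|M_tZ^{(1)}_t)\bigr]\;\leq\; I[S_t;Z^{(1)}_t|M_t].
\end{align}
The first term on the left is nonnegative and need not vanish ($A_t$ and $M_t$ are only \emph{jointly} generated from $(S_{t-1},M_{t-1})$, so conditioning on $M_t$ does not screen $A_t$ off from the environments' hidden states), and the bracketed difference compares conditionings on two different hidden states and has no definite sign; no d-separation in the extended network forces this combination below $I[S_t;Z^{(1)}_t|M_t]$. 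So the crux of (iii) is left unproven, and it is not clear it is provable in this form. The paper avoids the issue entirely by a different construction: it introduces the \emph{time-interleaved} channel $\texttt{env}_{12}$ that alternates between $\texttt{env}_1$ and $\texttt{env}_2$ each round, so that the work rate of any agent for $\texttt{env}_{12}$ is \emph{exactly} a sum of an $\texttt{env}_1$-type work term and an $\texttt{env}_2$-type work term (the intermediate symbol appears once as an output and once as an input, so no cross-conditioning inequality is needed); then $C^{\mathrm{work}}(\texttt{env}_{12})\leq C^{\mathrm{work}}(\texttt{env}_1)+C^{\mathrm{work}}(\texttt{env}_2)$ is just $\max$-of-sum $\leq$ sum-of-$\max$, and $C^{\mathrm{work}}(\texttt{env}_{12})\geq C^{\mathrm{work}}(\texttt{env}_2\circ\texttt{env}_1)$ follows by restricting to agents that pass the intermediate output through as the next input (a zero-work step). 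If you want to salvage your route you would need to actually establish the displayed inequality or find the right conditioning; otherwise the interleaving argument is the cleaner path.
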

See 
\Cref{supp:7.3} for a proof. Note that the bounds in \Cref{th:properties_of_work_capacity_main} follow from the canonical bounds on Shannon entropy.\\

Due to the Ces\'aro limit, work capacity is generally difficult to compute. However, for special classes of environment channels, the expression for work capacity simplifies, as shown in \Cref{tab:capacity}.

For \emph{noiseless} environment channels \footnote{The term \emph{noiseless channel} is inherited from communication theory, where it refers to an ideal channel that transmits input symbols without alteration—that is, without introducing noise.}, where $A_t = S_t$ for all $t \in \mathbb{N}_0$, the agent can predict a percept precisely to the extent that it has remembered its previous action, turning the tradeoff between actions and percepts into a zero-sum situation: $H\left(A_t | M_t \right) - H\left(S_t | M_t \right) = 0$, and work capacity vanishes.

For \emph{memoryless invariant} environment channels, where $\nu_{\bm{S}|\bm{
					A}}(\bm{s}|\bm{a})=\prod_{t=0}^{\infty}\phi\left(s_t|a_t\right)$ with the same $\phi$ for all $t\in\mathbb{N}_0$, we show that the absence of memory in the environment allows one to reduce the optimization over agent models in \cref{eq:work_capacity_main} to an optimization over a single action. For example, consider an environment \texttt{env}, as displayed in \Cref{fig:5}, with binary percept and action alphabets, $\mathcal{A} = \mathcal{S} = \{0, 1\}$, and with transition matrix $\Phi^{\mathrm{env}}$ given by its coefficients $\phi^{\mathrm{env}}(j|0) = \delta_{0,j}$ and $\phi^{\mathrm{env}}(j|1) = 1/2$ for $j = 0, 1$. For this environment, we find
\begin{align}
   C^{\mathrm{work}}(\texttt{env}) = \frac{1}{2} \ln\left[\frac{3}{4} + \frac{1}{\sqrt{2}}\right] \simeq 0.272 \, \mathrm{bits},
\end{align}
which, in units of $\kb T \ln 2$, is the work capacity of \texttt{env}. It can be reached by a memoryless agent which in every round takes action $0$ with probability $1/\sqrt{2}$.

\begin{figure}
    \centering
    \includegraphics[width=0.4\linewidth]{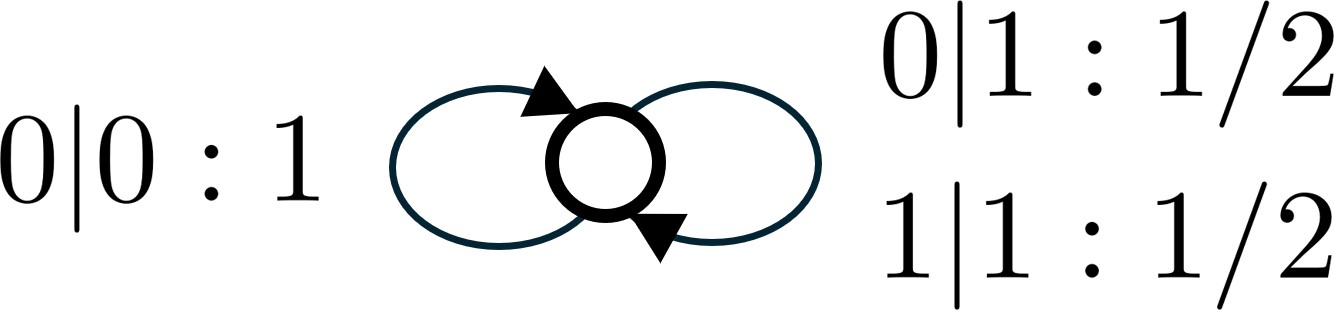}
    \caption{A memoryless invariant environment with binary percept and action alphabets, $\mathcal{A}=\mathcal{S}=\{0,1\}$. The transition labels follow the scheme \emph{percept}$\,|\,$\emph{action}$~:~$\emph{transition probability}. The transition on the left (right) corresponds to action ``0'' (respectively, ``1'').}
    \label{fig:5}
\end{figure}

For \emph{unifilar product} environment channels, percepts are not influenced by actions. Consequently, to maximize the expression in \cref{eq:extractable_work_main}, the optimal strategy is to maximize $H(A_t|M_t)$, which corresponds to choosing actions that are independent, identically distributed, and uniformly random. Crucially, the agent must not retain any information about its action in its memory. This results in $H(A_t|M_t) = \log \abs{\mathcal{A}}$. The second term in the work capacity expression, as shown in \Cref{tab:capacity}, is the entropy rate of the percept process $\bm{S}$,
\begin{align}
    h\left(\bm{S}\right) \coloneqq \lim\limits_{n \to \infty} \frac{H\left(S_{0:n}\right)}{n}, \label{eq:entropy_rate_main}
\end{align}
which was introduced by Shannon as the average uncertainty per symbol in a stochastic process \cite{shannon1948mathematical}. It is also known, from the information-processing second law \cite{boyd2016identifying}, that this entropy rate (in units of $\kb T \ln 2$) represents the maximum rate of expected extractable work from a stochastic process \footnote{If the percept process is a stationary finite-state Markov chain, a closed-form expression for the entropy rate exists \cite{shannon1948mathematical}.}.

\section{Work-Efficient Agent Models}\label{sec:5}

Finding agents that achieve work capacity is challenging, as it requires solving a nonlinear optimization problem (\cref{eq:work_capacity_main}). However, for certain classes of environments, design principles for work-efficient agent models can be established. For a given environment \texttt{env}, three subsets of the set of all agent models play a central role:

\begin{itemize}
    \item $\normalfont\mathbb{A}^{\lrstacksmall \texttt{env}}_{\mathrm{mea}}$: the set of random-action agent models. In the Cesàro limit, these agents randomize their actions without retaining memory of them, yielding $\cesaro{H(A_t|M_t)}_t = \ln\abs{\mathcal{A}}$. The subscript \emph{mea} stands for \emph{maximum entropy actions}.
    \item $\normalfont\mathbb{A}^{\lrstacksmall \texttt{env}}_{\mathrm{pred}}$: the set of a.m.\,predictive agent models, which satisfy the a.m.\,predictive criterion (see \Cref{def:predictive_main}).
    \item $\normalfont\mathbb{A}^{\lrstacksmall \texttt{env}}_{\mathrm{eff}}$: the set of work-efficient agent models, whose work rate equals the work capacity (\cref{eq:work_capacity_main}) of the environment.
\end{itemize}

We now extend the results of Boyd \emph{et al.}~for stationary processes \cite{boyd2018thermodynamics} by utilizing our framework---along with the definitions of a.m.\,predictive and work-efficient agent models---to encompass all processes that can be generated by an environment, not just stationary ones:
\begin{theorem}
 \label{th:work_efficient_predictive_main}
		For any unifilar product environment channel $\normalfont\texttt{env}$,
     \begin{align}
       \normalfont\mathbb{A}^{\lrstacksmall \texttt{env}}_{\mathrm{eff}} =\normalfont\mathbb{A}^{\lrstacksmall \texttt{env}}_{\mathrm{mea}}\cap \normalfont\mathbb{A}^{\lrstacksmall \texttt{env}}_{\mathrm{pred}}.
     \end{align}
	\end{theorem}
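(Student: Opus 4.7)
The plan is to establish
\[
W(\texttt{agtM}\lrstack\texttt{env}) = \cesaro{H(A_t|M_t)}_t - \cesaro{H(S_t|M_t)}_t \leq \log|\mathcal{A}| - h(\bm{S}) = C^{\mathrm{work}}(\texttt{env})
\]
for every agent, and then to identify the two inequality gaps with the defining conditions of $\mathbb{A}^{\lrstacksmall\texttt{env}}_{\mathrm{mea}}$ and $\mathbb{A}^{\lrstacksmall\texttt{env}}_{\mathrm{pred}}$, respectively. Since \Cref{tab:capacity} identifies the outer expression with the work capacity of a unifilar product environment, saturating the bound is equivalent to lying in the intersection of the two sets, which is the claim of the theorem.

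The bound $\cesaro{H(A_t|M_t)}_t\leq\log|\mathcal{A}|$ is immediate from cardinality, with equality iff $\texttt{agtM}\in\mathbb{A}^{\lrstacksmall\texttt{env}}_{\mathrm{mea}}$ by definition. The lower bound on the percept entropy rests on the chain
\[
H(S_t|M_t) \geq H(S_t|M_t,A_{0:t+1},S_{0:t}) = H(S_t|A_{0:t+1},S_{0:t}) = H(S_t|S_{0:t}).
\]
I will prove the first equality, i.e.\ the conditional independence $S_t\perp M_t\mid A_{0:t+1},S_{0:t}$, directly by factorising the global joint $p_{\bm{MASZ}}$ according to \Cref{def:environment_channel,def:agent_channel}: marginalising the environment's hidden states $\bm{Z}$, one sees that $p(m_t,a_{0:t+1},s_{0:t+1})$ factors as $F(m_t,a_{0:t+1},s_{0:t})\cdot\nu(s_{0:t+1}|a_{0:t+1})$, so that dividing by the $s_t$-marginal cancels $F$ and leaves the purely environmental conditional $\nu(s_t|a_{0:t+1},s_{0:t})$, which does not depend on $m_t$. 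The second equality is the defining product-channel property $\nu(\bm{s}|\bm{a})=\nu(\bm{s})$. Averaging term by term and using the chain rule $\frac{1}{n}H(S_{0:n}) = \frac{1}{n}\sum_{t=0}^{n-1}H(S_t|S_{0:t})$, whose limit is $h(\bm{S})$ by \cref{eq:entropy_rate_main}, yields $\cesaro{H(S_t|M_t)}_t\geq h(\bm{S})$.

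For the biconditional, the term-by-term slack in the second inequality is
\[
H(S_t|M_t) - H(S_t|S_{0:t}) = H(S_t|M_t) - H(S_t|M_t,A_{0:t+1},S_{0:t}) = I[A_{0:t+1}S_{0:t};S_t\mid M_t],
\]
so its Ces\`aro average coincides with the quantity in \Cref{def:predictive_main}. Therefore $W(\texttt{agtM}\lrstack\texttt{env})=C^{\mathrm{work}}(\texttt{env})$ iff simultaneously $\cesaro{H(A_t|M_t)}_t=\log|\mathcal{A}|$ and $\cesaro{I[A_{0:t+1}S_{0:t};S_t|M_t]}_t=0$, i.e.\ iff $\texttt{agtM}\in\mathbb{A}^{\lrstacksmall\texttt{env}}_{\mathrm{mea}}\cap\mathbb{A}^{\lrstacksmall\texttt{env}}_{\mathrm{pred}}$.

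I expect the main obstacle to be a clean proof of the conditional independence $S_t\perp M_t\mid A_{0:t+1},S_{0:t}$ underlying the chain of (in)equalities above. Intuitively it is just the causality of the environment channel---the environment cannot read the agent's private memory---but deriving it from the joint factorisation requires some careful bookkeeping with the interleaved update schedule of $\theta$ and $\phi$; alternatively, it could be extracted as a d-separation in the Bayesian network of \Cref{fig:3}. A minor additional technicality is the splitting of the Ces\`aro limit of a difference into individual Ces\`aro limits of $H(A_t|M_t)$ and $H(S_t|M_t)$, which should follow from the asymptotic regularity of the global Markov chain $M_tA_tS_tZ_t$ noted in \Cref{sec:2}.
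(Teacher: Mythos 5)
Your proposal is correct and follows essentially the same route as the paper's proof: both rest on the product-channel decomposition $W(\texttt{agtM}\lrstack\texttt{env}) = \cesaro{H(A_t|M_t)}_t - h(\bm{S}) - \cesaro{I[A_{0:t+1}S_{0:t};S_t|M_t]}_t$ (the paper's \cref{eq:extractable_work_source_environment}), bound the first and third terms by $\log\abs{\mathcal{A}}$ and $0$ respectively, and observe that saturating each bound is precisely the defining condition of $\mathbb{A}^{\lrstacksmall \texttt{env}}_{\mathrm{mea}}$ and $\mathbb{A}^{\lrstacksmall \texttt{env}}_{\mathrm{pred}}$. The only deviation is in how the key independence $I[S_t;M_t A_{0:t+1}\mid S_{0:t}]=0$ is obtained---you factorise the global joint directly (splitting off $M_t$ by causality and then $A_{0:t+1}$ by the product property), whereas the paper reads it off as a d-separation in its auxiliary Bayesian network (\Cref{fig:DAG1}); both arguments are sound.
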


\begin{figure}
    \centering
    \includegraphics[width=0.95\linewidth]{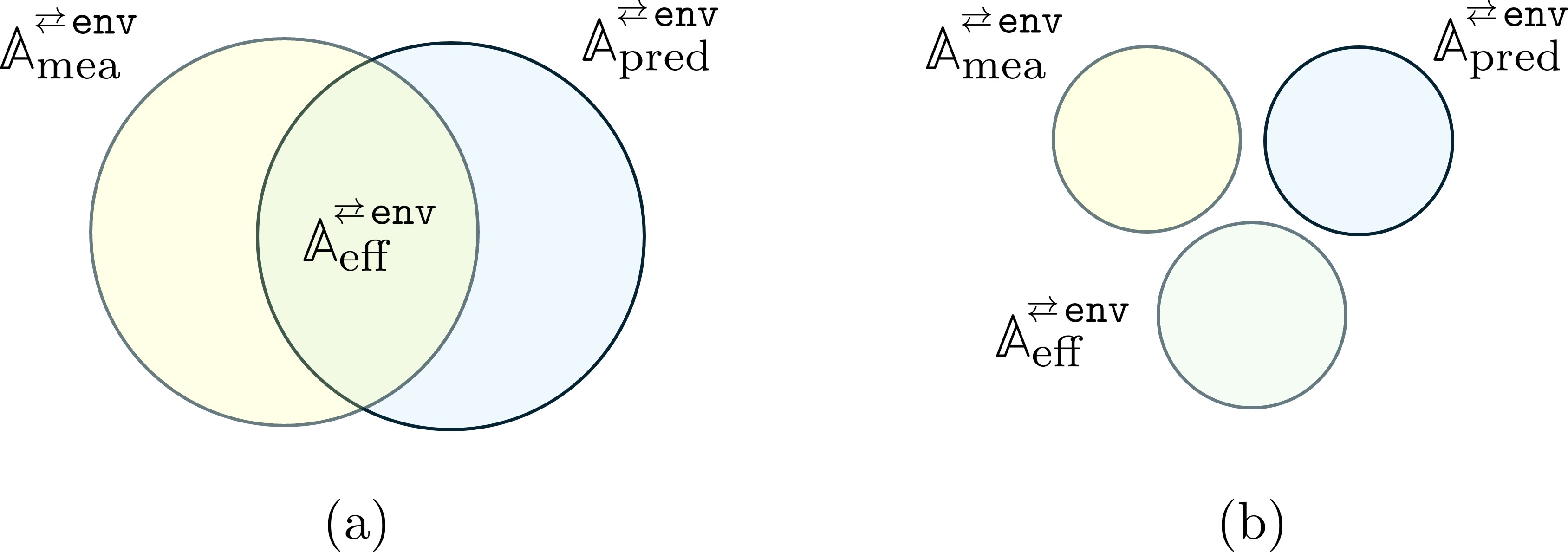}
    \caption{Set diagrams illustrating the relationships between different classes of agent models: those with maximum-entropy actions (mea), those which are predictive (pred), and those that are work-efficient (eff). (a) (b) Applies to the memoryless invariant environment channel shown in \Cref{fig:5} (see \Cref{th:efficient_but_not_predictive_main}). Unlike the tape setting, this environment involves feedback, forming a genuine percept-action loop.
}
    \label{fig:6}
\end{figure}
See \Cref{fig:6}a for a set diagram illustrating the theorem.  
The proof (see 
\Cref{supp:7.4}) relies on the expression for work capacity in unifilar product environments given in \Cref{tab:capacity}. The assumption of unifilarity ensures, by \Cref{th:unifilar_predictive}, that (finite) predictive agent models exist. This result establishes two design principles for work-efficient agents interacting with a potentially nonstationary percept process: (i) randomizing actions without retaining memory of them and  (ii) employing predictive memory. These two principles can be directly linked to the two terms of the work rate, \cref{eq:extractable_work_main}, where the first principle ensures that $H(A_t|M_t)$ is maximized and the second principle ensures that $H(S_t|M_t)$ is minimized.

A natural question then arises: what happens when actions influence future percepts---that is, in genuine percept-action loops? Based on \Cref{th:work_efficient_predictive_main}, one might expect the same design principles to apply in these scenarios. In particular, it is not immediately clear why an efficient agent model should not be a.m.\,predictive, as predicting its percepts reduces the uncertainty $H(S_t | M_t)$, which contributes negatively to the work rate. However, the following theorem demonstrates that there exist environments where neither of these previously identified design principles is compatible with work-efficient agent models. 
\begin{theorem} \label{th:efficient_but_not_predictive_main}
    There exist environment channels $\normalfont\texttt{env}$ such that  the sets $\normalfont\mathbb{A}^{\lrstacksmall \texttt{env}}_{\mathrm{pred}}$, $\normalfont\mathbb{A}^{\lrstacksmall \texttt{env}}_{\mathrm{mea}}$, and $\normalfont\mathbb{A}^{\lrstacksmall \texttt{env}}_{\mathrm{eff}}$ are all nonempty and mutually exclusive.
\end{theorem}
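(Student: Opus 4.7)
The plan is to exhibit the memoryless invariant environment from \Cref{fig:5} as an explicit counterexample. Within the family of memoryless iid agents parameterized by $p \coloneqq P(A_t = 0)$, three specific choices populate the three sets, and a single structural identity for the conditional mutual information yields all pairwise disjointness claims.

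For nonemptiness, I would take three memoryless iid agents. The always-action-0 agent ($p=1$) forces $S_t = 0$ almost surely, so $H(S_t|M_t) = 0 = I\left[A_{0:t+1}S_{0:t}; S_t \mid M_t\right]$ for every $t$, placing it in $\normalfont\mathbb{A}^{\lrstacksmall \texttt{env}}_{\mathrm{pred}}$ with work rate $0$. The uniform iid agent ($p=1/2$) achieves $H(A_t|M_t) = 1$ bit and therefore lies in $\normalfont\mathbb{A}^{\lrstacksmall \texttt{env}}_{\mathrm{mea}}$. The biased iid agent with $p=1/\sqrt{2}$ attains $\normalfont C^{\mathrm{work}}(\texttt{env}) \simeq 0.272$ bits by the saturation claim noted earlier in the paper, and so lies in $\normalfont\mathbb{A}^{\lrstacksmall \texttt{env}}_{\mathrm{eff}}$.

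For disjointness, the core calculation uses the memoryless structure, which gives $S_t \perp M_t \mid A_t$, $H(S_t|A_t=0)=0$, and $H(S_t|A_t=1)=1$ bit. Writing $q_m \coloneqq P(A_t = 1 \mid M_t = m)$ and $H_b(q) \coloneqq -q\log_2 q - (1-q)\log_2(1-q)$ for the binary entropy, this yields
\begin{equation*}
I\left[A_t; S_t \mid M_t\right] = \sum_{m} P(M_t = m)\bigl[H_b(q_m/2) - q_m\bigr].
\end{equation*}
The function $q \mapsto H_b(q/2) - q$ is strictly positive on $(0,1)$ and vanishes only at the endpoints, so combined with the monotonicity $I[A_t; S_t | M_t] \leq I[A_{0:t+1}S_{0:t}; S_t | M_t]$, a.m.\,predictivity forces $q_m \in \{0,1\}$ on the stationary support of $M_t$. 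Every a.m.\,predictive agent therefore satisfies $\cesaro{H(A_t|M_t)}_t = 0$ with work rate $-\cesaro{P(A_t=1)}_t \leq 0$, immediately giving both $\normalfont\mathbb{A}^{\lrstacksmall \texttt{env}}_{\mathrm{pred}} \cap \mathbb{A}^{\lrstacksmall \texttt{env}}_{\mathrm{mea}} = \emptyset$ and $\normalfont\mathbb{A}^{\lrstacksmall \texttt{env}}_{\mathrm{pred}} \cap \mathbb{A}^{\lrstacksmall \texttt{env}}_{\mathrm{eff}} = \emptyset$.

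For the remaining disjointness $\normalfont\mathbb{A}^{\lrstacksmall \texttt{env}}_{\mathrm{mea}} \cap \mathbb{A}^{\lrstacksmall \texttt{env}}_{\mathrm{eff}} = \emptyset$, the mea condition forces $A_t$ uniform and independent of $M_t$ asymptotically; since $S_t \mid A_t$ is independent of $M_t$, this propagates to $P(S_t=1 \mid M_t = m) = 1/4$ on the stationary support, so $H(S_t|M_t) = H_b(1/4)$ and the work rate collapses to $1 - H_b(1/4) \simeq 0.189$ bits, strictly below $\normalfont C^{\mathrm{work}}(\texttt{env})$. The main obstacle is conceptual rather than computational: one must engineer an environment whose feedback creates a strict tension between maximizing $H(A_t|M_t)$ and minimizing $H(S_t|M_t)$, so that the work-capacity-achieving bias $p=1/\sqrt{2}$ is a nontrivial interior point; once the example is fixed, the remaining estimates are routine binary-entropy manipulations.
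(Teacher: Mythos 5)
Your construction is the same as the paper's: the environment of \Cref{fig:5}, nonemptiness via explicit memoryless agents, disjointness via the observation that a.m.\,predictivity forces (asymptotically) deterministic actions and hence nonpositive work rate, and a direct computation showing mea agents fall short of capacity. Your explicit formula $I[A_t;S_t|M_t]=\sum_m p_{M_t}(m)\,[H_{\mathrm b}(q_m/2)-q_m]$ (with $H_{\mathrm b}$ the binary entropy) is a concrete instantiation of the paper's more abstract argument in \Cref{lem:env_example} that $p_{\overline A_r\overline S_r|\overline M_r=m}$ cannot be a product distribution unless the action is deterministic; both rest on $\phi^{\mathrm{env}}(\cdot|0)\neq\phi^{\mathrm{env}}(\cdot|1)$.

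There is one step you gloss over that genuinely needs an argument: the passage from $\cesaro{I[A_t;S_t|M_t]}_t=0$ to $\cesaro{H(A_t|M_t)}_t=0$. Pointwise, $H_{\mathrm b}(q/2)-q=0$ indeed forces $q\in\{0,1\}$ and hence $H_{\mathrm b}(q)=0$, but a vanishing Ces\`aro average of nonnegative terms does not make each term zero, and there is no uniform linear bound $H_{\mathrm b}(q)\leq C\,[H_{\mathrm b}(q/2)-q]$ --- the ratio diverges as $q\to 1$, since $H_{\mathrm b}(q)\sim(1-q)\log\tfrac{1}{1-q}$ while $H_{\mathrm b}(q/2)-q\sim(1-q)$. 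So ``forces $q_m\in\{0,1\}$ on the stationary support'' is not yet a proof. The paper closes this gap by invoking the asymptotic periodicity of the global Markov chain (\Cref{cor:Markov_chains_asymptotic}): both Ces\`aro limits reduce to finite averages over the $d$ limit distributions $\overline M_r\overline A_r\overline S_r$, on which the pointwise implication applies. Alternatively, you could bound $H_{\mathrm b}(q)$ by a concave modulus $\omega$ of $H_{\mathrm b}(q/2)-q$ with $\omega(0^+)=0$ and apply Jensen. Either fix is routine, but some such device is required; the rest of your argument (including the mea computation giving work rate $1-H_{\mathrm b}(1/4)\simeq 0.189$ bits versus capacity $\simeq 0.272$ bits, which has the same averaging subtlety and the same resolution) then goes through and matches the paper's.
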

See 
\Cref{supp:7.4} for a proof, and refer to \Cref{fig:6}b for a set diagram illustrating the theorem.

This result underscores a fundamental distinction between the tape setting and the percept-action loop setting. In order to be maximally predictive of its percepts, an agent must, for some environments, retain information about past \emph{actions} that carry predictive information about future percepts. While doing so reduces the percept entropy \( H(S_t | M_t) \), thereby increasing the work rate (see \cref{eq:extractable_work_main}), remembering actions reduces the action entropy \( H(A_t | M_t) \), thereby \emph{decreasing} the work rate.

Conversely, randomizing actions without retaining memory of them increases \( H(A_t | M_t) \), but may drive the environment into a less predictable regime, such that \( H(S_t | M_t) \) increases. Crucially, there exist environments---such as the memoryless invariant environment shown in \Cref{fig:5}---for which the  energetic costs outweigh the benefits of implementing either of the two design principles.

Consequently, the two design principles for work-efficient agents in the tape setting can no longer be pursued independently in percept-action loops. Instead, a tradeoff emerges between predictive memory and action forgetfulness, generally rendering both strategies suboptimal.

\section{Discussion and Future Directions} \label{sec:6}
Predicting future observations is a central theme across various fields, including Bayesian and active inference \cite{parr2022active}, predictive analytics \cite{larose2015data}, computational mechanics \cite{crutchfield2012between}, and chaos theory \cite{boccaletti2000control}. It also plays a crucial role in modern machine learning, particularly in transformer models and large language models, which are designed to predict future states in a sequence \cite{lin2022survey}.

However, as we show in this work by analyzing the fundamental limits of information processing in percept-action loops, the mere act of remembering the past to predict the future has thermodynamic consequences. To investigate this, we developed a framework for studying the stochastic thermodynamics of information processing in percept-action loops. Within this framework, we define the \emph{work capacity} of an environment channel as the maximal rate of expected work extraction by an agent. Similar to communication capacity, work capacity is an intrinsic information theoretic property of a channel.  According to previously established design principles for work-efficient agents---derived in the context of linear information processing on a tape---an agent's actions, from its own perspective, should appear maximally random, while its percepts should be as predictable as possible.

Surprisingly, we find that neither of these two principles remains valid in general. Most notably, maximal predictability of percepts is no longer optimal.

This phenomenon arises specifically in percept-action loops with genuine feedback. In such settings, when predicting percepts requires remembering past actions, a trade-off emerges and the goals of prediction and work-efficiency diverge: as we prove in this work, there exist environments in which any agent that maximizes work efficiency must necessarily forget certain aspects of its past actions---and, therefore, cannot be maximally predictive.

Building on the results established in this work, several natural directions for future research emerge:

\begin{itemize}
    \item \textbf{Agents with goals}—In this work, we considered classes of agents with implicit objectives, such as maximizing work rate or predictive power. A natural next step is to investigate agents with specific goals within our framework. One approach is to fix a desired percept-action behavior, which corresponds to specifying an agent channel. Then, the energetic limits of the agent's behavior can be determined by optimizing over all models that implement this channel (see related ideas in the tape setting \cite{elliott2022quantum, thompson2025energetic}). Alternatively, one could emulate a reinforcement learning scenario by encoding rewards as predictable (i.e., low-entropy) percepts. In this case, an agent aiming to maximize its work rate could be guided toward desired behaviors through suitable reward design.
    
    \item \textbf{Dissipation in percept-action loops}—If one considers that both agent and environment thermodynamically implement their respective channels, the agent’s positive work rate implies a corresponding work cost for the environment. In such a setting, the environment converts work into structured correlations, while the agent converts those correlations back into work. For memoryless channels, this conversion can happen without dissipation, with the energetic cost of implementing the environment channel---known in the quantum context as the thermodynamic capacity \cite{navascues2015nonthermal,faist2019thermodynamic}---equaling the work capacity. However, for channels with memory, it remains an open question whether for any $\texttt{agt}\lrstack \texttt{env}$ the agent’s maximum work rate can match the environment’s minimum work cost. Any gap between these values would imply intrinsic entropy production in percept-action loops.
    
    \item \textbf{Quantum work capacity}—A natural extension of this work is to explore quantum generalizations of work capacity. Our framework admits a quantization by replacing classical channels with quantum combs, enabling analysis of percept-action loops in the quantum domain. This allows for studying fundamental quantum limits on work extraction and the design of quantum-enhanced agents \cite{gu2012quantum,dunjko2016quantum, elliott2022quantum, huang2023engines, zambon2024quantum, thompson2025energetic}.
\end{itemize}

More broadly,  our work opens the door to a search for new energetic design principles tailored to percept–action loops with feedback. Such considerations may inform novel organizational principles for biological and artificial agents \cite{rupe2024principles}, moving beyond the predictive paradigm \cite{friston2013life, barnett2015computational, boyd2022thermodynamic}. 

\section{Acknowledgments}{This research was funded in whole or in part by the Austrian Science Fund (FWF) [SFB BeyondC F7102, 10.55776/F71]. For open access purposes, the author has applied a CC BY public copyright license to any author accepted manuscript version arising from this submission. We gratefully acknowledge support from the European Union (ERC Advanced Grant, QuantAI, No. 101055129). The views and opinions expressed in this article are however those of the author(s) only and do not necessarily reflect those of the European Union or the European Research Council - neither the European Union nor the granting authority can be held responsible for them. LJF acknowledges support by the Austrian Research Promotion Agency (FFG) and the European Union via NextGeneration EU under Contract Number FO999921407 (HDcode). LJF thanks Benjamin Morris and Andrew Garner for early discussions that helped shape the direction of this work.}

%

\newpage
\clearpage
\newpage
\onecolumngrid
\appendix
\section*{Supplemental Material}

This Supplemental Material provides the full mathematical framework in a self-contained way, allowing it to be read from start to finish like a technical paper. The main text motivates and further explains the results, and puts them in the context of existing literature.
\subsection*{Contents}
\begin{adjustbox}{clip, trim=0 0 0 8cm} 
  \begin{minipage}{\textwidth}
    \tableofcontents
  \end{minipage}
\end{adjustbox}

\section{Some background on probability and information theory} \label{supp:1}
\subsection{Notation for random variables and stochastic processes}

In this section, we establish some of the notation relating to random variables and stochastic processes used throughout the sequel. Random variables will be denoted by capital letters in standard font, i.e. $X, Y, Z,$ etc. The set of values that each random variable can take, also called an {\em alphabet}, will be denoted by capital letters in calligraphic font, i.e. the alphabet for $X$ is $\mathcal{X}$. In this work, we consider finite alphabets and occasionally also countably infinite products of finite alphabets. The elements of these alphabets, at times referred to as {\em symbols}, will be denoted by lower-case letters, i.e. the random variable can take value $x \in \mathcal{X}$. Probability distributions associated to the random variable $X$ will be denoted by $p_{X}$ with $p_{X}(x)$ denoting the probability that $X$ takes value $x$. The subscript may be omitted when the variable to which the distribution refers is clear. 

Given two alphabets $\mathcal{Y}$ and $\mathcal{Z}$ related to random variables $Y$ and $Z$ respectively, we can consider a new random variable $X$ that takes values in $\mathcal{X} := \mathcal{Y} \times \mathcal{Z}$. That is, $X$ takes values $x = (y,z)$ where $y \in \mathcal{Y}$ and $z \in \mathcal{Z}$. Composite random variables of this type can be constructed from any number of constituent variables, which will be useful for, e.g., the treatment of stochastic processes below.

For the purposes of this work, a {\em discrete-time stochastic process} is given by a set of variables $\{X_{t} | t\in \mathbb{N}_{0}\}$ where, for each $t$, the variable $X_{t}$ takes values in the same (finite) alphabet $\mathcal{X}$. This assumption simplifies measure-theoretic treatments such as those in, e.g., \cite{gray2009probability}. Pursuant to the paragraph above, we can associate a new random variable, denoted $\bm{X}$, to the stochastic process, which takes values in $\mathcal{X}^{\mathbb{N}_{0}} := \bigtimes_{t \in \mathbb{N}_{0}} \mathcal{X}$. That is, $\bm{X}$ takes values that are sequences $(x_{0}, x_{1},...)$ with each $x_{t} \in \mathcal{X}$. It will also be convenient to consider random variables associated to {\em sub}sequences in the following way. Let $l,m \in \mathbb{N}_{0}$ such that $l < m$. We then define $X_{l:m}$ to be the random variable that takes values in $\mathcal{X}^{m-l}:= \bigtimes_{i=l}^{m} \mathcal{X}$, that is, values that are tuples $x_{l:m} := (x_{l}, x_{l+1},...,x_{m-1})$.\footnote{Note that the labeling convention is such that the value left of the colon in the subscript is included in the sequence, while the value to the right is not.} If $m = l+1$, then $X_{l:m}$ is equivalent to a single variable from $\{X_{t} | t\in \mathbb{N}_{0} \}$, so we simply write $X_{l}$ and $x_{l}$. The variable $\bm{X}$ can be considered as the limiting case where $l = 0$ and $m$ goes to infinity.

The notation for distributions associated to the stochastic process $\bm{X}$ and to sequences of variables $X_{l:m}$ follow the same conventions: $p_{\bm{X}}$ denotes a distribution for $
\bm{X}$ and $p_{\bm{X}}(\bm{x})$ denotes the probability that $\bm{X}$ takes as its value the sequence $\bm{x} := (x_{0},x_{1},...) \in \mathcal{X}^{\mathbb{N}_{0}}$; similarly for $p_{X_{l:m}}$ and $p_{X_{l:m}}(x_{l:m})$. In the cases where $l$ and $m$ are ``close'', we sometimes represent the tuples of variable explicitly. For example, in the case of $m = l+2$, instead of $p_{X_{l:m}}$, we write $p_{X_{l}, X_{l+1}}$. In particular, this allows us to consider the distribution over one part of the subsequence, when the other part takes some value. For example, if $p_{X_{l}, X_{l+1}}$ is known, we can consider the distribution over $X_{l}$ that results if $X_{l+1}$ takes the value $x_{l+1}$, which we denote by $p_{X_{l}, X_{l+1} = x_{l+1}}$.

\subsection{Information theory}
For an introductory treatment on information theory, see \cite{cover2005elements}, for a measure-theoretic treatment, see \cite{gray2023entropy}.
\subsubsection{Basic definitions}
Let $X$ be a random variable with distribution $p_X$. The \emph{Shannon entropy} quantifies the uncertainty associated with $p_X$ as
\begin{align}
    H_p\left(X\right) \coloneqq \sum_{i\in\mathcal{X}}p_{X}(i)s_p(i)\label{eq:entropy}
\end{align}
where $s_p(i) \coloneqq -\log_2 p_{X}(i)$ is known as the \emph{surprise} of obtaining outcome $X=i$ \cite{shannon1948mathematical} (see \cite{hankerson2003introduction} for an elegant axiomatic derivation), and the sum runs over all $i\in\mathcal{X}$ such that $p_{X}(i)\neq 0$. If it is clear from context which distribution $p$ is used to compute entropy, we drop the index and simply write $H\left(X\right)$.

Let $X$ and $Y$ be random variables with joint distribution $p_{XY}$. The \emph{conditional entropy} of \(X\) given \(Y\) is defined as
\begin{align}
H\left(X|Y\right) \coloneqq \sum_{i\in \mathcal{X},j\in \mathcal{Y}} p_{XY}\left(i,j\right) s_p(i|j), \label{eq:conditional_entropy}
\end{align}
where $s_p(i|j):=-\log p_{X|Y}\left(i|j\right)$ denotes the conditional surprise of obtaining outcome $x$ given that $y$ has been observed.

Entropy obeys the \textit{chain rule of entropy}
\begin{align}
H\left(X_{0:n}\right) = \sum_{t=0}^{n-1} H\left(X_t|X_{0:t}\right) \label{eq:chain_rule_entropy}
\end{align}
where, if $t=0$, $H\left(X_t|X_{0:t}\right)$ is given by $H\left(X_0\right)$.

The \textit{mutual information} $I\left[X; Y\right]$ of random variables \(X\) and \(Y\) is defined as 
\begin{align}
    I\left[X; Y\right] \coloneqq H\left(X\right) - H\left(X|Y\right), \label{eq:mutual_information}
\end{align}
which, with the chain rule of entropy \cref{eq:chain_rule_entropy}, can be written in the symmetric form 
\begin{align}
    I\left[X; Y\right] = H\left(X Y\right) - H\left(Y|X\right) - H\left(X|Y\right). \label{eq:mutual_information_symmetric}
\end{align}
The \textit{Conditional mutual information} is then simply defined via the conditional entropy as
\begin{align}
    I\left[X; Y |Z\right] \coloneqq H\left(X|Z\right) - H\left(X|Y Z\right),\label{eq:def_cond_mut_info}
\end{align}
or equivalently in a symmetric form:
\begin{align}
    I\left[X; Y |Z\right] = H\left(X Y|Z\right) - H\left(Y|X Z\right) - H\left(X|Y Z\right).\label{eq:def_cond_mut_info_sym}
\end{align}
We say that $X$ and $Y$ are conditionally independent if $I\left[X;Y|Z\right] = 0$. In fact, $I\left[X;Y|Z\right] = 0$ iff
\begin{align}
    p_{XY|Z} = p_{X|Z}p_{Y|Z}.
\end{align}

The conditional mutual information inherits a chain rule from entropy, which can be written as
\begin{align}
    I\left[X_{0:n};Y|Z\right] = \sum_{t=0}^{n-1} I\left[X_{t};Y|Z X_{0:t}\right]. \label{eq:chain_rule_mutual_information}
\end{align}
The chain rule for mutual information is obtained by dropping $Z$ on both sides. Often, we will use the chain rule for a single step:
    \begin{align}
        I[W; XY|Z] = I[W; X|Z] + I[W; Y|XZ]. \label{eq:chain_rule_mutual_information_one_step}
    \end{align}
The measures of information defined so far are all nonnegative and can be interpreted based on (conditional) surprise, respectively its averaged version, (conditional) entropy. For a consistent treatment of multiple random variables, it is convenient to extend the definition of (conditional) mutual information to more than two arguments. The so-called multivariate mutual information or \textit{interaction information}~\cite{yeung2002first,kolchinsky2022novel} can be defined inductively via
\begin{align}
    I[X_i;\dots; X_j; X_k] \coloneqq I[X_i; \dots X_j] - I[X_{i};\dots;X_j|X_k], 
\end{align}
and similarly \textit{conditional interaction information} via
\begin{align}
    I[X_i;\dots; X_j; X_k|X_l] \coloneqq I[X_i; \dots X_j|X_l] - I[X_j;\dots;X_j|X_l X_k]. 
\end{align}
However, it should be noted that multivariate mutual information of three or more variables can assume negative values which makes it difficult to interpret~\cite{kolchinsky2022novel}.

\subsubsection{Information diagrams}
The properties of Shannon's basic measures of information such as entropy and mutual information bear a resemblance to set theory. It has been shown that one can establish a one-to-one correspondence between these measures of information and a (signed) measure on sets \cite{yeung1991new,ting1962amount}. We write \(X_1, \dots,X_n\) to denote random variables, and \(\tilde{X_1}, \dots,\tilde{X_n}\) for the corresponding sets. The union of sets \(\tilde{X_i} \cup \dots \cup \tilde{X_j}\) corresponds to the joint entropy \(H\left(X_i, \dots, X_j\right)\), the intersection of sets \(\tilde{X_i} \cap \dots \cap \tilde{X_j}\) corresponds to the multivariate mutual information \(I\left[X_i; \dots ; X_j\right]\), and the set difference \(\tilde{X_i} \setminus \tilde{X_j}\) corresponds to the conditional entropy \(H\left(X_i|X_j\right)\). Conditional mutual information \(I\left[X_i\dots X_j;S_k\dots S_l|C_n \dots C_m\right]\) corresponds then to \(\left( (\tilde{X_i}\cup\dots\cup \tilde{X_{j}}) \cap  (\tilde{S_k}\cup\dots\cup \tilde{S_{l}}) \right) \setminus (\tilde{C_n}\cup\dots\cup \tilde{C_{m}}) \). 
This correspondence allows us to represent the relations between measures of information in terms of Venn diagrams, whose primary sets correspond to the entropies of single random variables. One example of such an information diagram is given in \Cref{fig:InfoDiagram}.

\begin{figure}[h!]
  \centering
  \begin{tikzpicture}[font=\small]
    \draw[draw = mred, fill=mred, fill opacity=0.3]   (-0.3,0) circle (2.2cm);
    \draw[draw = mblue, fill=mblue, fill opacity=0.3]  (1.8,0) circle (2.2cm);
    \draw[draw = mgreen, fill=mgreen, fill opacity=0.3] (0.75,1.6) circle (2.2cm);

    \node at (-1.4, -0.4)    (X) {$H(X|Y,Z)$};
    \node at (2.9, -0.4)   (Y) {$H(Y|X,Z)$};
    \node at (0.75, 2.5)  (Z) {$H(Z|X,Y)$};
    \node[font=\large, mred] at (-3.2,0) {$X$};
    \node[font=\large, mblue] at (4.6,0) {$Y$};
    \node[font=\large, mgreen] at (3.3,2.5) {$Z$};
    \node at (-0.7, 1.5)         {$I[X;Z|Y]$};
    \node at (2.2, 1.5)          {$I[Y;Z|X]$};
    \node at (0.75, -0.9)       {$I[X;Y|Z]$};
    \node at (0.75, 0.5)        {$I[X;Y;Z]$};
  \end{tikzpicture}
  \caption{\label{fig:InfoDiagram} An example of an information diagram. An information diagram illustrates the relations between (conditional) entropies and (conditional) mutual information. }
\end{figure}
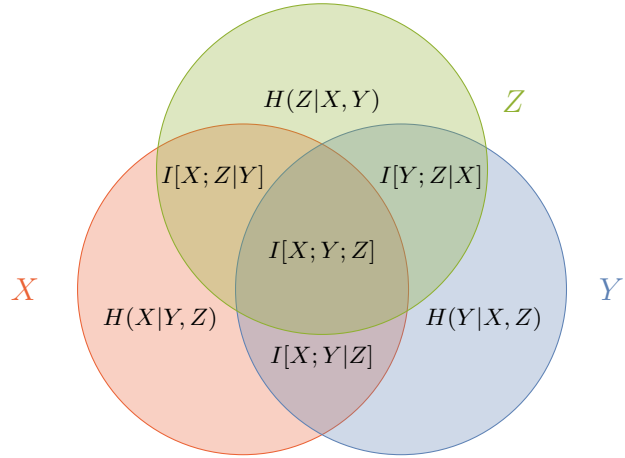

\subsubsection{Entropy rate}
Let $\bm{X}$ be a stochastic process with distribution $p_{\bm{X}}$. Then, the \textit{entropy rate}, a process's degree of intrinsic randomness, is defined as 
\begin{align}
    h\left(\bm{X}\right) \coloneqq \lim\limits_{n\rightarrow \infty}\frac{H\left(X_{0:n}\right)}{n} \label{eq:entropy_rate}
\end{align}
if the limit exists. The entropy rate exists for a broad class of processes known as asymptotically mean stationary processes \cite{gray2023entropy} which contains stationary processes and, as we will see, also processes which are generated by finite-state hidden Markov models.  

Before we proceed, we introduce the following notation for the \emph{Ces\`aro limit}, the limit of the arithmetic mean, which will be used throughout this work \cite{tijms2003first}:
\begin{align}
    \cesaro{f(t)}_t \coloneqq \lim\limits_{n\rightarrow \infty}\frac{1}n\sum_{t=0}^{n-1}f(t). \label{eq:def_cesaro}
\end{align}
The Ces\`aro limit is linear in the sense that 
   \begin{align}
       \cesaro{af(t)+bg(t)}_t = a\cesaro{f(t)}_t+b\cesaro{g(t)}_t
   \end{align}
   whenever $\cesaro{f(t)}_t$ and $\cesaro{g(t)}_t$ exist, where $a,b\in\mathbb{R}$.

Using the \emph{Ces\`aro limit}, we can state a chain rule for entropy rate as follows:
   \begin{align}
   \cesaro{H\left({X_t|X_{0:t}}\right)}_t = h\left(\bm{X}\right). \label{eq:entropy_rate_chain_rule}
   \end{align}
   This chain rule is a consequence of \cref{eq:chain_rule_entropy}.

When working on the information theory of stochastic processes, expressions which contain a infinite number of random variables, such as $I[X_{n:\infty};Y|Z]$, are commonly encountered. It should be noted that such expressions are always defined via a limit, that is, $I[X_{n:\infty};Y|Z]=\lim\limits_{m\rightarrow \infty}I[X_{n:m};Y|Z]$. In particular, using the chain rule, one can show that
\begin{lemma}
    For any $n\in\mathbb{N}_0$,  \(I[X_{n:\infty};Y|Z]\) is finite, where $Y$, $Z$, and $X_t$ for all $t$ are finite random variables. 
\end{lemma}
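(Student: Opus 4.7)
The plan is to use the symmetric form of the conditional mutual information and the finiteness of $Y$ to bound the sequence uniformly, and then to argue the limit exists by monotonicity.

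First I would rewrite $I[X_{n:m};Y|Z]$ for any finite $m > n$ using the definition in \cref{eq:def_cond_mut_info} as
\begin{align}
    I[X_{n:m};Y|Z] = H(Y|Z) - H(Y|Z,X_{n:m}).
\end{align}
Since $Y$ takes values in a finite alphabet $\mathcal{Y}$, we have $H(Y|Z) \leq H(Y) \leq \log|\mathcal{Y}| < \infty$. Moreover, $H(Y|Z,X_{n:m}) \geq 0$, because conditional entropy of discrete variables is always nonnegative. Combining these two observations yields the uniform bound
\begin{align}
    0 \leq I[X_{n:m};Y|Z] \leq \log|\mathcal{Y}|
\end{align}
for every $m > n$, independently of $m$.

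Next I would argue that the sequence $(I[X_{n:m};Y|Z])_m$ is monotonically non-decreasing in $m$. By the single-step chain rule \cref{eq:chain_rule_mutual_information_one_step},
\begin{align}
    I[X_{n:m+1};Y|Z] = I[X_{n:m};Y|Z] + I[X_m;Y|Z,X_{n:m}],
\end{align}
and the second term is a (standard) conditional mutual information of finite random variables, which is nonnegative. Hence the sequence is monotone and bounded above by $\log|\mathcal{Y}|$, so the limit defining $I[X_{n:\infty};Y|Z]$ exists and lies in $[0,\log|\mathcal{Y}|]$, which gives finiteness.

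This argument is essentially routine; the only subtlety is that it relies on the finiteness of the alphabet of $Y$ rather than of the $X_t$'s (one cannot just bound $H(X_{n:m}|Z)$, which diverges as $m \to \infty$ in general). Rewriting via the symmetric form to shift the entropy onto $Y$ is the key move, and I expect no real obstacle beyond ensuring the proof uses this direction of the inequality rather than the $H(X_{n:m}|Z) - H(X_{n:m}|Y,Z)$ form.
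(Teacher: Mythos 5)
Your proof is correct and follows essentially the same route as the paper's: both establish monotonicity of $I[X_{n:m};Y|Z]$ in $m$ via the chain rule for mutual information, bound the quantity uniformly by the entropy of the finite variable $Y$, and conclude by monotone convergence. The only cosmetic difference is that you make the bound explicit through the form $H(Y|Z)-H(Y|Z,X_{n:m})\leq \log|\mathcal{Y}|$, whereas the paper states directly that the partial sums are upper bounded by $H(Y)$.
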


\begin{proof}
Using the chain rule of mutual information, \cref{eq:chain_rule_mutual_information}, we find
\begin{align}
I[X_{n:\infty};Y|Z] = \lim_{m\rightarrow\infty} \sum_{t=n}^m I[X_t;Y|ZX_{n:t}]  . 
\end{align}
Note that, for finite random variables $A,D,S_{n:t}$ for $t \in \{n,...,m\}$, all partial sums $\sum_{t=n}^m I[A;S_t|DS_{n:t}] = I[A;S_{n:m}|D]$ are upper bounded by \(H(A)\), and every summand is nonnegative, so the monotone convergence theorem ensures that the limit exists. 
\end{proof}

\section{Finite-state Markov chains} \label{supp:2}
	
	This appendix reviews some results on finite-state Markov chains from the literature. For a more complete treatment of finite-state Markov chains, we refer the reader to \cite{gallager1996discrete, iosifescu2014finite}.

	Let $\bm{X}$ be a stochastic process with distribution $p$. A (first-order) Markov chain is a stochastic process $\bm{X}$ such that
    \begin{align}
		p_{X_t|X_{0:t}}\left(x_t|x_{0:t-1}, x_{t-1}\right) = p_{X_t|X_{0:t}}\left(x_t|x'_{0:t-1}, x_{t-1}\right)\label{eq:Markov_chain}
	\end{align}
    for all $x_{t-1}, x_t\in\mathcal{X}$ and for any $t\geq 1$  and $x_{0:t-1},x'_{0:t-1}\in\mathcal
    X^{t-1}$.
    
	The Markov chain is said to be \emph{finite-state} if $\mathcal{X}$ is finite, and it is said to be \emph{homogeneous} if \cref{eq:Markov_chain} does not depend on time $t$, that is, $p_{X_t|X_{t-1}}( j| i) = p_{X_{t'}|X_{t'-1}}(j|i)$ for all $i,j \in \mathcal{X}$ and $t,t' \in \mathbb{N}$. We then write $\phi(j|i) \coloneqq p_{X_t|X_{t-1}}(  j| i)$ and call $\phi(j|i)$ the  \emph{transition probability}. For the remainder of this appendix, it is assumed that all finite-state Markov chains are homogeneous. 
	
    Finite-state Markov chains are thus conveniently characterized by their initial distribution $p(X_0)$ and a $|\mathcal{X}|\times |\mathcal{X}|$ \emph{stochastic matrix} $\Phi=\left(\phi(j|i)\right)_{j,i\in\mathcal{X}}$. The matrix $\Phi$ is also called a \emph{transition matrix}. We use the convention that $\Phi$ is a right stochastic matrix, i.e., that each row of $\Phi$ must sum to one. This convention is more common in the physical literature, while the mathematical literature such as \cite{iosifescu2014finite} often uses the convention that $\Phi$ is left stochastic. Results can be translated from one convention to the other by a simple transposition.

    In what follows, we introduce some theory of finite-state Markov chains which will be needed to understand in what sense finite-state Markov chains are well-behaved in the asymptotic time limit $t\rightarrow \infty$.
    The probability to reach state $j$ after $n$ steps starting from state $i$ is given by	$\left(\Phi^n\right)_{j,i}$.  If $i$ is a return state, i.e., $\left(\Phi^n\right)_{i,i}>0$ for some $n\geq 1$, we define its \emph{period} $d_i$ as the greatest common divisor of all natural numbers $n$ such that  $\left(\Phi^n\right)_{i,i}>0$ \cite[p.\,81]{iosifescu2014finite}. Further, the \emph{first passage time} to state $j$ is defined as
	\begin{align}
		T^{\mathrm{first}}_{j} \coloneqq \min\{t\geq 1|X_t=j\}.
	\end{align}
	where $T^{\mathrm{first}}_{j}$ takes values in $\mathbb{N}\cup \{\infty\}$. Note that the first passage time is a random variable. Define $f(n,i,j)$ as the probability $p\{T^{\mathrm{first}}_{j} = n|X_0=i\}$ that $T^{\mathrm{first}}_{j} = n$ given that the chain started in state $i$. Then, $(f(n,i,j), n=1,2,\dotsc, \infty)$ is the probability distribution of $T^{\mathrm{first}}_{j}$ given that the Markov chain started in state $i$ \cite[p.86]{iosifescu2014finite}.
    
Let
\begin{align}
    f(i,j) \coloneqq  \sum_{n=1}^\infty f(n,i,j) =p\{T^{\mathrm{first}}_{j} <\infty|X_0=i\}.
\end{align}
A state $i$ is said to be \emph{recurrent} if $f(i,i)$ equals one, i.e., the chain is guaranteed to return to $i$ eventually with probability one. Otherwise, state $i$ is called \emph{transient}, i.e., there is a nonzero probability that the chain will never return to $i$ \cite[p.88]{iosifescu2014finite}.

    Let $m(i,i)$ be the mean recurrence time of state $i$,
    \begin{align}
		m(i,i) &\coloneqq\sum_{n= 1}^\infty nf(n,i,i),
	\end{align}
	where $m$ can take values in $[1,\infty]$. Note that $m<\infty$  for recurrent states and $m=\infty$ for transient states.

    Further, let $f_r(i,j)$ be the probability that $X_t=j$ occurs at least for one $t=r(\operatorname{mod} d_j)$ given that the chain started in $i$:
	\begin{align}
		f_r(i,j) \coloneqq \sum_{m\geq 0}f(md_j+r,i,j). \label{eq:f_r}
	\end{align}
	We are now in the position to state the following result which characterizes the asymptotic behavior of arbitrary homogeneous finite-state Markov chains.
	\begin{lemma}{ {[adapted from \cite[p.153]{iosifescu2014finite}]}} \label{lem:Markov_chains_asymptotic}
		Let $\bm{X}$ be a homogeneous finite-state Markov chain over an alphabet $\mathcal{X}$ with transition matrix $\Phi$. Then, for any state $i\in\mathcal{X}$ and any transient state $j$ we have
        \begin{align}
			\lim\limits_{n\rightarrow\infty}\left(\Phi^n\right)_{j,i} =0. \label{eq:lem_transient_states}
		\end{align}
      Further, for any state $i\in\mathcal{X}$ and any recurrent state $j$ with period $d_j$ we have for all $r_j\in\{1,2,\dotsc, d_j\}$,
		\begin{align}
			\lim\limits_{n\rightarrow\infty}\left(\Phi^{nd_j+r_j}\right)_{j,i} =\frac{ f_{r_j}(i,j)d_j}{m(j,j)}.
		\end{align}
	\end{lemma}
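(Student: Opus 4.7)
The plan is to treat the transient and recurrent cases separately, with both arguments resting on a first-passage decomposition of $(\Phi^n)_{j,i}$ combined with classical renewal theory applied to the return-time distribution of state $j$.

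For the transient case, I would first recast $(\Phi^n)_{j,i}$ probabilistically as $p(X_n=j \mid X_0 = i)$ and observe that the expected total number of visits to $j$ starting from $i$ is $\sum_{n=0}^\infty (\Phi^n)_{j,i}$. By decomposing according to the first passage time $T^{\mathrm{first}}_j$ and applying the strong Markov property at each return, this expectation equals $f(i,j)/(1-f(j,j))$ if $i\neq j$, and $1/(1-f(j,j))$ if $i=j$. Transience of $j$ means $f(j,j)<1$, so the series $\sum_{n\geq 0}(\Phi^n)_{j,i}$ is finite, which forces $(\Phi^n)_{j,i}\to 0$ and yields \cref{eq:lem_transient_states}.

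For the recurrent case, I would start from the first-passage identity
\begin{align}
(\Phi^n)_{j,i} \;=\; \sum_{k=1}^{n} f(k,i,j)\,(\Phi^{n-k})_{j,j},
\end{align}
obtained by conditioning on the step at which the chain first reaches $j$. Because $j$ has period $d_j$, the factor $(\Phi^{n-k})_{j,j}$ vanishes unless $n-k$ is a multiple of $d_j$. Setting $n = nd_j + r_j$ therefore forces $k \equiv r_j \pmod{d_j}$, reducing the sum to
\begin{align}
(\Phi^{nd_j+r_j})_{j,i} \;=\; \sum_{m=0}^{n} f(md_j+r_j,\,i,\,j)\,(\Phi^{(n-m)d_j})_{j,j}.
\end{align}
The decisive analytic input is the cyclic renewal theorem applied to the return chain: for a recurrent state of period $d_j$ with mean recurrence time $m(j,j)\in[1,\infty]$,
\begin{align}
\lim_{\ell\to\infty}(\Phi^{\ell d_j})_{j,j} \;=\; \frac{d_j}{m(j,j)},
\end{align}
with the right-hand side interpreted as $0$ in the null-recurrent case. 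Because the summands satisfy $f(md_j+r_j,i,j)(\Phi^{(n-m)d_j})_{j,j} \leq f(md_j+r_j,i,j)$ and $\sum_{m\geq 0}f(md_j+r_j,i,j) = f_{r_j}(i,j)\leq 1$ by \cref{eq:f_r}, the dominated convergence theorem for series lets me pass the limit inside the sum, giving
\begin{align}
\lim_{n\to\infty}(\Phi^{nd_j+r_j})_{j,i} \;=\; \frac{d_j}{m(j,j)}\sum_{m\geq 0}f(md_j+r_j,i,j) \;=\; \frac{f_{r_j}(i,j)\,d_j}{m(j,j)},
\end{align}
which is the claimed identity.

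The main obstacle is the renewal-theorem input $(\Phi^{\ell d_j})_{j,j}\to d_j/m(j,j)$: this is not elementary and requires either Blackwell's renewal theorem or a Tauberian argument applied to the return-time sequence $(f(n,j,j))_n$ supported on $d_j\mathbb{N}$. Since the lemma is explicitly adapted from \cite{iosifescu2014finite}, I would simply invoke this result rather than reprove it. The remaining steps are routine bookkeeping: verifying the first-passage decomposition via the strong Markov property, restricting to the correct residue class modulo $d_j$, and justifying the exchange of limit and sum through the uniform bound noted above.
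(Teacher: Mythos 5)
Your proposal is correct and follows the classical argument: the paper itself supplies no proof of this lemma but defers entirely to Iosifescu (pp.~153--154), and your reconstruction—finiteness of the expected number of visits for transient states, plus the first-passage decomposition combined with the Erd\H{o}s--Feller--Pollard renewal limit $(\Phi^{\ell d_j})_{j,j}\to d_j/m(j,j)$ and dominated convergence for recurrent states—is exactly the standard route taken there. The only remark worth adding is that for a finite-state chain every recurrent state is positive recurrent, so the null-recurrent reading of $d_j/m(j,j)=0$ never actually arises here; this does not affect the validity of your argument.
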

	For a proof see \cite[p.153-154]{iosifescu2014finite}. Note that when compared to \cite[Thm 5.1, p.153]{iosifescu2014finite}, we treat the case of transient states (\cref{eq:lem_transient_states}) separately because for transient states $j$ it can happen that $d_j$ is not well defined if there is no $n\in\mathbb{N}$ for which $(\Phi^n)_{i,j}>0$.
    \Cref{lem:Markov_chains_asymptotic} states that for each starting state $i$, the probability for the chain to be in a transient state goes to zero as $n\rightarrow \infty$ while the probability for the chain to be in a recurrent state $j$ is periodic with some finite period $d_j$ as $n\rightarrow \infty$.

The following corollary, which is adapted from \cite[p.154]{iosifescu2014finite}, summarizes some useful consequences of  \Cref{lem:Markov_chains_asymptotic}. We again make use of the notation $\cesaro{\bullet}_t = \lim\limits_{n\rightarrow \infty}\frac{1}n\sum_{t=0}^{n-1}\bullet$.
\begin{corollary} \label{cor:Markov_chains_asymptotic}
       Let $\bm{X}$ with distribution $p_{\bm{X}}$ be a homogeneous finite-state Markov chain over an alphabet $\mathcal{X}$ with transition matrix $\Phi$. For any recurrent state $i\in\mathcal{X}$, let $d_i$ be its period, and let $d$ be the least common multiple of all $d_i$. Then, 
       \begin{enumerate}[label=(\roman*)]
       \item \emph{[$d$ convergent subsequences]} for all $r\in\{1,2, \dotsc, d\}$ the limit
       \begin{align}
       \Phi^{(r)}_{\infty}\coloneqq \lim\limits_{n\rightarrow\infty} \Phi^{nd+r} \label{eq:cor_Markov_asymptotics_1}
       \end{align}
       exists  and in particular $\Phi^{(r)}_{\infty}=\Phi^r\Phi^{(d)}_{\infty} $,
        \item \emph{[Ces\`aro limit]}  the matrix 
            \begin{align}
           \Pi=\cesaro{\Phi^t}_t
        \end{align}
        exists and its coefficients are given by $\pi_{j,i}=\frac{f(i,j)}{m(j,j)}$,
        \item \emph{[continuous function of Ces\`aro limit]}  for any continuous function $g:\mathcal{T}\rightarrow \mathbb{R}$ where $\mathcal{T}$ denotes the set of $|\mathcal{X}|\times |\mathcal{X}|$ transition matrices, 
            \begin{align}
           \cesaro{g\left(\Phi^t\right)}_t
        \end{align}
        exists, and is given by \(\cesaro{g\left(\Phi^t\right)}_t  =  \sum_{r=1}^d g\left(\Phi^{(r)}_\infty\right)/d\). 
        \end{enumerate}
    \end{corollary}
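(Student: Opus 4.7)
The plan is to derive all three items from Lemma 1 (which characterises the asymptotics of $\Phi^{n}$ separately on each recurrent state and on the transient states) together with one elementary fact about Ces\`aro averages of bounded sequences that converge along each residue class. That elementary fact, which I will state and prove first, is: if $(a_{n})_{n\in\Nzero}$ is a bounded sequence in $\mathbb{R}$ and for every $r\in\{1,\dots,d\}$ the subsequence $(a_{nd+r})_{n}$ converges to some $L_{r}$, then $\cesaro{a_{t}}_{t}$ exists and equals $\frac{1}{d}\sum_{r=1}^{d}L_{r}$. The proof is a one-line decomposition: writing $n=Nd+s$ with $0\le s<d$ and splitting the partial sum into the $d$ arithmetic progressions $\{kd+r\}_{k}$, the Ces\`aro mean of each progression tends to $L_{r}$ by the classical ``Ces\`aro means of a convergent sequence converge to the same limit'' statement, the remainder of size at most $d$ contributes $O(1/n)$, and averaging gives the claim.

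For item (i), fix $r\in\{1,\dots,d\}$ and a state $j\in\mathcal{X}$. If $j$ is transient, Lemma 1 gives $(\Phi^{nd+r})_{j,i}\to 0$ for every starting state $i$. If $j$ is recurrent with period $d_{j}$, then because $d_{j}$ divides $d$, writing $r=qd_{j}+r_{j}$ with $r_{j}\in\{1,\dots,d_{j}\}$ yields $nd+r=(n d/d_{j}+q)d_{j}+r_{j}$, so Lemma 1 applies with index $nd/d_{j}+q\to\infty$ and gives $(\Phi^{nd+r})_{j,i}\to f_{r_{j}}(i,j)d_{j}/m(j,j)$. Thus every entry of $\Phi^{nd+r}$ converges, which proves the existence of $\Phi^{(r)}_{\infty}$. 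The factorisation $\Phi^{(r)}_{\infty}=\Phi^{r}\Phi^{(d)}_{\infty}$ then follows from $\Phi^{nd+r}=\Phi^{r}\Phi^{nd}$ and continuity of matrix multiplication (after noting $\lim_{n}\Phi^{nd}=\lim_{n}\Phi^{(n+1)d}=\Phi^{(d)}_{\infty}$).

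For item (ii), apply the elementary Ces\`aro fact entrywise to $a_{t}:=(\Phi^{t})_{j,i}$, which is bounded (in $[0,1]$) and converges along each residue class by (i). This yields existence of $\Pi=\cesaro{\Phi^{t}}_{t}$ with $\pi_{j,i}=\frac{1}{d}\sum_{r=1}^{d}(\Phi^{(r)}_{\infty})_{j,i}$. For transient $j$ this gives $\pi_{j,i}=0=f(i,j)/m(j,j)$ (using the convention $m(j,j)=\infty$). For recurrent $j$ the computation reduces to
\begin{equation*}
\pi_{j,i}=\frac{1}{d}\sum_{r=1}^{d}\frac{f_{r_{j}(r)}(i,j)\, d_{j}}{m(j,j)}=\frac{d_{j}}{d\, m(j,j)}\cdot\frac{d}{d_{j}}\sum_{r_{j}=1}^{d_{j}}f_{r_{j}}(i,j)=\frac{f(i,j)}{m(j,j)},
\end{equation*}
where I used that as $r$ ranges over $\{1,\dots,d\}$ the residue $r_{j}(r)\in\{1,\dots,d_{j}\}$ is hit exactly $d/d_{j}$ times, and the partition $f(i,j)=\sum_{r_{j}=1}^{d_{j}}f_{r_{j}}(i,j)$ from the definition \cref{eq:f_r}.

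For item (iii), continuity of $g$ together with (i) gives $g(\Phi^{nd+r})\to g(\Phi^{(r)}_{\infty})$ for every $r\in\{1,\dots,d\}$; the sequence $(g(\Phi^{t}))_{t}$ is bounded because $g$ is continuous on the compact set $\mathcal{T}$ of stochastic matrices. A second application of the elementary Ces\`aro fact gives existence of $\cesaro{g(\Phi^{t})}_{t}$ and the claimed formula. The only genuine bookkeeping step is the residue-counting in (ii); everything else reduces to Lemma 1 plus the one elementary Ces\`aro lemma, so I expect no real obstacle.
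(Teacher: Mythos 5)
Your proposal is correct and follows essentially the same route as the paper: convergence along each residue class from Lemma~\ref{lem:Markov_chains_asymptotic}, the Ces\`aro-mean-of-$d$-convergent-subsequences fact for (ii) and (iii), and the residue-counting identity $f(i,j)=\sum_{r_j=1}^{d_j}f_{r_j}(i,j)$ with each class mod $d_j$ hit $d/d_j$ times. You merely make explicit what the paper leaves terse (the divisibility bookkeeping $nd+r=(nd/d_j+q)d_j+r_j$, the transient-state convention $m(j,j)=\infty$, and the elementary Ces\`aro lemma that the paper cites instead of proving), all of which checks out.
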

    \emph{Proof.}
     \begin{enumerate}[label=(\roman*)]
\item Existence follows from \cref{lem:Markov_chains_asymptotic} and $\Phi^{(r)}_{\infty}=\Phi^r\Phi^{(d)}_{\infty}$ from $\lim\limits_{n\rightarrow\infty} \left(\Phi^{r}\Phi^{nd}\right)=\Phi^{r}\lim\limits_{n\rightarrow\infty} \left(\Phi^{nd}\right)$.
       \item Follows from the fact that by \cref{cor:Markov_chains_asymptotic}(i) the sequence $(\Phi^t)_t$ has $d$ convergent subsequences each of which has a convergent Ces\`aro limit by the Ces\`aro limit theorem \cite[5.3.1]{edwards1979fourier}, and 
       \begin{align}
           \frac{1}{d_j}\sum_{r=1}^{d_j} \frac{f_r(i,j)d_j}{m(j,j)} = \frac{f(i,j)}{m(j,j)}.
       \end{align}
       \item A function \(g\) is continuous if and only if for a convergent sequence \(\Pi_n \rightarrow \Pi\) the sequence \(g(\Pi_n)\) converges to \(g(\Pi)\)~\cite[{Thm. 21.3}]{munkres2013topology}. It follows then from \cref{cor:Markov_chains_asymptotic}(i), that the sequence \(g(\Phi_t)\) has d convergent subsequences, and therefore converges in the Ce\`saro limit to 
       \begin{equation}
           \cesaro{g\left(\Phi_t\right)}_t = \frac{1}{d} \sum_{r=1}^d g\left(\Phi^{(r)}_\infty\right). 
       \end{equation}
     \end{enumerate}

     Finally, it should be noted that not only the per-step distributions of Markov chains are asymptotically well behaved (as a consequence of \cref{cor:Markov_chains_asymptotic}), but also the entropy rate as defined in \cref{eq:entropy_rate}. Entropy rate exists even for broader classes of processes such as deterministic functions of Markov chains:
Let $\bm{X}$ be a finite-state Markov chain. We say that the process $\bm{Y}$ over a finite alphabet $\mathcal{Y}$  is a deterministic function of $\bm{X}$ if $Y_t=f(X_t)$ for all $t\in\mathbb{N}_0$ where $f:\mathcal{X}\rightarrow \mathcal{Y}$ is a deterministic function. (Note that the class of deterministic functions of finite-state Markov chains is equivalent to finite-state finite-alphabet hidden Markov chains \cite{ephraim2002hidden} in the sense that any deterministic function of a Markov chain can be
described as a finite-alphabet hidden Markov chain, and any finite-alphabet
hidden Markov chain can be described as a deterministic function
of Markov chain with an augmented state space \cite{ephraim2002hidden}.)

\begin{lemma} \label{lem:entropy_rate_existence}
    Let $\mathcal{X}$ and $\mathcal{Y}$ be finite alphabets, $f:\mathcal{X}\rightarrow\mathcal{Y}$ a map, $\bm{X}$ a finite-state Markov chain on $\mathcal{X}$, and $\bm{Y}=(f(X_0),f(X_1),f(X_2), \dotsc)$.  Then, 
    \begin{align}
        \cesaro{H(Y_{0:t+1})}_t
    \end{align}
    exists.
\end{lemma}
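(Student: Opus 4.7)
The plan is to reduce the question to an AMS-type statement about the underlying Markov chain $\bm{X}$, then transfer the conclusion through the deterministic map $f$ and invoke a Cesàro existence argument for the resulting per-step entropies. Note first that the statement is most meaningfully read as asserting that the entropy rate of $\bm{Y}$ exists: by Stolz--Ces\`aro (or directly the chain rule \cref{eq:chain_rule_entropy}), this is equivalent to the existence of $\cesaro{H(Y_t|Y_{0:t})}_t$ and of $\lim_{n\to\infty}H(Y_{0:n})/n$. If one instead reads the statement literally, existence in $[0,\infty]$ is immediate because $H(Y_{0:t+1})$ is non-decreasing in $t$, so the Ces\`aro mean of a non-decreasing sequence converges in the extended reals; the substantive content is therefore the finiteness and precise value of the limit, i.e.\ the existence of $h(\bm{Y})$.

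First I would establish an asymptotic mean stationarity (AMS) property for $\bm{X}$. By \cref{cor:Markov_chains_asymptotic}(ii), $\Pi = \cesaro{\Phi^t}_t$ exists, and consequently $\cesaro{p_{X_t}}_t = \Pi\,p_{X_0}$ exists. More generally, for any fixed block length $k$, the joint distribution $p_{X_t X_{t+1}\cdots X_{t+k}}$ is a polynomial (hence continuous) function of $\Phi^t$ and of the fixed data $(p_{X_0},\Phi)$, so by \cref{cor:Markov_chains_asymptotic}(iii) the Ces\`aro limit $\cesaro{p_{X_t X_{t+1}\cdots X_{t+k}}}_t$ exists for every $k \in \mathbb{N}_0$.

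Second, I would push this through $f$. Because $Y_s = f(X_s)$, the distribution $p_{Y_t\cdots Y_{t+k}}$ is the pushforward of $p_{X_t\cdots X_{t+k}}$ under $(f,\dots,f)$, which is a linear map on probability vectors. Linearity commutes with the Ces\`aro limit, so $\cesaro{p_{Y_t\cdots Y_{t+k}}}_t$ exists for every $k$; equivalently, $\bm{Y}$ is AMS in Gray's sense \cite{gray2023entropy}.

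Third, I would invoke the classical result that an AMS process with finite alphabet has a well-defined entropy rate (see \cite{gray2023entropy}, and cf.\ \cite{kieffer1981markov} for the Markov-channel version): $\lim_{n\to\infty}H(Y_{0:n})/n$ exists. By Stolz--Ces\`aro and the chain rule this yields existence of $\cesaro{H(Y_t|Y_{0:t})}_t$, from which any reasonable reading of $\cesaro{H(Y_{0:t+1})}_t$ follows. The main obstacle is the black-box use of Gray's AMS entropy-rate theorem; a self-contained alternative would decompose $\mathcal{X}$ into its recurrent communicating classes and transient states via \cref{lem:Markov_chains_asymptotic}, noting that the chain is eventually absorbed into one of finitely many recurrent classes and that each class contributes a well-defined entropy rate (possibly after averaging the $d$ periodic subsequences). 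The delicate bookkeeping is handling the periodic components and the residual contribution from transient states, both of which are controlled by \cref{cor:Markov_chains_asymptotic}.
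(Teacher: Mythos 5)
Your proposal is correct and follows essentially the same route as the paper, whose entire proof consists of citing Kieffer--Rahe (Markov channels are asymptotically mean stationary) together with Gray's entropy ergodic theorem; you reproduce exactly this two-step structure, merely replacing the first citation with a self-contained derivation of asymptotic mean stationarity of $\bm{Y}$ from \cref{cor:Markov_chains_asymptotic} and the pushforward under $f$. Your side remark---that the displayed Ces\`aro limit read literally diverges whenever $h(\bm{Y})>0$, and that the substantive content is the existence of the entropy rate $h(\bm{Y})=\cesaro{H(Y_t|Y_{0:t})}_t$---is also the correct reading of how the lemma is actually used.
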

\emph{Proof.} 
This follows from \cite[theorem 9]{kieffer1981markov} and the entropy ergodic theorem \cite[theorem 3.1.1]{gray2023entropy}. \hfill$\square$\\

\section{Finite-alphabet finite-state hidden Markov channels} \label{supp:3}

This appendix defines hidden Markov channels in general as well as some special classes of hidden Markov channels.	For a review on hidden Markov processes see \cite{ephraim2002hidden}.

Let \( \mathcal{X} \) and \( \mathcal{Y} \) denote the finite input and output alphabets, respectively. A discrete-time, finite-alphabet channel is defined as a function
from input sequences $\bm{x}\in\mathcal{X}^{\mathbb{N}_0}$ to distributions over the channel’s output process, $\bm{Y}$. This function can be represented as a conditional distribution, denoted $\nu_{\bm{Y}|\bm{X}}$.
Thus, for a fixed input sequence $\bm{X} = \bm{x}$, a channel assigns probabilities $\nu_{\bm{Y}|\bm{X}}(\bm{y}|\bm{x})$ for all output sequences $\bm{y}\in\mathcal{Y}^{\mathbb{N}_0}$.

In the simplest case, $\nu_{\bm{Y}|\bm{X}}$'s inputs are distributed as $p_{\bm{X}}$ such that the joint distribution becomes  $p_{\bm{X}\bm{Y}}  = \nu_{\bm{Y}|\bm{X}}p_{\bm{X}}$. However, note that in general, $\nu_{\bm{Y}|\bm{X}}$'s inputs may depend on (some of)  $\nu_{\bm{Y}|\bm{X}}$'s outputs. Therefore, the joint probability distribution over the joint process of inputs and outputs, $\bm{X}\bm{Y}$ is in general given by
\begin{align}
    p_{\bm{X}\bm{Y}}  = \nu_{\bm{Y}|\bm{X}}\eta_{\bm{X}|\bm{Y}}
\end{align}
where $\eta_{\bm{X}|\bm{Y}}(\bm{x}|\bm{y})$ is another channel which specifies how the $\nu_{\bm{Y}|\bm{X}}$'s inputs are distributed, \emph{depending} on $\nu_{\bm{Y}|\bm{X}}$'s outputs. In such cases, the distribution
\begin{align}
p_{\bm{Y}|\bm{X}}(\bm{y}|\bm{x}) &=  \frac{p_{\bm{X}\bm{Y}}(\bm{x},\bm{y})}{p_{\bm{X}}(\bm{x})}\\
&= \frac{\nu_{\bm{Y}|\bm{X}}(\bm{y}|\bm{x})\eta_{\bm{X}|\bm{Y}}(\bm{x}|\bm{y})}{\sum_{\bm{y}\in\mathcal{Y}^{\mathbb{N}_0}}\nu_{\bm{Y}|\bm{X}}(\bm{y}|\bm{x})\eta_{\bm{X}|\bm{Y}}(\bm{x}|\bm{y})}
\end{align}
 can be different from $\nu_{\bm{Y}|\bm{X}}(\bm{y}|\bm{x})$. This difference is also the reason we denote the channel by $\nu$ and reserve the symbol $p$ for the joint distribution $p_{\bm{X}\bm{Y}\blk}$ (and distributions which can be obtained from it by marginalizing or conditioning, such as $p_{\bm{Y}|\bm{X}}$ and $p_{\bm{X}}$).
 
 The conditional probability $\nu_{\bm{Y}|\bm{X}}$ thus characterizes the behavior intrinsic to the channel while the conditional probability $p_{\bm{Y}|\bm{X}}$ also takes into account how the channel's inputs are prepared. 
 
In this work, we focus on a subclass of discrete-time finite-alphabet channels commonly known as finite-state \cite[p.97]{gallager1968information} or hidden Markov channels \cite{ephraim2002hidden}.
		\begin{definition}\label{def:hidden_Markov_channel}
		A (discrete-time finite-alphabet) channel $\nu_{\bm{Y}|\bm{
				X}}$ is a finite-state \emph
		{hidden Markov channel} if there exists a distribution $p_{Z_{0}}$ over a finite set of states $\mathcal{Z}$ and a
        transition matrix $\Phi = \left(\phi(j|i)\right)_{j,i}$ with $i\in\mathcal{X}\times \mathcal{Z}$ and $j\in\mathcal{Y}\times \mathcal{Z}$ such that
		\begin{align}
		\nu_{\bm{Y}|\bm{
				X}}(\bm{y}|\bm{
			x})=\sum_{\bm{z}}p_{Z_{0}}(z_{0})\prod_{t=0}^{\infty}\phi
			\left(y_t,z_{t+1}|x_t,z_t\right), \label{eq:hidden_Markov_channel}
		\end{align} 
		where the sum runs over all $\bm{z}\in\mathcal{Z}^{\mathbb{N}_0}$.
		Then, the tuple $ (\Phi, p_{Z_{0}})$  is called a \emph{hidden Markov model} of $\nu_{\bm{Y}|\bm{
				X}}$ and $z\in\mathcal{Z}$ the \emph{hidden states} of the Markov model.
	\end{definition}
    In particular, since any such Markov model defines a hidden Markov channel and any hidden Markov channel by definition has a Markov model, \cref{eq:hidden_Markov_channel} defines a many-to-one correspondence between Markov models and channels.
    
Further, hidden Markov channels are causal channels \cite[definition 4]{barnett2015computational} in the sense that
\begin{align}
    \nu_{Y_{0:n}|\bm{
				X}}(y_{0:n}|x_{0:n}x_{n:\infty}) = \nu_{Y_{0:n}|\bm{X}}(y_{0:n}|x_{0:n}x'_{n:\infty})
\end{align}
for all $n\in\mathbb{N}$ and for all future input sequences $x_{n:\infty},x'_{n:\infty}\in\mathcal{X}^{\mathcal{Z}_0}$, where $\nu_{Y_{0:n}|\bm{
				X}}(y_{0:n}|\bm{
			x})=\sum_{y_{n:\infty}}\nu_{\bm{Y}|\bm{
				X}}(\bm{y}|\bm{
			x})$. This means that for a complete description of the channel's behavior for the first $n$ rounds (channel uses) it is sufficient to know its input past $x_{0:n}$. 
In particular, hidden Markov channels can be understood as those causal channels which admit an implementation using only finite memory resources as represented by the finite set of hidden states $\mathcal{Z}$.

The transition matrix $\Phi$ stores as its coefficients the conditional probability assignments $\phi
			\left(y_t,z_{t+1}|x_t,z_t\right)$  which are independent of $t$ (and hence $\Phi$ generates a homogeneous Markov chain). 

Given that one knows the transition matrix $\Phi$, the current hidden state $z$, as well as the current input $x$ and output $y$, the obtainable knowledge about the next hidden state $z'$ of the Markov model is represented by a distribution determined by $\Phi$ which, up to normalization, is given by $\left(\phi \left(y,z'|x,z\right)\right)_{z'\in\mathcal{Z}}$. Markov models for which this distribution is a delta distribution, are said to be \emph{unifilar}~\cite{ash1965information,gallager1968information,ephraim2002hidden}. Unifilar Markov models represent an important class of Markov models because, given the current hidden state, input, and output, for unifilar Markov models it is possible to infer the next hidden state with certainty.
	\begin{definition} \label{def:unifilar}
		A Markov model $(\Phi, p_{Z_0})$ of a hidden Markov channel $\nu_{\bm{Y}|\bm{X}}$
		 is said to be \emph{unifilar} if
		 \begin{enumerate}[label=(\roman*)]
		 	\item $p_{Z_0}(z)=1$ for some $z$ and zero otherwise, and
		 	\item $H(Z_{t+1} | X_tY_tZ_t) =0$ for all $t\in\mathbb{N}_0$.   
             \end{enumerate} 
             A hidden Markov channel is said to be unifilar if there exists a unifilar Markov model for it.
	\end{definition}
That is a Markov model is unifilar if 
    $p_{Z_0}$ is a delta distribution and
    $X_t$, $Y_t$, and $Z_t$ determine the next hidden state $Z_{t+1}$ for all steps $t$.
It should be noted that while there exists a systematic method to construct unifilar models from non-unifilar ones \cite{james2017information}, some hidden Markov channels only admit unifilar Markov models if one allows for an infinite number of hidden states. However, since Markov models as defined in  \Cref{def:hidden_Markov_channel} have only a finite number of hidden states, the set of unifilar hidden Markov channels is a strict subset of the set of hidden Markov channels. For an example of a nonunifilar hidden Markov channel see \cite[section 13]{barnett2015computational}. 
   
Note that it follows from the definition of unifilar Markov models that  
it is always possible to construct a (deterministic) function $f_{\mathrm{uni}}:\mathcal{X}\times \mathcal{Y}\times \mathcal{Z} \rightarrow \mathcal{Z}$, in the following called a \emph{unifilarity map}, such that $\phi \left(y,z'|x,z\right)\neq 0$ only if $z'=f_{\mathrm{uni}} (x,z,y)$.
Then, given the transition matrix $\Phi$ and the initial state $Z_0 = z$, one can infer the exact hidden state $z$ at any time $t$ by observing the input and output processes $X_{0:t}$ and $Y_{0:t}$ and by iteratively using the function $f_{\mathrm{uni}}$.

Unifilarity was first introduced in the context of finite-state sources \cite[{p. 187}]{ash1965information}, and under the name Markov source in \cite[Section 3.6]{gallager1968information}. \Cref{def:unifilar} extends unifilarity to Markov models of hidden Markov channels. In the context of stationary input-output processes, unifilarity is one of the properties of $\epsilon$-transducers \cite{barnett2015computational}. Unifilarity often simplifies the mathematical treatment of Markov models considerably, see for example \cite{csiszar1998method}.

Important classes of channels, which we consider in this work, are the following:
        \begin{definition} \label{def:noiseless_memoryless_invariant_and_source_channels}
		A channel $\nu_{\bm{Y}|\bm{
				X}}$ is said to be
    \begin{itemize}
    \item \emph{noiseless} if	$\nu_{\bm{Y}|\bm{
					X}}(\bm{y}|\bm{x})=\delta_{\bm{x},\bm{y}}$ and $\mathcal{X}=\mathcal{Y}$ where $\delta_{_{\bm{x},\bm{y}}}$ is a Kronecker delta.
    \item \emph{memoryless invariant} if there exists a  $|\mathcal{X}|\times |\mathcal{Y}|$
	stochastic matrix $\Phi$ such that $\nu_{\bm{Y}|\bm{
					X}}(\bm{y}|\bm{x})=\prod_{t=0}^{\infty}\phi\left(y_t|x_t\right)$.
    \item a \emph{product channel} if $\nu_{\bm{Y}|\bm{
					X}}(\bm{y}|\bm{x}) = \nu_{\bm{Y}|\bm{
					X}}(\bm{y}|\bm{x}')$ for all $\bm{x}, \bm{x}'\in\mathcal{X}^{\mathbb{N}_0}$.
    \end{itemize}
	\end{definition}	
The output behavior of product channels is fully characterized without knowing their inputs. Thus, they can be understood as an information source which produces a (hidden Markov) process over outputs \cite{ephraim2002hidden}. Product channels are also called completely random channels in the literature \cite[chapter 9.4.2]{gray2023entropy}.

	\section{Percept-action loops} \label{supp:4}
	This appendix defines a model for percept-action loops and proves, based on this model, that the global process (involving agent and environment) is Markov. In the following, we refer to the hidden Markov channel of interest as the \emph{environment}, abbreviated as \texttt{env}:
\begin{align}
	\texttt{env} \coloneqq \nu^{\mathrm{env}}_{\bm{S}|\bm{A}}.
\end{align}
The input random variables $A_t$ are called \emph{action} variables taking values $a\in\mathcal{A}$ and the output random variables $S_t$ are called \emph{percept} variables taking values $s\in\mathcal{S}$ ($S$ like \emph{state} or \emph{sensory input} is common nomenclature in reinforcement learning and related fields). For simplicity, we assume that the finite input and output alphabets of \texttt{env} are identical, $\mathcal{A}=\mathcal{S}$. In terms of expressivity of the model, this assumption is not restrictive, as any Markov channel with distinct input and output alphabets can be trivially extended to a channel with a common alphabet for inputs and outputs by embedding both to the larger of the two.

A Markov model of the channel $\nu^{\mathrm{env}}_{\bm{S}|\bm{A}}$ (see \Cref{def:hidden_Markov_channel}), denoted as
  \begin{align}
	\texttt{envM} \coloneqq  \left(\Phi^{\mathrm{env}}, p^{\mathrm{env}}_{Z_{0}}\right),
\end{align}
is called a (Markov) model of \texttt{env}. 

Hidden Markov \emph{product} channels (see \Cref{def:noiseless_memoryless_invariant_and_source_channels}) represent a special class of environments which we will call \emph{product environment channel}.

Protocols used to interact with environments are called \emph{agents}. 
In full generality, agents, abbreviated as \texttt{agt}, can be represented as a channel $\eta^{\mathrm{agt}}_{\bm{A}|\bm{S}}$ from percepts to actions.
Similarly to environments, we assume that agents respect a causal ordering and that they admit an implementation with finite memory. However, there is a small asymmetry between agent and environment: the agent must produce the very first action $A_0$ without being prompted by a percept (in contrast, the environment is prompted with an action before it produces the first percept). On a formal level, this is easily taken into account by defining agents as a hidden Markov channel from percepts $\bm{S}$ to actions $A_{1:\infty}$ where the initial distribution over hidden states is replaced by a suitable joint distribution over hidden states and action $A_0$. For clarity, we suitably restate \Cref{def:hidden_Markov_channel}:
		\begin{definition}\label{def:agent_hidden_Markov_channel}
		A channel $\eta^{\mathrm{agt}}_{\bm{A}|\bm{S}}$
is an \emph{\textbf{agent channel}}, denoted as
 \begin{align}
\normalfont \texttt{agt}\coloneqq\eta^{\mathrm{agt}}_{\bm{A}|\bm{S}}, \label{eq:def_agt}
\end{align}
if there exists a finite set of states $\mathcal{M}$, a distribution $p^{\mathrm{agt}}_{A_0 M_{0}}$ over $\mathcal{A}\times \mathcal{M}$, and a
        transition matrix $\Theta^{\mathrm{agt}} = \left(\theta(j|i)\right)_{j,i}$ with $i\in\mathcal{S}\times \mathcal{M}$ and $j\in\mathcal{A}\times \mathcal{M}$ such that
		\begin{align*}
		\eta^{\mathrm{agt}}_{\bm{A}|\bm{
				S}}(\bm{a}|\bm{
			s})=\sum_{\bm{m}}p^{\mathrm{agt}}_{A_0 M_0}(a_0, m_0)\prod_{t=0}^{\infty}\theta^{\mathrm{agt}}
			\left(a_{t+1},m_{t+1}|s_t,m_t\right), \label{eq:agent_to_Markov_model}
		\end{align*} 
		where the sum runs over all $\bm{m}\in\mathcal{M}^{\mathbb{N}_0}$.
		Then, the tuple
        \begin{align}
            \normalfont \texttt{agtM}\coloneqq (\Theta^{\mathrm{agt}}, p^{\mathrm{agt}}_{A_0M_{0}})
        \end{align}
        is called a (hidden Markov) \emph{\textbf{agent model}}, of {\normalfont \texttt{agt}} and $m\in\mathcal{M}$ the \emph{\textbf{memory states}} of the model.\\
        For any given environment channel $\normalfont\texttt{env}$, let $\normalfont\mathbb{A}^{\lrstacksmall \texttt{env}}$ denote the set of agent models with matching action-percept alphabet. 
	\end{definition}
As before, \cref{eq:agent_to_Markov_model} defines a many-to-one mapping correspondence between agent models and agents.
	
\begin{figure}[h!]
(a)

    \begin{tikzpicture}[scale=1.5]
        \def \S {S} 
        \def \A {A} 
        
    \draw[mgreen,thick] (-1,0.8) -- (8.8,0.8);
    \draw[dotted, very thick, morange] (8.8,1.35) -- (9.2,1.35);
    \draw[dotted, very thick, black!50] (8.8,0.25) -- (9.2,0.25);
    \draw[dotted, very thick, mgreen] (8.8,0.8) -- (9.2,0.8);
    \node[centered, font=\tiny, fill=white,inner sep =0pt] at (0.75,0.8) {$\S_1$};
    \foreach \x in {0,2,4,6,8}
    {
    \draw[fill=black!50] (\x,0) rectangle (\x+0.5,1.0);
    }
    \foreach \x in {-2,0,2,4,6}
    {
    \draw[fill=morange!50] (\x+1,0.6) rectangle (\x+1.5,1.6);
    }
    \draw[fill=morange!50] (-1,1.1) rectangle (8.8, 1.6);
    
    \foreach \x in {0,1,2,3}
    {
    \node[centered, font=\tiny,fill=white,inner sep =0pt,text=mgreen] at (2*\x+0.75,0.8) {$\S_{\x}$};
    }
    \foreach \x in {0,1,2,3,4}
    {
    \node[centered, font=\tiny,fill=white,inner sep =0pt,text=mgreen] at (2*\x-0.25,0.8) {$\A_{\x}$};
    }
    
    \draw[fill=black!50] (-1,0) rectangle (8.8, 0.5);
    \foreach \x in {0,2,4,6,8}
    {
    \draw[black!50] (\x,0.5) -- (\x+0.5,0.5);
    }
    \foreach \x in {-2,0,2,4,6}
    {
    \draw[morange!50] (\x+1,1.1) -- (\x+1.5,1.1);
    }
    \end{tikzpicture}
	
	\vspace{1cm}
	(b)
	
	\begin{tikzpicture}[scale=1.5]
		\def \S {S} 
		\def \A {A} 
		\def \M {M} 
		\def \Z {Z} 
		\def \T {\Phi} 
		\def \F {\Theta} 
		
        \draw[draw=none] (-1,1.1) rectangle (8.8, 1.6);

		\draw[mgreen,thick] (-0.2,0.8) -- (8.8,0.8);
		\draw[mblue,thick] (-0.2,1.35) -- (8.8,1.35);
		\draw[dotted, thick, mgreen] (8.8,0.8) -- (9.2,0.8);
		\draw[dotted, thick, mblue] (8.8,1.35) -- (9.2,1.35);
		\draw[dotted, thick, black!50] (8.8,0.25) -- (9.2,0.25);
		\node[left, font=\tiny,mgreen] at (-0.2,0.8) {$\A_0$};
		\node[left, font=\tiny,mblue] at (-0.2,1.35) {$\M_0$};
		\node[centered, font=\tiny, fill=white,inner sep =0pt] at (0.75,0.8) {$\S_1$};
		\foreach \x in {0,2,4,6}
		{
			\draw[fill=morange!50] (\x+1,0.6) rectangle (\x+1.5,1.6);
			\node[centered, black] at (\x+1.25, 1.1) {$\F$};
		}
		\foreach \x in {0,1,2,3}
		{
			\node[centered, font=\tiny, fill=white,inner sep =0pt,text=mgreen] at (2*\x+0.75,0.8) {$\S_{\x}$};
		}
		\foreach \x in {1,2,3,4}
		{
			\node[centered, font=\tiny,fill=white,inner sep =0pt,text=mgreen] at (2*\x-0.25,0.8) {$\A_{\x}$};
			\node[centered, font=\tiny, fill=white,inner sep=0pt,text=mblue] at (2*\x+0.25,1.35) {$\M_{\x}$};
		}
		\foreach \x in {0,2,4,6}
		{
			\draw[black!20] (\x,0.5) -- (\x+0.5,0.5);
		}
		\draw[black!60,thick] (-0.2,0.25) -- (8.8,0.25);
		\foreach \x in {0,2,4,6,8}
		{
			\pgfmathtruncatemacro{\t}{0.5*\x+1}
			\draw[black,fill=black!40] (\x,1) rectangle (\x+0.5,0);
			\node[centered, black] at (\x+0.25, 0.5) {$\T$};
			\ifnum \x < 8
			\node[centered, font=\tiny,fill=white,inner sep=0pt,text=black] at (\x+1.25,0.25) {$\Z_{\t}$};
			\fi
		}
		\node[left, font=\tiny,text=black] at (-0.2,0.25) {$\Z_{0}$};
	\end{tikzpicture}
    
\caption{Percept-action loops. a: Agent and environment are represented through channels such that the environment's inputs $A_t$ (outputs $S_t$) are the agent's actions (percepts). b: Agent and environment are represented through Markov models with hidden memory $\mathcal{M}$and $\mathcal{Z}$, respectively.} \label{fig:percept-action_loop}
\end{figure}
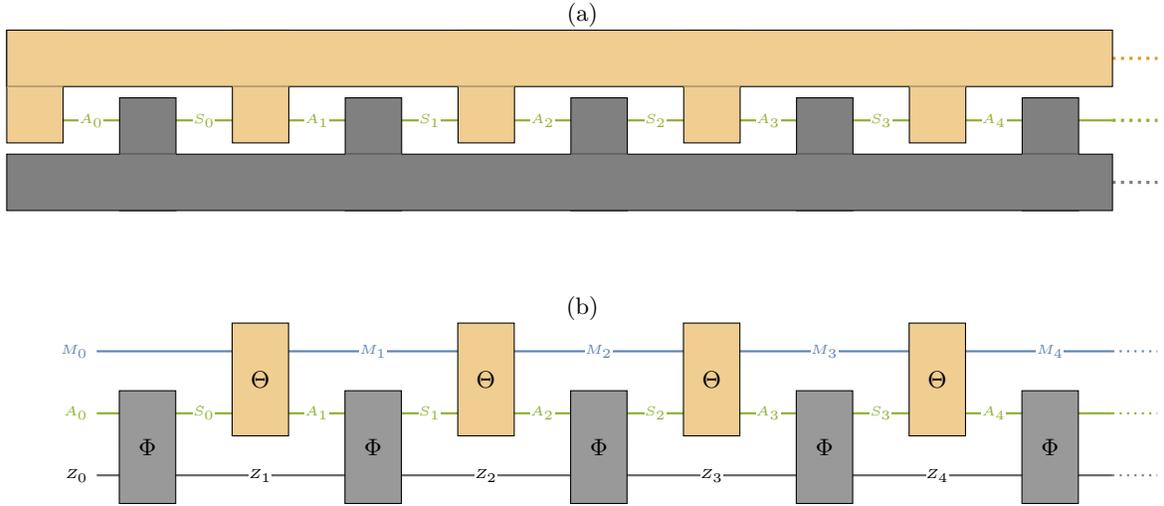

The two channels defining an agent and its environment are called \emph{percept-action loop}, denoted by
\begin{align}
	\textrm{\texttt{agt}}\lrstack \texttt{env} \coloneqq \left(\eta^{\mathrm{agt}}_{\bm{A}|\bm{S}}, \nu^{\mathrm{env}}_{\bm{S}|\bm{A}}\right),
\end{align}
with the associated joint process $\bm{A}\bm{S}$, called the percept-action process, having distribution
 \begin{align}
 		p_{\bm{A}\bm{S}} &= \eta^{\mathrm{agt}}_{\bm{A}|\bm{S}} \nu^{\mathrm{env}}_{\bm{S}|\bm{A}}
 \end{align}
 see also \Cref{fig:percept-action_loop}a. Alternatively, it is possible to specify a Markov model for the agent and/or environment. For instance, 
 \begin{align}
 	\texttt{agtM}\lrstack \texttt{envM} = \left(\Theta^{\mathrm{agt}}, p^{\mathrm{agt}}_{M_0 A_0}, \Phi^{\mathrm{env}}, p^{\mathrm{env}}_{Z_{0}}\right),
 \end{align}
 denotes the percept-action loop where both Markov models are specified,
 with the associated process $\bm{M}\bm{A}\bm{S}\bm{Z}$, called the \emph{global process}, having distribution over action, percept, and hidden states
  \begin{align}
 		p_{\bm{M}\bm{A}\bm{S}\bm{Z}}(\bm{m},\bm{a},\bm{s},\bm{z}) = p^{\mathrm{agt}}_{A_0 M_0}(a_0,m_0)p^{\mathrm{env}}_{Z_0}(z_0)\prod_{t=0}^{\infty} \theta^{\mathrm{agt}} (a_{t+1},m_{t+1}|s_t, m_t) \phi^{\mathrm{env}} (s_t, z_{t+1}|a_t, z_t), \label{eq:global_distribution_pal}
 \end{align}
 see also \Cref{fig:percept-action_loop}b. The models 
$\textrm{\texttt{agt}}\lrstack \texttt{envM}$ and $\texttt{agtM}\lrstack \texttt{env}$ are defined correspondingly. 
 \begin{lemma}[Global Markov chain]\label{lem:global_Markov_chain}
 	Let $\normalfont{\texttt{agtM}\lrstack \texttt{envM}}$ be a percept-action loop global process distribution given in \cref{eq:global_distribution_pal}. Then, the stochastic process $\bm{U}$, where
 	\begin{align}
 	U_t=(M_t,A_t,S_t,Z_t),
 	\end{align}
    is a homogeneous finite-state Markov chain which will be called the \emph{global Markov chain} of the percept-action loop.
 \end{lemma}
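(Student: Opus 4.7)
The plan is to verify the three defining properties of a homogeneous finite-state Markov chain for the global process $\bm{U}$: finiteness of the alphabet, the Markov property, and time-independence of the one-step transition kernel. Finiteness is immediate, since $\mathcal{M}\times\mathcal{A}\times\mathcal{S}\times\mathcal{Z}$ is a product of finite sets, so the entire task reduces to producing a factorization of the finite-dimensional marginals $p_{U_{0:T+1}}$ in the form $p_{U_0}(u_0)\prod_{t=0}^{T}\psi(u_{t+1}|u_t)$ with $\psi$ independent of $t$.

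First I would derive $p_{U_{0:T+1}}$ from \cref{eq:global_distribution_pal} by marginalizing over all variables indexed $t>T+1$. The stochasticity of $\theta^{\mathrm{agt}}$ and $\phi^{\mathrm{env}}$ causes the tail of the infinite product to collapse recursively (each pair of full sums over a $\theta$- or $\phi$-term contributes $1$), and only one boundary contribution survives: summing only over the successor hidden state $z_{T+2}$ at index $T+1$ yields the factor
\[
p(s_{T+1}|a_{T+1},z_{T+1}) := \sum_{z''}\phi^{\mathrm{env}}(s_{T+1},z''|a_{T+1},z_{T+1}).
\]
Next, I would use Bayes' rule to split each environment factor as $\phi^{\mathrm{env}}(s_t,z_{t+1}|a_t,z_t)=p(s_t|a_t,z_t)\,p(z_{t+1}|s_t,a_t,z_t)$ and observe that the resulting $p(s_t|a_t,z_t)$ factors telescope with the boundary term into $\prod_{t=0}^{T}p(s_{t+1}|a_{t+1},z_{t+1})$.

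After this rearrangement, the desired decomposition emerges with the explicit kernel
\[
\psi(u_{t+1}|u_t)=p(z_{t+1}|s_t,a_t,z_t)\,\theta^{\mathrm{agt}}(a_{t+1},m_{t+1}|s_t,m_t)\,p(s_{t+1}|a_{t+1},z_{t+1}),
\]
which depends only on $(u_t,u_{t+1})$ and is manifestly time-independent since $\theta^{\mathrm{agt}}$ and $\phi^{\mathrm{env}}$ are. A quick check confirms that each of the three factors normalizes to $1$ under a separate sum, so $\psi$ is a proper stochastic kernel; together with the factorization this yields both the Markov property and homogeneity.

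The main obstacle will be bookkeeping the asymmetry that $S_t$ and $Z_{t+1}$ are produced jointly by the environment at index $t$, yet $S_t$ belongs to $U_t$ while $Z_{t+1}$ belongs to $U_{t+1}$. Without the Bayes-rule splitting of $\phi^{\mathrm{env}}$ and the accompanying telescoping of boundary factors, the one-step kernel would appear to straddle two time indices and fail to be Markovian in the state $U_t$.
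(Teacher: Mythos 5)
Your proposal is correct and follows essentially the same route as the paper: both marginalize the global distribution of \cref{eq:global_distribution_pal}, identify the surviving boundary factor $\sum_{z''}\phi^{\mathrm{env}}(s,z''|a,z)$, and arrive at the identical time-independent one-step kernel --- your Bayes-rule splitting of $\phi^{\mathrm{env}}$ just writes out explicitly what the paper leaves as the ratio in \cref{eq:lem_global_Markov2}. The only cosmetic difference is that you verify Markovianity by exhibiting the joint in product-of-stochastic-kernels form, whereas the paper computes the conditional $p(u_n|u_{0:n})$ directly and observes the cancellation; both are standard and equivalent.
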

 	\emph{Proof.}
    
First we check the Markov property, that is,
 \begin{align}
 	p(u_n|u_{0:n-1}u_{n-1})&=p(u_n|u'_{0:n-1}u_{n-1}), \label{eq:global_Markov_in_proof}
 	\end{align}
    for any $n\geq 1$ and $u_{0:n-1},u'_{0:n-1}$, where $u_n = (x_n, a_n, s_n,z_n)$. Note that for better readability, we drop $p$'s index.

    This is a direct consequence of the Markov property of the Markov models for agent and environment which can be seen as follows. For any $n \geq 1$ we have by the definition of conditional probability
 	\begin{align}
 			p(u_n|u_{0:n}) = 	\frac{p(u_{0:n+1})}{p(u_{0:n})}, \label{eq:cond_prob}
 	\end{align}   
    where, by marginalizing the global distribution of a percept-action loop, \cref{eq:global_distribution_pal} and writing $u_n$ as $(m_n, a_n, s_n,z_n)$:
    \begin{align}
 		p({u}_{0:n+1}) &= p^{\mathrm{agt}}_{A_0 M_0}(a_0,m_0)p^{\mathrm{env}}_{Z_0}(z_0)\left[\sum_{z_{n+1}}\phi^{\mathrm{env}} (s_n, z_{n+1}|a_n, z_n)\right] \prod_{t=0}^{n-1} \theta^{\mathrm{agt}} (a_{t+1}, m_{t+1}|s_t, m_t) \phi^{\mathrm{env}} (s_t, z_{t+1}|a_t, z_t). \label{eq:spelled_out}
 	\end{align}
 	Due to the product structure of  \cref{eq:spelled_out}, most terms cancel out when we compute \cref{eq:cond_prob} and we are left with
 \begin{align}
 	p(u_n|u_{n-1},u_{n-2},\dotsc) &=\frac{\left[\sum_{z_{n+1}}\phi^{\mathrm{env}}(s_n, z_{n+1}|a_n, z_n)\right]\theta^{\mathrm{agt}}(a_n,m_n|s_{n-1},m_{n-1})\phi^{\mathrm{env}}(s_{n-1}, z_n|a_{n-1}, z_{n-1})}{\sum_{
 		z'_n}\phi^{\mathrm{env}}(s_{n-1}, z'_n|a_{n-1}, z_{n-1})}.\label{eq:lem_global_Markov2}
 		\end{align}
  Since the right-hand side depends only on variables with time index $n$ and $n-1$, we have shown the Markov chain property, \cref{eq:global_Markov_in_proof}. Further, since the right-hand side is determined by the transition matrices of agent and environment, the Markov chain is homogeneous, and with this the lemma is proven. \hfill $\square$

	\section{Markov conditions for percept-action loops} \label{supp:5}
	Bayesian networks are graphical models that represent probabilistic relationships among random variables using directed acyclic graphs ~\cite{pearl1985bayesian,pearl1988probabilistic,verma1990causal}. They allow for efficient reasoning about conditional independence through d-separation. d-separation is a key concept in Bayesian networks that determines whether two sets of variables are independent given a third set, based on the structure of the graph. It provides a formal criterion for understanding how information flows through the network. This appendix introduces Bayesian networks in general and shows how to use them for percept-action loops.
\subsection{Bayesian networks and d-separation}
Let 
$\{V_1,\dots,V_n\}$ be a set of $n$ random variables and let $G$ be a directed acyclic graph (DAG) such that for each random variable in $\{V_1,\dots,V_n\}$ there is precisely one node in $G$.
Let $\operatorname{PA}_j$  be the set of  parents  of $V_j$ 
and $\operatorname{ND}_j$ the set  of non-descendants of $V_j$ except itself. If $B,C,D$  are sets of  random  variables, $I[B ; C | D]$ is the conditional mutual information with respect to the joint random variables constituting the sets, and 
$I[B ; C | D] = 0$ means that $B$ is statistically independent of  $C$, given $D$. 

In the following, a \emph{path} is defined as a sequence of nodes connected by edges, regardless of the direction of the edges. The following definition is adapted from \cite{janzing2010causal}.
\begin{definition}[d-separation] \label{def:d-separation} 
	A path  $\mathfrak{p}$ in  a DAG is said to be d-separated (or blocked) by a set of nodes $D$ if
	at least one of the following conditions holds:
	\begin{enumerate}[label=(\roman*)]
		\item $\mathfrak{p}$ contains a chain $X\rightarrow Y \rightarrow Z$ or fork $X\leftarrow Y  \rightarrow Z$ such that the middle node $Y$ is in $D$, or
		\item  $\mathfrak{p}$  contains an inverted fork (or collider) $X\rightarrow Y \leftarrow Z$ such that the middle node $Y$ is not in $D$ and such that
		no descendant of $Y$ is in  $D$.
	\end{enumerate}
	A set $D$ is said to d-separate $B$ from $C$ if and only if  $D$ blocks every
	path from a node in $B$ to a node in $C$.
\end{definition}

\begin{lemma}[{Equivalent Markov conditions,  \cite[Theorem~3.27]{lauritzen1996graphical}, see also \cite[Lemma 1]{janzing2010causal}}] \label{lem:equivalent_markov_coniditions}
	Let 
	$p(V_1,\dots,V_n)$ be the joint distribution of random variables $V_1,\dots,V_n$ (as always, in this work, with respect to a product measure). 
	Then the following three statements are equivalent:
	
	\begin{enumerate}[label=(\roman*)]
		\item {\emph Recursive form:} $p(V_1,\dots,V_n)$ admits the factorization
		\begin{align}
			p(V_1,\dots,V_n)=\prod_{j=1}^n p(V_j|\operatorname{PA}_j), \label{eq:factorization_condition}
		\end{align}
       where the notation $p(V_j|\operatorname{PA}_j)$ is understood as $p(V_{j})$ if $\operatorname{PA}_j$ is empty.
		\item {\emph Local (or parental) Markov condition:} 
		for every node  $V_j$  we have
		\begin{align}
			I[
			V_j ; \operatorname{ND}_j| \operatorname{PA}_j
			] = 0,
		\end{align}
		i.e.,
		it is conditionally independent of its non-descendants (except itself), given its parents.

		\item {\emph Global Markov condition:} 
		\begin{align}
			I[B ; C| D]=0
		\end{align}
		for all three sets $B,C,D$ of nodes
		for which $B$ and $C$ are d-separated by $D$.
	\end{enumerate} 
\end{lemma}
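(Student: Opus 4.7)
The plan is to establish the cyclic chain of implications (i) $\Rightarrow$ (ii) $\Rightarrow$ (iii) $\Rightarrow$ (i), which is the standard route for such equivalences on Bayesian networks. For (i) $\Rightarrow$ (ii), I would fix a node $V_j$, pick a topological ordering in which every non-descendant of $V_j$ precedes $V_j$ and every descendant follows, and then marginalize the factorization in \cref{eq:factorization_condition} over the descendants of $V_j$ in reverse topological order. Each summed-out factor $p(V_d | \operatorname{PA}_d)$ is a conditional distribution whose parent set lies entirely among $V_j$, $\operatorname{ND}_j$, or earlier descendants, so the sum over $V_d$ evaluates to $1$. This leaves $p(V_j, \operatorname{ND}_j) = p(V_j|\operatorname{PA}_j) \prod_{V_k \in \operatorname{ND}_j} p(V_k|\operatorname{PA}_k)$, and dividing by $p(\operatorname{ND}_j)$ (obtained by further summing out $V_j$) yields $p(V_j|\operatorname{ND}_j) = p(V_j|\operatorname{PA}_j)$, which is exactly the local Markov condition.

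For (iii) $\Rightarrow$ (i), I would fix a topological ordering $V_{\sigma(1)}, \dots, V_{\sigma(n)}$ and invoke the standard d-separation fact, provable directly from \Cref{def:d-separation} by analyzing the first edge of any candidate path at $V_{\sigma(k)}$, that $\operatorname{PA}_{\sigma(k)}$ d-separates $V_{\sigma(k)}$ from $\{V_{\sigma(1)}, \dots, V_{\sigma(k-1)}\} \setminus \operatorname{PA}_{\sigma(k)}$: an incoming neighbor is already a conditioned parent, while any outgoing first edge at $V_{\sigma(k)}$ leads to a descendant, and a descendant cannot return to a predecessor without traversing a collider whose descendants lie strictly after $V_{\sigma(k)}$ and hence outside the conditioning set. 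Condition (iii) then gives $p(V_{\sigma(k)}|V_{\sigma(1)},\dots,V_{\sigma(k-1)}) = p(V_{\sigma(k)}|\operatorname{PA}_{\sigma(k)})$, and multiplying these identities through the chain rule of probability recovers \cref{eq:factorization_condition}.

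The main obstacle is (ii) $\Rightarrow$ (iii). My approach would be the classical moralization-plus-ancestral-set reduction: given disjoint sets $B, C, D$ with $D$ d-separating $B$ from $C$ in $G$, I would restrict to the subgraph $G'$ induced by the smallest ancestral set containing $B \cup C \cup D$, and then form its undirected moral graph by connecting every pair of nodes that share a common child in $G'$ and dropping arrow heads. A graph-theoretic lemma, proven by case analysis on whether internal nodes of a path act as chain, fork, or collider, shows that $D$ d-separates $B$ from $C$ in $G$ if and only if $D$ separates them in the undirected sense in the moral graph. Condition (ii) applied to each $p(V_k|\operatorname{PA}_k)$ in the ancestral restriction then implies that the joint distribution factorizes over the cliques of this moral graph, and a Hammersley--Clifford-type decomposition converts undirected separation into the conditional independence $I[B;C|D] = 0$. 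The delicate bookkeeping, especially around colliders whose descendants do or do not intersect $D$ and around the choice of ancestral set, is the nontrivial core of the equivalence and is where I expect the proof to require the most care.
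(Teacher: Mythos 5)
The paper does not prove this lemma at all: it is imported verbatim from the literature, with the statement attributed to Lauritzen (Theorem 3.27) and Janzing--Sch\"olkopf (Lemma 1), so there is no in-paper proof to compare against. Your outline is essentially the classical textbook argument, and the two substantive steps you describe --- (i) $\Rightarrow$ (ii) by summing out descendants in reverse topological order, and (iii) $\Rightarrow$ (i) by showing that $\operatorname{PA}_{\sigma(k)}$ d-separates $V_{\sigma(k)}$ from its non-parent predecessors and then applying the chain rule --- are correct as stated; in particular your d-separation argument correctly handles both cases (a non-collider parent in the conditioning set blocks paths entering $V_{\sigma(k)}$ from above, and any path leaving through a child must turn around at a collider all of whose descendants are descendants of $V_{\sigma(k)}$ and hence outside $\operatorname{PA}_{\sigma(k)}$).

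The one place where your plan is not quite self-contained is (ii) $\Rightarrow$ (iii). The clique factorization over the moral graph of the smallest ancestral set containing $B\cup C\cup D$ is derived from the \emph{recursive factorization} (i), not directly from the local Markov condition (ii); writing ``Condition (ii) applied to each $p(V_k|\operatorname{PA}_k)$'' presupposes that the joint already splits into those factors. You therefore need the (easy) implication (ii) $\Rightarrow$ (i) as an intermediate step: in a topological order every predecessor of $V_{\sigma(k)}$ is a non-descendant, so the local condition gives $p(V_{\sigma(k)}\mid V_{\sigma(1)},\dots,V_{\sigma(k-1)})=p(V_{\sigma(k)}\mid \operatorname{PA}_{\sigma(k)})$ and the chain rule yields \cref{eq:factorization_condition}. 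Once that is inserted, your moralization-plus-ancestral-set argument goes through (note that the direction of Hammersley--Clifford you need, factorization implies undirected global Markov, holds without any positivity assumption). Alternatively, the cycle closes more economically as (i) $\Rightarrow$ (iii) $\Rightarrow$ (ii) $\Rightarrow$ (i), since (iii) $\Rightarrow$ (ii) is immediate from the same graph-theoretic fact that parents d-separate a node from its non-descendants.
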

In the following, we will make extensive use of the notion of compatibility of a distribution with a Bayesian network, which we define as follows.
\begin{definition}
    Let $p$ be a distribution over a set of variables $W$, and let $G$ be a Bayesian network with nodes $V$ such that $W\subseteq V$. Then, the  distribution $p$ is said to be \emph{compatible} with $G$ if
    \begin{align}
			I[B ; C| D]=0 \label{eq:compatible}
		\end{align}
		for all three sets $B,C,D \subseteq W$ of nodes
		for which $B$ and $C$ are d-separated by $D$.
\end{definition}
Note that the conditions given by \cref{eq:compatible} are those global Markov conditions with respect to $G$ which only involve variables of $p$. Compatibility of $p$ with $G$ thus means that the Markov conditions implied by $G$ for the variables of $p$ hold.
\subsection{d-separation conditions for percept-action loops}
\begin{figure}[h!]
\begin{tikzpicture}[>=stealth',shorten >=1pt,node distance=2cm,on grid,auto,state/.style={circle, draw, minimum size=.8cm, inner sep=1pt},font=\small]
	
	\definecolor{morange}{rgb}{0.88,0.61,0.14}
	
	\def \S {Z} 
	\def \A {X} 
	\def \M {W} 
	\def \Z {Y} 
	
	\def \H {V} 

	\node (Xn) at (0.,0. ) {\(\M\)};
	\node (An) at (0.,-1.5) {\(\A\)};
	\node (An1) at (3,-1.5) {\(\S\)};
	\node (Xn1) at (3,0) {\(\Z\)};
	\path[-] (An) edge node {} (An1);
	\path[-] (Xn) edge node {} (Xn1);
	\draw[fill=morange!50] (1,0.3) rectangle (2,-1.8);
	\node[centered, black] at (1.5, -0.75 ) {$\Phi$};
	
	\node  at (-0.3,0.8) {\textbf{(a)}};

	\node[state] (Xn) at (6.,0.) {\(\M\)};
	\node[state] (An) at (6.,-1.5) {\(\A\)};
	\node[state] (An1) at (9,-1.5) {\(\S\)};
	\node[state] (Xn1) at (9,0) {\(\Z\)};
	
	\path[->] (An) edge node {} (An1);
	\path[->] (An) edge node {} (Xn1);
	\path[->] (Xn) edge node {} (An1);
	\path[->] (Xn) edge node {} (Xn1);
	
	\node  at (5.5,0.8) {\textbf{(b)}};

	\node[state] (Xn) at (12.,0.) {\(\M\)};
	\node[state] (An) at (12.,-1.5) {\(\A\)};
	\node[state] (An1) at (15,-1.5) {\(\S\)};
	\node[state] (Xn1) at (15,0) {\(\Z\)};
	\node[state, minimum size = .6] (Hn) at (13.5,-0.75) {\(\H\)};

	\path[->] (An) edge node {} (Hn);
	\path[->] (Hn) edge node {} (Xn1);
	\path[->] (Xn) edge node {} (Hn);
	\path[->] (Hn) edge node {} (An1);
	
	\node  at (11.5,0.8) {\textbf{(c)}};
	
\end{tikzpicture}
\caption{On finding a compatible Bayesian network of a Markov channel with two inputs and two outputs. (a): Circuit diagram of a memoryless channel with input $(W,X)$ and output $(Y,z)$ described by transition matrix $\Phi$, (b): a Bayesian network which is \emph{not} compatible with arbitrary $\Phi$, and (c): a Bayesian network with an auxiliary variable $V=YZ$ which is compatible with arbitrary $\Phi$. } \label{fig:Bayesian_net_for_a_single_channel}
\end{figure}
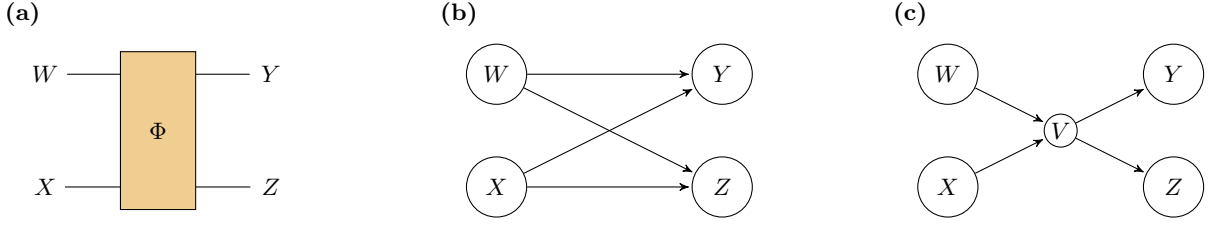
One may initially be tempted to think  that the Bayesian network depicted in \Cref{fig:Bayesian_net_for_a_single_channel}b is compatible with any $\Phi = \left(\phi(j|i)\right)_{j,i}$ with $i\in\mathcal{W}\times \mathcal{M}$ and $j\in\mathcal{Y}\times \mathcal{Z}$  (see \Cref{fig:Bayesian_net_for_a_single_channel}a)  in the sense that all local Markov conditions implied by \Cref{fig:Bayesian_net_for_a_single_channel}b hold for any distribution $p_{WXYZ}(w,x,y,z)=\phi(y,z|w,x)p_{WX}(w,x)$. 
\Cref{fig:Bayesian_net_for_a_single_channel}b implies (by d-separation) conditional independence of $Y$ and $Z$ given their parents, that is,
\begin{align}
	I[Y;Z|W,X] = 0. \label{eq:cond_independence_violation}
\end{align}
 This condition, however, is easily shown to be violated by a channel which produces correlation  \emph{independent} of the values of $W$ and $X$. For instance, let $\phi(0,0|w,x)=\phi(1,1|w,x)=1/2$ for all $w\in\mathcal{W}$ and $x\in\mathcal{M}$ and where $0,1\in\mathcal{Y}$, $0,1\in\mathcal{Z}$. Then, \cref{eq:cond_independence_violation} is clearly not fulfilled.

The problem related to \Cref{fig:Bayesian_net_for_a_single_channel}b can be solved with a little sleight of hand: We introduce an additional variable $V=YZ$, defined as the joint channel output, as depicted in \Cref{fig:Bayesian_net_for_a_single_channel}c  \footnote{In fact, $V$ can be any sufficient statistic of $WX$ about $YZ$ but for our purposes setting $V=YZ$ is the simplest. This solution was also pointed out in\cite[figure 13a]{klyubin2007representations}.}. This is by no means the only way to address the problem with \Cref{fig:Bayesian_net_for_a_single_channel}b (for instance~\cite[{Lemma 1}]{evans2016graphs}), but it is a particularly simple solution which, as we will see, allows one to use d-separation for percept-action loops.
With this choice of $V$, we can write
\begin{align}
	p_{YZV|WX}=p_{Y|V}\,p_{Z|V}\,p_{V|WX}\label{eq:Bayesian_channel_decomposition}
\end{align}
where, since $V=YZ$, the conditional distribution $p_{V|WX}$ is given by the transition matrix $\Phi$, $p_{V|WX}(y,z|w,x) = \phi(v|w,x)$ with $v=(y,z)$, and $p_{Y|V}$, $p_{Z|V}$ are delta distributions since
\begin{align}
	p_{Y|V}(y|v)=p_{Y|YZ}(y|y',z')=\delta_{y,y'},
\end{align} 
and similarly for $p_{Z|V}$.

We recover the original channel from \cref{eq:Bayesian_channel_decomposition} through marginalization:
\begin{align}
	\sum_{v\in\mathcal{V}}p_{YZV|WX}(y,z,v|w,x)&=\sum_{v\in\mathcal{V}}
	p_{Y|V}(y|v)p_{Z|V}(z|v)p_{V|WX}(v|w,x)\\
	&=\sum_{y'\in\mathcal{Y},z'\in\mathcal{Z}}
	p_{Y|YZ}(y|y', z')p_{Z|YZ}(z|y',z')\phi(y',z'|w,x)\\
	&=\sum_{y\in\mathcal{Y},z\in\mathcal{Z}}
	\delta_{y,y'}\delta_{z,z'}\phi(y',z'|w,x)\\ 
	&=\phi(y,z|w,x). \label{eq:recover_channel}
\end{align}

\begin{figure}[h!]
	\begin{tikzpicture}[>=stealth',shorten >=1pt,node distance=2cm,on grid,auto,state/.style={circle, draw, minimum size=.6cm, inner sep=1pt},font=\tiny]
		\def \S {S} 
		\def \A {A} 
		\def \M {M} 
		\def \Z {Z} 
		\def \V {V} 
		\def \W {W} 
		

		\node at (-0.4,0) {};
		\def \n {4} 
		\foreach \s in {0,...,\n}
		{
			\pgfmathtruncatemacro{\x}{3*\s}
			\ifnum \s = 0
			\node[state] (Xn-\s) at (0.75,0.75) {\(\M_{0}\)};
			\node[state] (An-\s) at (0,-1.5) {\(\A_{0}\)};
			\node[state] (Bn-\s) at (1.5,-1.5) {\(\S_{0}\)};
			\node[state] (Zn-\s) at (2.25,-3.75) {\(\Z_{1}\)};
			\node[state] (Zn0) at (-0.75,-3.75) {\(\Z_{0}\)};
			\node[state,black!50,minimum size=.5cm] (Tn-0) at (0.75,-2.25) {\(W_{0}\)};
            \node[state,black!50,minimum size=.5cm] (Fn-0) at (-0.75,-0.75) {\(V_{0}\)};

        \else
			\pgfmathtruncatemacro{\prevstate}{\s-1}
			\pgfmathtruncatemacro{\ms}{\s+1}
			\pgfmathtruncatemacro{\aftestate}{-\s+1}
			\node[state] (Xn-\s) at (3*\s+0.75,0.75) {\(\M_{\s}\)};
			\node[state] (An-\s) [right=3cm of An-\prevstate] {\(\A_{{\s}}\)};
			\node[state] (Bn-\s) [right=3cm of Bn-\prevstate] {\(\S_{{\s}}\)};
			\node[state] (Zn-\s) [right=3cm of Zn-\prevstate] {\(\Z_{\ms}\)};
			\node[state,black!50,minimum size=.5cm] (Tn-\s) at (0.75+3*\s,-2.25) {\(\W_{\s}\)};
			\node[state,black!50,minimum size=.5cm] (Fn-\s) at (-0.75+3*\s,-0.75) {\(\V_{\s}\)};
			\path[->] (Tn-\prevstate) edge node {} (Zn-\prevstate);
			\path[->] (Tn-\prevstate) edge node {} (Bn-\prevstate);
			\path[->] (Xn-\prevstate) edge node {} (Fn-\s);
			\path[->] (Bn-\prevstate) edge node {} (Fn-\s);
			\path[->] (Fn-\prevstate) edge node {} (Xn-\prevstate);
			\path[->] (Fn-\prevstate) edge node {} (An-\prevstate);
			\path[->] (An-\s) edge node {} (Tn-\s);
			\path[->] (Zn-\prevstate) edge node {} (Tn-\s);
			\fi
		}
		\path[->] (Tn-\n) edge node {} (Zn-\n);
		\path[->] (Tn-\n) edge node {} (Bn-\n);
        \path[->] (Fn-\n) edge node {} (Xn-\n);
		\path[->] (Fn-\n) edge node {} (An-\n);
		\foreach \s in {1,...,\n}
		{
			\pgfmathtruncatemacro{\prevstate}{\s-1}
			\pgfmathsetmacro{\ms}{int(\s-1)}
		}
		
		\path[->] (Zn0) edge node {} (Tn-0);
		\path[->] (An-0) edge node {} (Tn-0);.
		\path[dotted, very thick, black!50] (Xn-\n) edge node {} (3*\n+1.25,0.25);
		\path[dotted, very thick, black!50] (Bn-\n) edge node {} (3*\n+2.0,-1.0);
		\path[dotted, very thick, black!50] (Zn-\n) edge node {} (3*\n+2.75,-3.25);
	\end{tikzpicture}
	\caption{Bayesian network of a general percept-action loop.}\label{fig:Bayes_net_for_pal}
\end{figure}
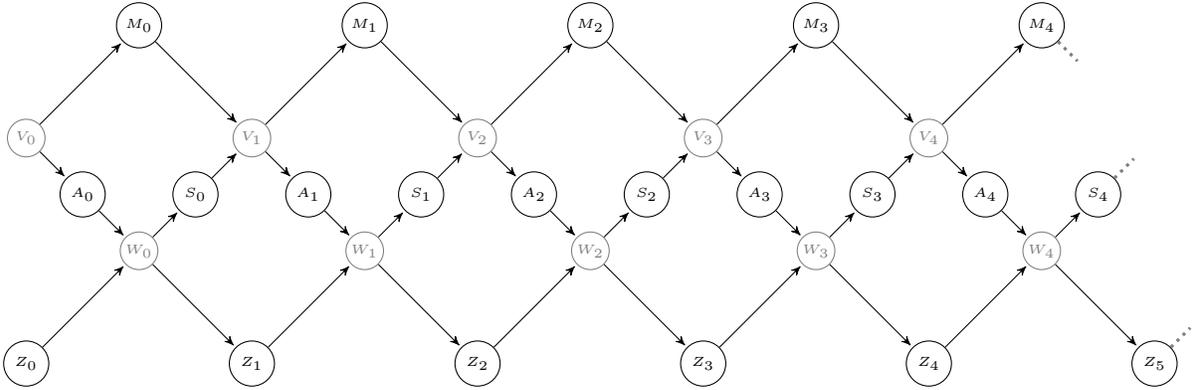

Applying the Bayesian network representation \Cref{fig:Bayesian_net_for_a_single_channel}c to the Markov channels of agent and environment in $\texttt{agtM}\lrstack \texttt{envM}$ leads us to the following

Given a distribution $q_{V}$ over a set of variables $V$ compatible with a Bayesian network $B$, for any subset $W\subseteq V$, let  $\mathcal{G}(q,W)$ denote the set of Markov conditions
\begin{align}
			I[S ; T| R]=0
		\end{align}
        with respect to $q$
		for all three sets $S,T,R\subset W$ of nodes
		for which $S$ and $T$ are d-separated by $R$.
\begin{lemma} \label{lem:Bayes_net_compatibility}
	For any $\normalfont\texttt{agtM}\lrstack \texttt{envM}$, the total distribution, of the form as given in \cref{eq:global_distribution_pal}, is compatible with the Bayesian network in \Cref{fig:Bayes_net_for_pal}.
\end{lemma}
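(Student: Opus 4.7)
The plan is to apply the auxiliary-variable construction illustrated in \Cref{fig:Bayesian_net_for_a_single_channel}(c) and \cref{eq:Bayesian_channel_decomposition} at every round of the loop, so that the claim reduces to the recursive-form implication in \Cref{lem:equivalent_markov_coniditions}. Concretely, I introduce auxiliary variables $V_s$ taking values in $\mathcal{A}\times\mathcal{M}$ and $W_s$ taking values in $\mathcal{S}\times\mathcal{Z}$, and define an extended distribution $\tilde{p}$ on all nodes of \Cref{fig:Bayes_net_for_pal} by specifying, node by node in topological order, the conditional given its parents: the sources $V_0$ and $Z_0$ carry $p^{\mathrm{agt}}_{A_0M_0}$ and $p^{\mathrm{env}}_{Z_0}$ respectively; the auxiliary agent nodes $V_s$ for $s\geq 1$ carry $\theta^{\mathrm{agt}}(\cdot\mid S_{s-1},M_{s-1})$; the auxiliary environment nodes $W_s$ carry $\phi^{\mathrm{env}}(\cdot\mid A_s,Z_s)$; and the remaining child nodes $A_s, M_s, S_s, Z_{s+1}$ carry Kronecker-delta projections that read off the appropriate component of the parent pair, exactly as in \cref{eq:Bayesian_channel_decomposition}.

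Next I verify that marginalising $\tilde{p}$ over every $V_s$ and $W_s$ reproduces $p_{\bm{M}\bm{A}\bm{S}\bm{Z}}$ as given in \cref{eq:global_distribution_pal}. Round by round this is precisely the collapse already performed in \cref{eq:recover_channel}: summing the product of the two deltas against $\theta^{\mathrm{agt}}(V_s\mid S_{s-1},M_{s-1})$ over $V_s$ returns the original kernel $\theta^{\mathrm{agt}}(a_s,m_s\mid s_{s-1},m_{s-1})$, and symmetrically for $W_s$ against $\phi^{\mathrm{env}}$. Assembling these round-wise marginalisations yields exactly the product in \cref{eq:global_distribution_pal}.

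Because $\tilde{p}$ was defined so as to factorise into the product of parent-conditioned factors dictated by \Cref{fig:Bayes_net_for_pal}, \Cref{lem:equivalent_markov_coniditions} applies and $\tilde{p}$ satisfies every global Markov condition of the network. Specialising to any d-separation $I[B;C\mid D]=0$ whose three sets lie inside the original variables $\{M_s,A_s,S_s,Z_s : s\in\mathbb{N}_0\}$, the conditional mutual information depends only on the joint marginal over $B\cup C\cup D$, which coincides with the corresponding marginal of $p_{\bm{M}\bm{A}\bm{S}\bm{Z}}$ by the previous step. Hence the same independence holds under $p_{\bm{M}\bm{A}\bm{S}\bm{Z}}$, which is exactly the compatibility asserted by the lemma.

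The only delicate part is bookkeeping of parents in \Cref{fig:Bayes_net_for_pal}: one must check that $V_0$ has no parents while $V_s$ for $s\geq 1$ has parents $M_{s-1},S_{s-1}$, that every $W_s$ has parents $A_s,Z_s$, and that each deterministic child node genuinely receives a delta on the correct component of the appropriate auxiliary pair. The infinite horizon poses no obstacle, since every d-separation statement involves only finitely many nodes and the corresponding finite marginal of $\tilde{p}$ is a finite Bayesian network to which \Cref{lem:equivalent_markov_coniditions} applies directly.
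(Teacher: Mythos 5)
Your proposal is correct and follows essentially the same route as the paper's own proof: you introduce the auxiliary pair variables $V_t = A_tM_t$ and $W_t = S_tZ_{t+1}$ with delta-distributed children, verify that marginalizing them out recovers \cref{eq:global_distribution_pal}, and invoke the recursive-form equivalence of \Cref{lem:equivalent_markov_coniditions} to conclude that all d-separations among the original variables hold. The only addition beyond the paper's argument is your explicit remark on the infinite horizon, which is a reasonable (and harmless) clarification.
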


The proof proceeds by constructing a distribution over all variables in \Cref{fig:Bayes_net_for_pal}
	\begin{enumerate}[label=(\roman*)]
    \item which admits a recursive form in the sense of \Cref{lem:equivalent_markov_coniditions}(i) and thus, by \Cref{lem:equivalent_markov_coniditions}, the global Markov conditions must hold, and 
    \item such that the distribution of $\texttt{agtM}\lrstack \texttt{envM}$ is recovered through marginalization.
\end{enumerate}
\emph{Proof.}
By \Cref{lem:equivalent_markov_coniditions}, a distribution over the variables in the Bayesian network shown in \Cref{fig:Bayes_net_for_pal} fulfills the global Markov conditions if and only if it factorizes as
 \begin{align}
	p_{\bm{M}\bm{A}\bm{S}\bm{Z}\bm{V}\bm{W}} = p_{V_0}\,p_{Z_0}\prod_{t=0}^{\infty} p_{M_t|V_t}\,p_{A_t|V_t}\,p_{V_{t+1}| S_t M_t}\,p_{Z_{t+1}|W_t}\,p_{S_t|W_t}\,p_{W_t|A_tZ_t}. \label{eq:global_distribution_pal_extended}
\end{align}
Let $p_{\bm{M}\bm{A}\bm{S}\bm{Z}\bm{V}\bm{W}}$ be of the form in \cref{eq:global_distribution_pal_extended}. Then, in particular those d-separations which involve only variables $M_t$, $A_t$, $S_t$, and $Z_t$, $t\in\mathbb{N}_0$, must hold. All that is left to show is that the distribution of any $\texttt{agtM}\lrstack \texttt{envM}$, as given in \cref{eq:global_distribution_pal}, can be recovered through marginalization from a distribution of the form in \cref{eq:global_distribution_pal_extended}.

For all $t\in\mathbb{N}_0$, we set $V_t=A_tM_t$ and $W_t=S_tZ_{t+1}$, and let
\begin{align}
 	\texttt{agtM}\lrstack \texttt{envM} = \left(\Theta^{\mathrm{agt}}, p^{\mathrm{agt}}_{M_{0}A_0}, \Phi^{\mathrm{env}}, p^{\mathrm{env}}_{Z_{0}}\right)
 \end{align}
 be any percept action loop. Then, let $p_{Z_0} = p^{\mathrm{env}}_{Z_{0}}$, and for all $t\in\mathbb{N}_0$ define those conditional distributions in \cref{eq:global_distribution_pal_extended}, which do not reduce to a delta distribution, to be
\begin{align}
    p_{V_{t+1}| S_t M_t}(v_{t+1}|s_t,m_t) &= \theta^{\mathrm{agt}} (a_{t+1},m_{t+1}|s_t, m_t)\text{ for all }v_{t+1} = (a_{t+1},m_{t+1})\text{, and} \label{eq:proof_Bayes_net_compatibility_1}\\ 
    p_{W_{t}| A_t Z_t}(w_t|a_t,z_t) &= \phi^{\mathrm{env}} (s_t,z_{t+1}|a_t, z_t)~~\quad \text{ for all }w_t = (s_t,z_{t+1}). \label{eq:proof_Bayes_net_compatibility_2}
\end{align}
 For each $t\in\mathbb{N}$, we consider all terms on the right-hand side of \cref{eq:global_distribution_pal_extended} which contain $V_t$ and marginalize:
\begin{align}
\sum_{v_t\in\mathcal{V}}p_{M_t|V_t}(m_t|v_t)\,p_{A_t|V_t}(a_t|v_t)\,p_{V_t| S_{t-1} M_{t-1}}(v_t|s_{t-1}m_{t-1}) = \theta^{\mathrm{agt}} (a_{t},m_{t}|s_{t-1}, m_{t-1})
\end{align}
 which follows from $V_t=A_tM_t$, and thus $p_{M_t|V_t}$ and $p_{A_t|V_t}$ are delta distributions, and \cref{eq:proof_Bayes_net_compatibility_1}.  For each $t\in\mathbb{N}_0$, a similar calculation for all terms on the right-hand side of \cref{eq:global_distribution_pal_extended} containing $W_t$ yields $\phi^{\mathrm{env}} (s_t, z_{t+1}|a_t, z_t)$. Finally,  we consider all terms on the right-hand side of \cref{eq:global_distribution_pal_extended} which contain $V_0$ and marginalize:
\begin{align}
    \sum_{v_0\in\mathcal{V}}p_{V_0}(v_0)p_{M_0|V_0}(m_0|v_0)p_{A_0|V_0}(a_0|v_0) = p_{A_0M_0}(a_0,m_0), \label{eq:proof_Bayes_net_compatibility_last_step}
\end{align}
which follows from $V_0=A_0M_0$. Finally, let $p_{V_0}$ be such that $p_{A_0M_0} = p^{\mathrm{agt}}_{A_0M_0}$.

We thus constructed a distribution $p_{\bm{M}\bm{A}\bm{S}\bm{Z}\bm{V}\bm{W}}$ such that marginalizing out $\bm{V}$ and $\bm{W}$ yields \cref{eq:global_distribution_pal}.  \hfill $\square$

The following corollary shows that a simplified Bayesian network can be used when the environment is memoryless. Recall that for a memoryless environment $\normalfont\texttt{envM}_{\mathrm{memless}}$, there exists a  $|\mathcal{A}|\times |\mathcal{S}|$
	stochastic matrix $\Phi^{\mathrm{env}}$ such that $\nu_{\bm{S}|\bm{
					A}}(\bm{s}|\bm{a})=\prod_{t=0}^{\infty}\phi^{\mathrm{env}}\left(s_t|a_t\right)$ and, thus, the total distribution of the any $\normalfont\texttt{agtM}\lrstack \texttt{env}_{\mathrm{memless}}$ is of the form
 \begin{align}
 		p_{\bm{M}\bm{A}\bm{S}}(\bm{m},\bm{a},\bm{s}) = p^{\mathrm{agt}}_{A_0 M_0}(a_0,m_0)\prod_{t=0}^{\infty} \theta^{\mathrm{agt}} (a_{t+1},m_{t+1}|s_t, m_t) \phi^{\mathrm{env}} (s_t|a_t). \label{eq:global_distribution_pal_memoryless_env}
 \end{align}

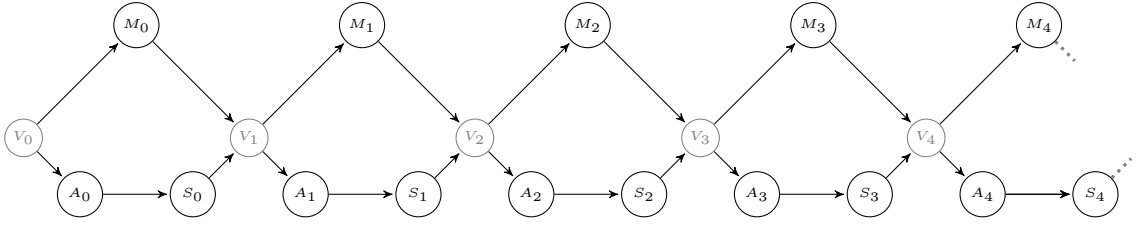
\begin{figure}[h!]
	\centering
        \begin{tikzpicture}[>=stealth',shorten >=1pt,node distance=2cm,on grid,auto,state/.style={circle, draw, minimum size=.6cm, inner sep=1pt},font=\tiny]
		\def \S {S} 
		\def \A {A} 
		\def \M {M} 
		\def \Z {Z} 
		\def \V {V} 
		

		\node at (-0.4,0) {};
		\def \n {4} 
		\foreach \s in {0,...,\n}
		{
			\pgfmathtruncatemacro{\x}{3*\s}
			\ifnum \s = 0
			\node[state] (Xn-\s) at (0.75,0.75) {\(\M_{0}\)};
			\node[state] (An-\s) at (0,-1.5) {\(\A_{0}\)};
			\node[state] (Bn-\s) at (1.5,-1.5) {\(\S_{0}\)};
            \node[state,black!50,minimum size=.5cm] (Fn-0) at (-0.75,-0.75) {\(V_{0}\)};

        \else
			\pgfmathtruncatemacro{\prevstate}{\s-1}
			\pgfmathtruncatemacro{\ms}{\s+1}
			\pgfmathtruncatemacro{\aftestate}{-\s+1}
			\node[state] (Xn-\s) at (3*\s+0.75,0.75) {\(\M_{\s}\)};
			\node[state] (An-\s) [right=3cm of An-\prevstate] {\(\A_{{\s}}\)};
			\node[state] (Bn-\s) [right=3cm of Bn-\prevstate] {\(\S_{{\s}}\)};
			\node[state,black!50,minimum size=.5cm] (Fn-\s) at (-0.75+3*\s,-0.75) {\(\V_{\s}\)};
			\path[->] (Xn-\prevstate) edge node {} (Fn-\s);
			\path[->] (Bn-\prevstate) edge node {} (Fn-\s);
			\path[->] (Fn-\prevstate) edge node {} (Xn-\prevstate);
			\path[->] (Fn-\prevstate) edge node {} (An-\prevstate);
			\path[->] (An-\s) edge node {} (Bn-\s);
			\fi
		}
		\path[->] (An-\n) edge node {} (Bn-\n);
        \path[->] (Fn-\n) edge node {} (Xn-\n);
		\path[->] (Fn-\n) edge node {} (An-\n);
		\foreach \s in {1,...,\n}
		{
			\pgfmathtruncatemacro{\prevstate}{\s-1}
			\pgfmathsetmacro{\ms}{int(\s-1)}
		}
		\path[->] (An-0) edge node {} (Bn-0);.
		\path[dotted, very thick, black!50] (Xn-\n) edge node {} (3*\n+1.25,0.25);
		\path[dotted, very thick, black!50] (Bn-\n) edge node {} (3*\n+2.0,-1.0);
	\end{tikzpicture}
       \caption{Bayesian network of an agent interacting with a memoryless environment channel.}\label{fig:Bayes_net_for_agent_and_memoryless env}
\end{figure}
\begin{corollary} \label{cor:Bayes_net_compatibility_memoryless}
     For any $\normalfont\texttt{env}_{\mathrm{memless}}$, the total distribution, of the form as given in \cref{eq:global_distribution_pal_memoryless_env}, is compatible with the Bayesian network in \Cref{fig:Bayes_net_for_agent_and_memoryless env}.
\end{corollary}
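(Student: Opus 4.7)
The plan is to mirror the construction used in the proof of \Cref{lem:Bayes_net_compatibility}, but with two simplifications afforded by the memorylessness of the environment: since each $S_t$ depends only on $A_t$ via a single-input single-output stochastic matrix $\phi^{\mathrm{env}}(s_t|a_t)$, no auxiliary variables $W_t$ are needed on the environment side, and the variables $Z_t$ disappear entirely. Thus the only auxiliary variables required are those on the agent side, $V_t = A_tM_t$, inherited from the construction in \Cref{fig:Bayesian_net_for_a_single_channel}c applied to the agent's transition matrix $\Theta^{\mathrm{agt}}$.

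Concretely, I would define a joint distribution $p_{\bm{M}\bm{A}\bm{S}\bm{V}}$ over all variables in the DAG of \Cref{fig:Bayes_net_for_agent_and_memoryless env} in recursive form,
\begin{align}
    p_{\bm{M}\bm{A}\bm{S}\bm{V}} = p_{V_0}\prod_{t=0}^{\infty} p_{M_t|V_t}\, p_{A_t|V_t}\, p_{S_t|A_t}\, p_{V_{t+1}|S_tM_t},
\end{align}
where $p_{M_t|V_t}$ and $p_{A_t|V_t}$ are delta distributions enforcing $V_t = A_tM_t$, $p_{S_t|A_t}(s_t|a_t) \coloneqq \phi^{\mathrm{env}}(s_t|a_t)$, and $p_{V_{t+1}|S_tM_t}(v_{t+1}|s_t,m_t) \coloneqq \theta^{\mathrm{agt}}(a_{t+1},m_{t+1}|s_t,m_t)$ with $v_{t+1} = (a_{t+1},m_{t+1})$. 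The initial marginal $p_{V_0}$ is chosen so that marginalizing the two delta factors $p_{M_0|V_0}p_{A_0|V_0}$ against $p_{V_0}$ yields $p^{\mathrm{agt}}_{A_0M_0}$. By \Cref{lem:equivalent_markov_coniditions}, this factorization immediately implies that $p_{\bm{M}\bm{A}\bm{S}\bm{V}}$ satisfies all global Markov conditions read off from \Cref{fig:Bayes_net_for_agent_and_memoryless env}; a fortiori, every d-separation statement involving only the variables $\bm{M}$, $\bm{A}$, $\bm{S}$ translates into a valid conditional independence of the marginal $p_{\bm{M}\bm{A}\bm{S}}$.

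It then remains to verify that marginalizing $\bm{V}$ out of the above factorization reproduces exactly the memoryless percept-action-loop distribution in the displayed equation for $p_{\bm{M}\bm{A}\bm{S}}(\bm{m},\bm{a},\bm{s})$. This is a direct calculation analogous to \cref{eq:proof_Bayes_net_compatibility_last_step}: summing over $v_t$ collapses the two delta factors against $p_{V_t|S_{t-1}M_{t-1}}$, producing the factor $\theta^{\mathrm{agt}}(a_t,m_t|s_{t-1},m_{t-1})$, while the initial sum over $v_0$ produces $p^{\mathrm{agt}}_{A_0M_0}(a_0,m_0)$, and the environment factors $\phi^{\mathrm{env}}(s_t|a_t)$ are untouched.

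I do not expect any genuine obstacle here. The only subtlety is bookkeeping: one has to make sure that when an edge in \Cref{fig:Bayes_net_for_agent_and_memoryless env} points directly from $A_t$ to $S_t$ (rather than through an auxiliary $W_t$), the memoryless assumption $\nu^{\mathrm{env}}_{\bm{S}|\bm{A}}(\bm{s}|\bm{a}) = \prod_t \phi^{\mathrm{env}}(s_t|a_t)$ really does produce a single-output-per-input conditional so that no spurious correlations of the kind warned about in \cref{eq:cond_independence_violation} can arise. Because $S_t$ is the only output at step $t$ in the memoryless case, this is automatic and the $W_t$ trick is unnecessary, which is precisely why the simplified DAG of \Cref{fig:Bayes_net_for_agent_and_memoryless env} suffices.
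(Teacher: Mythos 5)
Your proposal is correct and follows essentially the same route as the paper, which simply treats the corollary as a special case of \Cref{lem:Bayes_net_compatibility} with the environment factor replaced by the single-symbol conditional $\phi^{\mathrm{env}}(s_t|a_t)$; your recursive factorization, the delta-distribution handling of $V_t=A_tM_t$, and the marginalization check are exactly the unpacked version of that one-line reduction. Your closing remark about why the $W_t$ auxiliary node is unnecessary when $S_t$ is the sole output at each step is also the right justification for the simplified DAG.
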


\begin{proof}
 The corollary is a special case of \cref{lem:Bayes_net_compatibility} where the environment is taken care of by setting $p_{A_t|S_t}(a_t|s_t) = \phi^{\mathrm{env}} (s_t|a_t)$ for all $t\in\mathbb{N}$.
\end{proof}
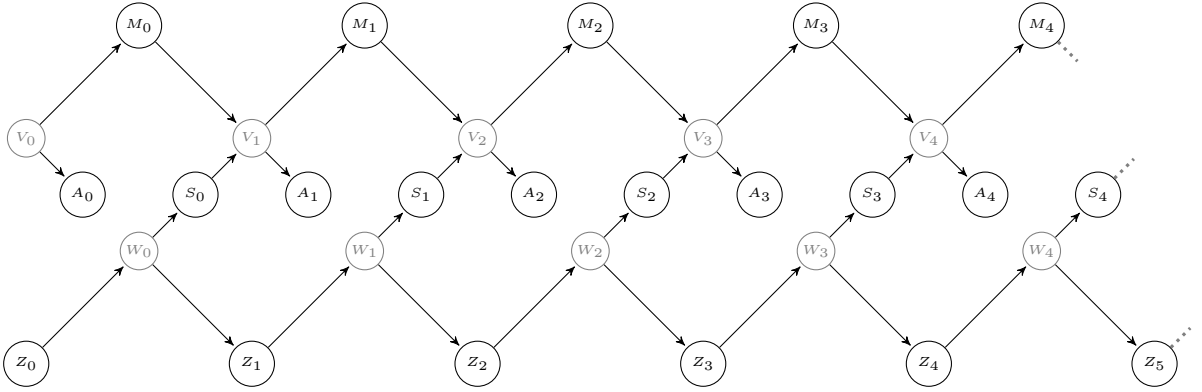
\begin{figure}[h!]
		\begin{tikzpicture}[>=stealth',shorten >=1pt,node distance=2cm,on grid,auto,state/.style={circle, draw, minimum size=.6cm, inner sep=1pt},font=\tiny]
		\def \S {S} 
		\def \A {A} 
		\def \M {M} 
		\def \Z {Z} 
		\def \V {V} 
		\def \W {W} 
		

		\node at (-0.4,0) {};
		\def \n {4} 
		\foreach \s in {0,...,\n}
		{
			\pgfmathtruncatemacro{\x}{3*\s}
			\ifnum \s = 0
			\node[state] (Xn-\s) at (0.75,0.75) {\(\M_{0}\)};
			\node[state] (An-\s) at (0,-1.5) {\(\A_{0}\)};
			\node[state] (Bn-\s) at (1.5,-1.5) {\(\S_{0}\)};
			\node[state] (Zn-\s) at (2.25,-3.75) {\(\Z_{1}\)};
			\node[state] (Zn0) at (-0.75,-3.75) {\(\Z_{0}\)};
			\node[state,black!50,minimum size=.5cm] (Tn-0) at (0.75,-2.25) {\(W_{0}\)};
            \node[state,black!50,minimum size=.5cm] (Fn-0) at (-0.75,-0.75) {\(V_{0}\)};

        \else
			\pgfmathtruncatemacro{\prevstate}{\s-1}
			\pgfmathtruncatemacro{\ms}{\s+1}
			\pgfmathtruncatemacro{\aftestate}{-\s+1}
			\node[state] (Xn-\s) at (3*\s+0.75,0.75) {\(\M_{\s}\)};
			\node[state] (An-\s) [right=3cm of An-\prevstate] {\(\A_{{\s}}\)};
			\node[state] (Bn-\s) [right=3cm of Bn-\prevstate] {\(\S_{{\s}}\)};
			\node[state] (Zn-\s) [right=3cm of Zn-\prevstate] {\(\Z_{\ms}\)};
			\node[state,black!50,minimum size=.5cm] (Tn-\s) at (0.75+3*\s,-2.25) {\(\W_{\s}\)};
			\node[state,black!50,minimum size=.5cm] (Fn-\s) at (-0.75+3*\s,-0.75) {\(\V_{\s}\)};
			\path[->] (Tn-\prevstate) edge node {} (Zn-\prevstate);
			\path[->] (Tn-\prevstate) edge node {} (Bn-\prevstate);
			\path[->] (Xn-\prevstate) edge node {} (Fn-\s);
			\path[->] (Bn-\prevstate) edge node {} (Fn-\s);
			\path[->] (Fn-\prevstate) edge node {} (Xn-\prevstate);
			\path[->] (Fn-\prevstate) edge node {} (An-\prevstate);
			\path[->] (Zn-\prevstate) edge node {} (Tn-\s);
			\fi
		}
		\path[->] (Tn-\n) edge node {} (Zn-\n);
		\path[->] (Tn-\n) edge node {} (Bn-\n);
        \path[->] (Fn-\n) edge node {} (Xn-\n);
		\path[->] (Fn-\n) edge node {} (An-\n);
		\foreach \s in {1,...,\n}
		{
			\pgfmathtruncatemacro{\prevstate}{\s-1}
			\pgfmathsetmacro{\ms}{int(\s-1)}
		}
		
		\path[->] (Zn0) edge node {} (Tn-0);
		\path[dotted, very thick, black!50] (Xn-\n) edge node {} (3*\n+1.25,0.25);
		\path[dotted, very thick, black!50] (Bn-\n) edge node {} (3*\n+2.0,-1.0);
		\path[dotted, very thick, black!50] (Zn-\n) edge node {} (3*\n+2.75,-3.25);
	\end{tikzpicture}
	\caption{Bayesian network of an agent receiving percepts from a source. This is an edge case of a percept-action loop where the environment is modeled as a product environment channel.}\label{fig:Bayes_net_for_agent_and_source}
\end{figure}

Let $\normalfont\texttt{env}$ be a product environment channel. Then, the distribution of any $\texttt{agtM}\lrstack \texttt{env}  = \left(\Theta^{\mathrm{agt}}, p^{\mathrm{agt}}_{M_0 A_0}, \nu^{\mathrm{env}}_{\bm{S}|\bm{A}}\right)$ takes the form
 \begin{align}
 		p_{\bm{M}\bm{A}\bm{S}}(\bm{m},\bm{a},\bm{s}) = \nu^{\mathrm{env}}_{\bm{S}|\bm{A}}(\bm{s}|\bm{a}) p^{\mathrm{agt}}_{A_0 M_0}(a_0,m_0)\prod_{t=0}^{\infty} \theta^{\mathrm{agt}} (a_{t+1},m_{t+1}|s_t, m_t), \label{eq:global_distribution_agent_and_source}
 \end{align}
 and we have the following 
\begin{lemma} \label{lem:Bayes_net_compatibility_source}
	 Let \texttt{env} be a product environment channel.Then, for any percept-action loop $\normalfont\texttt{agtM}\lrstack \texttt{env}$ with a total distribution \(p_{\bm{M}\bm{A}\bm{S}}\) over the variables \((\bm{M},\bm{A},\bm{S})\), that is of the form in \cref{eq:global_distribution_agent_and_source}, the Bayesian network in \Cref{fig:Bayes_net_for_agent_and_source} is compatible with \(p_{\bm{M}\bm{A}\bm{S}}\).
\end{lemma}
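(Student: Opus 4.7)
The plan is to mirror the construction used in the proof of Lemma~\ref{lem:Bayes_net_compatibility} (and its memoryless corollary, Corollary~\ref{cor:Bayes_net_compatibility_memoryless}), with a simplified environment factor that exploits the product-channel assumption: because the percepts are independent of the actions, the edges from $A_t$ into $W_t$ disappear in Figure~\ref{fig:Bayes_net_for_agent_and_source}, and we need a factorization reflecting this.

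First I would use the product-channel assumption, $\nu^{\mathrm{env}}_{\bm{S}|\bm{A}}(\bm{s}|\bm{a}) = \nu^{\mathrm{env}}_{\bm{S}|\bm{A}}(\bm{s}|\bm{a}')$, to identify $\nu^{\mathrm{env}}_{\bm{S}|\bm{A}}$ with a distribution $p^{\mathrm{src}}_{\bm{S}}$ on $\mathcal{S}^{\mathbb{N}_0}$. Since \texttt{env} is by Definition~\ref{def:environment_channel} a finite-state hidden Markov channel, $\bm{S}$ is a finite-alphabet hidden Markov process (equivalently, a deterministic function of a finite-state Markov chain, cf.\ the discussion preceding Lemma~\ref{lem:entropy_rate_existence}). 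Hence there exist a finite state set $\mathcal{Z}$, a distribution $p^{\mathrm{src}}_{Z_0}$ on $\mathcal{Z}$, and a transition matrix with entries $\phi^{\mathrm{src}}(s_t, z_{t+1}|z_t)$ such that $p^{\mathrm{src}}_{\bm{S}}(\bm{s}) = \sum_{\bm{z}} p^{\mathrm{src}}_{Z_0}(z_0)\prod_t \phi^{\mathrm{src}}(s_t, z_{t+1}|z_t)$; crucially, the transitions do not depend on $a_t$.

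Next, following the proof template of Lemma~\ref{lem:Bayes_net_compatibility}, I would introduce auxiliary variables $V_t = A_tM_t$ and $W_t = S_tZ_{t+1}$, and define an extended joint distribution $p_{\bm{M}\bm{A}\bm{S}\bm{Z}\bm{V}\bm{W}}$ by the recursive factorization dictated by Figure~\ref{fig:Bayes_net_for_agent_and_source}:
\begin{align*}
p_{\bm{M}\bm{A}\bm{S}\bm{Z}\bm{V}\bm{W}} = p_{V_0}\, p^{\mathrm{src}}_{Z_0}\prod_{t=0}^{\infty} p_{M_t|V_t}\, p_{A_t|V_t}\, p_{V_{t+1}|S_tM_t}\, p_{Z_{t+1}|W_t}\, p_{S_t|W_t}\, p_{W_t|Z_t},
\end{align*}
with $p_{M_t|V_t}$, $p_{A_t|V_t}$, $p_{S_t|W_t}$, and $p_{Z_{t+1}|W_t}$ chosen as the delta distributions enforcing $V_t = A_tM_t$ and $W_t = S_tZ_{t+1}$; $p_{V_{t+1}|S_tM_t}(v_{t+1}|s_t,m_t) = \theta^{\mathrm{agt}}(a_{t+1},m_{t+1}|s_t,m_t)$; $p_{W_t|Z_t}(w_t|z_t) = \phi^{\mathrm{src}}(s_t,z_{t+1}|z_t)$ (this is where the absence of an $A_t \to W_t$ edge is encoded); and $p_{V_0}$ chosen so that the $A_0M_0$-marginal equals $p^{\mathrm{agt}}_{A_0M_0}$. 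By construction this factorization is a recursive form in the sense of Lemma~\ref{lem:equivalent_markov_coniditions}(i) for Figure~\ref{fig:Bayes_net_for_agent_and_source}, so all global Markov conditions of that network hold for $p_{\bm{M}\bm{A}\bm{S}\bm{Z}\bm{V}\bm{W}}$.

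To close the argument, I would marginalize out $\bm{V}, \bm{W}, \bm{Z}$, repeating the telescoping computations of Lemma~\ref{lem:Bayes_net_compatibility} (equations~(\ref{eq:proof_Bayes_net_compatibility_1})--(\ref{eq:proof_Bayes_net_compatibility_last_step})) with the extra observation $\sum_{\bm{z}}p^{\mathrm{src}}_{Z_0}\prod_t \phi^{\mathrm{src}}(s_t,z_{t+1}|z_t) = p^{\mathrm{src}}_{\bm{S}}(\bm{s}) = \nu^{\mathrm{env}}_{\bm{S}|\bm{A}}(\bm{s}|\bm{a})$, so that the resulting $(\bm{M},\bm{A},\bm{S})$-marginal coincides with equation~(\ref{eq:global_distribution_agent_and_source}). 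The Markov conditions among $\bm{M},\bm{A},\bm{S}$ imposed by Figure~\ref{fig:Bayes_net_for_agent_and_source} are then inherited by $p_{\bm{M}\bm{A}\bm{S}}$, which is exactly compatibility. The main obstacle I anticipate is the first step, namely justifying that a product environment channel admits a hidden Markov model whose transitions are independent of $a_t$; the original model guaranteed by Definition~\ref{def:environment_channel} may have $\phi^{\mathrm{env}}(s_t,z_{t+1}|a_t,z_t)$ genuinely depending on $a_t$ (while marginalizing to something that does not), so one cannot simply reuse that model and must instead appeal to the characterization of $\bm{S}$ as a finite-alphabet hidden Markov process.
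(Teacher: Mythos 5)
Your proposal follows the paper's proof of \Cref{lem:Bayes_net_compatibility_source} essentially step for step: obtain an action-independent hidden Markov model for the percept process, build the extended distribution over $\bm{M}\bm{A}\bm{S}\bm{Z}\bm{V}\bm{W}$ with $V_t=A_tM_t$ and $W_t=S_tZ_{t+1}$ in the recursive form dictated by \Cref{fig:Bayes_net_for_agent_and_source} (so that \Cref{lem:equivalent_markov_coniditions} gives all global Markov conditions), and recover \cref{eq:global_distribution_agent_and_source} by the same telescoping marginalization as in \Cref{lem:Bayes_net_compatibility}. The only substantive point is the step you flag as the main obstacle, and there your diagnosis is slightly off: you assert that ``one cannot simply reuse'' the channel's own model and must instead invoke the characterization of $\bm{S}$ as a finite-alphabet hidden Markov process---but that characterization is precisely what needs to be established, so as written this is circular, and in fact the channel's model \emph{can} be reused directly. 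The paper closes the gap in one line: since $\nu^{\mathrm{env}}_{\bm{S}|\bm{A}}(\bm{s}|\bm{a})$ is the same for every input sequence, evaluate it at the constant sequence $(a,a,\dotsc)$ for any fixed $a\in\mathcal{A}$; then $\tilde\phi^{\mathrm{env}}(s_t,z_{t+1}|z_t)\coloneqq \phi^{\mathrm{env}}(s_t,z_{t+1}|a,z_t)$ is a valid row-stochastic transition matrix, and $(\tilde\Phi^{\mathrm{env}},p^{\mathrm{env}}_{Z_0})$ generates the percept distribution (\cref{eq:proof_hidden_source_2}). It does not matter that the original $\phi^{\mathrm{env}}$ may genuinely depend on $a_t$; only the marginal over $\bm{s}$ must be reproduced, and the product-channel property guarantees that the fixed-action restriction reproduces it. With this substitution for your $\phi^{\mathrm{src}}$, the rest of your argument goes through verbatim and coincides with the paper's proof.
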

The proof is similar to the proof of \Cref{lem:Bayes_net_compatibility}.

\emph{Proof.}
By \Cref{lem:equivalent_markov_coniditions}, a distribution over the variables in the Bayesian network shown in \Cref{fig:Bayes_net_for_agent_and_source} fulfills the global Markov conditions if and only if it factorizes as
 \begin{align}
	p_{\bm{M}\bm{A}\bm{S}\bm{Z}\bm{V}\bm{W}} = p_{V_0}\,p_{Z_0}\prod_{t=0}^{\infty} p_{M_{t}|V_{t}}\,p_{A_{t}|V_{t}}\,p_{V_{t+1}| S_t M_t}\,p_{Z_{t+1}|W_t}\,p_{S_t|W_t}\,p_{W_t|Z_t}. \label{eq:global_distribution_agent_and_source_extended}
\end{align}
Let $p_{\bm{M}\bm{A}\bm{S}\bm{Z}\bm{V}\bm{W}}$ be of the form in \cref{eq:global_distribution_agent_and_source_extended}. Then, in particular those global Markov which involve only variables $M_t$, $A_t$, and $S_t$ , $t\in\mathbb{N}_0$, must hold.

Further, 
Since product environment channels are hidden Markov channels, by \Cref{def:agent_hidden_Markov_channel} for any product environment channel there must exist a Markov model $(\Phi^{\mathrm{env}}, p^{\mathrm{env}}_{Z_{0}})$ such that
\begin{align}
		\nu^{\mathrm{env}}_{\bm{S}|\bm{
				A}}(\bm{s}|\bm{
			a})=\sum_{\bm{z}}p^{\mathrm{env}}_{Z_{0}}(z_{0})\prod_{t=0}^{\infty}\phi^{\mathrm{env}}
			\left(s_t,z_{t+1}|a_t,z_t\right). \label{eq:proof_hidden_source}
		\end{align} 
Further,  by the definition of product environment channels (\Cref{def:noiseless_memoryless_invariant_and_source_channels}) we have $\nu^{\mathrm{env}}_{\bm{S}|\bm{
					A}}(\bm{s}|\bm{a}) =\nu^{\mathrm{env}}_{\bm{S}|\bm{
					A}}(\bm{s}|\bm{a}')$ for all $\bm{a}, \bm{a}'\in\mathcal{M}^{\mathbb{N}_0}$. Thus, for product environment channels, \cref{eq:proof_hidden_source} must still hold if one sets all actions on the right-hand side in \cref{eq:proof_hidden_source}  to some $a\in\mathcal{A}$. In this case, we obtain
\begin{align}
		\nu^{\mathrm{env}}_{\bm{S}|\bm{
				A}}(\bm{s}|\bm{
			a})=\sum_{\bm{z}}p^{\mathrm{env}}_{Z_{0}}(z_{0})\prod_{t=0}^{\infty}\tilde\phi^{\mathrm{env}}
			\left(s_t,z_{t+1}|z_t\right), \label{eq:proof_hidden_source_2}
		\end{align} 
        where we defined a new $\abs{\mathcal{S}\times \mathcal{Z}}\times \abs{\mathcal{Z}}$ transition matrix $\tilde\Phi^{\mathrm{env}}$ with coefficients
    \begin{align}
       \tilde\phi^{\mathrm{env}}
			\left(s_t,z_{t+1}|z_t\right) =  \phi^{\mathrm{env}}\left(s_t,z_{t+1}|a,z_t\right).
    \end{align}
    Plugging \cref{eq:proof_hidden_source_2} into  \cref{eq:global_distribution_agent_and_source} yields
    \begin{align}
 		p_{\bm{M}\bm{A}\bm{S}}(\bm{m},\bm{a},\bm{s}) =\sum_{\bm{z}}p^{\mathrm{env}}_{Z_{0}}(z_{0})p^{\mathrm{agt}}_{A_0 M_0}(a_0,m_0)\prod_{t=0}^{\infty}\tilde\phi^{\mathrm{env}}
			\left(s_t,z_{t+1}|z_t\right) \theta^{\mathrm{agt}} (a_{t+1},m_{t+1}|s_t, m_t), \label{eq:global_distribution_agent_and_source_rewritten}
    \end{align}
            for the global distribution.

            All that is left to show is that the distribution in \cref{eq:global_distribution_agent_and_source_rewritten}
can be recovered through marginalization from a distribution of the form in \cref{eq:global_distribution_agent_and_source_extended}.

For all $t\in\mathbb{N}_0$, we set $V_t=A_tM_t$ and $W_t=S_tZ_{t+1}$ 
and       
\begin{align}
    p_{V_{t+1}| S_t M_t}(v_{t+1}|s_t,m_t) &= \theta^{\mathrm{agt}} (a_{t+1},m_{t+1}|s_t, a_t)\text{ for all }v_{t+1} = (a_{t+1},m_{t+1})\text{, and} \label{eq:proof_Bayes_net_compatibility_3}\\
    p_{W_t| Z_t}(w_t|z_t) &= \tilde\phi^{\mathrm{env}} (s_t,z_{t+1}|z_t)\text{ for all }w_t = (s_t,z_{t+1}). \label{eq:proof_Bayes_net_compatibility_4}
\end{align}
 For each $t\in\mathbb{N}$, we consider all terms on the right-hand side of \cref{eq:global_distribution_agent_and_source_extended} which contain $V_t$ and marginalize:
\begin{align}
\sum_{v_t\in\mathcal{V}}p_{M_t|V_t}(m_t|v_t)\,p_{A_t|V_t}(a_t|v_t)\,p_{V_t| S_{t-1} M_{t-1}}(v_t|s_{t-1}m_{t-1}) = \theta^{\mathrm{agt}} (a_t,m_t|s_{t-1}, a_{t-1})
\end{align}
which follows from $V_t=A_tM_t$, and thus $p_{M_t|V_t}$ and $p_{A_t|V_t}$ are delta distributions, and \cref{eq:proof_Bayes_net_compatibility_3}. Similarly,  for each $t\in\mathbb{N}_0$, we consider all terms on the right-hand side of \cref{eq:global_distribution_agent_and_source_extended} which contain $W_t$ and marginalize:
\begin{align}
    \sum_{w_t\in\mathcal{W}}p_{Z_{t+1}|W_t}(z_{t+1}|w_t)p_{S_t|W_t}(s_t|w_t)p_{W_t|Z_t}(w_t|z_t) = \tilde\phi^{\mathrm{env}} (s_t,z_{t+1}|z_t)
\end{align}
which follows from $W_t=S_tZ_{t+1}$ and \cref{eq:proof_Bayes_net_compatibility_4}.

Finally,  we consider all terms on the right-hand side of \cref{eq:global_distribution_agent_and_source_extended} which contain $V_0$ and marginalize:
\begin{align}
    \sum_{v_0\in\mathcal{V}}p_{V_0}(v_0)p_{M_0|V_0}(m_0|v_0)p_{A_0|V_0}(a_0|v_0) = p_{A_0M_0}(a_0,m_0),
\end{align}
which follows from $V_0=A_0M_0$. Finally, let $p_{V_0}$ be such that $p_{A_0M_0} = p^{\mathrm{agt}}_{A_0M_0}$.

We thus constructed a distribution $p_{\bm{M}\bm{A}\bm{S}\bm{Z}\bm{V}\bm{W}}$ such that marginalizing out $\bm{V}$, $\bm{W}$, and $\bm{Z}$ yields \cref{eq:global_distribution_agent_and_source_rewritten}.  \hfill $\square$

In Bayesian networks of percept-action loops, there can in general be infinitely many paths between two nodes $X$ and $Y$, as the total process $\bm{M}\bm{A}\bm{S}\bm{Z}$ extends to the infinite future. However, note that paths that go through nodes that lie in the future of both $X$ and $Y$ must necessarily contain a collider. Those paths are therefore d-separated if the collider and all of its children are not part of the separating set.

\subsection{Existing approaches to the information theory of percept-action loops}
In the previous section, we introduced a Bayesian network (\Cref{fig:Bayes_net_for_pal}) for a general class of percept-action loops. Existing information-theoretic treatments of percept-action loops such as \cite{klyubin2007representations,tishby2010information,salge2014empowerment,ay2014causal} also provide Bayesian networks, see for example \cite[figure 1]{klyubin2007representations}, \cite[equation 11]{tishby2010information}, \cite[figure 4.1b]{salge2014empowerment}, and \cite[figure 4]{ay2014causal}. These Bayesian networks mainly deviate from our network in how the agent dynamics is modeled.

The difference between our network and the ones from the literature can be understood as follows. Since we model the environment  (respectively the agent) with a Markov channel on an input-output and a hidden-state register \isaac(\blk see \Cref{fig:Bayesian_net_for_a_single_channel}\isaac ) \blk we focus on incoming  and outgoing random variables of this channel while being agnostic to its inner workings. In comparison, from the perspective of our framework, existing approaches model variables \emph{inside} the channel (such as $V$ in \Cref{fig:Bayesian_net_for_a_single_channel}c). For example, we recover the Bayesian network in \cite[equation 11]{tishby2010information} from \Cref{fig:Bayes_net_for_pal} by considering variables $W_t$ and $V_t$ as the agent's memory while ignoring variables $M_t$ and $Z_t$. While our approach requires the introduction of auxiliary hidden variables $V_t$ and $W_t$ to obtain a compatible Bayesian network, we only need a \emph{single} transition matrix to model the agent (in  \cite{klyubin2007representations,tishby2010information,salge2014empowerment,ay2014causal} two transition matrices are necessary). Accordingly, our model is suitable in those contexts where one wishes to model the environment (respectively the agent) with a single Markov channel on an input-output and a memory register.
\section{Maximally predictive agent models} \label{supp:6}
In computational mechanics, the concept of a maximally predictive Markov model is based on the idea that in order to optimally predict the future, the model's memory must store all relevant information from the past. A commonly studied scenario involves a fixed input process $\bm{X}$, which is transformed by a channel into an output process $\bm{Y}$. In this context, a Markov model with memory states $\mathcal{M}$ is defined as maximally predictive at time $t$ if \cite{barnett2015computational,boyd2018thermodynamics}:
\begin{align}
    I[X_{0:t};X_{t:\infty}|M_t] = 0 \label{eq:predictive_model_Boyd}
\end{align}
and 
\begin{align}
    I[M_t;X_{t:\infty}|X_{0:t}] = 0.
\end{align}
The first condition captures the notation of a maximally predictive memory $M_t$ while the second condition states that the Markov model cannot predict the inputs beyond their correlations with the past. Assuming the channel is causal, as we do in this work, the latter simply corresponds to a d-separation.

However, it is important to notice that the above definition of maximally predictive Markov models was made in the context of stationary ergodic processes without feedback (see e.g., \cite{barnett2015computational}), i.e., where outputs do not influence future inputs. It turns out that, in order to lift these assumptions, we need to suitably generalize the definition of maximally predictive Markov models. As we will show, for the special case of stationary processes without feedback we recover \cref{eq:predictive_model_Boyd}.

In the following, we use the convention that, for variables $W,X, X_{n:t}, Y, Z, Z_{n:t}$ with $n,t \in \mathbb{N}_{0}$, 
\begin{align}
    I\left[W;X_{n:t}Y|Z\right]= I\left[W;Y|Z\right] \label{eq:index_dropping_convention_1}
\end{align}
and 
\begin{align}
    I\left[X;Y|Z_{n:t}\right]= I\left[X;Y\right] \label{eq:index_dropping_convention_2}
\end{align}
if $t\leq n$.

\begin{definition} \label{def:predictive}
Let $\normalfont\texttt{agt}\lrstack \texttt{env}$ be a percept-action loop. A model {\normalfont\texttt{agtM}} for {\normalfont\texttt{agt}} is said to be \emph{maximally predictive}, or for short \emph{predictive}, of percept $S_t$ in round $t$ if 
	\begin{align}
		I\left[A_{{0:t+1}}S_{{0:t}};S_t|M_t\right]=0, \label{eq:predictiveness_definition_single_step}
	\end{align}
	and an agent model is said to be \emph{asymptotically mean (a.m.) predictive} if	
		\begin{align}
		\cesaro{I\left[A_{{0:t+1}}S_{{0:t}};S_t|M_t\right]}_t=0. \label{eq:predictive}
	\end{align}
    In the following, $\normalfont\mathbb{A}^{\lrstacksmall \texttt{env}}_{\mathrm{pred}}$  denotes the set of agent models which are a.m.\,predictive for an environment channel $\mathrm{env}$.
\end{definition}
Note that Ces\`aro limit in \cref{eq:predictive} exists since conditional mutual information can be rewritten as a sum of (positive and negative) entropy rates, each of which converges by \Cref{lem:entropy_rate_existence}.

An agent model which is predictive at time $t$ must encode in its memory $M_t$ all information from past percepts $S_{{0:t}}$ and actions $A_{{0:t+1}}$ (including the current action) which helps predicting the current percept $S_t$.

By \Cref{eq:predictive}, an agent is a.m.~predictive if  \cref{eq:predictiveness_definition_single_step} holds asymptotically in the Ces\`aro sense. There are multiple ways this condition can be satisfied. One possibility is that the agent is predictive for sufficiently many rounds (e.g., on a subset of $\mathbb{N}_0$ with unit natural density). Alternatively, an agent would also be a.m.~predictive if the summands in \cref{eq:predictive}, $I\left[A_{{0:n+1}}S_{{0:n}};S_n|M_n\right]$, decay sufficiently fast --- say as $1/n$ as $n\rightarrow \infty$. Arguably the simplest case (which already received some attention in the literature \cite{boyd2022thermodynamic}) is when the agent is predictive at all times, meaning that \cref{eq:predictiveness_definition_single_step} holds for all $n\in\mathbb{N}_0$. In that case, there is an equivalent condition for a Markov model to be predictive. Based on this equivalence, we will show that our definition of a.m.\,predictive Markov models reduces to the condition in \cref{eq:predictive_model_Boyd} from \cite{barnett2015computational} when applied to stationary processes. 
\begin{lemma}\label{lem:predictiveness_equivalence}
	Let $\normalfont\texttt{agt}\lrstack \texttt{env}$ be a percept-action loop. An agent model {\normalfont \texttt{agtM}} is predictive \emph{of the next percept} at all times, i.e.,
	\begin{align}
		I\left[A_{{0:t+1}}S_{{0:t}};S_t|M_t\right]=0 \quad \forall t\in\mathbb{N}_0, \label{eq:lemma_predictiveness_to_show_1}
		\end{align}
		if and only if it is predictive \emph{of all future percepts} at all times,
		\begin{align}
			 I\left[A_{{0:t+1}}S_{{0:t}};S_{t:\infty}|M_t\right]=0 \quad \forall t\in\mathbb{N}_0. \label{eq:lemma_predictiveness_to_show_2}
	\end{align}
\end{lemma}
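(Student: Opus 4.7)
The reverse direction is immediate from monotonicity of conditional mutual information: $I[A_{0:t+1}S_{0:t};S_{t:\infty}|M_t]\geq I[A_{0:t+1}S_{0:t};S_t|M_t]\geq 0$, so the vanishing of the infinite-horizon form at every $t$ entails the single-step form. For the forward direction, abbreviate $F_t:=A_{0:t+1}S_{0:t}$ (so that $F_{t+1}=F_t\cup\{A_{t+1},S_t\}$). Since $I[F_t;S_{t:t+k}|M_t]$ is non-decreasing in $k$ and bounded above by $H(F_t)<\infty$, it suffices to prove by induction on $k\geq 1$ that $I[F_t;S_{t:t+k}|M_t]=0$ for every $t\in\mathbb{N}_0$; the base case $k=1$ is the standing hypothesis.

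For the inductive step, the chain rule together with the base-case hypothesis reduces the goal to showing $I[F_t;S_{t+1:t+k+1}|M_t,S_t]=0$. The plan is three moves. \emph{(a)} By d-separation in the Bayesian network of \Cref{fig:Bayes_net_for_pal}, the pair $(M_{t+1},A_{t+1})$ depends on the past only through $(M_t,S_t)$ via the auxiliary node $V_{t+1}$, so $I[F_t;M_{t+1}A_{t+1}|M_t,S_t]=0$; a two-way chain-rule expansion then yields the inequality $I[F_t;S_{t+1:t+k+1}|M_t,S_t]\leq I[F_t;S_{t+1:t+k+1}|M_t,S_t,M_{t+1},A_{t+1}]$. \emph{(b)} The inductive hypothesis applied at time $t+1$ gives $I[F_{t+1};S_{t+1:t+k+1}|M_{t+1}]=0$, from which the chain rule extracts $I[F_t;S_{t+1:t+k+1}|M_{t+1},A_{t+1},S_t]=0$. \emph{(c)} A d-separation argument yields $I[M_t;S_{t+1:t+k+1}|F_{t+1},M_{t+1}]=0$; combining this with (b) via the chain rule produces $I[F_tM_t;S_{t+1:t+k+1}|M_{t+1},A_{t+1},S_t]=0$, which upper-bounds the right-hand side of (a) by monotonicity and closes the induction.

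The hard step is (c). The subtlety is that the conditioning set $F_{t+1}\cup\{M_{t+1}\}$ leaves the collider at $V_{t+1}$ unblocked---both of its descendants $M_{t+1}$ and $A_{t+1}$ are conditioned---so I must verify that every path using this unblocked collider is nonetheless arrested by a genuine chain-node blocker. The case analysis I have in mind is: every forward path from $M_t$ must leave through $V_{t+1}$ and is blocked either at the chain nodes $M_{t+1}$ or $A_{t+1}$ themselves, or, via the unblocked $V_{t+1}$ collider onto $S_t$, at the chain node $S_t\in F_{t+1}$; every backward path traces the agent's memory chain $M_t\leftarrow V_t\leftarrow M_{t-1}\leftarrow\cdots\leftarrow M_0\leftarrow V_0$, and any branch toward the environment necessarily exits via some $A_{t'}$ or $S_{t'}$ with $t'\leq t$, each of which is a chain node lying in $F_{t+1}$. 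Once this enumeration is checked, the three-move combination above closes the induction, and monotone convergence in $k$ then delivers $I[F_t;S_{t:\infty}|M_t]=0$.
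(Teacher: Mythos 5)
Your proof is correct, and it reaches \cref{eq:lemma_predictiveness_to_show_2} by a genuinely different route than the paper's. The reverse direction is the same in both (chain rule plus nonnegativity of conditional mutual information). For the forward direction, the paper fixes $t$, expands $I[A_{0:t+1}S_{0:t};A_{t+1:\infty}S_{t:\infty}\mid M_t]$ by the chain rule over future rounds $j$, and shows each increment $I[A_{0:t+1}S_{0:t};A_{j+1}S_j\mid M_tA_{t+1:j+1}S_{t:j}]$ vanishes using the single-step hypothesis at round $j$ together with three d-separations, combined through a five-set information diagram. You instead run an induction on the horizon $k$ (carrying the quantifier over $t$ inside the induction), peel off the front percept with the hypothesis at time $t$, and transfer the conditioning from $(M_t,S_t)$ to $(M_{t+1},A_{t+1},S_t)$ so that the inductive hypothesis at time $t+1$ applies. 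I checked your two d-separations against the Bayesian network of \Cref{fig:Bayes_net_for_pal}: $I[A_{0:t+1}S_{0:t};M_{t+1}A_{t+1}\mid M_tS_t]=0$ holds because every path into $M_{t+1}$ or $A_{t+1}$ must pass through $V_{t+1}$ and hence through the conditioned chain nodes $M_t$ or $S_t$ (or through a future collider with no conditioned descendant), and your condition (c) holds as well --- the path escaping through the unblocked collider at $V_{t+1}$ is indeed arrested at the conditioned chain node $S_t$, and every other route from $M_t$ into the future environment crosses some $A_{t'}$ with $t'\le t+1$ or $S_{t'}$ with $t'\le t$ as a conditioned chain node, or a future collider whose descendants are unconditioned. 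Your route buys a leaner argument: two d-separation facts and no information-diagram bookkeeping. The paper's route buys the stronger intermediate statement $I[A_{0:t+1}S_{0:t};A_{t+1:\infty}S_{t:\infty}\mid M_t]=0$, i.e.\ independence from future \emph{actions} as well as percepts, which your argument does not directly provide (though it is not needed for the lemma). The final passage to the infinite horizon via monotone convergence is common to both.
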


\begin{proof}
	\((\Leftarrow)\) Suppose that $I\left[A_{{0:t+1}}S_{{0:t}};S_{t:\infty}|M_t\right]=0$ for all  $t\in\mathbb{N}_0$. By using the single-step chain rule of mutual information (\cref{eq:chain_rule_mutual_information}), with 
    $W = A_{0:t+1} S_{0:t}$, $X = S_t$, $Y = S_{t+1:\infty}$, and $Z = M_t$, we can write
	\begin{align}
		\label{eq:32}
		I[A_{0:t+1} S_{0:t}; S_{t:\infty}|M_t] = I[A_{0:t+1} S_{0:t}; S_t|M_t] + I[A_{0:t+1} S_{0:t}; S_{t+1:\infty}|M_t S_t]
	\end{align}
	for all \(t \in \mathbb{N}_0 \). Since the left-hand side vanishes by assumption (\cref{eq:lemma_predictiveness_to_show_2}), the nonnegativity of mutual information implies that both terms on the right-hand side must independently vanish. In particular, that means \(I[A_{0:t+1} S_{0:t}; S_t|M_t]  = 0 \) for all \(t \in \mathbb{N}_0 \).
	
	\((\Rightarrow)\) The proof proceeds in two steps. First, we will show that
    \begin{align}
        I \left[A_{0:t+1} S_{0:t}; A_{j+1}S_j |M_t A_{t+1:j+1} S_{t:j} \right] = 0 \label{eq:pred0}
    \end{align}
    for an arbitrary \(t \in \mathbb{N}_0\), and for \(j \in \{t, t+1, \dots \}\). Second, the proof is concluded by an application of the chain rule of mutual information.
    
    In order to show \cref{eq:pred0}, first consider the case \(j = t\): Using the chain rule of mutual information in the form of \cref{eq:chain_rule_mutual_information_one_step} with $W = A_{{0:j+1}}S_{{0:j}}$, $X = S_{j}$, $Y = A_{j+1}$ and $Z = M_j$ gives
	\begin{align}
		I\left[A_{{0:j+1}}S_{{0:j}};A_{j+1}S_{j}|M_j\right] = I\left[A_{{0:j+1}}S_{{0:j}};S_{j}|M_j\right] +  I\left[A_{{0:j+1}}S_{{0:j}};A_{j+1}|M_j S_{j}\right]. 
	\end{align}
    However, both terms on the right-hand side vanish, the first by assumption (\cref{eq:lemma_predictiveness_to_show_1}) and the second due to d-separataion (see \Cref{fig:proof_predictivity_Markov2}), leaving us with 
    \begin{align}
        I\left[A_{{0:j+1}}S_{{0:j}};A_{j+1} S_{j}|M_j\right] = 0 \label{eq:lemma_prediciveness1}
    \end{align}
    for $j\in\mathbb{N}_0$. But \cref{eq:lemma_prediciveness1} is just \cref{eq:pred0} with $t=j$.
    
    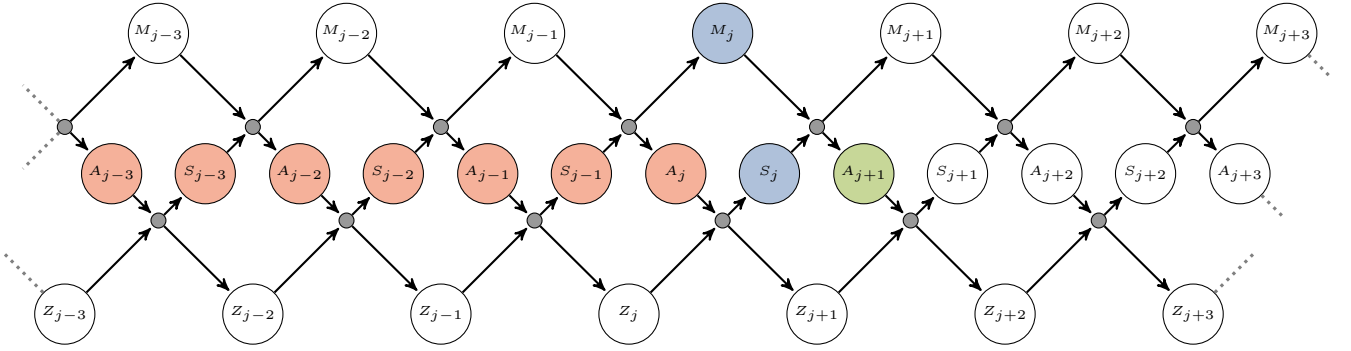
\begin{figure}[htbp!]
        \centering
        \begin{tikzpicture}[>=stealth',shorten >=1pt,node distance=2.5cm, scale= 0.83,on grid,auto, state/.style={circle, draw, minimum size=0.8cm, inner sep=1pt},font=\tiny]
        
        \def \n {3} 
        
        \def \S {S} 
        \def \A {A}  
        \def \M {M}               
        \def \Z {Z}               
        \def \t {j} 
        
        \def\XCol{{0,0,0,2,0,0,0}} 
        \def\SCol{{1,1,1,2,0,0,0}} 
        \def\ACol{{1,1,1,1,3,0,0}} 
        \def\ZCol{{0,0,0,0,0,0,0}} 

        \foreach \step in {0,...,\n} {
          \pgfmathtruncatemacro{\ms}{-\step}
          \pgfmathtruncatemacro{\aftestate}{-\step+1}
          \pgfmathtruncatemacro{\prevstate}{\step-1}
        
          \ifnum \step= 0
            \pgfmathsetmacro{\xc}{\XCol[\n]}
            \pgfmathsetmacro{\sc}{\SCol[\n]}
            \pgfmathsetmacro{\ac}{\ACol[\n]}
            \pgfmathsetmacro{\zc}{\ZCol[\n]}
        
            \node[state, fill={\ifnum\xc=1 mred!50\else\ifnum\xc=2 mblue!50\else\ifnum\xc=3 mgreen!50\else white\fi\fi\fi}] (Xn-\step) at (0.75,0.75) {\(\M_\t\)};
            \node[state, fill={\ifnum\ac=1 mred!50\else\ifnum\ac=2 mblue!50\else\ifnum\ac=3 mgreen!50\else white\fi\fi\fi}] (An-\step) at (0,-1.5) {\(\A_\t\)};
            \node[state, fill={\ifnum\sc=1 mred!50\else\ifnum\sc=2 mblue!50\else\ifnum\sc=3 mgreen!50\else white\fi\fi\fi}] (Bn-\step) at (1.5,-1.5) {\(\S_\t\)};
            \node[state, fill={\ifnum\zc=1 mred!50\else\ifnum\zc=2 mblue!50\else\ifnum\zc=3 mgreen!50\else white\fi\fi\fi}] (Zn-\step) at (-0.75,-3.75) {\(\Z_\t\)};
            \node[circle, draw=black, fill=black!40, inner sep=2pt] (Tn-0) at (0.75,-2.25) {};
            \node[circle, draw=black, fill=black!40, inner sep=2pt] (Fn-0) at (-0.75,-0.75) {};
          \else
        
            \pgfmathsetmacro{\xc}{\XCol[\n+\step]}
            \pgfmathsetmacro{\sc}{\SCol[\n+\step]}
            \pgfmathsetmacro{\ac}{\ACol[\n+\step]}
            \pgfmathsetmacro{\zc}{\ZCol[\n+\step]}
        
            \node[state, fill={\ifnum\xc=1 mred!50\else\ifnum\xc=2 mblue!50\else\ifnum\xc=3 mgreen!50\else white\fi\fi\fi}] (Xn-\step) [right of = Xn-\prevstate] {\(\M_{\t+\step}\)};
            \node[state, fill={\ifnum\ac=1 mred!50\else\ifnum\ac=2 mblue!50\else\ifnum\ac=3 mgreen!50\else white\fi\fi\fi}] (An-\step) [right of = An-\prevstate] {\(\A_{\t+{\step}}\)};
            \node[state, fill={\ifnum\zc=1 mred!50\else\ifnum\zc=2 mblue!50\else\ifnum\zc=3 mgreen!50\else white\fi\fi\fi}] (Zn-\step) [right of = Zn-\prevstate] {\(\Z_{\t+\step}\)};
            \node[circle, draw=black, fill=black!40, inner sep=2pt] (Fn-\step) [right of = Fn-\prevstate] {};
        
            \ifnum \step< \n
              \node[state, fill={\ifnum\sc=1 mred!50\else\ifnum\sc=2 mblue!50\else\ifnum\sc=3 mgreen!50\else white\fi\fi\fi}] (Bn-\step) [right of = Bn-\prevstate] {\(\S_{\t+{\step}}\)};
              \node[circle, draw=black, fill=black!40, inner sep=2pt] (Tn-\step) [right of = Tn-\prevstate] {};
            \fi
        
            \pgfmathsetmacro{\xc}{\XCol[\n-\step]}
            \pgfmathsetmacro{\sc}{\SCol[\n-\step]}
            \pgfmathsetmacro{\ac}{\ACol[\n-\step]}
            \pgfmathsetmacro{\zc}{\ZCol[\n-\step]}
            \node[state, fill={\ifnum\xc=1 mred!50\else\ifnum\xc=2 mblue!50\else\ifnum\xc=3 mgreen!50\else white\fi\fi\fi}] (Xn-\ms) [left of = Xn-\aftestate] {\(\M_{\t-\step}\)};
            \node[state, fill={\ifnum\ac=1 mred!50\else\ifnum\ac=2 mblue!50\else\ifnum\ac=3 mgreen!50\else white\fi\fi\fi}] (An-\ms) [left of = An-\aftestate] {\(\A_{\t-{\step}}\)};
            \node[state, fill={\ifnum\sc=1 mred!50\else\ifnum\sc=2 mblue!50\else\ifnum\sc=3 mgreen!50\else white\fi\fi\fi}] (Bn-\ms) [left of = Bn-\aftestate] {\(\S_{\t-{\step}}\)};
            \node[state, fill={\ifnum\zc=1 mred!50\else\ifnum\zc=2 mblue!50\else\ifnum\zc=3 mgreen!50\else white\fi\fi\fi}] (Zn-\ms) [left of = Zn-\aftestate] {\(\Z_{\t-\step}\)};
            \node[circle, draw=black, fill=black!40, inner sep=2pt] (Tn-\ms) [left of = Tn-\aftestate] {};
            \node[circle, draw=black, fill=black!40, inner sep=2pt] (Fn-\ms) [left of = Fn-\aftestate] {};
        
            \path[->, thick] (Tn-\prevstate) edge (Zn-\step);
            \path[->, thick] (Tn-\prevstate) edge (Bn-\prevstate);
            \path[->, thick] (Fn-\step) edge (Xn-\step);
            \path[->, thick] (Fn-\step) edge (An-\step);
            \path[->, thick] (Tn-\ms) edge (Zn-\aftestate);
            \path[->, thick] (Tn-\ms) edge (Bn-\ms);
            \path[->, thick] (Zn-\ms) edge (Tn-\ms);
            \path[->, thick] (An-\ms) edge (Tn-\ms);
            \path[->, thick] (Xn-\ms) edge (Fn-\aftestate);
            \path[->, thick] (Bn-\ms) edge (Fn-\aftestate);
          \fi
        
          \path[->, thick] (Fn-\ms) edge (Xn-\ms);
          \path[->, thick] (Fn-\ms) edge (An-\ms);
        }
        
        \foreach \step in {1,...,\n} {
          \pgfmathtruncatemacro{\prevstate}{\step-1}
          \path[->, thick] (Xn-\prevstate) edge (Fn-\step);
          \path[->, thick] (Zn-\prevstate) edge (Tn-\prevstate);
          \path[->, thick] (Bn-\prevstate) edge (Fn-\step);
          \path[->, thick] (An-\prevstate) edge (Tn-\prevstate);
        }
        
        \path[dotted, very thick, black!50] (Xn-\n) edge ++(0.7,-0.7);
        \path[dotted, very thick, black!50] (An-\n) edge ++(0.7,-0.7);
        \path[dotted, very thick, black!50] (Zn-\n) edge ++(1,1);
        \path[dotted, very thick, black!50] (Fn--\n) edge ++(-0.7,-0.7);
        \path[dotted, very thick, black!50] (Fn--\n) edge ++(-0.7,0.7);
        \path[dotted, very thick, black!50] (Zn--\n) edge ++(-1,1);
        
        \end{tikzpicture}
        \caption{Bayesian network for a percept-action loop (\cref{lem:Bayes_net_compatibility}), used in the proof of \Cref{lem:predictiveness_equivalence}. Here blue nodes d-separates red and green nodes. }
        \label{fig:proof_predictivity_Markov2}
    \end{figure} 
    
    What is left to show is the case where $j >t$. First note that \cref{eq:lemma_prediciveness1} still holds in that case. Additionally we will make use of several other conditions involving the random variables $A_{{0:j+1}}S_{{0:j}}$, $M_t$, $A_{t+1:j+1}S_{t:j}$, $M_j$ and $A_{j+1}S_{j}$. Relations between those random variables can be represented by the information diagram in \Cref{fig:VennPred}. For example, \cref{eq:pred0} then corresponds to two information atoms in the diagram, \(l+f\). 
    \begin{figure}[htbp!]
		\centering
        \begin{tikzpicture}
            \node at (0,0) {\includegraphics[width=5.5cm]{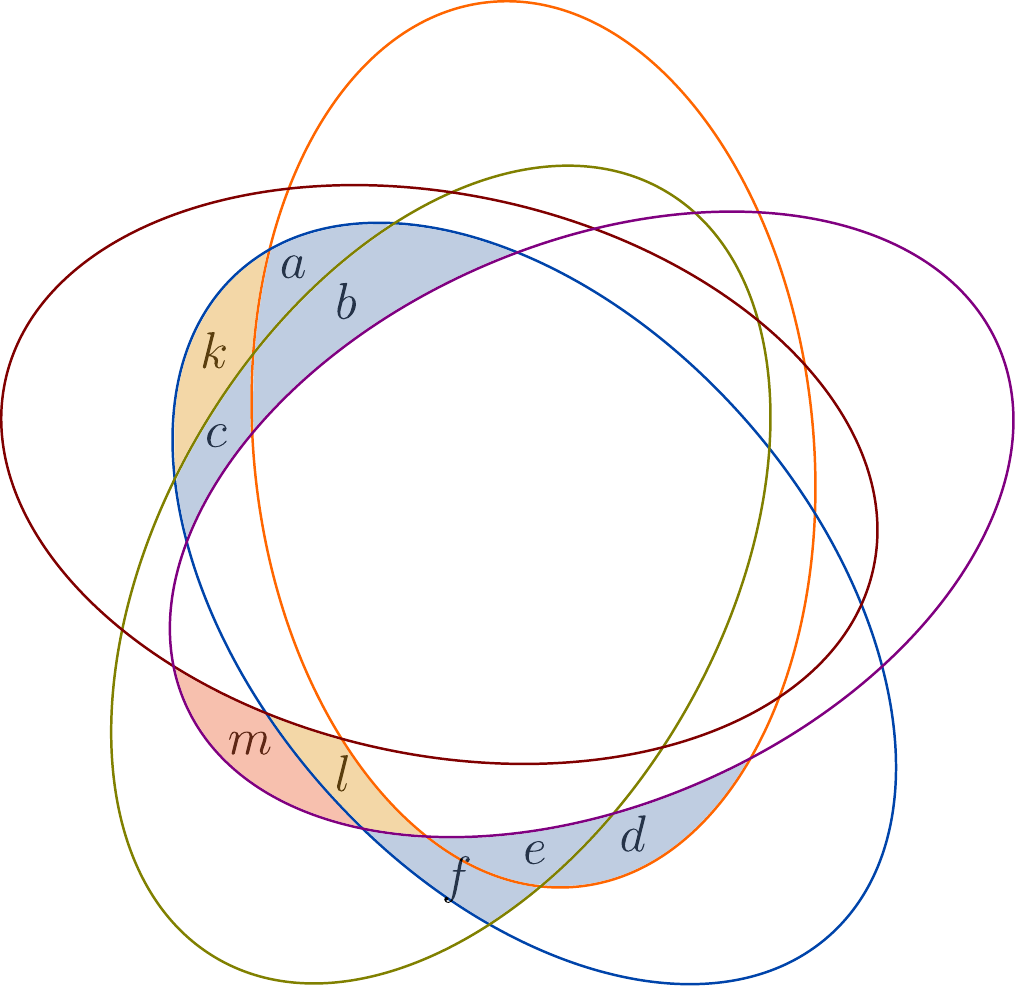}};
            \node at (-2.9,1.5) {$M_t$};     
            \node at (2.9,1.5) {$M_j$};    
            \node at (0,2.9) {$A_{t+1:j+1}S_{t:j}$};       
            \node at (-2.,-2.9) {$A_{0:t+1}S_{0:t}$};     
            \node at (2.,-2.9) {$A_{j+1}S_j$};    
        \end{tikzpicture}
		\caption{Information diagram used in the proof of \Cref{lem:predictiveness_equivalence}. Relevant information atoms are labeled. } 
		\label{fig:VennPred}
	\end{figure}
    Altogether we have the following conditions:
	\begin{align}
		I\left[A_{{0:j+1}}S_{{0:j}};A_{j+1}S_{j}|M_j\right] &= 0 = a+b+c+d+e+f, \label{eq:pred1} \\
		I \left[ A_{j+1} S_j ; M_t | M_jA_{0:j+1}S_{0:j}   \right] &= 0 = k,  \label{eq:pred4} \\
		I[ A_{0:t+1} S_{0:t};M_j| M_t A_{t+1:j+1} S_{t:j}] &= 0 = m+l, \label{eq:pred5} \\
		I[A_{0:t+1} S_{0:t} ;M_j| M_t A_{t+1:j+1}  S_{t:j}A_{j+1}S_j] &= 0 = m, \label{eq:pred6}
	\end{align}
    where the last equality in each line expresses the condition through the information atoms defined in \Cref{fig:VennPred}. The first condition, \cref{eq:pred1}, is just \cref{eq:lemma_prediciveness1}. The conditions in \cref{eq:pred4,eq:pred5,eq:pred6} follow from d-separation (see \Cref{fig:BayesianNetwork_Lemma6} where for visualization purposes we set $t$ to $j-2$).

	From the information diagram in \Cref{fig:VennPred} we see that \cref{eq:pred1}  and \cref{eq:pred4} allow us to write
	\begin{align}
		I \left[  M_t A_{0:j+1} S_{0:j} ;  A_{j+1} S_j | M_j \right] = a+b+c+d+e+f+k = 0. \label{eq:predictive_proof_vanishing_term}
	\end{align}
	Rewriting the left-hand side using the chain rule for mutual information in the form of \cref{eq:chain_rule_mutual_information_one_step} with $W= A_{j+1}S_j$, $X= M_t A_{t+1:j+1} S_{t:j} $, $Y=A_{0:t+1} S_{0:t}$ and $Z=M_j$ gives
	\begin{align}
		I \left[ M_t  A_{0:j+1}S_{0:j} ;A_{j+1} S_j | M_j \right] = I \left[ M_t A_{t+1:j+1} S_{t:j} ; A_{j+1} S_j| M_j \right] + I[A_{0:t+1} S_{0:t} ; A_{j+1} S_j  | M_t  M_j A_{t+1:j+1} S_{t:j} ].
	\end{align}
	Since the left-hand side vanishes (\cref{eq:predictive_proof_vanishing_term}), the nonnegativity of mutual information implies that both terms on the right-hand side must independently vanish; in particular, \(I[A_{0:t+1} S_{0:t}; A_{j+1} S_j  | M_t M_j A_{t+1:j+1} S_{t+j}] = f = 0\). Further, since \cref{eq:pred5} and \cref{eq:pred6} imply \(l=0\), we can then write
	\begin{align}
		\label{eq:34}
		I \left[A_{0:t+1} S_{0:t}; A_{j+1} S_j |M_t A_{t+1:j+1} S_{t:j} \right] = f+l = 0
	\end{align}
 for all \(j>t \). Together, this then completes the proof of \cref{eq:pred0} for all \(j\geq t\). 
 
 Applying the chain rule of mutual information (\cref{eq:chain_rule_mutual_information}) to \cref{eq:pred0} yields
	\begin{align}
		\sum_{j=t}^\infty I \left[A_{0:t+1} S_{0:t}; A_{j+1}S_j |M_t A_{t+1:j+1} S_{t:j} \right] &= I\left[A_{{0:t+1}}S_{{0:t}};A_{t+1:\infty}S_{t:\infty}|M_t\right] = 0.
	\end{align}
    Further, by the chain rule of mutual information (~\cref{eq:chain_rule_mutual_information_one_step}) we have
    \begin{align}
0&=I\left[A_{{0:t+1}}S_{{0:t}};A_{t+1:\infty}S_{t:\infty}|M_t\right]\\
&=I\left[A_{{0:t+1}}S_{{0:t}};S_{t:\infty}|M_t\right]+I\left[A_{{0:t+1}}S_{{0:t}};A_{t+1:\infty}S_{t:\infty}|M_t A_{t+1:\infty}\right].
    \end{align}
    Now, by the nonnegativity of mutual information, each summand ion the right-hand side must vanish individually.
    In particular, $(I\left[A_{{0:t+1}}S_{{0:t}};S_{t:\infty}|M_t\right] = 0$ which concludes the proof of the lemma.
\end{proof}

\begin{figure}[htbp!]
    \begin{tikzpicture}[>=stealth',shorten >=1pt,node distance=2.5cm, scale= 0.83,on grid,auto, state/.style={circle, draw, minimum size=0.8cm, inner sep=1pt},font=\tiny]
    
    \def \n {3} 
    
    \def \S {S} 
    \def \A {A}  
    \def \M {M}               
    \def \Z {Z}               
    \def \t {j} 
        
    \def\XCol{{0,1,0,2,0,0,0}} 
    \def\SCol{{2,2,2,3,0,0,0}} 
    \def\ACol{{2,2,2,2,3,0,0}} 
    \def\ZCol{{0,0,0,0,0,0,0}} 

    \node[font=\small] at (-9.8,1.5) {(\cref{eq:pred4})};
    \node at (-10.5,2.5) {};

    \foreach \step in {0,...,\n} {
      \pgfmathtruncatemacro{\ms}{-\step}
      \pgfmathtruncatemacro{\aftestate}{-\step+1}
      \pgfmathtruncatemacro{\prevstate}{\step-1}
    
      \ifnum \step= 0
        \pgfmathsetmacro{\xc}{\XCol[\n]}
        \pgfmathsetmacro{\sc}{\SCol[\n]}
        \pgfmathsetmacro{\ac}{\ACol[\n]}
        \pgfmathsetmacro{\zc}{\ZCol[\n]}
    
        \node[state, fill={\ifnum\xc=1 mred!50\else\ifnum\xc=2 mblue!50\else\ifnum\xc=3 mgreen!50\else white\fi\fi\fi}] (Xn-\step) at (0.75,0.75) {\(\M_\t\)};
        \node[state, fill={\ifnum\ac=1 mred!50\else\ifnum\ac=2 mblue!50\else\ifnum\ac=3 mgreen!50\else white\fi\fi\fi}] (An-\step) at (0,-1.5) {\(\A_\t\)};
        \node[state, fill={\ifnum\sc=1 mred!50\else\ifnum\sc=2 mblue!50\else\ifnum\sc=3 mgreen!50\else white\fi\fi\fi}] (Bn-\step) at (1.5,-1.5) {\(\S_\t\)};
        \node[state, fill={\ifnum\zc=1 mred!50\else\ifnum\zc=2 mblue!50\else\ifnum\zc=3 mgreen!50\else white\fi\fi\fi}] (Zn-\step) at (-0.75,-3.75) {\(\Z_\t\)};
        \node[circle, draw=black, fill=black!40, inner sep=2pt] (Tn-0) at (0.75,-2.25) {};
        \node[circle, draw=black, fill=black!40, inner sep=2pt] (Fn-0) at (-0.75,-0.75) {};
      \else
    
        \pgfmathsetmacro{\xc}{\XCol[\n+\step]}
        \pgfmathsetmacro{\sc}{\SCol[\n+\step]}
        \pgfmathsetmacro{\ac}{\ACol[\n+\step]}
        \pgfmathsetmacro{\zc}{\ZCol[\n+\step]}
    
        \node[state, fill={\ifnum\xc=1 mred!50\else\ifnum\xc=2 mblue!50\else\ifnum\xc=3 mgreen!50\else white\fi\fi\fi}] (Xn-\step) [right of = Xn-\prevstate] {\(\M_{\t+\step}\)};
        \node[state, fill={\ifnum\ac=1 mred!50\else\ifnum\ac=2 mblue!50\else\ifnum\ac=3 mgreen!50\else white\fi\fi\fi}] (An-\step) [right of = An-\prevstate] {\(\A_{\t+{\step}}\)};
        \node[state, fill={\ifnum\zc=1 mred!50\else\ifnum\zc=2 mblue!50\else\ifnum\zc=3 mgreen!50\else white\fi\fi\fi}] (Zn-\step) [right of = Zn-\prevstate] {\(\Z_{\t+\step}\)};
        \node[circle, draw=black, fill=black!40, inner sep=2pt] (Fn-\step) [right of = Fn-\prevstate] {};
    
        \ifnum \step< \n
          \node[state, fill={\ifnum\sc=1 mred!50\else\ifnum\sc=2 mblue!50\else\ifnum\sc=3 mgreen!50\else white\fi\fi\fi}] (Bn-\step) [right of = Bn-\prevstate] {\(\S_{\t+{\step}}\)};
          \node[circle, draw=black, fill=black!40, inner sep=2pt] (Tn-\step) [right of = Tn-\prevstate] {};
        \fi
    
        \pgfmathsetmacro{\xc}{\XCol[\n-\step]}
        \pgfmathsetmacro{\sc}{\SCol[\n-\step]}
        \pgfmathsetmacro{\ac}{\ACol[\n-\step]}
        \pgfmathsetmacro{\zc}{\ZCol[\n-\step]}
        \node[state, fill={\ifnum\xc=1 mred!50\else\ifnum\xc=2 mblue!50\else\ifnum\xc=3 mgreen!50\else white\fi\fi\fi}] (Xn-\ms) [left of = Xn-\aftestate] {\(\M_{\t-\step}\)};
        \node[state, fill={\ifnum\ac=1 mred!50\else\ifnum\ac=2 mblue!50\else\ifnum\ac=3 mgreen!50\else white\fi\fi\fi}] (An-\ms) [left of = An-\aftestate] {\(\A_{\t-{\step}}\)};
        \node[state, fill={\ifnum\sc=1 mred!50\else\ifnum\sc=2 mblue!50\else\ifnum\sc=3 mgreen!50\else white\fi\fi\fi}] (Bn-\ms) [left of = Bn-\aftestate] {\(\S_{\t-{\step}}\)};
        \node[state, fill={\ifnum\zc=1 mred!50\else\ifnum\zc=2 mblue!50\else\ifnum\zc=3 mgreen!50\else white\fi\fi\fi}] (Zn-\ms) [left of = Zn-\aftestate] {\(\Z_{\t-\step}\)};
        \node[circle, draw=black, fill=black!40, inner sep=2pt] (Tn-\ms) [left of = Tn-\aftestate] {};
        \node[circle, draw=black, fill=black!40, inner sep=2pt] (Fn-\ms) [left of = Fn-\aftestate] {};
    
        \path[->, thick] (Tn-\prevstate) edge (Zn-\step);
        \path[->, thick] (Tn-\prevstate) edge (Bn-\prevstate);
        \path[->, thick] (Fn-\step) edge (Xn-\step);
        \path[->, thick] (Fn-\step) edge (An-\step);
        \path[->, thick] (Tn-\ms) edge (Zn-\aftestate);
        \path[->, thick] (Tn-\ms) edge (Bn-\ms);
        \path[->, thick] (Zn-\ms) edge (Tn-\ms);
        \path[->, thick] (An-\ms) edge (Tn-\ms);
        \path[->, thick] (Xn-\ms) edge (Fn-\aftestate);
        \path[->, thick] (Bn-\ms) edge (Fn-\aftestate);
      \fi
    
      \path[->, thick] (Fn-\ms) edge (Xn-\ms);
      \path[->, thick] (Fn-\ms) edge (An-\ms);
    }
    
    \foreach \step in {1,...,\n} {
      \pgfmathtruncatemacro{\prevstate}{\step-1}
      \path[->, thick] (Xn-\prevstate) edge (Fn-\step);
      \path[->, thick] (Zn-\prevstate) edge (Tn-\prevstate);
      \path[->, thick] (Bn-\prevstate) edge (Fn-\step);
      \path[->, thick] (An-\prevstate) edge (Tn-\prevstate);
    }
    
    \path[dotted, very thick, black!50] (Xn-\n) edge ++(0.7,-0.7);
    \path[dotted, very thick, black!50] (An-\n) edge ++(0.7,-0.7);
    \path[dotted, very thick, black!50] (Zn-\n) edge ++(1,1);
    \path[dotted, very thick, black!50] (Fn--\n) edge ++(-0.7,-0.7);
    \path[dotted, very thick, black!50] (Fn--\n) edge ++(-0.7,0.7);
    \path[dotted, very thick, black!50] (Zn--\n) edge ++(-1,1);
    
    \end{tikzpicture}

    \begin{tikzpicture}[>=stealth',shorten >=1pt,node distance=2.5cm, scale= 0.83,on grid,auto, state/.style={circle, draw, minimum size=0.8cm, inner sep=1pt},font=\tiny]
        
        \def \n {3} 
        
        \def \S {S} 
        \def \A {A}  
        \def \M {M}               
        \def \Z {Z}               
        \def \t {j} 
                
        \def\XCol{{0,2,0,3,0,0,0}} 
        \def\SCol{{1,2,2,0,0,0,0}} 
        \def\ACol{{1,1,2,2,0,0,0}} 
        \def\ZCol{{0,0,0,0,0,0,0}} 

        \node[font=\small] at (-9.8,1.5) {(\cref{eq:pred5})};
        \node at (-10.5,2.5) {};

        \foreach \step in {0,...,\n} {
          \pgfmathtruncatemacro{\ms}{-\step}
          \pgfmathtruncatemacro{\aftestate}{-\step+1}
          \pgfmathtruncatemacro{\prevstate}{\step-1}
        
          \ifnum \step= 0
            \pgfmathsetmacro{\xc}{\XCol[\n]}
            \pgfmathsetmacro{\sc}{\SCol[\n]}
            \pgfmathsetmacro{\ac}{\ACol[\n]}
            \pgfmathsetmacro{\zc}{\ZCol[\n]}
        
            \node[state, fill={\ifnum\xc=1 mred!50\else\ifnum\xc=2 mblue!50\else\ifnum\xc=3 mgreen!50\else white\fi\fi\fi}] (Xn-\step) at (0.75,0.75) {\(\M_\t\)};
            \node[state, fill={\ifnum\ac=1 mred!50\else\ifnum\ac=2 mblue!50\else\ifnum\ac=3 mgreen!50\else white\fi\fi\fi}] (An-\step) at (0,-1.5) {\(\A_\t\)};
            \node[state, fill={\ifnum\sc=1 mred!50\else\ifnum\sc=2 mblue!50\else\ifnum\sc=3 mgreen!50\else white\fi\fi\fi}] (Bn-\step) at (1.5,-1.5) {\(\S_\t\)};
            \node[state, fill={\ifnum\zc=1 mred!50\else\ifnum\zc=2 mblue!50\else\ifnum\zc=3 mgreen!50\else white\fi\fi\fi}] (Zn-\step) at (-0.75,-3.75) {\(\Z_\t\)};
            \node[circle, draw=black, fill=black!40, inner sep=2pt] (Tn-0) at (0.75,-2.25) {};
            \node[circle, draw=black, fill=black!40, inner sep=2pt] (Fn-0) at (-0.75,-0.75) {};
          \else
        
            \pgfmathsetmacro{\xc}{\XCol[\n+\step]}
            \pgfmathsetmacro{\sc}{\SCol[\n+\step]}
            \pgfmathsetmacro{\ac}{\ACol[\n+\step]}
            \pgfmathsetmacro{\zc}{\ZCol[\n+\step]}
        
            \node[state, fill={\ifnum\xc=1 mred!50\else\ifnum\xc=2 mblue!50\else\ifnum\xc=3 mgreen!50\else white\fi\fi\fi}] (Xn-\step) [right of = Xn-\prevstate] {\(\M_{\t+\step}\)};
            \node[state, fill={\ifnum\ac=1 mred!50\else\ifnum\ac=2 mblue!50\else\ifnum\ac=3 mgreen!50\else white\fi\fi\fi}] (An-\step) [right of = An-\prevstate] {\(\A_{\t+{\step}}\)};
            \node[state, fill={\ifnum\zc=1 mred!50\else\ifnum\zc=2 mblue!50\else\ifnum\zc=3 mgreen!50\else white\fi\fi\fi}] (Zn-\step) [right of = Zn-\prevstate] {\(\Z_{\t+\step}\)};
            \node[circle, draw=black, fill=black!40, inner sep=2pt] (Fn-\step) [right of = Fn-\prevstate] {};
        
            \ifnum \step< \n
              \node[state, fill={\ifnum\sc=1 mred!50\else\ifnum\sc=2 mblue!50\else\ifnum\sc=3 mgreen!50\else white\fi\fi\fi}] (Bn-\step) [right of = Bn-\prevstate] {\(\S_{\t+{\step}}\)};
              \node[circle, draw=black, fill=black!40, inner sep=2pt] (Tn-\step) [right of = Tn-\prevstate] {};
            \fi
        
            \pgfmathsetmacro{\xc}{\XCol[\n-\step]}
            \pgfmathsetmacro{\sc}{\SCol[\n-\step]}
            \pgfmathsetmacro{\ac}{\ACol[\n-\step]}
            \pgfmathsetmacro{\zc}{\ZCol[\n-\step]}
            \node[state, fill={\ifnum\xc=1 mred!50\else\ifnum\xc=2 mblue!50\else\ifnum\xc=3 mgreen!50\else white\fi\fi\fi}] (Xn-\ms) [left of = Xn-\aftestate] {\(\M_{\t-\step}\)};
            \node[state, fill={\ifnum\ac=1 mred!50\else\ifnum\ac=2 mblue!50\else\ifnum\ac=3 mgreen!50\else white\fi\fi\fi}] (An-\ms) [left of = An-\aftestate] {\(\A_{\t-{\step}}\)};
            \node[state, fill={\ifnum\sc=1 mred!50\else\ifnum\sc=2 mblue!50\else\ifnum\sc=3 mgreen!50\else white\fi\fi\fi}] (Bn-\ms) [left of = Bn-\aftestate] {\(\S_{\t-{\step}}\)};
            \node[state, fill={\ifnum\zc=1 mred!50\else\ifnum\zc=2 mblue!50\else\ifnum\zc=3 mgreen!50\else white\fi\fi\fi}] (Zn-\ms) [left of = Zn-\aftestate] {\(\Z_{\t-\step}\)};
            \node[circle, draw=black, fill=black!40, inner sep=2pt] (Tn-\ms) [left of = Tn-\aftestate] {};
            \node[circle, draw=black, fill=black!40, inner sep=2pt] (Fn-\ms) [left of = Fn-\aftestate] {};
        
            \path[->, thick] (Tn-\prevstate) edge (Zn-\step);
            \path[->, thick] (Tn-\prevstate) edge (Bn-\prevstate);
            \path[->, thick] (Fn-\step) edge (Xn-\step);
            \path[->, thick] (Fn-\step) edge (An-\step);
            \path[->, thick] (Tn-\ms) edge (Zn-\aftestate);
            \path[->, thick] (Tn-\ms) edge (Bn-\ms);
            \path[->, thick] (Zn-\ms) edge (Tn-\ms);
            \path[->, thick] (An-\ms) edge (Tn-\ms);
            \path[->, thick] (Xn-\ms) edge (Fn-\aftestate);
            \path[->, thick] (Bn-\ms) edge (Fn-\aftestate);
          \fi
        
          \path[->, thick] (Fn-\ms) edge (Xn-\ms);
          \path[->, thick] (Fn-\ms) edge (An-\ms);
        }
        
        \foreach \step in {1,...,\n} {
          \pgfmathtruncatemacro{\prevstate}{\step-1}
          \path[->, thick] (Xn-\prevstate) edge (Fn-\step);
          \path[->, thick] (Zn-\prevstate) edge (Tn-\prevstate);
          \path[->, thick] (Bn-\prevstate) edge (Fn-\step);
          \path[->, thick] (An-\prevstate) edge (Tn-\prevstate);
        }
        
        \path[dotted, very thick, black!50] (Xn-\n) edge ++(0.7,-0.7);
        \path[dotted, very thick, black!50] (An-\n) edge ++(0.7,-0.7);
        \path[dotted, very thick, black!50] (Zn-\n) edge ++(1,1);
        \path[dotted, very thick, black!50] (Fn--\n) edge ++(-0.7,-0.7);
        \path[dotted, very thick, black!50] (Fn--\n) edge ++(-0.7,0.7);
        \path[dotted, very thick, black!50] (Zn--\n) edge ++(-1,1);
        
    \end{tikzpicture}

    \begin{tikzpicture}[>=stealth',shorten >=1pt,node distance=2.5cm, scale= 0.83,on grid,auto, state/.style={circle, draw, minimum size=0.8cm, inner sep=1pt},font=\tiny]
        
        \def \n {3} 
        
        \def \S {S} 
        \def \A {A}  
        \def \M {M}               
        \def \Z {Z}               
        \def \t {j} 
                
        \def\XCol{{0,2,0,3,0,0,0}} 
        \def\SCol{{1,2,2,2,0,0,0}} 
        \def\ACol{{1,1,2,2,2,0,0}} 
        \def\ZCol{{0,0,0,0,0,0,0}} 

        \node[font=\small] at (-9.8,1.5) {(\cref{eq:pred6})};
        \node at (-10.5,2.5) {};

        \foreach \step in {0,...,\n} {
          \pgfmathtruncatemacro{\ms}{-\step}
          \pgfmathtruncatemacro{\aftestate}{-\step+1}
          \pgfmathtruncatemacro{\prevstate}{\step-1}
        
          \ifnum \step= 0
            \pgfmathsetmacro{\xc}{\XCol[\n]}
            \pgfmathsetmacro{\sc}{\SCol[\n]}
            \pgfmathsetmacro{\ac}{\ACol[\n]}
            \pgfmathsetmacro{\zc}{\ZCol[\n]}
        
            \node[state, fill={\ifnum\xc=1 mred!50\else\ifnum\xc=2 mblue!50\else\ifnum\xc=3 mgreen!50\else white\fi\fi\fi}] (Xn-\step) at (0.75,0.75) {\(\M_\t\)};
            \node[state, fill={\ifnum\ac=1 mred!50\else\ifnum\ac=2 mblue!50\else\ifnum\ac=3 mgreen!50\else white\fi\fi\fi}] (An-\step) at (0,-1.5) {\(\A_\t\)};
            \node[state, fill={\ifnum\sc=1 mred!50\else\ifnum\sc=2 mblue!50\else\ifnum\sc=3 mgreen!50\else white\fi\fi\fi}] (Bn-\step) at (1.5,-1.5) {\(\S_\t\)};
            \node[state, fill={\ifnum\zc=1 mred!50\else\ifnum\zc=2 mblue!50\else\ifnum\zc=3 mgreen!50\else white\fi\fi\fi}] (Zn-\step) at (-0.75,-3.75) {\(\Z_\t\)};
            \node[circle, draw=black, fill=black!40, inner sep=2pt] (Tn-0) at (0.75,-2.25) {};
            \node[circle, draw=black, fill=black!40, inner sep=2pt] (Fn-0) at (-0.75,-0.75) {};
          \else
        
            \pgfmathsetmacro{\xc}{\XCol[\n+\step]}
            \pgfmathsetmacro{\sc}{\SCol[\n+\step]}
            \pgfmathsetmacro{\ac}{\ACol[\n+\step]}
            \pgfmathsetmacro{\zc}{\ZCol[\n+\step]}
        
            \node[state, fill={\ifnum\xc=1 mred!50\else\ifnum\xc=2 mblue!50\else\ifnum\xc=3 mgreen!50\else white\fi\fi\fi}] (Xn-\step) [right of = Xn-\prevstate] {\(\M_{\t+\step}\)};
            \node[state, fill={\ifnum\ac=1 mred!50\else\ifnum\ac=2 mblue!50\else\ifnum\ac=3 mgreen!50\else white\fi\fi\fi}] (An-\step) [right of = An-\prevstate] {\(\A_{\t+{\step}}\)};
            \node[state, fill={\ifnum\zc=1 mred!50\else\ifnum\zc=2 mblue!50\else\ifnum\zc=3 mgreen!50\else white\fi\fi\fi}] (Zn-\step) [right of = Zn-\prevstate] {\(\Z_{\t+\step}\)};
            \node[circle, draw=black, fill=black!40, inner sep=2pt] (Fn-\step) [right of = Fn-\prevstate] {};
        
            \ifnum \step< \n
              \node[state, fill={\ifnum\sc=1 mred!50\else\ifnum\sc=2 mblue!50\else\ifnum\sc=3 mgreen!50\else white\fi\fi\fi}] (Bn-\step) [right of = Bn-\prevstate] {\(\S_{\t+{\step}}\)};
              \node[circle, draw=black, fill=black!40, inner sep=2pt] (Tn-\step) [right of = Tn-\prevstate] {};
            \fi
        
            \pgfmathsetmacro{\xc}{\XCol[\n-\step]}
            \pgfmathsetmacro{\sc}{\SCol[\n-\step]}
            \pgfmathsetmacro{\ac}{\ACol[\n-\step]}
            \pgfmathsetmacro{\zc}{\ZCol[\n-\step]}
            \node[state, fill={\ifnum\xc=1 mred!50\else\ifnum\xc=2 mblue!50\else\ifnum\xc=3 mgreen!50\else white\fi\fi\fi}] (Xn-\ms) [left of = Xn-\aftestate] {\(\M_{\t-\step}\)};
            \node[state, fill={\ifnum\ac=1 mred!50\else\ifnum\ac=2 mblue!50\else\ifnum\ac=3 mgreen!50\else white\fi\fi\fi}] (An-\ms) [left of = An-\aftestate] {\(\A_{\t-{\step}}\)};
            \node[state, fill={\ifnum\sc=1 mred!50\else\ifnum\sc=2 mblue!50\else\ifnum\sc=3 mgreen!50\else white\fi\fi\fi}] (Bn-\ms) [left of = Bn-\aftestate] {\(\S_{\t-{\step}}\)};
            \node[state, fill={\ifnum\zc=1 mred!50\else\ifnum\zc=2 mblue!50\else\ifnum\zc=3 mgreen!50\else white\fi\fi\fi}] (Zn-\ms) [left of = Zn-\aftestate] {\(\Z_{\t-\step}\)};
            \node[circle, draw=black, fill=black!40, inner sep=2pt] (Tn-\ms) [left of = Tn-\aftestate] {};
            \node[circle, draw=black, fill=black!40, inner sep=2pt] (Fn-\ms) [left of = Fn-\aftestate] {};
        
            \path[->, thick] (Tn-\prevstate) edge (Zn-\step);
            \path[->, thick] (Tn-\prevstate) edge (Bn-\prevstate);
            \path[->, thick] (Fn-\step) edge (Xn-\step);
            \path[->, thick] (Fn-\step) edge (An-\step);
            \path[->, thick] (Tn-\ms) edge (Zn-\aftestate);
            \path[->, thick] (Tn-\ms) edge (Bn-\ms);
            \path[->, thick] (Zn-\ms) edge (Tn-\ms);
            \path[->, thick] (An-\ms) edge (Tn-\ms);
            \path[->, thick] (Xn-\ms) edge (Fn-\aftestate);
            \path[->, thick] (Bn-\ms) edge (Fn-\aftestate);
          \fi
        
          \path[->, thick] (Fn-\ms) edge (Xn-\ms);
          \path[->, thick] (Fn-\ms) edge (An-\ms);
        }
        
        \foreach \step in {1,...,\n} {
          \pgfmathtruncatemacro{\prevstate}{\step-1}
          \path[->, thick] (Xn-\prevstate) edge (Fn-\step);
          \path[->, thick] (Zn-\prevstate) edge (Tn-\prevstate);
          \path[->, thick] (Bn-\prevstate) edge (Fn-\step);
          \path[->, thick] (An-\prevstate) edge (Tn-\prevstate);
        }
        
        \path[dotted, very thick, black!50] (Xn-\n) edge ++(0.7,-0.7);
        \path[dotted, very thick, black!50] (An-\n) edge ++(0.7,-0.7);
        \path[dotted, very thick, black!50] (Zn-\n) edge ++(1,1);
        \path[dotted, very thick, black!50] (Fn--\n) edge ++(-0.7,-0.7);
        \path[dotted, very thick, black!50] (Fn--\n) edge ++(-0.7,0.7);
        \path[dotted, very thick, black!50] (Zn--\n) edge ++(-1,1);
        
        \end{tikzpicture}
        
	\caption{Bayesian networks for a percept-action loop (\cref{lem:Bayes_net_compatibility}) with colorized d-separations (blue d-separates red and green) used in the proof of \Cref{lem:predictiveness_equivalence}.}\label{fig:BayesianNetwork_Lemma6}
\end{figure}
The previous lemma can be used to show that \cref{def:predictive} reduces to the condition given in \cref{eq:predictive_model_Boyd} in the case where the global process is stationary and the environment is modeled by a product environment channel.
	A stochastic process is said to be  \emph{stationary} if its distribution $p_{\bm{X}}$ admits \cite[p.87]{gray2009probability}
    \begin{align}
		p_{X_{n:m}}=p_{X_{n+t:m+t}}
	\end{align}
	 for all $n,t\in\mathbb{N}_0$ and $m>n$ where $p_{X_{n:m}}$ is obtained from $p_{\bm{X}}$ through marginalization.

\begin{theorem}
    \label{th:predictive_consistent_with_Boyd}
    Let $\normalfont\texttt{agtM}\lrstack \texttt{env}$ be such that the joint process $\bm{M}\bm{A}\bm{S}$ of actions, percepts, and agent memory is stationary. Then, {\normalfont \texttt{agtM}} is a.m.\,predictive, i.e., 
    \begin{align}
       \cesaro{I[A_{0:t+1}S_{0:t};S_t|M_t]}_t=0 \label{eq:th_predictive_consistent_def_predictive}
    \end{align}
    if and only if
    \begin{align}
        I[A_{0:t+1} S_{0:t}; S_{t:\infty}|M_t] = 0 \quad \forall t \in \mathbb{N}_0. \label{eq:th_predictive_consistent_show_1}
    \end{align}
    If in addition {\normalfont \texttt{env}} is a product channel (\cref{def:noiseless_memoryless_invariant_and_source_channels}), {\normalfont\texttt{agtM} }is a.m.\,predictive if and only if
    \begin{align}
        I[S_{0:t}; S_{t:\infty}|M_t] = 0 \quad \forall t \in \mathbb{N}_0.  \label{eq:th_predictive_consistent_show_2}
    \end{align}
\end{theorem}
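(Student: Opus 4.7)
\emph{Proof plan.} I will establish the two equivalences in turn. The plan exploits stationarity to promote a Ces\`aro statement into a pointwise one, uses the chain rule of mutual information throughout, and then invokes \Cref{lem:predictiveness_equivalence} to bridge single-step and full-future predictiveness.

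For the first equivalence, the direction ($\Leftarrow$) is immediate: since
\begin{equation*}
I[A_{0:t+1}S_{0:t}; S_{t:\infty}|M_t] = I[A_{0:t+1}S_{0:t}; S_t|M_t] + I[A_{0:t+1}S_{0:t}; S_{t+1:\infty}|M_tS_t]
\end{equation*}
and both summands are nonnegative, $I[A_{0:t+1}S_{0:t};S_t|M_t]$ vanishes for every $t$, hence also in Ces\`aro mean. For ($\Rightarrow$), set $f(t):=I[A_{0:t+1}S_{0:t};S_t|M_t]$. I will first extend $\bm{M}\bm{A}\bm{S}$ to a two-sided stationary process indexed by $\mathbb{Z}$ via the Kolmogorov extension theorem, so that by stationarity $f(t) = I[A_{-t:1}S_{-t:0}; S_0|M_0]$. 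The chain rule then yields $f(t+1)-f(t) = I[A_{-t-1}S_{-t-1}; S_0|M_0 A_{-t:1}S_{-t:0}] \geq 0$, so $f$ is nondecreasing and nonnegative. For any such sequence, the partial Ces\`aro average satisfies $\frac{1}{n}\sum_{k=0}^{n-1}f(k) \geq \frac{n-m}{n}f(m) \to f(m)$ as $n\to\infty$ for each fixed $m$; vanishing Ces\`aro mean therefore forces $f(m)=0$ for every $m$. Single-step predictiveness at every $t$ then yields \cref{eq:th_predictive_consistent_show_1} via \Cref{lem:predictiveness_equivalence}.

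For the second equivalence under the product-channel hypothesis, ($\Rightarrow$) is again immediate from the chain rule, since discarding $A_{0:t+1}$ from the first argument of the mutual information can only shrink it. For ($\Leftarrow$), the chain rule gives
\begin{equation*}
I[A_{0:t+1}S_{0:t}; S_{t:\infty}|M_t] = I[S_{0:t}; S_{t:\infty}|M_t] + I[A_{0:t+1}; S_{t:\infty}|M_t S_{0:t}].
\end{equation*}
The first summand vanishes by hypothesis; the second vanishes by a d-separation argument in the Bayesian network of \Cref{fig:Bayes_net_for_agent_and_source}, whose compatibility with $p_{\bm{M}\bm{A}\bm{S}}$ is provided by \Cref{lem:Bayes_net_compatibility_source}. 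In that network each $A_j$ is a sink on the agent side, so every path from $A_j$ to a future percept $S_k$ (with $k\geq t$) must leave $A_j$ via its parent $V_j$; I will verify that each such path is then intercepted either by $M_t$ along a $V\to M$ chain, or by some $S_{j'}$ with $j'<t$ along an $S_{j'}\to V_{j'+1}$ chain, and is thus blocked by the separating set $\{M_t\}\cup\{S_0,\dots,S_{t-1}\}$.

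The chief technical step is the monotonicity argument: the Kolmogorov extension to a two-sided stationary process is essential, for only then does the chain rule expose $f(t)$ as a sequence increasing in $t$, which together with the elementary Ces\`aro/monotonicity fact delivers pointwise vanishing from the Ces\`aro hypothesis. The product-channel d-separation check is routine but requires attention to the backward edges $S_{j-1}\to V_j$ that couple the environment into the agent in the network.
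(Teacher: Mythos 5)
Your proposal is correct and follows essentially the same route as the paper's proof: exploit stationarity and the chain rule to show that $t\mapsto I[A_{0:t+1}S_{0:t};S_t|M_t]$ is nondecreasing, deduce pointwise vanishing from the vanishing Ces\`aro mean, pass to the full-future statement via \Cref{lem:predictiveness_equivalence}, and handle the product-channel case by the chain-rule split $I[A_{0:t+1}S_{0:t};S_{t:\infty}|M_t]=I[S_{0:t};S_{t:\infty}|M_t]+I[A_{0:t+1};S_{t:\infty}|M_tS_{0:t}]$ with the second term killed by d-separation in the network of \Cref{lem:Bayes_net_compatibility_source}. The only cosmetic difference is that you obtain monotonicity by extending to a two-sided stationary process and appending variables to the past, whereas the paper shifts the block forward in time using $p_{M_tA_{0:t+1}S_{0:t}}=p_{M_{t+j}A_{j:t+j+1}S_{j:t+j}}$ and applies the chain rule at time $t+j$ — so the Kolmogorov extension you call essential is in fact avoidable.
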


\begin{proof}
For the first part of the theorem, we rewrite \cref{eq:th_predictive_consistent_def_predictive} as
\begin{align}
    \lim\limits_{N\rightarrow\infty}c_N=0
\end{align}
where we define 
\begin{align}
c_N&\coloneqq \sum_{t=0}^{N-1}\frac{b_t}{N}\\
b_t &\coloneqq I[A_{0:t+1} S_{0:t};S_t|M_t]. 
\end{align}
First, we will show that $b_t$ is  nonnegative, bounded and monotone increasing as $t\rightarrow\infty$. Clearly, nonnegativity is given since conditional mutual information is nonnegative, and the expression for $b_t$ is upper bounded by $\log {|\mathcal{Y}|}$. In order to show that $(b_t)$ is monotone increasing, we 
   use the chain rule for mutual information in the form of \cref{eq:chain_rule_mutual_information_one_step} with $W= S_{t+j}$, $X= A_{j+1:t+j+1} S_{j:t+j} $, $Y=A_{0:j} S_{0:j}$ and $Z=M_{t+j}$:
    \begin{align}  
        I[A_{0:t+j+1} S_{0:t+j};S_{t+j}|M_{t+j}] = I[A_{j:t+j+1} S_{j:t+j};S_{t+j}|M_{t+j}] + I[A_{0:j} S_{0:j};S_t|M_tA_{j:t+j+1}S_{j:t+j}]. \label{eq:th_predictive_consistent1}
    \end{align}
    Using stationarity (\cref{eq:th_predictive_consistent_show_1}) of the process $\bm{MAS}$, we find \( p_{M_{0:t+1}A_{0:t+1}S_{0:t+1}} = p_{M_{j:t+j+1} A_{j:t+j+1}S_{j:t+j+1}}\) for any \(t,j\in \mathbb{N}_0\), which can be marginalized to the statement \(p_{M_{t} A_{0:t+1}S_{0:t}}= p_{M_{t+j}A_{j:t+j+1}S_{j:t+j}}\). Thus,
      $I[A_{0:t+1}S_{0:t};S_t|M_t] = I[A_{j:t+j+1}S_{j:t+j};S_{t+j}|M_{t+j}]$.
    Plugging this into \cref{eq:th_predictive_consistent1} yields
     \begin{align}  
        I[A_{0:t+j+1} S_{0:t+j};S_{t+j}|M_{t+j}] = I[A_{0:t+1}S_{0:t};S_t|M_t] + I[A_{0:j} S_{0:j};S_t|M_tA_{j:t+j+1}S_{j:t+j}]. \label{eq:th_predictive_consistent2}
    \end{align}
    From this, using the nonnegativity of mutual information, we obtain
    \begin{align}
        I[A_{0:t+j+1}S_{0:t+j};S_{t+j}|M_{t+j}] \geq  I[A_{0:t+1}S_{0:t};S_t|M_t] \qquad \forall t, j \in \mathbb{N}_0, \label{eq:th_predictive_consistent3}
    \end{align}
    or equivalently $b_{t+j}\geq b_t$, which proves that $(b_t)$ is monotone increasing.
    
    Further, since $c_N$ is defined as the arithmetic mean of $b_0, b_1, \dotsc, b_{N-1}$, we have that $c_N$ is bounded and monotone increasing as $N\rightarrow \infty$. 
    
    We are now in the position to prove the first part of the theorem. By the monotone convergence theorem and the properties of $c_N$, the limit $\lim\limits_{N\rightarrow \infty}c_N$ exists and equals the supremum. Therefore, \cref{eq:th_predictive_consistent_def_predictive} holds true if and only if $c_N$ is zero for all $N\in\mathbb{N}_0$ which, in turn, is the case if and only if  $b_t$ is zero for all $t\in\mathbb{N}_0$. Further, by \cref{lem:predictiveness_equivalence}, this is equivalent to \cref{eq:th_predictive_consistent_show_2} which concludes the proof of the first part of the theorem.
    
    For the second part of the theorem, we need to show that, given the assumption that the environment channel is also a product channel, \cref{eq:th_predictive_consistent_show_1} is equivalent to \cref{eq:th_predictive_consistent_show_2}. Using the single-step chain rule of mutual information (\cref{eq:chain_rule_mutual_information_one_step}), we can split up \cref{eq:th_predictive_consistent_show_1} as
    \begin{align}
        I[A_{0:t+1}S_{0:t};S_{t:\infty}|M_t] = I[S_{0:t};S_{t:\infty}|M_t] + I[A_{0:t+1};S_t|M_t S_{0:t}] \qquad \forall t\in\mathbb{N}_0 \label{eq:th_predictive_consistent4}
    \end{align}
    The second term on the right-hand side vanishes for product environment channels due to d-separation (see \Cref{fig:cor_Boyd}) and the first term corresponds to \cref{eq:th_predictive_consistent_show_2} which concludes the proof.
\end{proof}
\begin{figure}
    \centering
    \begin{tikzpicture}[>=stealth',shorten >=1pt,node distance=2.5cm, scale= 0.83,on grid,auto, state/.style={circle, draw, minimum size=0.8cm, inner sep=1pt},font=\tiny]
        
        \def \n {3} 
        
        \def \S {S} 
        \def \A {A}  
        \def \M {M}               
        \def \Z {Z}               
        \def \t {t} 
                
        \def\XCol{{0,0,0,2,0,0,0}} 
        \def\SCol{{2,2,2,3,3,3,3}} 
        \def\ACol{{1,1,1,1,0,0,0}} 
        \def\ZCol{{0,0,0,0,0,0,0}} 
        
        \foreach \step in {0,...,\n} {
          \pgfmathtruncatemacro{\ms}{-\step}
          \pgfmathtruncatemacro{\aftestate}{-\step+1}
          \pgfmathtruncatemacro{\prevstate}{\step-1}
        
          \ifnum \step= 0
            \pgfmathsetmacro{\xc}{\XCol[\n]}
            \pgfmathsetmacro{\sc}{\SCol[\n]}
            \pgfmathsetmacro{\ac}{\ACol[\n]}
            \pgfmathsetmacro{\zc}{\ZCol[\n]}
        
            \node[state, fill={\ifnum\xc=1 mred!50\else\ifnum\xc=2 mblue!50\else\ifnum\xc=3 mgreen!50\else white\fi\fi\fi}] (Xn-\step) at (0.75,0.75) {\(\M_\t\)};
            \node[state, fill={\ifnum\ac=1 mred!50\else\ifnum\ac=2 mblue!50\else\ifnum\ac=3 mgreen!50\else white\fi\fi\fi}] (An-\step) at (0,-1.5) {\(\A_\t\)};
            \node[state, fill={\ifnum\sc=1 mred!50\else\ifnum\sc=2 mblue!50\else\ifnum\sc=3 mgreen!50\else white\fi\fi\fi}] (Bn-\step) at (1.5,-1.5) {\(\S_\t\)};
            \node[state, fill={\ifnum\zc=1 mred!50\else\ifnum\zc=2 mblue!50\else\ifnum\zc=3 mgreen!50\else white\fi\fi\fi}] (Zn-\step) at (-0.75,-3.75) {\(\Z_\t\)};
            \node[circle, draw=black, fill=black!40, inner sep=2pt] (Tn-0) at (0.75,-2.25) {};
            \node[circle, draw=black, fill=black!40, inner sep=2pt] (Fn-0) at (-0.75,-0.75) {};
          \else
        
            \pgfmathsetmacro{\xc}{\XCol[\n+\step]}
            \pgfmathsetmacro{\sc}{\SCol[\n+\step]}
            \pgfmathsetmacro{\ac}{\ACol[\n+\step]}
            \pgfmathsetmacro{\zc}{\ZCol[\n+\step]}
        
            \node[state, fill={\ifnum\xc=1 mred!50\else\ifnum\xc=2 mblue!50\else\ifnum\xc=3 mgreen!50\else white\fi\fi\fi}] (Xn-\step) [right of = Xn-\prevstate] {\(\M_{\t+\step}\)};
            \node[state, fill={\ifnum\ac=1 mred!50\else\ifnum\ac=2 mblue!50\else\ifnum\ac=3 mgreen!50\else white\fi\fi\fi}] (An-\step) [right of = An-\prevstate] {\(\A_{\t+{\step}}\)};
            \node[state, fill={\ifnum\zc=1 mred!50\else\ifnum\zc=2 mblue!50\else\ifnum\zc=3 mgreen!50\else white\fi\fi\fi}] (Zn-\step) [right of = Zn-\prevstate] {\(\Z_{\t+\step}\)};
            \node[circle, draw=black, fill=black!40, inner sep=2pt] (Fn-\step) [right of = Fn-\prevstate] {};
        
            \ifnum \step< \n
              \node[state, fill={\ifnum\sc=1 mred!50\else\ifnum\sc=2 mblue!50\else\ifnum\sc=3 mgreen!50\else white\fi\fi\fi}] (Bn-\step) [right of = Bn-\prevstate] {\(\S_{\t+{\step}}\)};
              \node[circle, draw=black, fill=black!40, inner sep=2pt] (Tn-\step) [right of = Tn-\prevstate] {};
            \fi
        
            \pgfmathsetmacro{\xc}{\XCol[\n-\step]}
            \pgfmathsetmacro{\sc}{\SCol[\n-\step]}
            \pgfmathsetmacro{\ac}{\ACol[\n-\step]}
            \pgfmathsetmacro{\zc}{\ZCol[\n-\step]}
            \node[state, fill={\ifnum\xc=1 mred!50\else\ifnum\xc=2 mblue!50\else\ifnum\xc=3 mgreen!50\else white\fi\fi\fi}] (Xn-\ms) [left of = Xn-\aftestate] {\(\M_{\t-\step}\)};
            \node[state, fill={\ifnum\ac=1 mred!50\else\ifnum\ac=2 mblue!50\else\ifnum\ac=3 mgreen!50\else white\fi\fi\fi}] (An-\ms) [left of = An-\aftestate] {\(\A_{\t-{\step}}\)};
            \node[state, fill={\ifnum\sc=1 mred!50\else\ifnum\sc=2 mblue!50\else\ifnum\sc=3 mgreen!50\else white\fi\fi\fi}] (Bn-\ms) [left of = Bn-\aftestate] {\(\S_{\t-{\step}}\)};
            \node[state, fill={\ifnum\zc=1 mred!50\else\ifnum\zc=2 mblue!50\else\ifnum\zc=3 mgreen!50\else white\fi\fi\fi}] (Zn-\ms) [left of = Zn-\aftestate] {\(\Z_{\t-\step}\)};
            \node[circle, draw=black, fill=black!40, inner sep=2pt] (Tn-\ms) [left of = Tn-\aftestate] {};
            \node[circle, draw=black, fill=black!40, inner sep=2pt] (Fn-\ms) [left of = Fn-\aftestate] {};
        
            \path[->, thick] (Tn-\prevstate) edge (Zn-\step);
            \path[->, thick] (Tn-\prevstate) edge (Bn-\prevstate);
            \path[->, thick] (Fn-\step) edge (Xn-\step);
            \path[->, thick] (Fn-\step) edge (An-\step);
            \path[->, thick] (Tn-\ms) edge (Zn-\aftestate);
            \path[->, thick] (Tn-\ms) edge (Bn-\ms);
            \path[->, thick] (Zn-\ms) edge (Tn-\ms);
            \path[->, thick] (Xn-\ms) edge (Fn-\aftestate);
            \path[->, thick] (Bn-\ms) edge (Fn-\aftestate);
          \fi
        
          \path[->, thick] (Fn-\ms) edge (Xn-\ms);
          \path[->, thick] (Fn-\ms) edge (An-\ms);
        }
        
        \foreach \step in {1,...,\n} {
          \pgfmathtruncatemacro{\prevstate}{\step-1}
          \path[->, thick] (Xn-\prevstate) edge (Fn-\step);
          \path[->, thick] (Zn-\prevstate) edge (Tn-\prevstate);
          \path[->, thick] (Bn-\prevstate) edge (Fn-\step);
        }
        
        \path[dotted, very thick, black!50] (Xn-\n) edge ++(0.7,-0.7);
        \path[dotted, very thick, black!50] (Zn-\n) edge ++(1,1);
        \path[dotted, very thick, black!50] (Fn--\n) edge ++(-0.7,-0.7);
        \path[dotted, very thick, black!50] (Fn--\n) edge ++(-0.7,0.7);
        \path[dotted, very thick, black!50] (Zn--\n) edge ++(-1,1);
        
    \end{tikzpicture}
    \caption{Bayesian network for an product environment channel (\cref{lem:Bayes_net_compatibility_source}) with colored d-separarion (blue d-separates red and green) used in the proof of \Cref{th:predictive_consistent_with_Boyd}.}
    \label{fig:cor_Boyd}
\end{figure}

The next theorem provides a condition for the existence of predictive agent models:
\begin{theorem} \label{th:predictive_existence}
Let $\normalfont\texttt{agt}\lrstack \texttt{env}$ be any percept-action loop. If the environment channel is unifilar, then there exists an a.m.~predictive agent model $\normalfont\texttt{agtM}$ for $\normalfont\texttt{agt}$.
\end{theorem}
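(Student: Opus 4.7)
The plan is to construct an a.m.~predictive agent model by augmenting any given model of $\texttt{agt}$ with auxiliary memory registers that track the current action and the environment's hidden state, which is feasible precisely because $\texttt{env}$ is unifilar. First I would fix a unifilar model $\texttt{envM} = (\Phi^{\mathrm{env}}, p^{\mathrm{env}}_{Z_0})$ of $\texttt{env}$, so that $p^{\mathrm{env}}_{Z_0} = \delta_{z_0^\star}$ for some $z_0^\star\in\mathcal{Z}$ and there exists a unifilarity map $f_{\mathrm{uni}}:\mathcal{A}\times \mathcal{S}\times \mathcal{Z}\to \mathcal{Z}$ with $\phi^{\mathrm{env}}(s,z'|a,z)\neq 0$ only if $z' = f_{\mathrm{uni}}(a,s,z)$. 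Starting from any model $\texttt{agtM} = (\Theta^{\mathrm{agt}}, p^{\mathrm{agt}}_{A_0M_0})$ of $\texttt{agt}$ (which exists by \Cref{def:agent_hidden_Markov_channel}), I would define a new agent model $\tilde{\texttt{agtM}}$ on the enlarged memory alphabet $\tilde{\mathcal{M}} = \mathcal{M}\times \mathcal{A}\times \mathcal{Z}$, with the interpretation $\tilde M_t = (M_t, A_t, Z_t)$.

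The transition $\tilde\Theta^{\mathrm{agt}}$ is designed so that $(M_{t+1},A_{t+1})$ is drawn from $\theta^{\mathrm{agt}}(\cdot,\cdot\,|\,s_t,m_t)$ as before, while the environment coordinate is updated deterministically by $Z_{t+1} = f_{\mathrm{uni}}(A_t,S_t,Z_t)$ using the $A_t,Z_t$ components already held in $\tilde M_t$ together with the incoming percept $S_t$. The initial distribution is $\tilde p^{\mathrm{agt}}_{A_0\tilde M_0}(a_0,(m_0,a_0',z)) = p^{\mathrm{agt}}_{A_0M_0}(a_0,m_0)\,\delta_{a_0,a_0'}\,\delta_{z,z_0^\star}$. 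Marginalizing out the auxiliary $(A,Z)$ coordinates of $\tilde M_t$ then reproduces the distribution generated by $\texttt{agtM}$, so $\tilde{\texttt{agtM}}$ implements the same input--output channel $\texttt{agt}$; in particular $\tilde{\texttt{agtM}}\in \mathbb{A}^{\lrstacksmall \texttt{env}}$.

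The main step is to verify that $\tilde{\texttt{agtM}}$ is predictive at \emph{every} round, $I[A_{0:t+1}S_{0:t};S_t|\tilde M_t]=0$ for all $t$, which immediately yields a.m.\,predictiveness via the Ces\`aro limit in \cref{eq:predictive}. Since the environment emits $S_t$ according to $\sum_{z_{t+1}}\phi^{\mathrm{env}}(s_t,z_{t+1}|a_t,z_t)$, the conditional distribution of $S_t$ in the global Markov chain of \Cref{lem:global_Markov_chain} depends on the past only through the pair $(A_t,Z_t)$. Because $(A_t,Z_t)$ is by construction a deterministic function of $\tilde M_t$, both $p(S_t|\tilde M_t)$ and $p(S_t|\tilde M_t,A_{0:t+1},S_{0:t})$ collapse to the same conditional $\sum_{z_{t+1}}\phi^{\mathrm{env}}(s_t,z_{t+1}|A_t,Z_t)$, so the conditional mutual information vanishes.

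The subtlety requiring the most care is that the current action $A_t$ must be explicitly contained in the augmented memory: the original $M_t$ need not determine $A_t$, and since $S_t$ depends directly on $A_t$, omitting $A_t$ from $\tilde M_t$ would leave a genuine residual dependence and break predictivity. Once $\tilde M_t = (M_t,A_t,Z_t)$ is used, however, the argument reduces cleanly to the d-separation structure of the Bayesian network in \Cref{fig:Bayes_net_for_pal}, combined with unifilarity, which guarantees that $Z_t$ is a well-defined deterministic function of the observed past $(A_{0:t},S_{0:t},z_0^\star)$ and can therefore be maintained inside the finite memory $\tilde{\mathcal{M}}$.
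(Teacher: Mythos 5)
Your proposal is correct and follows essentially the same route as the paper's proof: augment the agent's memory with a copy of the current action and a register that deterministically tracks the environment's hidden state via the unifilarity map, verify that the augmented model still implements $\texttt{agt}$, and conclude that $I[A_{0:t+1}S_{0:t};S_t|\tilde M_t]=0$ for all $t$ because $S_t$ depends on the past only through $(A_t,Z_t)$, which is a function of $\tilde M_t$. The paper phrases this last step as a d-separation in the Bayesian network of the percept-action loop, whereas you argue it directly from the factorized form of the global distribution, but the construction and the key observation (including the need to store $A_t$ itself in memory) are the same.
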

 The proof is based on the idea that the agent's memory can be extended to store and update the hidden state of the unifilar environment model. Knowledge of the hidden states of an environment model makes the agent predictive.\\
 
\emph{Proof.}
The proof proceeds by construction. 

Let
\begin{itemize}
	\item $\texttt{agtM}'=(\Theta^{\mathrm{agt}},p_{M'_0A_0})$ be a Markov model for $\texttt{agt}$ with memory states $\mathcal{M}'$;
	\item $\texttt{envM}=(\Phi^{\mathrm{env}},p_{Z_0})$ be a unifilar Markov model for $\texttt{env}$ on some hidden-state alphabet $\mathcal{Z}$.
\end{itemize}
We will now construct a transition matrix $\Theta_{\mathcal{M} \mathcal{Y}}$ on  $\mathcal{M} \times \mathcal{Y}$, where $\mathcal{Y}$ is the input-output alphabet of $\texttt{agt}\lrstack\texttt{env}$ and
\begin{align}
	\mathcal{M}=\mathcal{M}'\times \mathcal{Y}'\times \mathcal{Z}'
\end{align}
where $\mathcal{Y}'$ and  $\mathcal{Z}'$ are copies of $\mathcal{Y}$ and $\mathcal{Z}$, respectively,
and $\mathcal{M}'$ is the hidden-state alphabet of $\texttt{agtM}'$.

Let $\Theta_{\mathcal{M}\mathcal{Y} }$ decompose as shown in the following circuit diagram, \Cref{fig:circuit}.
\begin{figure}[h!]
    \centering
    \begin{tikzpicture}
    \draw[black] (0,3) node[left] {$\mathcal{M}'$} -- (3,3);
    \draw[black] (0,2) node[left] {$\mathcal{Y}$} -- (3,2);
    \draw[black] (0,1) node[left] {$\mathcal{Y}'$} -- (3,1);
    \draw[black] (0,0) node[left] {$\mathcal{Z}'$} -- (3,0);
    \draw[black, fill=white] (0.9,3.3) rectangle (2.1,-0.3) node[pos=.5] {$\Theta_\mathcal{MY}$}; 

    \draw[black] (6,3) node[left] {$\mathcal{M}'$} -- (14,3);
    \draw[black] (6,2) node[left] {$\mathcal{Y} $} -- (14,2);
    \draw[black] (6,1) node[left] {$\mathcal{Y}'$} -- (14,1);
    \draw[black] (6,0) node[left] {$\mathcal{Z}'$} -- (14,0);    

    \node[font = \Large,align=center] at (4.2,1.5) {$=$};
    \draw[black] (7.3,2) -- (7.3,0);
    \draw[fill=black] (7.3,1) circle  (0.1);
    \draw[fill=black] (7.3,2) circle  (0.1);
    \draw[black, fill=white] (6.5,0.4) rectangle (8.1,-0.4) node[pos=.5] {$u(s,a,z)$}; 
    \draw[black,dashed] (6.3,2.5) rectangle (8.3,-0.5);
    \node[below] at (7.3,-0.5){$U_{\mathcal{Y}'\mathcal{Z}'\mathcal{Y}}$};
    \draw[black, fill=white] (10.8,3.3) rectangle (9,1.7) node[pos=.5] {$\Theta_{\mathcal{M}'\mathcal{Y}}^\mathtt{agt}$};
    \draw[black] (12.5,2) -- (12.5,1);
    \draw[fill=black] (12.5,2) circle (0.1);
    \draw[black, fill=white] (11.7,1.4) rectangle (13.2,0.6) node[pos=.5] {$\mathtt{copy}$}; 
    \draw[black,dashed] (11.5,2.5) rectangle (13.5,-0.5);
    \node[below] at (12.5,-0.5){$\Gamma_{\mathcal{Y}\mathcal{Y}'}$};
\end{tikzpicture}
\caption{Circuit diagram for the decomposition of $\Theta_{\mathcal{M}\mathcal{Y}}$. Time flows from left to right, wires correspond to alphabets, boxes to operations on the respective alphabets, bullets on wires indicate that the alphabet value controls another operation (above, the unifilarity map $u$ and a copy operation, respectively) but does not change itself.} \label{fig:circuit}
 \end{figure}
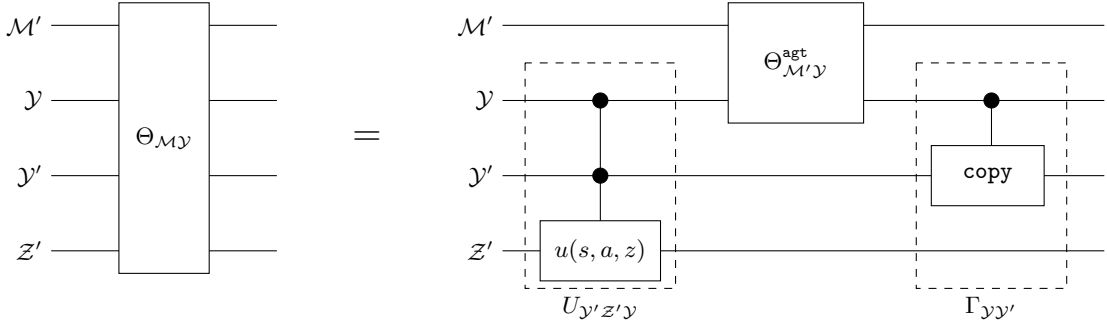
that is,
\begin{align}
	\Theta_{\mathcal{M}\mathcal{Y} } = \left(\bm{\Gamma}_{\mathcal{Y}\mathcal{Y}'}\otimes \mathbb{1}_{\mathcal{M}\mathcal{Z}'} \right)\left(\Theta_{\mathcal{M}'\mathcal{Y}}^{\mathrm{agt}}\otimes \mathbb{1}_{\mathcal{Y}'\mathcal{Z}'}\right)\left(\bm{U}_{\mathcal{Y}'\mathcal{Z}'\mathcal{Y}}\otimes\mathbb{1}_{\mathcal{M}'} \right). \label{eq:construction_of_predictive_agent}
\end{align}
For clarity, indices indicate the memories on which the respective transition matrices act, in particular (from right to left)
\begin{itemize}
	\item $\mathbb{1}$ is the identity matrix on the memories indicated as indices,
	\item $\Theta^{\mathrm{agt}}_{\mathcal{M}'\mathcal{Y}}=\Theta^{\mathrm{agt}}$ is the transition matrix of \texttt{agtM}',
	\item $\bm{U}_{\mathcal{Y}'\mathcal{Z}'\mathcal{Y}}$ is a deterministic transition matrix which acts as the identity on $\mathcal{Y}'\mathcal{Y}$ and which sets the state of $\mathcal{Z}'$ to $u(y',y,z')$ where $y'$, $y$, and $z'$ are the current symbols on memories $\mathcal{Y}'$, $\mathcal{Z}'$, and $\mathcal{Y}$, respectively, and $u$ is a unifilarity map (see the discussion below \cref{def:unifilar}) of the unifilar environment model \texttt{envM},
	\item $\bm{\Gamma}_{\mathcal{Y}\mathcal{Y}'}$ is a deterministic transition matrix which copies the symbol of memory $\mathcal{Y}$ to $\mathcal{Y}'$ while leaving $\mathcal{Y}$ unchanged.	
\end{itemize}
By construction, each of the three factors on the right-hand side of \cref{eq:construction_of_predictive_agent} is a valid transition matrix mapping $\mathcal{M}\times \mathcal{Y} =\mathcal{M}'\times \mathcal{Y}'\times \mathcal{Z}'\times \mathcal{Y}$ to itself and thus $\Theta_{\mathcal{M}\mathcal{Y}}$ is also.

Define $\delta_{i,j}$ to be one if $i=j$ and zero otherwise. Define the distribution $p_{M_0}=p_{M'_0}p_{Y'_0}p_{Z'_0}$ where $p_{M'_0}$ is from \texttt{agtM}', $p_{Y'_0}(y)=\delta_{y,y_0}$ where $y_0$ is the initial action, and $p_{Z'_0}(z)=\delta_{z,z_0}$ where $z_0$ is the initial hidden state of \texttt{envM} (recall that by unifilarity there exists a definite initial state).
Further, define $\texttt{agtM} = \left(\Theta_{\mathcal{M}\mathcal{Y}}, p_{M_0}, p_{A_0}  \right)$.

By \cref{eq:construction_of_predictive_agent}, the transition matrix of $\texttt{agtM}$ first applies the transition matrix of \texttt{agtM}', then updates the $\mathcal{Z}'$ memory using the unifilarity map, and updates the  $\mathcal{Y}'$ memory by copying $\mathcal{Y}$ to $\mathcal{Y}'$.
Thus, the only term which can lead to a change of the $\mathcal{Y}$ and $\mathcal{M}'$ memories is $\Theta_{\mathcal{M}'\mathcal{Y}}^{\mathrm{agt}}$. Further, $p_{M_0}$ and $p_{M'_0}$ coincide on $\mathcal{M}'$. Therefore, $\texttt{agtM}$ and $\texttt{agtM}'$ both model \texttt{agt}.

What is left to show is that $\texttt{agtM}$ is  a.m.\,predictive. For this, note that $M_t=(M'_t,Y'_t,Z'_t)$ is initialized such that $Z'_0=Z_0$ and $Y'_0=A_0$. Further, by construction, $\texttt{agtM}$ updates the $\mathcal{Z}'$ and $\mathcal{Y}'$ memories such that $Z'_t=Z_t$ and $Y'_t=Y_t$ for all times. We then have
\begin{align}
		I\left[A_{{0:n+1}}S_{{0:n}};S_n|M_n\right]&=I\left[A_{{0:n+1}}S_{{0:n}};S_n|M'_nY'_nZ'_n\right]\\
		&=I\left[A_{{0:n+1}}S_{{0:n}};S_n|M'_nA_nZ_n\right] \label{eq:proof_unifilar_1}\\
		&=0 \label{eq:proof_unifilar_2},
\end{align}
where in \cref{eq:proof_unifilar_1} we used that $Z'_n=Z_n$ and $Y'_n=Y_n$ and \cref{eq:proof_unifilar_2} follows from  d-separarion in the Bayesian network of $\texttt{agtM}\lrstack \texttt{envM}$, see \Cref{fig:predictive_existence_Markov}. \hfill $\square$\\
\begin{corollary} \label{cor:predictive_existence_source}
    Let $\normalfont\texttt{agt}\lrstack \texttt{env}$ be any percept-action loop with {\normalfont\texttt{env}} a unifilar source environment. Then there exists an a.m.~predictive agent model $\normalfont\texttt{agtM}$ for $\normalfont\texttt{agt}$.
\end{corollary}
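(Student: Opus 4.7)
The plan is to observe that the corollary is essentially a specialization of the preceding \Cref{th:predictive_existence}, so the work is already done. A unifilar source environment is, by definition, a product channel that admits a unifilar hidden Markov model --- that is, a hidden Markov model satisfying the two conditions in the definition of unifilarity (delta initial distribution and $H(Z_{t+1}\mid A_t S_t Z_t) = 0$). In particular, it is a unifilar environment channel in the sense required by \Cref{th:predictive_existence}.

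Concretely, I would invoke \Cref{th:predictive_existence} directly, applied to the given percept-action loop $\texttt{agt}\lrstack\texttt{env}$. The theorem guarantees the existence of an a.m.\,predictive agent model $\texttt{agtM}$ for $\texttt{agt}$ whenever $\texttt{env}$ is unifilar, regardless of whether $\texttt{env}$ happens to also be a product channel. Since being a source (product) channel is an additional structural property that neither conflicts with nor is required by the construction in the proof of \Cref{th:predictive_existence}, the same construction applies verbatim: one augments any Markov model of $\texttt{agt}$ with extra memory registers that track a copy of the previous action and the (deterministically updated) hidden state $Z_t$ of the unifilar environment model, using the unifilarity map.

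The only step worth noting is that one must confirm that the assumption ``unifilar source'' used in the corollary indeed falls under the hypothesis ``unifilar environment channel'' of the theorem. This is immediate from \Cref{def:unifilar}, which defines unifilarity as a property of the environment model independently of whether that channel responds to its inputs or ignores them. There is no genuine obstacle here, and no nontrivial calculation to perform; the corollary should read as a one-line application of the theorem, included mainly to highlight explicitly that the results of \Cref{th:predictive_consistent_with_Boyd} (which assume a unifilar product environment) are not vacuous.
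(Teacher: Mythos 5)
Your proposal is correct and matches the paper exactly: the corollary is stated as an immediate specialization of \Cref{th:predictive_existence}, since a unifilar product (source) environment channel is in particular a unifilar environment channel, and the paper gives no separate proof beyond a remark that the construction can be simplified (the unifilarity map need not depend on the action). Nothing is missing from your argument.
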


However, for future reference we point out that the proof and in particular the construction of the predictive agent model can be simplified since,
for unifilar source environments, there exist unifilar models whose hidden state  $z'=u(s,z)$ are a function of the current percept $s$ and the current hidden state $z$ but \emph{not} of the action. Thus, the decomposition of $\Theta$ in \Cref{fig:circuit} can be replaced by the simpler decomposition in \Cref{fig:circuit_source} and, in analogy to the proof of \cref{th:predictive_existence}, one can show that $\texttt{agtM}$ is predictive. 

\begin{figure}[h!]
    \centering
    \begin{tikzpicture}
    \draw[black] (0,3) node[left] {$\mathcal{M}'$} -- (3,3);
    \draw[black] (0,2) node[left] {$\mathcal{Y}$} -- (3,2);
    \draw[black] (0,1) node[left] {$\mathcal{Z}'$} -- (3,1);
    \draw[black, fill=white] (0.9,3.3) rectangle (2.1,0.7) node[pos=.5] {$\Theta_\mathcal{MY}$}; 

    \draw[black] (6,3) node[left] {$\mathcal{M}'$} -- (11,3);
    \draw[black] (6,2) node[left] {$\mathcal{Y} $} -- (11,2);
    \draw[black] (6,1) node[left] {$\mathcal{Z}'$} -- (11,1);    

    \node[font = \Large,align=center] at (4.2,1.5) {$=$};
    
    \draw[black] (7.3,2) -- (7.3,1);
    \draw[fill=black] (7.3,2) circle  (0.1);
    \draw[black, fill=white] (6.5,1.4) rectangle (8.1,0.6) node[pos=.5] {$u(s,z)$}; 
    \draw[black,dashed] (6.3,2.5) rectangle (8.3,0.5);
    \node[below] at (7.3,0.5){$U_{\mathcal{Z}'\mathcal{Y}}$};
    \draw[black, fill=white] (10.8,3.3) rectangle (9,1.7) node[pos=.5] {$\Theta_{\mathcal{M}'\mathcal{Y}}^\mathtt{agt}$}; 
\end{tikzpicture}
\caption{Circuit diagram for the decomposition of $\Theta_{\mathcal{M}\mathcal{Y}}$ for unifilar source environments.} \label{fig:circuit_source}
 \end{figure}
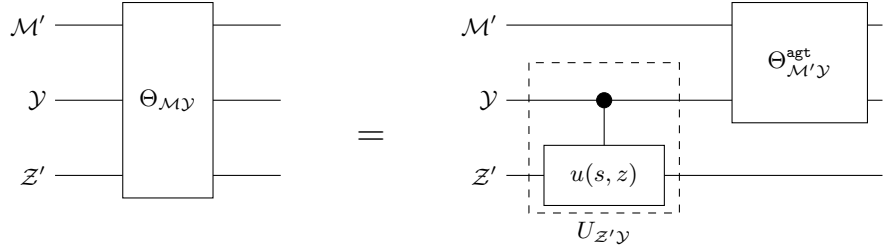

\begin{figure}[h!]
	\centering
        \begin{tikzpicture}[>=stealth',shorten >=1pt,node distance=2.5cm, scale= 0.83,on grid,auto, state/.style={circle, draw, minimum size=0.8cm, inner sep=1pt},font=\tiny]
        
        \def \n {3} 
        
        \def \S {S} 
        \def \A {A}  
        \def \M {M}               
        \def \Z {Z}               
        \def \t {j} 
                
        \def\XCol{{0,0,0,2,0,0,0}} 
        \def\SCol{{1,1,1,3,3,3,3}} 
        \def\ACol{{1,1,1,2,3,3,3}} 
        \def\ZCol{{0,0,0,2,0,0,0}} 
        
        \foreach \step in {0,...,\n} {
          \pgfmathtruncatemacro{\ms}{-\step}
          \pgfmathtruncatemacro{\aftestate}{-\step+1}
          \pgfmathtruncatemacro{\prevstate}{\step-1}
        
          \ifnum \step= 0
            \pgfmathsetmacro{\xc}{\XCol[\n]}
            \pgfmathsetmacro{\sc}{\SCol[\n]}
            \pgfmathsetmacro{\ac}{\ACol[\n]}
            \pgfmathsetmacro{\zc}{\ZCol[\n]}
        
            \node[state, fill={\ifnum\xc=1 mred!50\else\ifnum\xc=2 mblue!50\else\ifnum\xc=3 mgreen!50\else white\fi\fi\fi}] (Xn-\step) at (0.75,0.75) {\(\M_\t\)};
            \node[state, fill={\ifnum\ac=1 mred!50\else\ifnum\ac=2 mblue!50\else\ifnum\ac=3 mgreen!50\else white\fi\fi\fi}] (An-\step) at (0,-1.5) {\(\A_\t\)};
            \node[state, fill={\ifnum\sc=1 mred!50\else\ifnum\sc=2 mblue!50\else\ifnum\sc=3 mgreen!50\else white\fi\fi\fi}] (Bn-\step) at (1.5,-1.5) {\(\S_\t\)};
            \node[state, fill={\ifnum\zc=1 mred!50\else\ifnum\zc=2 mblue!50\else\ifnum\zc=3 mgreen!50\else white\fi\fi\fi}] (Zn-\step) at (-0.75,-3.75) {\(\Z_\t\)};
            \node[circle, draw=black, fill=black!40, inner sep=2pt] (Tn-0) at (0.75,-2.25) {};
            \node[circle, draw=black, fill=black!40, inner sep=2pt] (Fn-0) at (-0.75,-0.75) {};
          \else
        
            \pgfmathsetmacro{\xc}{\XCol[\n+\step]}
            \pgfmathsetmacro{\sc}{\SCol[\n+\step]}
            \pgfmathsetmacro{\ac}{\ACol[\n+\step]}
            \pgfmathsetmacro{\zc}{\ZCol[\n+\step]}
        
            \node[state, fill={\ifnum\xc=1 mred!50\else\ifnum\xc=2 mblue!50\else\ifnum\xc=3 mgreen!50\else white\fi\fi\fi}] (Xn-\step) [right of = Xn-\prevstate] {\(\M_{\t+\step}\)};
            \node[state, fill={\ifnum\ac=1 mred!50\else\ifnum\ac=2 mblue!50\else\ifnum\ac=3 mgreen!50\else white\fi\fi\fi}] (An-\step) [right of = An-\prevstate] {\(\A_{\t+{\step}}\)};
            \node[state, fill={\ifnum\zc=1 mred!50\else\ifnum\zc=2 mblue!50\else\ifnum\zc=3 mgreen!50\else white\fi\fi\fi}] (Zn-\step) [right of = Zn-\prevstate] {\(\Z_{\t+\step}\)};
            \node[circle, draw=black, fill=black!40, inner sep=2pt] (Fn-\step) [right of = Fn-\prevstate] {};
        
            \ifnum \step< \n
              \node[state, fill={\ifnum\sc=1 mred!50\else\ifnum\sc=2 mblue!50\else\ifnum\sc=3 mgreen!50\else white\fi\fi\fi}] (Bn-\step) [right of = Bn-\prevstate] {\(\S_{\t+{\step}}\)};
              \node[circle, draw=black, fill=black!40, inner sep=2pt] (Tn-\step) [right of = Tn-\prevstate] {};
            \fi
        
            \pgfmathsetmacro{\xc}{\XCol[\n-\step]}
            \pgfmathsetmacro{\sc}{\SCol[\n-\step]}
            \pgfmathsetmacro{\ac}{\ACol[\n-\step]}
            \pgfmathsetmacro{\zc}{\ZCol[\n-\step]}
            \node[state, fill={\ifnum\xc=1 mred!50\else\ifnum\xc=2 mblue!50\else\ifnum\xc=3 mgreen!50\else white\fi\fi\fi}] (Xn-\ms) [left of = Xn-\aftestate] {\(\M_{\t-\step}\)};
            \node[state, fill={\ifnum\ac=1 mred!50\else\ifnum\ac=2 mblue!50\else\ifnum\ac=3 mgreen!50\else white\fi\fi\fi}] (An-\ms) [left of = An-\aftestate] {\(\A_{\t-{\step}}\)};
            \node[state, fill={\ifnum\sc=1 mred!50\else\ifnum\sc=2 mblue!50\else\ifnum\sc=3 mgreen!50\else white\fi\fi\fi}] (Bn-\ms) [left of = Bn-\aftestate] {\(\S_{\t-{\step}}\)};
            \node[state, fill={\ifnum\zc=1 mred!50\else\ifnum\zc=2 mblue!50\else\ifnum\zc=3 mgreen!50\else white\fi\fi\fi}] (Zn-\ms) [left of = Zn-\aftestate] {\(\Z_{\t-\step}\)};
            \node[circle, draw=black, fill=black!40, inner sep=2pt] (Tn-\ms) [left of = Tn-\aftestate] {};
            \node[circle, draw=black, fill=black!40, inner sep=2pt] (Fn-\ms) [left of = Fn-\aftestate] {};
        
            \path[->, thick] (Tn-\prevstate) edge (Zn-\step);
            \path[->, thick] (Tn-\prevstate) edge (Bn-\prevstate);
            \path[->, thick] (Fn-\step) edge (Xn-\step);
            \path[->, thick] (Fn-\step) edge (An-\step);
            \path[->, thick] (Tn-\ms) edge (Zn-\aftestate);
            \path[->, thick] (Tn-\ms) edge (Bn-\ms);
            \path[->, thick] (Zn-\ms) edge (Tn-\ms);
            \path[->, thick] (An-\ms) edge (Tn-\ms);
            \path[->, thick] (Xn-\ms) edge (Fn-\aftestate);
            \path[->, thick] (Bn-\ms) edge (Fn-\aftestate);
          \fi
        
          \path[->, thick] (Fn-\ms) edge (Xn-\ms);
          \path[->, thick] (Fn-\ms) edge (An-\ms);
        }
        
        \foreach \step in {1,...,\n} {
          \pgfmathtruncatemacro{\prevstate}{\step-1}
          \path[->, thick] (Xn-\prevstate) edge (Fn-\step);
          \path[->, thick] (Zn-\prevstate) edge (Tn-\prevstate);
          \path[->, thick] (Bn-\prevstate) edge (Fn-\step);
          \path[->, thick] (An-\prevstate) edge (Tn-\prevstate);
        }
        
        \path[dotted, very thick, black!50] (Xn-\n) edge ++(0.7,-0.7);
        \path[dotted, very thick, black!50] (An-\n) edge ++(0.7,-0.7);
        \path[dotted, very thick, black!50] (Zn-\n) edge ++(1,1);
        \path[dotted, very thick, black!50] (Fn--\n) edge ++(-0.7,-0.7);
        \path[dotted, very thick, black!50] (Fn--\n) edge ++(-0.7,0.7);
        \path[dotted, very thick, black!50] (Zn--\n) edge ++(-1,1);
        
        \end{tikzpicture}
	\caption{Bayesian network for a percept-action loop (\cref{lem:Bayes_net_compatibility}) with colorized d-separarion (blue d-separates red and green) used in the proof of \Cref{th:predictive_existence}. }\label{fig:predictive_existence_Markov}
\end{figure}

\section{The extractable work in percept-action loops} \label{supp:7}

In this framework, memory is represented by a physical system coupled to a thermal reservoir at temperature $T$. The system possesses a few degrees of freedom, the information-bearing degrees of freedom, which are assumed to be meta-stable, i.e., their equilibration time $\tau_{\mathrm{info}}$ is much larger than that of the system's other degrees of freedom, $\tau_{\mathrm{others}}$. Information processing on the information-bearing degrees of freedom is carried out through an isothermal protocol, i.e., a protocol executed at constant temperature $T$, with a time scale such that $\tau_{\mathrm{others}} \ll \tau_{\mathrm{protocol}} \ll \tau_{\mathrm{info}}$. The protocol has access to a work reservoir for storing (or retrieving) work.

\subsection{Derivation of work capacity}\label{supp:7.1}
Let $\mathcal{X}$ be a finite set of information-bearing degrees of freedom of an information reservoir \cite{deffner2013information}, $p_{X_{\mathrm{in}}}$ an arbitrary initial distribution over $\mathcal{X}$, and $\Phi = \left(\phi(j|i)\right)_{j,i}$ an arbitrary transition matrix where $i,j\in\mathcal{X}$. Then, given that the available knowledge of about the information-bearing degrees of freedom $\mathcal{X}$ is $p_{X_{\mathrm{in}}}$, the work $W$ which one can expect (with respect to $p_{X_{\mathrm{in}}}$) to extract by implementing an isothermal process realizing $\Phi$ on $\mathcal{X}$
is upper-bounded by the second law of thermodynamics as \cite{seifert2012stochastic,parrondo2015thermodynamics}
\begin{align}
    W\leq H\left(X_{\mathrm{out}}\right) - H\left(X_{\mathrm{in}}\right) \label{eq:thermodynamic_assumtion}
\end{align}
where $W$ is in units of $\kb T \ln 2$, and $X_{\mathrm{out}}$ is distributed as
\begin{align}
    p_{X_{\mathrm{out}}}(x_{\mathrm{out}}) = \sum_{x_{\mathrm{in}}\in\mathcal{X}}\phi(x_{\mathrm{out}}|x_{\mathrm{in}})p_{X_{\mathrm{in}}}(x_{\mathrm{in}}),
\end{align}
called the output distribution. Note that the upper bound in \cref{eq:thermodynamic_assumtion} can be positive, negative, or zero. In particular, if the expected extracted work  $W$ is negative, realizing the isothermal process requires work, if it is positive, work can be gained.

If an agent implements an isothermal process such that the expected extracted work equals the upper bound in \cref{eq:thermodynamic_assumtion}, we say that the agent is \emph{Landauer efficient}, in reference to Landauer's bound on the erasure of one bit, which is a special case of \cref{eq:thermodynamic_assumtion}.

Based on the assumption that \cref{eq:thermodynamic_assumtion} holds, we will derive an upper bound on the work an agent \texttt{agtM} can expect to extract by undergoing a percept-action loop with an environment \texttt{env}.

Let $\texttt{agtM}\lrstack \texttt{env}  = \left(\Theta^{\mathrm{agt}}, p^{\mathrm{agt}}_{M_0 A_0}, \nu^{\mathrm{env}}_{\bm{S}|\bm{A}}\right)$ be a percept-action loop with identical action and percept alphabets $\mathcal{A}=\mathcal{S}$ and memory alphabet $\mathcal{M}$ of the agent. Then, based on \cref{eq:thermodynamic_assumtion}, the work an agent can expect to extract by implementing $\Theta^{\mathrm{agt}}$ in between rounds (channel uses) $t$ and $t+1$ is upper bounded by
\begin{align}
   W_{t\rightarrow t+1} \leq  H\left(A_{t+1}, M_{t+1}\right)-H\left(S_t, M_t\right).
\end{align}
Taking the Ces\`aro limit (for a definition, see \cref{eq:def_cesaro}), we find an upper bound on the expected extracted work per round:
\begin{align}
   \cesaro{W_{t\rightarrow t+1}}_t \leq  \cesaro{H\left(A_{t+1}, M_{t+1}\right)-H\left(S_t, M_t\right)}_t.
\end{align}
It is convenient to regroup terms in the Ces\`aro sum on the right-hand side of this expression:
\begin{align}
    \cesaro{H\left(A_{t+1}, M_{t+1}\right)-H\left(S_t, M_t\right)}_t &=\lim\limits_{n\rightarrow \infty}\frac{1}{n}\sum_{t=0}^{n-1}\left[H\left(A_{t+1}, M_{t+1}\right)-H\left(S_t, M_t\right)\right]\\
    &=\lim\limits_{n\rightarrow \infty}\frac{1}{n}\left(H\left(A_0, M_0\right)+\sum_{t=0}^{n-1}\left[H\left(A_{t+1}, M_{t+1}\right)-H\left(S_t, M_t\right)\right]\right) \label{eq:derivation_add_term}\\
    &=\lim\limits_{n\rightarrow \infty}\frac{1}{n}\left(\sum_{t=0}^{n-1}\left[H\left(A_t, M_t\right)-H\left(S_t, M_t\right)\right]\right)\\    
    &= \cesaro{H\left(A_t, M_t\right)-H\left(S_t, M_t\right)}_t
\end{align}
where in \cref{eq:derivation_add_term} we added $H\left(A_0, M_0\right)$ inside the Ces\`aro limit which does not change the result because it vanishes as $n\rightarrow\infty$. 

Then, we can rewrite the argument of the Ces\`aro limit using twice the definition of conditional entropy: 
\begin{align}
    H\left(A_t, M_t\right)-H\left(S_t, M_t\right) &= H\left(A_t| M_t\right)+H\left(M_t\right)-H\left(S_t| M_t\right)-H\left(M_t\right) \\
    &= H\left(A_t| M_t\right)-H\left(S_t| M_t\right).
\end{align}
We define
\begin{align}
 W_t(\texttt{agtM}\lrstack \texttt{env}) \coloneqq  H\left(A_t| M_t\right)-H\left(S_t| M_t\right) \label{eq:def_work_production_single_step}
\end{align}
as the \emph{extractable work for round $t$} and 
\begin{align}
 W(\texttt{agtM}\lrstack \texttt{env}) \coloneqq  \cesaro{H\left(A_t| M_t\right)-H\left(S_t|M_t\right)}_t \label{eq:extractable_work}
\end{align}
as the \emph{work rate}, the a.m.\,extractable work (both in units of $\kb T \ln 2$).
\subsection{Existence of Landauer-efficient agents} \label{supp:7.2}
The bound on expected extracted work for a single isothermal implementation of a transition matrix, \cref{eq:thermodynamic_assumtion}, can be reached using efficient protocols. These protocols typically have idealized requirements such as arbitrary energy functions or infinite timescales; see for example \cite{boyd2022thermodynamic} for a protocol based on over-damped Brownian motion in a controllable energy landscape. 

In the following we will outline, for any percept-action loop $\texttt{agtM}\lrstack \texttt{env}$ and provided that such idealized protocols are available, how an implementation for $\texttt{agtM}$ can be found which extracts all a.m.\,extractable work, \cref{eq:extractable_work} using only finite memory. Such agents will be called \emph{Landauer efficient}.

To this end, recall that any $\texttt{agtM}\lrstack \texttt{env}$ can be represented through a finite-state global Markov process of some $\texttt{agtM}\lrstack \texttt{envM}$ which models $\texttt{agtM}\lrstack \texttt{env}$, see \cref{lem:global_Markov_chain}. By \cref{cor:Markov_chains_asymptotic}, this process is asymptotically periodic with a finite period in the sense of \cref{cor:Markov_chains_asymptotic}.  Let $d$ be this period. That is, there are only $d$ asymptotically expected input distributions for the agent's transition matrix, $\lim\limits_{n\rightarrow \infty} p_{M_{dn+c}S_{dn+c}}$ for $c\in\{1,2,\dotsc, d\}$, which repeat in the same periodic manner. We will now exploit this to construct a Landauer-efficient agent.

Let us extend the agent $\texttt{agtM}$ by a separate deterministic counter $c$ which starts at 1 and, with every round, if $c<d$ counts up or if $c=d$ resets to 1. This additional counter memory is fully deterministic and thus has zero entropy for all times. It therefore does not contribute to the extractable work.

Now, consider a protocol implementing the agent which, conditioned on the counter $c$, implements one of $d$ efficient protocols optimized for the asymptotically expected distribution $\lim\limits_{n\rightarrow \infty} p_{M_{dn+c}S_{dn+c}}$. Thereby, we have constructed a protocol which implements $\texttt{agtM}$ in a Landauer-efficient way using only finite memory.

\subsection{Definition and properties of work capacity} \label{supp:7.3}
	For a given environment, the extracted work in \cref{eq:extractable_work} depends not only on an agent's input-output behavior, as characterized by an agent's channel \texttt{agt}, but also on the agent's memory usage, as specified by a model \texttt{agtM}. The environment's capacity to do work is then defined as the supremum of the work rate with respect to all agent models \texttt{agtM}.
\begin{definition} \label{def:work_capacity}
The work capacity $C^{\mathrm{work}}$ of channel $\normalfont\texttt{env}=\nu^{\mathrm{env}}_{\bm{A}|\bm{S}}$ is defined as
	\begin{align}
		\normalfont C^{\mathrm{work}}(\texttt{env}) \coloneqq  \max_{\texttt{agtM}\in\mathbb{A}^{\lrstacksmall \texttt{env}}} W(\texttt{agtM}\lrstack \texttt{env}) \label{eq:work_capacity}
	\end{align}
    where  $\normalfont W(\texttt{agtM}\lrstack \texttt{env}) \coloneqq  \cesaro{H\left(A_t| M_t\right)-H\left(S_t|M_t\right)}_t $ is the work rate.
    An agent model {\normalfont\texttt{agtM}} is said to be efficient with respect to an environment channel {\normalfont\texttt{env}} if $\normalfont W(\texttt{agtM}\lrstack \texttt{env}) =  C^{\mathrm{work}}(\texttt{env})$ and the set of efficient agents is denoted $\normalfont\mathbb{A}^{\lrstacksmall \texttt{env}}_{\mathrm{eff}}$.
\end{definition}
In the following we will prove various properties of work capacity.
\begin{figure}
    \centering
    \includegraphics[width=\textwidth]{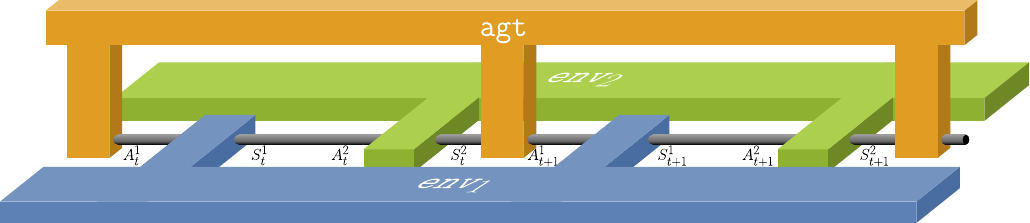}
    \caption{An agent interacting with the cascade of two environment channel $\texttt{env}^1$ and $\texttt{env}^2$.}
    \label{fig:alternating_environment}
\end{figure}
\begin{theorem} \label{th:properties_of_work_capacity}
	For any percept-action loop $\normalfont \texttt{agtM}\lrstack \texttt{env}$ with action-percept alphabet $\mathcal{Y}$, the channel capacity $\normalfont C^{\mathrm{work}}(\texttt{env})$ has the following properties:
	\begin{enumerate}[label=(\roman*)]
		\item \emph{(Existence)} The limit in the definition of $\normalfont C^{\mathrm{work}}(\texttt{env})$ exists,
		\item \emph{(Bounds)} $\normalfont 0\leq C^{\mathrm{work}}(\texttt{env})\leq   \ln |\mathcal{Y}|$,
		\item \emph{(Subadditivity under channel cascade)} Let $\normalfont\texttt{env}_1 = p^{\mathrm{env}_1}_{\bm{S}|\bm{A}}$ and $\normalfont\texttt{env}_2 =p^{\mathrm{env}_2}_{\bm{S}|\bm{A}}$ be two hidden Markov channels. Define the cascade $\normalfont \texttt{env}_2\circ \texttt{env}_1 = p^{\texttt{env}_2\circ \texttt{env}_1}_{\bm{S}|\bm{A}}$ of $\normalfont\texttt{env}_1$ and $\normalfont\texttt{env}_2$ as
		\begin{align}
		\normalfont	p^{\texttt{env}_2\circ \texttt{env}_1}_{\bm{S}|\bm{A}} (\bm{s}|\bm{a})= \sum_{i\in\mathcal{Y}^{\mathbb{N}_0}}p^{\mathrm{env}_2}_{\bm{S}|\bm{A}'}(\bm{s}|i)p^{\mathrm{env}_1}_{\bm{S}'|\bm{A}} (i|\bm{a}),
		\end{align}
        see also \Cref{fig:alternating_environment}. Then, 
		\begin{align}
			\normalfont C^{\mathrm{work}}(\texttt{env}_2\circ \texttt{env}_1)\leq  C^{\mathrm{work}}(\texttt{env}_1) + C^{\mathrm{work}}(\texttt{env}_2). \label{eq:cascade_inequality}
		\end{align}
	\end{enumerate} 
\end{theorem}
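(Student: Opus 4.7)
First, I would handle the three claims in order, with (i) and (ii) reducing to routine invocations of earlier results and (iii) requiring a careful composite-agent construction whose critical step is an information-theoretic inequality on the cascade.

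For existence in (i), note that for any agent model $\texttt{agtM}$ and any model $\texttt{envM}$ of $\texttt{env}$, the global process $(M_t,A_t,S_t,Z_t)$ is a homogeneous finite-state Markov chain by \Cref{lem:global_Markov_chain}. The integrand $H(A_t|M_t)-H(S_t|M_t)$ is a continuous function of the time-$t$ marginal of this chain, so the Ces\`aro limit $W(\texttt{agtM}\lrstack\texttt{env})$ exists by \Cref{cor:Markov_chains_asymptotic}(iii); uniform boundedness (see below) makes the supremum defining $C^{\mathrm{work}}(\texttt{env})$ finite. For the upper bound in (ii), $H(A_t|M_t)\le\ln|\mathcal{Y}|$ and $H(S_t|M_t)\ge 0$ pointwise in $t$. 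For the lower bound, the trivial agent with $|\mathcal{M}|=1$ whose transition outputs uniformly random actions independent of percepts achieves $H(A_t|M_t)=\ln|\mathcal{Y}|$ and $H(S_t|M_t)\le\ln|\mathcal{Y}|$, so $W\ge 0$.

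For subadditivity in (iii), fix Markov models $\texttt{envM}_1,\texttt{envM}_2$ with hidden-state alphabets $\mathcal{Z}^1,\mathcal{Z}^2$, denote the intermediate signal by $B_t$, and let $\texttt{agtM}^*$ approach the supremum in $C^{\mathrm{work}}(\texttt{env}_2\circ\texttt{env}_1)$. From $\texttt{agtM}^*$ I would construct two composite agent models. The model $\texttt{agtM}_1\in\mathbb{A}^{\lrstacksmall\texttt{env}_1}$ has memory $\mathcal{M}\times\mathcal{Z}^2$ and, on receiving a percept $B_t$ from $\texttt{env}_1$, internally runs $\texttt{envM}_2$ (mapping $B_t$ to $S_t$) and then $\texttt{agtM}^*$ (mapping $S_t$ to the next action). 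The model $\texttt{agtM}_2\in\mathbb{A}^{\lrstacksmall\texttt{env}_2}$ has memory $\mathcal{M}\times\mathcal{Z}^1$ and, on receiving $S_t$ from $\texttt{env}_2$, internally runs $\texttt{agtM}^*$ and then $\texttt{envM}_1$ to emit the next action $B_{t+1}$. Both are valid finite-state agent models, and writing $Z^i_t$ for the conditioning on the internal environment state (modulo a one-step timing convention), their work rates read
\begin{align*}
W_1 &= \cesaro{H(A_t|M_t,Z^2_t) - H(B_t|M_t,Z^2_t)}_t, \\
W_2 &= \cesaro{H(B_t|M_t,Z^1_t) - H(S_t|M_t,Z^1_t)}_t.
\end{align*}
Since $W_i\le C^{\mathrm{work}}(\texttt{env}_i)$, \eqref{eq:cascade_inequality} follows once one establishes $W_1+W_2 \ge \cesaro{H(A_t|M_t)-H(S_t|M_t)}_t$ and takes the supremum over $\texttt{agtM}^*$.

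The hard step is this last inequality. Expanding via the identity $H(X|Y)-H(X|Y,Z)=I[X;Z|Y]$, it is equivalent to
\begin{align*}
\cesaro{\bigl(I[B_t;Z^2_t|M_t]-I[A_t;Z^2_t|M_t]\bigr) + \bigl(I[S_t;Z^1_t|M_t]-I[B_t;Z^1_t|M_t]\bigr)}_t \ge 0.
\end{align*}
A naive data-processing argument on the chain $A_t\to B_t\to S_t$ does not suffice, since $Z^1_t$ and $Z^2_t$ share past correlations with $A_t,B_t,S_t$ that $M_t$ alone does not d-separate. I would attack it by constructing the Bayesian network for the three-party cascade (the analogue of \Cref{lem:Bayes_net_compatibility} with an extra intermediate channel) and combining d-separation with the chain rule for conditional mutual information; the asymptotic periodicity of the global Markov chain (\Cref{cor:Markov_chains_asymptotic}) should then allow termwise bounds to be upgraded to the required Ces\`aro-averaged inequality.
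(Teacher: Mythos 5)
Your parts (i) and (ii) are sound and essentially match the paper: existence follows from the global Markov chain together with the corollary on Ces\`aro limits of continuous functions of the transition matrix, and the upper bound is the canonical entropy bound. For the lower bound you use the memoryless uniform-action agent, for which $H(A_t|M_t)=\ln|\mathcal{Y}|\geq H(S_t|M_t)$ holds termwise; the paper instead uses the memoryless identity agent $A_{t+1}=S_t$, for which the sum telescopes. Both are valid.

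Part (iii) contains a genuine gap. Your reduction is correct as far as it goes: with the two composite agents one obtains
\begin{align*}
W_1+W_2-W^\ast=\left\langle I[B_t;Z^2_t|M_t]-I[A_t;Z^2_t|M_t]+I[S_t;Z^1_t|M_t]-I[B_t;Z^1_t|M_t]\right\rangle_t ,
\end{align*}
but you do not establish that this is nonnegative, and it is not clear that it is. By the chain rule, $I[S_t;Z^1_t|M_t]-I[B_t;Z^1_t|M_t]=I[S_t;Z^1_t|M_tB_t]-I[B_t;Z^1_t|M_tS_t]$, and neither term on the right need vanish: $Z^1_t$ and $Z^2_t$ are correlated through the shared action history, $S_t$ depends on $Z^2_t$, and $M_t$ alone does not d-separate any of the relevant pairs, so each of the four cross terms can a priori take either sign. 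The d-separation analysis you defer to is therefore not a technical clean-up step but the entire content of the claim, and your proposal does not carry it out. The paper sidesteps this difficulty with a structurally different construction: it defines the interleaved environment $\texttt{env}_{12}$ that alternates between $\texttt{env}_1$ and $\texttt{env}_2$ each round, so that the intermediate symbol is simultaneously a percept of the agent (from $\texttt{env}_1$) and its next action (into $\texttt{env}_2$), both accounted against the agent's own memory. The work rate then splits exactly into the two sub-channel contributions, giving $C^{\mathrm{work}}(\texttt{env}_{12})\leq C^{\mathrm{work}}(\texttt{env}_1)+C^{\mathrm{work}}(\texttt{env}_2)$ by maximizing each part separately, while restricting to agents that wire the output of $\texttt{env}_1$ identically into $\texttt{env}_2$ gives $C^{\mathrm{work}}(\texttt{env}_{12})\geq C^{\mathrm{work}}(\texttt{env}_2\circ\texttt{env}_1)$. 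You should either prove your cross-term inequality (or exhibit a counterexample to your decomposition) or adopt the interleaving construction.
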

Before we prove the theorem, the following definition is made.
\begin{definition} \label{def:agent_set_uniform_actions}
    For any  environment channel $\normalfont\texttt{env}$, let $\normalfont\mathbb{A}^{\lrstacksmall \texttt{env}}_{\mathrm{mea}}$ denote the set of agent models which interact with $\normalfont\texttt{env}$ such that
    \begin{align}
        \cesaro{H(A_t|M_t)}_t = \ln\abs{\mathcal{A}},
    \end{align}
    i.e., the a.m.\,entropy over actions given the agent's memory is maximal.
\end{definition}
The index mea stands for \emph{maximum entropy actions}.\\

\emph{Proof of (i)}: \\
By \Cref{lem:global_Markov_chain}, the global process $\bm{U}= (U_t)_{t=0}^\infty = (M_t,A_t,S_t,Z_t)_{t=0}^\infty$ is a homogeneous Markov chain. Let $\Lambda$ be its transition matrix. Then, work capacity, as given in \cref{eq:work_capacity}, can be rewritten as 
\begin{align}
		C^{\mathrm{work}}(\texttt{env}) =  \max_{\texttt{agtM}}\cesaro{g_{p_{U_0}}\left(\Lambda^t\right)}_t,
	\end{align}
    where $g_{p_{U_0}}$ is a function from the set of transition matrices to the real numbers:
    \begin{align}
        g_{p_{U_0}}\left(\Lambda^t\right) = H\left({M_t,A_t}\right)-H\left({M_t,S_t}\right),
    \end{align}
    where 
    $p_{M_t A_t}$ and $p_{M_t S_t}$ are obtained from $\bm{p}_{U_t}=\Lambda^t\bm{p}_{U_0}$ through marginalization. Since $g$ is continuous, existence follows from \cref{cor:Markov_chains_asymptotic}(iii). \hfill $\square$

\emph{Proof of (ii)}: \\

We first prove the upper bound.\\

For all $t\in\mathbb{N}_0$, the summands $H\left({S_t|M_t}\right)-H\left({A_t|M_t}\right)$
in the expression for work capacity, \cref{eq:work_capacity}, are bounded from above as
\begin{align}
	H\left({S_t|M_t}\right)-H\left({A_t|M_t}\right)\leq \log\left|\mathcal{Y}\right|-0, \label{eq:upper_bound_capacity_term}
\end{align}
where the  upper bound of conditional entropy, $H\left({S_t|M_t}\right) \leq H\left({S_t}\right)\leq \log (|\mathcal{Y}|)$, was used to obtain an upper bound for the first term, and the nonnegativity of conditional entropy, $0 \leq H\left({A_t|M_t}\right)$, was used to obtain an upper bound for the second term. (Note that $A_n$ takes values in $\mathcal{Y}$.) The upper bound in \cref{eq:upper_bound_capacity_term} depends only on the dimension of the action-percept alphabet and thus is independent of the choice of the agent Markov model. Applying \cref{eq:upper_bound_capacity_term} to each summand in \cref{eq:work_capacity} yields the upper bound on work capacity. \\

What is left is the proof for the lower bound.\\

The proof proceeds by showing that for any $\texttt{env}$ there exists an agent model $\texttt{agtM}$ which has zero extracted work in each step. Consider an agent which implements the identity map from percept $S_t$ to action $A_{t+1}$, that is $p_{A_{t+1}|S_t}(a_{t+1}|s_t)=\delta_{a_{t+1},s_t}$ for all $t\in\mathbb{N}_0$. This implies
\begin{align}
	H\left(S_t\right) =H\left(A_{t+1}\right) \label{eq:proof_lower_bound_1}
\end{align}
for all $t\in \mathbb{N}_0$. Further, note that since \texttt{agt} only employs the identity map there exists a memoryless \texttt{agtM} (i.e., with $|\mathcal{M}| = 1$) which models it. We thus have 
\begin{align}
	H\left(S_t|M_t\right) &=H\left(S_t\right) \label{eq:proof_lower_bound_2}\\ 
    H\left(A_{t+1}|M_t\right) &=H\left(A_{t+1}\right).\label{eq:proof_lower_bound_3}
\end{align}
Plugging \cref{eq:proof_lower_bound_1,eq:proof_lower_bound_2,eq:proof_lower_bound_3} into the expression for a.m.~extracted work (\cref{eq:extractable_work}) yields zero.

We have thus shown that for any environment there exists an agent with zero extracted work. Since the definition of work capacity involves a maximum with respect to agents, this proves nonnegativity of work capacity for all environments. \hfill $\square$

\emph{Proof of (iii)}: \\
 Let $\texttt{env}_{12}$ be the channel which is obtained by  alternating between $\texttt{env}_1$ and $\texttt{env}_2$ every round, see \Cref{fig:alternating_environment_proof}. That is, the action and percept processes of $\texttt{env}_{12}$ are
\begin{align}
	\bm{A}^{12} &= A^{ 1}_0A^{ 2}_0 A^{ 1}_1A^{ 2}_1\dotsm \label{eq:new_input_process}\\
	\bm{S}^{12} &=  S^{1}_0S^{2}_0 S^{1}_1S^{2}_1\dotsm \label{eq:new_otput_process}
\end{align}
where $A^{ k}_t$ (respectively $S^{k}_t$) are the inputs (respectively outputs) of \texttt{env}$_k$ where $k=1,2$.
\begin{figure}
    \centering
    \includegraphics[width=\textwidth]{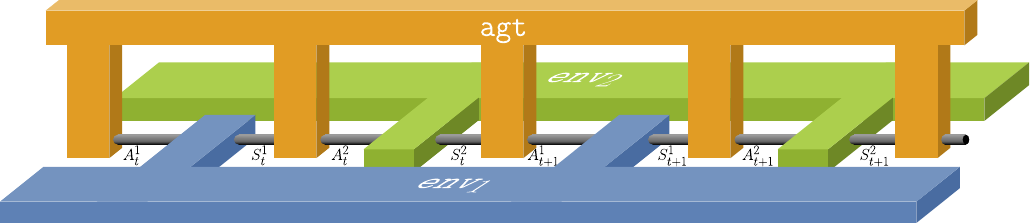}
    \caption{An agent which alternates between using channels $\texttt{env}^1$ and $\texttt{env}^2$.}
    \label{fig:alternating_environment_proof}
\end{figure}

Then, the work capacity of $\texttt{env}_{12}$ is by definition 
\begin{align}
	C^{\mathrm{work}}(\texttt{env}_{12}) &=  \max\limits_{\texttt{agtM}}	\lim\limits_{N\rightarrow\infty}\frac{\sum_{t=0}^{N-1}	\left(H\left({A^{ 1}_t|X^{1}_t}\right)-H\left({S^{ 1}_t|X^{1}_t}\right)+H\left({A^{ 2}_t|X^{2}_t}\right)-H\left({A^{ 2}_t|X^{2}_t}\right)\right)}{N},
\end{align}
where the notation $\bm{X}=X^{1}_0X^{2}_0X^{1}_1X^{2}_1$ was used for the agent's memory process in order to match the indexing of  \cref{eq:new_input_process,eq:new_otput_process}. Then, by replacing the supremum over a sum of terms with a supremum over individual terms, we obtain an upper bound:
\begin{align}
	C^{\mathrm{work}}(\texttt{env}_{12}) &\leq  \max\limits_{\texttt{agtM}}	\lim\limits_{N\rightarrow\infty}\frac{\sum_{t=0}^{N-1}	\left(H\left({A^{ 1}_t|X^{1}_t}\right)-H\left({S^{ 1}_t|X^{1}_t}\right)\right)}{N}+\\
	&\quad +  \max\limits_{\texttt{agtM}}	\lim\limits_{N\rightarrow\infty}\frac{\sum_{t=0}^{N-1}	\left(H\left({A^{ 2}_t|X^{2}_t}\right)-H\left({A^{ 2}_t|X^{2}_t}\right)\right)}{N}\\
	&= 	C^{\mathrm{work}}(\texttt{env}_{1})  + 	C^{\mathrm{work}}(\texttt{env}_{2})  \label{eq:casc_ineq_1}.
\end{align}
Further, note that the a.m.\,extracted work of agents which implement an identity channel from outputs of $\texttt{env}_{1}$ to inputs of $\texttt{env}_{2}$ is upper bounded by $C^{\mathrm{work}}(\texttt{env}_{2}\circ \texttt{env}_{1})$. However, since restricting the set of agents can only lead to a smaller a.m.\,extracted work we have:
\begin{align}
		C^{\mathrm{work}}(\texttt{env}_{12}) &\geq
		C^{\mathrm{work}}(\texttt{env}_{2}\circ \texttt{env}_{1}). \label{eq:casc_ineq_2}
\end{align}
Then, \cref{eq:cascade_inequality} follows by combining \cref{eq:casc_ineq_1} and \cref{eq:casc_ineq_2}.\hfill $\square$\\

The following lemma provides simplified expressions for the work capacity (in units of $\kb T \ln 2$) of environment channel \texttt{env} for the classes of channels defined in \Cref{def:noiseless_memoryless_invariant_and_source_channels}.
\begin{lemma}\label{lem:work_capacity_expressions}
    	\begin{align} 
        \normalfont\displaystyle
		C^{\mathrm{work}}(\texttt{env}) = \begin{cases}
				0&\text{	if  \texttt{env} is noiseless,}\\
                \max\limits_{p_{A_0}}\left[H\left({S_0}\right)-H\left({A_0}\right)\right] &\text{	if \texttt{env} is memoryless invariant,}\\
				\mathrm{log}|\mathcal{A}|-h(\bm{S})&\text{	if \texttt{env} is a unifilar product channel.}\\
		\end{cases} \label{eq:channel_types}
	\end{align}
\end{lemma}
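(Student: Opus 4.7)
My plan is to treat the three cases separately, each exploiting the structural simplification afforded by its channel class. In all cases I work directly with the work rate $W(\texttt{agtM}\lrstack\texttt{env}) = \cesaro{H(A_t|M_t) - H(S_t|M_t)}_t$, deriving an upper bound depending only on \texttt{env} and exhibiting an agent model that saturates it. The noiseless case is immediate: because $S_t = A_t$ deterministically for every $t$ and every agent model, $H(S_t|M_t) = H(A_t|M_t)$, so each summand vanishes. Combined with $C^{\mathrm{work}} \geq 0$ from \Cref{th:properties_of_work_capacity}, this yields $C^{\mathrm{work}}(\texttt{env}) = 0$.

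For the memoryless invariant case, the key structural feature is that, because the environment factorizes as $\prod_t \phi(s_t|a_t)$, the variables form a Markov chain $M_t \to A_t \to S_t$ at each $t$. Decomposing
\begin{align}
H(A_t|M_t) - H(S_t|M_t) = H(A_t) - H(S_t) + I[S_t;M_t] - I[A_t;M_t]
\end{align}
and applying the data processing inequality to this Markov chain gives $I[S_t;M_t] \leq I[A_t;M_t]$, hence
\begin{align}
H(A_t|M_t) - H(S_t|M_t) \leq H(A_t) - H(S_t) \leq \max_{p_{A_0}}\bigl[H(A_0) - H(S_0)\bigr],
\end{align}
where the last maximum is over single-symbol distributions on $\mathcal{A}$ and $S_0$ is the push-forward under $\phi$. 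Passing to the Ces\`aro limit preserves the inequality. For saturation, I take a memoryless agent that samples $A_t$ i.i.d.\ from a maximizer $p^\star_{A_0}$; since $|\mathcal{M}|=1$, we get $H(A_t|M_t) = H(A_t)$ and $H(S_t|M_t) = H(S_t)$, and these realize the optimum at every $t$.

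The unifilar product case requires more care. The upper bound on the action term is the trivial $H(A_t|M_t) \leq \log|\mathcal{A}|$. For the percept term, I argue via the Bayesian network of \Cref{lem:Bayes_net_compatibility_source} that $I[S_t; M_t \mid S_{0:t}] = 0$: since $M_t$ is a (stochastic) function of $A_{0:t+1} S_{0:t}$ and $S_t$ is d-separated from $A_{0:t+1}$ by $S_{0:t}$ in a product channel, conditioning on $M_t$ adds no information about $S_t$ beyond $S_{0:t}$. Hence $H(S_t|M_t) \geq H(S_t|M_t S_{0:t}) = H(S_t|S_{0:t})$, and \cref{eq:entropy_rate_chain_rule} gives $\cesaro{H(S_t|M_t)}_t \geq h(\bm{S})$, so $W \leq \log|\mathcal{A}| - h(\bm{S})$. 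For achievability I combine the predictive agent guaranteed by \Cref{th:unifilar_predictive} (whose memory tracks the environment's hidden state) with an independent uniformly random action at each step, not stored in memory. Ces\`aro predictiveness together with the same d-separation yields $\cesaro{H(S_t|M_t)}_t = h(\bm{S})$, while $A_t$ independent of $M_t$ gives $H(A_t|M_t) = \log|\mathcal{A}|$. The main obstacle is verifying that the two desiderata—predictiveness of percepts and maximum-entropy actions—can be realized simultaneously by a finite-memory agent; this is where unifilarity of a product environment is essential, because its hidden state can be updated from percepts alone (cf.\ the simplified construction in \Cref{fig:circuit_source}), so no action information needs to be retained, decoupling the two objectives.
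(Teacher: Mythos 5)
Your proof is correct and follows essentially the same route as the paper's own: case by case, deriving an upper bound on the work rate from the structure of the channel (your data-processing-inequality step for the memoryless invariant case is equivalent to the paper's interaction-information identity plus the d-separation $I[S_0;M_0|A_0]=0$, and your $I[S_t;M_t|S_{0:t}]=0$ in the unifilar product case is a slightly weaker but sufficient version of the paper's $I[S_t;M_tA_{0:t+1}|S_{0:t}]=0$) and then saturating it with the same agents (memoryless argmax sampler; uniform random actions combined with the percept-only hidden-state tracker of \Cref{fig:circuit_source}). The one point worth flagging is that both your argument and the paper's own proof establish $\max_{p_{A_0}}\left[H(A_0)-H(S_0)\right]$ for the memoryless invariant case, whereas the lemma as displayed states $\max_{p_{A_0}}\left[H(S_0)-H(A_0)\right]$; the worked example ($C^{\mathrm{work}}\simeq 0.272$ bits at $p_{A_0}(0)=1/\sqrt{2}$) confirms your sign, so the displayed formula appears to carry a typo rather than your proof an error.
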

\emph{Proof.}\\

(i) Let \texttt{env} be a noiseless channel.\\
By \Cref{def:noiseless_memoryless_invariant_and_source_channels}, $\bm{S}=\bm{A}$ for a noiseless environment channel. Setting $S_t=A_t$ in the expression for work capacity, \cref{eq:work_capacity} yields  $C^{\mathrm{work}}\left(\texttt{env}\right) = 0$. \\
(ii) Let \texttt{env} be a memoryless invariant channel.\\
$C^{\mathrm{work}}(\texttt{env}) =                 \max\limits_{p_{A_0}}\left(H\left({S_0}\right)-H\left({A_0}\right)\right)$ will be proven by showing the respective inequalities
\begin{align}
    C^{\mathrm{work}}(\texttt{env}) &\leq \max\limits_{p_{A_0}}\left[H\left({S_0}\right)-H\left({A_0}\right)\right] \label{eq:proof_mem_less_upper}\\
    C^{\mathrm{work}}(\texttt{env}) &\geq  \max\limits_{p_{A_0}}\left[H\left({S_0}\right)-H\left({A_0}\right)\right] \label{eq:proof_mem_less_lower}.
\end{align}
\begin{itemize}
\item[$\leq$:] In general, an upper bound on work capacity can be obtained by  optimizing each summand of the work capacity separately:
\begin{align}
    C^{\mathrm{work}} &= \max_{\texttt{agtM}}\cesaro{H\left({A_t,M_t}\right) - H\left({S_t,M_t}\right)}_t\\
    &\leq \cesaro{\max_{\texttt{agtM}}\left[H\left({A_t,M_t}\right) - H\left({S_t,M_t}\right)\right]}_t. \label{eq:proof_memoryless}
    \end{align}
    This upper bound simplifies further for memoryless invariant environments as follows. Note that 
    memoryless invariant environments admit a description with a $|\mathcal{Y}|\times|\mathcal{Y}|$ transition matrix $\Phi$ such that the global process at any time $t$ is given by
\begin{align}
    p_{U_t}(u_t)=p_{M_t A_t S_t}(m_t,a_t,s_t)=\phi(s_t|a_t)p_{M_t A_t}(m_t,a_t).
\end{align}
Thus, the maximization in \cref{eq:proof_memoryless} reduces to a maximization over $p_{M_t A_t}$. In fact, this is the same optimization problem for all $t$ since $\Phi$ does not depend on $t$. Thus, the upper bound in \cref{eq:proof_memoryless} simplifies to
\begin{align}
    C^{\mathrm{work}} 
    &\leq \max_{p_{A_0 M_0}}\left[H\left({A_0 M_0}\right) - H\left({S_0 M_0}\right)\right]. \label{eq:proof_memoryless_1}
    \end{align}
Further, we find
\begin{align}
    H\left({A_0,M_0}\right) - H\left({S_0,M_0}\right) &= H\left(A_0\right) + H\left(M_0\right) -I\left[A_0;M_0\right] - H\left(S_0\right)-H\left(M_0\right)+I\left[S_0;M_0\right] \label{eq:proof_mem_less_work_capacity_1}
    \end{align}
    where we used $H\left(X,Y\right) = H(X)+H(Y)-I[X;Y]$ which is easily checked with an information diagram, see \Cref{supp:1}. Using $I\left[X_1,Y\right]-I\left[X_2,Y\right] = I\left[X_1,Y\right|X_2]-I\left[X_2,Y|X_1\right]$, \cref{eq:proof_mem_less_work_capacity_1} becomes
   \begin{align}
    H\left({A_0,M_0}\right) - H\left({S_0,M_0}\right) &= H\left(A_0\right) -I\left[A_0;M_0|S_0\right] - H\left(S_0\right)+I\left[S_0;M_0|A_0\right]. \label{eq:proof_mem_less_work_capacity_2}
    \end{align}
    \begin{figure}[h!]
	\centering
        \begin{tikzpicture}[>=stealth',shorten >=1pt,node distance=2.5cm, scale= 0.83,on grid,auto, state/.style={circle, draw, minimum size=0.8cm, inner sep=1pt},font=\tiny]
        
        \def \n {2} 
        
        \def \S {S} 
        \def \A {A}  
        \def \M {M}               
        
        \def\XCol{{0,0,1,0,0,0}} 
        \def\SCol{{0,0,3,0,0,0}} 
        \def\ACol{{0,0,2,0,0,0}} 
        \def \t {0} 
                
        \foreach \step in {0,...,\n} {
          \pgfmathtruncatemacro{\ms}{-\step}
          \pgfmathtruncatemacro{\aftestate}{-\step+1}
          \pgfmathtruncatemacro{\prevstate}{\step-1}
        
          \ifnum \step= 0
            \pgfmathsetmacro{\xc}{\XCol[\n]}
            \pgfmathsetmacro{\sc}{\SCol[\n]}
            \pgfmathsetmacro{\ac}{\ACol[\n]}
        
            \node[state, fill={\ifnum\xc=1 mred!50\else\ifnum\xc=2 mblue!50\else\ifnum\xc=3 mgreen!50\else white\fi\fi\fi}] (Xn-\step) at (0.75,0.75) {\(\M_\t\)};
            \node[state, fill={\ifnum\ac=1 mred!50\else\ifnum\ac=2 mblue!50\else\ifnum\ac=3 mgreen!50\else white\fi\fi\fi}] (An-\step) at (0,-1.5) {\(\A_\t\)};
            \node[state, fill={\ifnum\sc=1 mred!50\else\ifnum\sc=2 mblue!50\else\ifnum\sc=3 mgreen!50\else white\fi\fi\fi}] (Bn-\step) at (1.5,-1.5) {\(\S_\t\)};
            \node[circle, draw=black, fill=black!40, inner sep=2pt] (Fn-0) at (-0.75,-0.75) {};
             \path[->, thick] (Fn-\ms) edge (Xn-\ms);
          \path[->, thick] (Fn-\ms) edge (An-\ms);
          \else
        
            \pgfmathsetmacro{\xc}{\XCol[\n+\step]}
            \pgfmathsetmacro{\sc}{\SCol[\n+\step]}
            \pgfmathsetmacro{\ac}{\ACol[\n+\step]}
        
            \node[state, fill={\ifnum\xc=1 mred!50\else\ifnum\xc=2 mblue!50\else\ifnum\xc=3 mgreen!50\else white\fi\fi\fi}] (Xn-\step) [right of = Xn-\prevstate] {\(\M_{\step}\)};
            \node[state, fill={\ifnum\ac=1 mred!50\else\ifnum\ac=2 mblue!50\else\ifnum\ac=3 mgreen!50\else white\fi\fi\fi}] (An-\step) [right of = An-\prevstate] {\(\A_{\step}\)};
            \node[circle, draw=black, fill=black!40, inner sep=2pt] (Fn-\step) [right of = Fn-\prevstate] {};
        
            \ifnum \step< \n
              \node[state, fill={\ifnum\sc=1 mred!50\else\ifnum\sc=2 mblue!50\else\ifnum\sc=3 mgreen!50\else white\fi\fi\fi}] (Bn-\step) [right of = Bn-\prevstate] {\(\S_{\step}\)};
            \fi

            \path[->, thick] (Fn-\step) edge (Xn-\step);
            \path[->, thick] (Fn-\step) edge (An-\step);
    
          \fi

        }
        \foreach \step in {1,...,\n} {
          \pgfmathtruncatemacro{\prevstate}{\step-1}
          \path[->, thick] (Xn-\prevstate) edge (Fn-\step);
          \path[->, thick] (Bn-\prevstate) edge (Fn-\step);
          \path[->, thick] (An-\prevstate) edge (Bn-\prevstate);
        }
        
        \path[dotted, very thick, black!50] (Xn-\n) edge ++(0.7,-0.7);
        \path[dotted, very thick, black!50] (An-\n) edge ++(1.4,0);
        
        \end{tikzpicture}
	\caption{Bayesian network for a memoryless environment channel (\cref{cor:Bayes_net_compatibility_memoryless}) with colorized d-separation (blue d-separates red and green) used in the proof of \Cref{lem:work_capacity_expressions}.}\label{fig:Bayesian_net_memoryless_env_proof}
\end{figure}
    Note that $I\left[S_0;M_0|A_0\right]=0$ due to d-separation (see \Cref{fig:Bayesian_net_memoryless_env_proof}). Then, by the nonnegativity of conditional mutual information, we find
     \begin{align}
    H\left({A_0,M_0}\right) - H\left({S_0,M_0}\right) &\leq H\left(A_0\right) - H\left(S_0\right) \label{eq:proof_mem_less_work_capacity_3}
    \end{align}
    which proves the upper bound.
\item[$\geq$:] Consider a memoryless agent model which, for all $t$, prepares its action in   $\arg\max_{p_{A_0}}\left[H\left({A_0}\right) - H\left({S_0}\right)\right]$, i.e., its extracted work is given by $\max\limits_{p_{A_0}}\left[H\left(S_0\right)-H\left(A_0\right)\right]$. Since any agent's extracted work is a lower bound on the work capacity, this proves the lower bound.
\end{itemize}
\Cref{eq:proof_mem_less_upper,eq:proof_mem_less_lower}  imply equality.\\

(iii) We start by deriving an expression for the a.m.\,work production under the assumption that the environment is modeled by  a unifilar product environment channel. The a.m.\,work production in units of $\kb T \ln 2$ is given by \cref{eq:extractable_work},
\begin{align}
W(\texttt{agtM}\lrstack {\texttt{env}}) = \cesaro{H\left({A_t|M_t}\right) - H\left({S_t|M_t}\right) }_t. 
\end{align}
Rewriting the second term in the Ces\`aro limit using twice the definition of conditional mutual information \cref{eq:def_cond_mut_info} we find
\begin{align}    
   H\left({S_t|M_t}\right) &= H\left({S_t|M_t S_{0:t} A_{0:t+1}}\right) + I\left[S_{0:t} A_{0:t+1}; S_t| M_t\right] \\
   &= H\left({S_t|S_{0:t}}\right) - I\left[S_t; M_tA_{0:t+1} | S_{0:t} \right] + I\left[S_{0:t} A_{0:t+1}; S_t| M_t\right]. \label{eq:source_channel_work_1}
\end{align}
 The term \( I\left[S_t; M_tA_{0:t+1} | S_{0:t} \right]\) vanishes because of the d-separatoin shown in \Cref{fig:DAG1}. 
Using linearity of the Ces\`aro limit and the chain rule of entropy rate (\cref{eq:entropy_rate_chain_rule}), we find for the a.m.\,work production:
\begin{align}
W(\texttt{agtM}\lrstack {\texttt{env}}) =\cesaro{H\left({A_t|M_t}\right)}_t -  h\left(\bm{S}\right) - \cesaro{ I\left[S_{{0:t}}A_{{0:t+1}};S_t|M_t\right]}_t. \label{eq:extractable_work_source_environment}
\end{align}
 \begin{figure}
     \centering
\begin{tikzpicture}[>=stealth',shorten >=1pt,node distance=2.5cm, scale= 0.83,on grid,auto, state/.style={circle, draw, minimum size=0.8cm, inner sep=1pt},font=\tiny]
        
        \def \n {3} 
        
        \def \S {S} 
        \def \A {A}  
        \def \M {M}               
        \def \Z {Z}               
        \def \t {t} 
                
        \def\XCol{{0,0,0,1,0,0,0}} 
        \def\SCol{{2,2,2,3,0,0,0}} 
        \def\ACol{{1,1,1,1,0,0,0}} 
        \def\ZCol{{0,0,0,0,0,0,0}} 
        
        \foreach \step in {0,...,\n} {
          \pgfmathtruncatemacro{\ms}{-\step}
          \pgfmathtruncatemacro{\aftestate}{-\step+1}
          \pgfmathtruncatemacro{\prevstate}{\step-1}
        
          \ifnum \step= 0
            \pgfmathsetmacro{\xc}{\XCol[\n]}
            \pgfmathsetmacro{\sc}{\SCol[\n]}
            \pgfmathsetmacro{\ac}{\ACol[\n]}
            \pgfmathsetmacro{\zc}{\ZCol[\n]}
        
            \node[state, fill={\ifnum\xc=1 mred!50\else\ifnum\xc=2 mblue!50\else\ifnum\xc=3 mgreen!50\else white\fi\fi\fi}] (Xn-\step) at (0.75,0.75) {\(\M_\t\)};
            \node[state, fill={\ifnum\ac=1 mred!50\else\ifnum\ac=2 mblue!50\else\ifnum\ac=3 mgreen!50\else white\fi\fi\fi}] (An-\step) at (0,-1.5) {\(\A_\t\)};
            \node[state, fill={\ifnum\sc=1 mred!50\else\ifnum\sc=2 mblue!50\else\ifnum\sc=3 mgreen!50\else white\fi\fi\fi}] (Bn-\step) at (1.5,-1.5) {\(\S_\t\)};
            \node[state, fill={\ifnum\zc=1 mred!50\else\ifnum\zc=2 mblue!50\else\ifnum\zc=3 mgreen!50\else white\fi\fi\fi}] (Zn-\step) at (-0.75,-3.75) {\(\Z_\t\)};
            \node[circle, draw=black, fill=black!40, inner sep=2pt] (Tn-0) at (0.75,-2.25) {};
            \node[circle, draw=black, fill=black!40, inner sep=2pt] (Fn-0) at (-0.75,-0.75) {};
          \else
        
            \pgfmathsetmacro{\xc}{\XCol[\n+\step]}
            \pgfmathsetmacro{\sc}{\SCol[\n+\step]}
            \pgfmathsetmacro{\ac}{\ACol[\n+\step]}
            \pgfmathsetmacro{\zc}{\ZCol[\n+\step]}
        
            \node[state, fill={\ifnum\xc=1 mred!50\else\ifnum\xc=2 mblue!50\else\ifnum\xc=3 mgreen!50\else white\fi\fi\fi}] (Xn-\step) [right of = Xn-\prevstate] {\(\M_{\t+\step}\)};
            \node[state, fill={\ifnum\ac=1 mred!50\else\ifnum\ac=2 mblue!50\else\ifnum\ac=3 mgreen!50\else white\fi\fi\fi}] (An-\step) [right of = An-\prevstate] {\(\A_{\t+{\step}}\)};
            \node[state, fill={\ifnum\zc=1 mred!50\else\ifnum\zc=2 mblue!50\else\ifnum\zc=3 mgreen!50\else white\fi\fi\fi}] (Zn-\step) [right of = Zn-\prevstate] {\(\Z_{\t+\step}\)};
            \node[circle, draw=black, fill=black!40, inner sep=2pt] (Fn-\step) [right of = Fn-\prevstate] {};
        
            \ifnum \step< \n
              \node[state, fill={\ifnum\sc=1 mred!50\else\ifnum\sc=2 mblue!50\else\ifnum\sc=3 mgreen!50\else white\fi\fi\fi}] (Bn-\step) [right of = Bn-\prevstate] {\(\S_{\t+{\step}}\)};
              \node[circle, draw=black, fill=black!40, inner sep=2pt] (Tn-\step) [right of = Tn-\prevstate] {};
            \fi
        
            \pgfmathsetmacro{\xc}{\XCol[\n-\step]}
            \pgfmathsetmacro{\sc}{\SCol[\n-\step]}
            \pgfmathsetmacro{\ac}{\ACol[\n-\step]}
            \pgfmathsetmacro{\zc}{\ZCol[\n-\step]}
            \node[state, fill={\ifnum\xc=1 mred!50\else\ifnum\xc=2 mblue!50\else\ifnum\xc=3 mgreen!50\else white\fi\fi\fi}] (Xn-\ms) [left of = Xn-\aftestate] {\(\M_{\t-\step}\)};
            \node[state, fill={\ifnum\ac=1 mred!50\else\ifnum\ac=2 mblue!50\else\ifnum\ac=3 mgreen!50\else white\fi\fi\fi}] (An-\ms) [left of = An-\aftestate] {\(\A_{\t-{\step}}\)};
            \node[state, fill={\ifnum\sc=1 mred!50\else\ifnum\sc=2 mblue!50\else\ifnum\sc=3 mgreen!50\else white\fi\fi\fi}] (Bn-\ms) [left of = Bn-\aftestate] {\(\S_{\t-{\step}}\)};
            \node[state, fill={\ifnum\zc=1 mred!50\else\ifnum\zc=2 mblue!50\else\ifnum\zc=3 mgreen!50\else white\fi\fi\fi}] (Zn-\ms) [left of = Zn-\aftestate] {\(\Z_{\t-\step}\)};
            \node[circle, draw=black, fill=black!40, inner sep=2pt] (Tn-\ms) [left of = Tn-\aftestate] {};
            \node[circle, draw=black, fill=black!40, inner sep=2pt] (Fn-\ms) [left of = Fn-\aftestate] {};
        
            \path[->, thick] (Tn-\prevstate) edge (Zn-\step);
            \path[->, thick] (Tn-\prevstate) edge (Bn-\prevstate);
            \path[->, thick] (Fn-\step) edge (Xn-\step);
            \path[->, thick] (Fn-\step) edge (An-\step);
            \path[->, thick] (Tn-\ms) edge (Zn-\aftestate);
            \path[->, thick] (Tn-\ms) edge (Bn-\ms);
            \path[->, thick] (Zn-\ms) edge (Tn-\ms);
            \path[->, thick] (Xn-\ms) edge (Fn-\aftestate);
            \path[->, thick] (Bn-\ms) edge (Fn-\aftestate);
          \fi
        
          \path[->, thick] (Fn-\ms) edge (Xn-\ms);
          \path[->, thick] (Fn-\ms) edge (An-\ms);
        }
        
        \foreach \step in {1,...,\n} {
          \pgfmathtruncatemacro{\prevstate}{\step-1}
          \path[->, thick] (Xn-\prevstate) edge (Fn-\step);
          \path[->, thick] (Zn-\prevstate) edge (Tn-\prevstate);
          \path[->, thick] (Bn-\prevstate) edge (Fn-\step);
        }
        
        \path[dotted, very thick, black!50] (Xn-\n) edge ++(0.7,-0.7);
        \path[dotted, very thick, black!50] (Zn-\n) edge ++(1,1);
        \path[dotted, very thick, black!50] (Fn--\n) edge ++(-0.7,-0.7);
        \path[dotted, very thick, black!50] (Fn--\n) edge ++(-0.7,0.7);
        \path[dotted, very thick, black!50] (Zn--\n) edge ++(-1,1);
        
        \end{tikzpicture}
     \caption{Bayesian network for an product environment channel (\cref{lem:Bayes_net_compatibility_source}) with colorized d-separation (blue d-separates red and green) used in the proof of \Cref{th:work_efficient_predictive}.}
     \label{fig:DAG1}
 \end{figure}
 In particular, we see that \cref{eq:extractable_work_source_environment} is upper bounded by setting the first term to its upper bound ($\log|\mathcal{Y}|$) and the last term to its upper bound (zero):
 \begin{align}
W(\texttt{agtM}\lrstack {\texttt{env}}) <\log|\mathcal{Y}| -  h\left(\bm{S}\right).  \label{eq:extractable_work_source_environment_2}
\end{align}
Work capacity equals this upper bound if there exist an agent model which saturates it.

Consider now a class of agent models with memory states denoted by $\mathcal{M}'$ which distributes their actions $A_t$ uniformly and independently from its inputs $S_{t-1}$, $M'_{t-1}$ and its output memory $M'_{t}$, i.e., $H\left({A_t|M'_t}\right)=H\left({A_t|M'_{t-1}}\right)=H\left({A_t|S_{t-1}}\right) = \log\abs{\mathcal{A}}$. This means, we have
\begin{align}
    p_{M'_tA_t|M'_{t-1}S_{t-1}} = p_{A_t} p_{M'_t|M'_{t-1}S_{t-1}} \label{eq:cut_arrow_to_actions}
\end{align}
for all $t\in\mathbb{N}_0$ which results in a simplification in the Bayesian network of the percept-action loop, see \Cref{fig:Bayes_net_for_agent_and_source_iid_actions}.

Further, since the environment is unifilar, by \cref{cor:predictive_existence_source} for any such agent $\texttt{agtM}'$ there exists a predictive agent model $\texttt{agtM}$ with memory states denoted by $\mathcal{M}$ constructed as in \Cref{fig:circuit_source}. For predictive agent models, the last term in \cref{eq:extractable_work_source_environment} is zero (\cref{def:predictive}). What is left to show is that $H\left({A_t|M_t}\right)=\log\abs{\mathcal{A}}$ for \texttt{agtM}. By construction (\Cref{fig:circuit_source}), we have $M_t=M'_t Z_t$ and thus
\begin{align}
    H\left({A_t|M_t}\right) &= H\left({A_t|M'_t Z_t}\right)
\end{align}
and by the definition of conditional mutual information:
\begin{align}
   H\left({A_t|M'_t Z_t}\right) = H\left({A_t|M'_t}\right)-I\left[{A_t;Z_t|M'_t }\right].
   \end{align}
The first term on the right-hand side equals $\log\abs{\mathcal{A}}$ by the assumptions made for $\texttt{agtM}'$ and the second term vanishes due to d-separation (actions are independent from all other variables, see \ref{fig:Bayes_net_for_agent_and_source_iid_actions}).

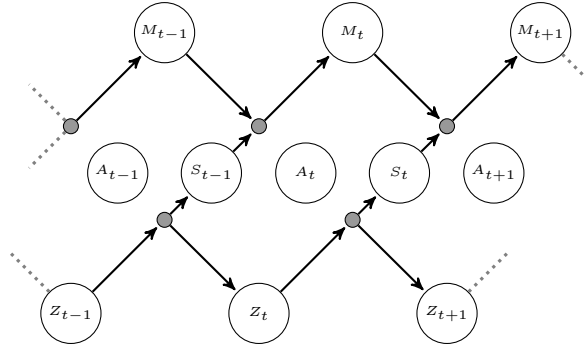
\begin{figure}[h!]
    \centering
    \begin{tikzpicture}[>=stealth',shorten >=1pt,node distance=2.5cm, scale= 0.83,on grid,auto, state/.style={circle, draw, minimum size=0.8cm, inner sep=1pt},font=\tiny]
        
        \def \n {1} 
        
        \def \S {S} 
        \def \A {A}  
        \def \M {M}               
        \def \Z {Z}               
        \def \t {t} 
                
        \def\XCol{{0,0,0,0,0,0,0}} 
        \def\SCol{{0,0,0,0,0,0,0}} 
        \def\ACol{{0,0,0,0,0,0,0}} 
        \def\ZCol{{0,0,0,0,0,0,0}} 
        
        \foreach \step in {0,...,\n} {
          \pgfmathtruncatemacro{\ms}{-\step}
          \pgfmathtruncatemacro{\aftestate}{-\step+1}
          \pgfmathtruncatemacro{\prevstate}{\step-1}
        
          \ifnum \step= 0
            \pgfmathsetmacro{\xc}{\XCol[\n]}
            \pgfmathsetmacro{\sc}{\SCol[\n]}
            \pgfmathsetmacro{\ac}{\ACol[\n]}
            \pgfmathsetmacro{\zc}{\ZCol[\n]}
        
            \node[state, fill={\ifnum\xc=1 mred!50\else\ifnum\xc=2 mblue!50\else\ifnum\xc=3 mgreen!50\else white\fi\fi\fi}] (Xn-\step) at (0.75,0.75) {\(\M_\t\)};
            \node[state, fill={\ifnum\ac=1 mred!50\else\ifnum\ac=2 mblue!50\else\ifnum\ac=3 mgreen!50\else white\fi\fi\fi}] (An-\step) at (0,-1.5) {\(\A_\t\)};
            \node[state, fill={\ifnum\sc=1 mred!50\else\ifnum\sc=2 mblue!50\else\ifnum\sc=3 mgreen!50\else white\fi\fi\fi}] (Bn-\step) at (1.5,-1.5) {\(\S_\t\)};
            \node[state, fill={\ifnum\zc=1 mred!50\else\ifnum\zc=2 mblue!50\else\ifnum\zc=3 mgreen!50\else white\fi\fi\fi}] (Zn-\step) at (-0.75,-3.75) {\(\Z_\t\)};
            \node[circle, draw=black, fill=black!40, inner sep=2pt] (Tn-0) at (0.75,-2.25) {};
            \node[circle, draw=black, fill=black!40, inner sep=2pt] (Fn-0) at (-0.75,-0.75) {};
          \else
        
            \pgfmathsetmacro{\xc}{\XCol[\n+\step]}
            \pgfmathsetmacro{\sc}{\SCol[\n+\step]}
            \pgfmathsetmacro{\ac}{\ACol[\n+\step]}
            \pgfmathsetmacro{\zc}{\ZCol[\n+\step]}
        
            \node[state, fill={\ifnum\xc=1 mred!50\else\ifnum\xc=2 mblue!50\else\ifnum\xc=3 mgreen!50\else white\fi\fi\fi}] (Xn-\step) [right of = Xn-\prevstate] {\(\M_{\t+\step}\)};
            \node[state, fill={\ifnum\ac=1 mred!50\else\ifnum\ac=2 mblue!50\else\ifnum\ac=3 mgreen!50\else white\fi\fi\fi}] (An-\step) [right of = An-\prevstate] {\(\A_{\t+{\step}}\)};
            \node[state, fill={\ifnum\zc=1 mred!50\else\ifnum\zc=2 mblue!50\else\ifnum\zc=3 mgreen!50\else white\fi\fi\fi}] (Zn-\step) [right of = Zn-\prevstate] {\(\Z_{\t+\step}\)};
            \node[circle, draw=black, fill=black!40, inner sep=2pt] (Fn-\step) [right of = Fn-\prevstate] {};
        
            \ifnum \step< \n
              \node[state, fill={\ifnum\sc=1 mred!50\else\ifnum\sc=2 mblue!50\else\ifnum\sc=3 mgreen!50\else white\fi\fi\fi}] (Bn-\step) [right of = Bn-\prevstate] {\(\S_{\t+{\step}}\)};
              \node[circle, draw=black, fill=black!40, inner sep=2pt] (Tn-\step) [right of = Tn-\prevstate] {};
            \fi
        
            \pgfmathsetmacro{\xc}{\XCol[\n-\step]}
            \pgfmathsetmacro{\sc}{\SCol[\n-\step]}
            \pgfmathsetmacro{\ac}{\ACol[\n-\step]}
            \pgfmathsetmacro{\zc}{\ZCol[\n-\step]}
            \node[state, fill={\ifnum\xc=1 mred!50\else\ifnum\xc=2 mblue!50\else\ifnum\xc=3 mgreen!50\else white\fi\fi\fi}] (Xn-\ms) [left of = Xn-\aftestate] {\(\M_{\t-\step}\)};
            \node[state, fill={\ifnum\ac=1 mred!50\else\ifnum\ac=2 mblue!50\else\ifnum\ac=3 mgreen!50\else white\fi\fi\fi}] (An-\ms) [left of = An-\aftestate] {\(\A_{\t-{\step}}\)};
            \node[state, fill={\ifnum\sc=1 mred!50\else\ifnum\sc=2 mblue!50\else\ifnum\sc=3 mgreen!50\else white\fi\fi\fi}] (Bn-\ms) [left of = Bn-\aftestate] {\(\S_{\t-{\step}}\)};
            \node[state, fill={\ifnum\zc=1 mred!50\else\ifnum\zc=2 mblue!50\else\ifnum\zc=3 mgreen!50\else white\fi\fi\fi}] (Zn-\ms) [left of = Zn-\aftestate] {\(\Z_{\t-\step}\)};
            \node[circle, draw=black, fill=black!40, inner sep=2pt] (Tn-\ms) [left of = Tn-\aftestate] {};
            \node[circle, draw=black, fill=black!40, inner sep=2pt] (Fn-\ms) [left of = Fn-\aftestate] {};
        
            \path[->, thick] (Tn-\prevstate) edge (Zn-\step);
            \path[->, thick] (Tn-\prevstate) edge (Bn-\prevstate);
            \path[->, thick] (Fn-\step) edge (Xn-\step);
            \path[->, thick] (Tn-\ms) edge (Zn-\aftestate);
            \path[->, thick] (Tn-\ms) edge (Bn-\ms);
            \path[->, thick] (Zn-\ms) edge (Tn-\ms);
            \path[->, thick] (Xn-\ms) edge (Fn-\aftestate);
            \path[->, thick] (Bn-\ms) edge (Fn-\aftestate);
          \fi
        
          \path[->, thick] (Fn-\ms) edge (Xn-\ms);
        }
        
        \foreach \step in {1,...,\n} {
          \pgfmathtruncatemacro{\prevstate}{\step-1}
          \path[->, thick] (Xn-\prevstate) edge (Fn-\step);
          \path[->, thick] (Zn-\prevstate) edge (Tn-\prevstate);
          \path[->, thick] (Bn-\prevstate) edge (Fn-\step);
        }
        
        \path[dotted, very thick, black!50] (Xn-\n) edge ++(0.7,-0.7);
        \path[dotted, very thick, black!50] (Zn-\n) edge ++(1,1);
        \path[dotted, very thick, black!50] (Fn--\n) edge ++(-0.7,-0.7);
        \path[dotted, very thick, black!50] (Fn--\n) edge ++(-0.7,0.7);
        \path[dotted, very thick, black!50] (Zn--\n) edge ++(-1,1);
        
    \end{tikzpicture}
    \caption{Bayesian network for an product environment channel (\cref{lem:Bayes_net_compatibility_source}) and agent with independently and uniformly distributed actions (see \cref{eq:cut_arrow_to_actions}) used in the proof of \Cref{lem:work_capacity_expressions}.}
    \label{fig:Bayes_net_for_agent_and_source_iid_actions}
\end{figure}

Thus, work capacity equals the right-hand side in \cref{eq:extractable_work_source_environment_2}. \hfill $\square$.

\subsection{Efficient agent models} \label{supp:7.4}
\begin{theorem}
 \label{th:work_efficient_predictive}
		For any unifilar product environment channel $\normalfont\texttt{env}$,
     \begin{align}
       \normalfont \mathbb{A}^{\lrstacksmall \texttt{env}}_{\mathrm{eff}} =\mathbb{A}^{\lrstacksmall \texttt{env}}_{\mathrm{mea}}\cap \mathbb{A}^{\lrstacksmall \texttt{env}}_{\mathrm{pred}},
     \end{align}
     with $\normalfont\mathbb{A}^{\lrstacksmall \texttt{env}}_{\mathrm{eff}}$ the set of efficient agent models (\Cref{def:work_capacity}), $\normalfont\mathbb{A}^{\lrstacksmall \texttt{env}}_{\mathrm{mea}}$ the set of agent models with a.m.\,maximum entropy actions (\Cref{def:agent_set_uniform_actions}), and $\normalfont\mathbb{A}^{\lrstacksmall \texttt{env}}_{\mathrm{pred}}$ the set of predictive agent models (\Cref{def:predictive}).
	\end{theorem}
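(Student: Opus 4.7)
My plan is to exploit the decomposition of the work rate derived in the proof of \Cref{lem:work_capacity_expressions}(iii) and isolate the two contributions that correspond to the mea and pred conditions respectively. Specifically, for any unifilar product environment \texttt{env}, the derivation leading to \cref{eq:extractable_work_source_environment} applies to an arbitrary agent model \texttt{agtM} and gives
\begin{align}
W(\texttt{agtM} \lrstack \texttt{env}) = \cesaro{H(A_t|M_t)}_t \;-\; h(\bm{S}) \;-\; \cesaro{I[S_{0:t}A_{0:t+1};S_t|M_t]}_t. \label{eq:proposal_decomposition}
\end{align}
Two canonical inequalities bound the variable terms on the right-hand side: $\cesaro{H(A_t|M_t)}_t \leq \log|\mathcal{A}|$ (since $A_t$ takes values in $\mathcal{A}$), and $\cesaro{I[S_{0:t}A_{0:t+1};S_t|M_t]}_t \geq 0$ (nonnegativity of conditional mutual information, which also ensures the Ces\`aro limit is a limit of a bounded, nondecreasing arithmetic average, hence exists in a suitable sense for the relevant class of models).

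Combining \cref{eq:proposal_decomposition} with these bounds and with the closed-form expression $C^{\mathrm{work}}(\texttt{env}) = \log|\mathcal{A}| - h(\bm{S})$ from \Cref{lem:work_capacity_expressions}(iii), I obtain
\begin{align}
W(\texttt{agtM} \lrstack \texttt{env}) \leq C^{\mathrm{work}}(\texttt{env}),
\end{align}
with equality if and only if both bounds are saturated simultaneously, i.e., $\cesaro{H(A_t|M_t)}_t = \log|\mathcal{A}|$ and $\cesaro{I[S_{0:t}A_{0:t+1};S_t|M_t]}_t = 0$. By \Cref{def:agent_set_uniform_actions} and \Cref{def:predictive}, the first condition is exactly the defining property of $\mathbb{A}^{\lrstacksmall \texttt{env}}_{\mathrm{mea}}$ and the second is exactly the defining property of $\mathbb{A}^{\lrstacksmall \texttt{env}}_{\mathrm{pred}}$. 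This establishes the equality
$\mathbb{A}^{\lrstacksmall \texttt{env}}_{\mathrm{eff}} = \mathbb{A}^{\lrstacksmall \texttt{env}}_{\mathrm{mea}} \cap \mathbb{A}^{\lrstacksmall \texttt{env}}_{\mathrm{pred}}$.

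Finally, I would note that the statement is nonvacuous: the explicit construction from the existence proof of predictive agents (\Cref{th:predictive_existence}, simplified for product environments as in \Cref{cor:predictive_existence_source} and \Cref{fig:circuit_source}), combined with the freedom to feed its $\Theta^{\mathrm{agt}}_{\mathcal{M}'\mathcal{Y}}$ block with actions drawn i.i.d.\,uniformly and independently of memory, produces an agent model in the intersection. This is precisely the construction already used in the proof of \Cref{lem:work_capacity_expressions}(iii). The main conceptual point to check carefully is that \cref{eq:proposal_decomposition} does hold verbatim for \emph{every} agent model (not only those achieving capacity): the derivation relies on the d-separation $I[S_t; M_tA_{0:t+1}|S_{0:t}] = 0$, which follows from \Cref{lem:Bayes_net_compatibility_source} (see \Cref{fig:DAG1}) and does not use any property of the agent beyond its being a valid agent model. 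I expect this d-separation step to be the most delicate to justify cleanly, since it is the ingredient that makes the two terms in \cref{eq:proposal_decomposition} genuinely separable and attributable to ``action entropy'' and ``prediction deficit'' respectively.
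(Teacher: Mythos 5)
Your proposal is correct and follows essentially the same route as the paper: both use the decomposition $W = \cesaro{H(A_t|M_t)}_t - h(\bm{S}) - \cesaro{I[S_{0:t}A_{0:t+1};S_t|M_t]}_t$ for product environments together with the closed-form capacity $\log|\mathcal{A}| - h(\bm{S})$, and conclude that efficiency holds if and only if both the action-entropy bound and the nonnegativity of the prediction-deficit term are saturated, which is exactly the defining property of $\mathbb{A}^{\lrstacksmall \texttt{env}}_{\mathrm{mea}}\cap\mathbb{A}^{\lrstacksmall \texttt{env}}_{\mathrm{pred}}$. The d-separation $I[S_t;M_tA_{0:t+1}|S_{0:t}]=0$ that you flag as the delicate step is indeed the one the paper justifies via \Cref{lem:Bayes_net_compatibility_source} and \Cref{fig:DAG1}, so no gap remains.
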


\textit{Proof:}
Recall \cref{eq:extractable_work_source_environment}, the expression for work rate for a product environment channel:
\begin{align}
W(\texttt{agtM}\lrstack {\texttt{env}}) =\cesaro{H\left({A_t|M_t}\right)}_t -  h\left(\bm{S}\right) - \cesaro{ I\left[S_{{0:t}}A_{{0:t+1}};S_t|M_t\right]}_t. \label{eq:extractable_work_source_environment_copy}
\end{align}

First assume that $\texttt{agtM}\in\mathbb{A}^{\lrstacksmall \texttt{env}}_{\mathrm{mea}}\cap  \mathbb{A}^{\lrstacksmall \texttt{env}}_{\mathrm{pred}}$.  By \Cref{def:agent_set_uniform_actions}, agents in $\mathbb{A}^{\lrstacksmall \texttt{env}}_{\mathrm{mea}}$ fulfill
\begin{align}
    \cesaro{H\left({A_t|M_t}\right)}_t=\log\abs{\mathcal{A}}, \label{proof_work_capacity_source_1}
\end{align}
and, by \Cref{def:predictive} agents in $ \mathbb{A}^{\lrstacksmall \texttt{env}}_{\mathrm{pred}}$ fulfill
\begin{align}
    0&=\cesaro{ I\left[S_{{0:t}}A_{{0:t+1}};S_t|M_t\right]}_t. \label{proof_work_capacity_source_2}
\end{align}
Plugging \cref{proof_work_capacity_source_1,proof_work_capacity_source_2} into \cref{eq:extractable_work_source_environment_copy} yields $W(\texttt{agtM}\lrstack {\texttt{env}})=\mathrm{log}|\mathcal{Y}|-h(\bm{S})$ which equals the work capacity of unifilar product environment channels according to \Cref{lem:work_capacity_expressions}, and thus $\texttt{agtM}\in\mathbb{A}^{\lrstacksmall \texttt{env}}_{\mathrm{eff}}$. \\

For the other direction, assume $\texttt{agtM}\in\mathbb{A}^{\lrstacksmall \texttt{env}}_{\mathrm{eff}}$. Then,
\begin{align}
   0 &= C^{\mathrm{work}}(\texttt{env})- W(\texttt{agtM}\lrstack {\texttt{env}}) \\
   &=\log\abs{\mathcal{A}}- \cesaro{H\left({A_t|M_t}\right)}_t  - \cesaro{ I\left[S_{{0:t}}A_{{0:t+1}};S_t|M_t\right]}_t \label{eq:diff_work_and_capacity}
\end{align}
where for the second line  we used the expressions for work capacity of product environment channels (\Cref{lem:work_capacity_expressions}) and extractable work of agents using a product environment channel (\cref{eq:extractable_work_source_environment_copy}). 

Note that  $-\cesaro{ I\left[S_{{0:t}}A_{{0:t+1}};S_t|M_t\right]}_t$ is upper bounded by zero and $\cesaro{H\left({A_t|M_t}\right)}_t$ is upper bounded by $\log\abs{\mathcal{A}}$. The expression in \cref{eq:diff_work_and_capacity} is thus upper bounded by zero. Thus, $\texttt{agtM}$ must be such that \emph{both} upper bounds are reached.

By \Cref{def:agent_set_uniform_actions}, the set of agents which reach the upper bound for  $\cesaro{H\left({A_t|M_t}\right)}_t$ is $\mathbb{A}^{\lrstacksmall \texttt{env}}_{\mathrm{mea}}$, and, by \Cref{def:predictive}, the set of agents which reach the upper bound for $-\cesaro{ I\left[S_{{0:t}}A_{{0:t+1}};S_t|M_t\right]}_t$ is $\mathbb{A}^{\lrstacksmall \texttt{env}}_{\mathrm{pred}}$. It follows that $\texttt{agtM}\in \mathbb{A}^{\lrstacksmall \texttt{env}}_{\mathrm{mea}}\cap \mathbb{A}^{\lrstacksmall \texttt{env}}_{\mathrm{pred}}$.\hfill $\square$ \\

\Cref{th:work_efficient_predictive} shows that efficient agents should be constructed such that they are predictive whenever the environment is modeled by a unifilar product environment channel. This, however, is no longer true for general environment channels. We first prove the following lemma which shows  properties for a particular memoryless environment channel. 
\begin{lemma} \label{lem:env_example}
    Let environment {\normalfont\texttt{env}} be a memoryless environment channel and such that $A_t$ and $S_t$  take values in an alphabet $\normalfont\mathcal{A}=\mathcal{S}=\{0,1\}$. Let the environment's transition matrix $\normalfont\Phi^{\mathrm{env}}=(\phi^{\mathrm{env}}(j|i))_{j,i}$ with $j,i\in\mathcal{A}$ be such that $\normalfont\phi^{\mathrm{env}}(j|0)=\delta_{0,j}$ and $\phi^{\mathrm{env}}(j|1)=1/2$ for $j=0,1$. Then, for any $\normalfont\texttt{agtM}\lrstack\texttt{env}$ we have
    \begin{align}
        \braket{I\left[A_t; S_t|M_t\right]}_t = 0 \Leftrightarrow \cesaro{H\left(A_t|M_t\right)}_t=0. \label{eq:lemma_env_to_show_1}
    \end{align}
\end{lemma}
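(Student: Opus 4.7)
The plan is to reduce both directions of the equivalence to a single-time ($t$-pointwise) statement and then lift it to the Ces\`aro limit via the asymptotic periodicity of the underlying global Markov chain.

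First, I would fix $t$ and, for each $m$ with $P(M_t=m)>0$, set $p_m := P(A_t=1\mid M_t=m)$. Because \texttt{env} is memoryless, \Cref{cor:Bayes_net_compatibility_memoryless} yields the d-separation $S_t \perp M_t \mid A_t$, so $P(S_t \mid A_t=a, M_t=m) = \phi^{\mathrm{env}}(S_t\mid a)$. Using the specific form of $\phi^{\mathrm{env}}$, a direct calculation gives $H(S_t\mid A_t,M_t=m) = p_m$ and $H(S_t\mid M_t=m) = H_2(p_m/2)$, where $H_2(x):=-x\log_2 x - (1-x)\log_2(1-x)$ is the binary entropy, whence
\[
I[A_t;S_t\mid M_t=m] = H_2(p_m/2) - p_m =: f(p_m), \qquad H(A_t\mid M_t=m) = H_2(p_m).
\]
Strict concavity of $f$ on $[0,1]$ (which follows from $H_2''<0$) together with $f(0)=f(1)=0$ gives $f>0$ on $(0,1)$, so $\{f=0\} = \{H_2=0\} = \{0,1\}$. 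Averaging with the nonnegative weights $P(M_t=m)$ then yields the pointwise-in-$t$ equivalence $I[A_t;S_t\mid M_t] = 0 \Leftrightarrow H(A_t\mid M_t)=0$. The implication $\Leftarrow$ of \cref{eq:lemma_env_to_show_1} now follows from the generic bound $I[A_t;S_t\mid M_t] \le H(A_t\mid M_t)$ by passing to the Ces\`aro limit together with nonnegativity.

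For $\Rightarrow$, I would invoke the global-Markov-chain structure. By \Cref{lem:global_Markov_chain}, the process $(M_t,A_t,S_t,Z_t)_t$ is a homogeneous finite-state Markov chain with some transition matrix $\Lambda$, and both $I[A_t;S_t\mid M_t]$ and $H(A_t\mid M_t)$ are continuous functionals of $p_{U_t} = \Lambda^t p_{U_0}$. \Cref{cor:Markov_chains_asymptotic}(iii) then gives
\[
\cesaro{I[A_t;S_t\mid M_t]}_t = \frac{1}{d}\sum_{r=1}^d I^{(r)}, \qquad \cesaro{H(A_t\mid M_t)}_t = \frac{1}{d}\sum_{r=1}^d H^{(r)},
\]
where $I^{(r)}, H^{(r)}$ are the corresponding quantities evaluated at the subsequential limit distributions $q^{(r)}$, $r=1,\dots,d$. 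Each term is nonnegative, so the left-hand Ces\`aro limit vanishes iff $I^{(r)} = 0$ for every $r$. Because the conditional law $P(S_t\mid A_t,M_t) = \phi^{\mathrm{env}}(S_t\mid A_t)$ is time-independent, it is inherited by every $q^{(r)}$, and the single-time equivalence from the first paragraph applies at each $r$: $I^{(r)} = 0 \Leftrightarrow H^{(r)} = 0$. Summing over $r$ yields $\cesaro{H(A_t\mid M_t)}_t = 0$ and closes the argument.

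The main delicate point is ensuring that the single-time equivalence genuinely passes to the periodic subsequential limits supplied by \Cref{cor:Markov_chains_asymptotic}; this is automatic here because $\phi^{\mathrm{env}}$ is time-independent and both functionals are continuous on the probability simplex, so the constraint $P(S_t\mid A_t,M_t) = \phi^{\mathrm{env}}(S_t\mid A_t)$ is preserved under each limit.
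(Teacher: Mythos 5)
Your proof is correct and shares the paper's overall architecture: both directions hinge on the same two ingredients, namely the termwise bound $0\leq I[A_t;S_t|M_t]\leq H(A_t|M_t)$ for the easy direction, and, for the hard direction, the reduction via \Cref{lem:global_Markov_chain} and \Cref{cor:Markov_chains_asymptotic}(iii) to the $d$ periodic limit distributions, at each of which a single-distribution equivalence $I=0\Leftrightarrow H=0$ is applied. Where you differ is in how that single-distribution equivalence is established. The paper argues by contradiction: if $H(\overline{A}_r|\overline{M}_r)>0$ there is a memory state $m'$ under which both actions have positive probability, and then $I[\overline{A}_r;\overline{S}_r|\overline{M}_r=m']=0$ would force $p_{\overline{A}_r\overline{S}_r|\overline{M}_r=m'}=\phi^{\mathrm{env}}_{S|A}\,p_{\overline{A}_r|\overline{M}_r=m'}$ to be a product distribution, which is impossible because the two rows of $\Phi^{\mathrm{env}}$ differ. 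You instead compute explicitly $I[A_t;S_t|M_t=m]=H_2(p_m/2)-p_m$ and $H(A_t|M_t=m)=H_2(p_m)$ and use strict concavity of $f(p)=H_2(p/2)-p$ with $f(0)=f(1)=0$ to conclude that both vanish exactly on $p_m\in\{0,1\}$. Your computation is more quantitative (it gives the actual gap $f(p_m)>0$), but it is tied to this specific $\Phi^{\mathrm{env}}$; the paper's product-distribution argument uses only that $\phi^{\mathrm{env}}(\cdot|0)\neq\phi^{\mathrm{env}}(\cdot|1)$ and hence extends verbatim to any memoryless channel with distinct rows. You also correctly flag the one genuinely delicate point—that the conditional law $p_{S|AM}=\phi^{\mathrm{env}}_{S|A}$ survives passage to the subsequential limits—which is needed because no uniform bound of the form $H(A_t|M_t)\leq C\, I[A_t;S_t|M_t]$ holds (the ratio blows up as $p_m\to 1$), so the detour through the periodic limits cannot be avoided.
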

\emph{Proof.}

First note that if an agent model $\texttt{agtM}$ admits $\cesaro{H\left(A_t|M_t\right)}_t=0$, then
\begin{align}
    \braket{I\left[A_t; S_t|M_t\right]}_t = \cesaro{H\left(A_t|M_t\right)}_t- \braket{H\left(A_t|M_tS_t\right)}_t=0,
\end{align}
where we used the definition of mutual information (\cref{eq:def_cond_mut_info}), and  the conclusion follows from the nonnegativity of conditional mutual information and conditional entropy, proving one direction of \cref{eq:lemma_env_to_show_1}.

For the other direction, for the environment $\texttt{env}$ under consideration, by \cref{def:hidden_Markov_channel} there exists a Markov model \texttt{agtM} on some state space $\mathcal{Z}$ and thus, by \cref{lem:global_Markov_chain}, there also exists a global Markov chain. Let $\Gamma$ be the transition matrix and $p_{M_0 A_0 S_0 Z_0}$ the initial distribution of such a global Markov chain. By \cref{cor:Markov_chains_asymptotic})(i), the global Markov chain must consist of convergent subsequences $\Gamma^{(r)}_{\infty}=\lim\limits_{n\rightarrow \infty} \Gamma^{nd+r}$ with $r\in\{1,2,\dotsc, d\}$ and $d$ some finite integer. Let $\Gamma^{(r)}_{\infty}=\left(\gamma^{(r)}_{\infty}(j|i)\right)_{j,i}$ and let $\overline{M}_r$, $\overline{A}_r$, $\overline{S}_r$, and $\overline{Z}_r$ be random variables with distribution $p_{\overline{A}_r \overline{S}_r \overline{M}_r \overline{Z}_r}(j) = \sum_i \gamma^{(r)}_{\infty}(j|i)p_{M_0 A_0 S_0 Z_0}(i)$ with $i,j\in\mathcal{M}\times \mathcal{A}\times \mathcal{S}\times \mathcal{Z}$. Then, according to \cref{cor:Markov_chains_asymptotic}(iii), we have
\begin{align}
\cesaro{I\left[A_t; S_t|M_t\right]}_t  = \frac{1}{d}\sum_{r=1}^{d}I\left[\overline{A}_r
; \overline{S}_r|\overline{M}_r\right], \label{eq:pred_asymptotic_period_representation}
\end{align}
and similarly
\begin{align}
\cesaro{H\left(A_t|M_t\right)}_t  = \frac{1}{d}\sum_{r=1}^{d}H\left(\overline{A}_r
|\overline{M}_r\right). \label{eq:pred_asymptotic_period_representation_2}
\end{align}

Using the definition of mutual information, we find for each summand in \cref{eq:pred_asymptotic_period_representation}
\begin{align}
    I\left[\overline{A}_r; \overline{S}_r|\overline{M}_r\right]  = H\left(\overline{A}_r|\overline{M}_r\right)- H\left(\overline{A}_r|\overline{M}_r\overline{S}_r\right).
\end{align}
We now want to show that, for any $r\in\{1,2,\dotsc, d\}$, $I\left[\overline{A}_r; \overline{S}_r|\overline{M}_r\right]=0$ implies $H\left(\overline{A}_r|\overline{M}_r\right)=0$.

The proof proceeds by contraction. Assume that $I\left[\overline{A}_r; \overline{S}_r|\overline{M}_r\right]=0$ but $H\left(\overline{A}_r|\overline{M}_r\right)>0$. 

First, using basic properties of conditional entropies, we have $H\left(\overline{A}_r|\overline{M}_r\right) = \sum_{m\in\mathcal{M}}p_{\overline{M}_r} (m)H\left(\overline{A}_r|\overline{M}_r=m\right)$ where $H\left(\overline{A}_r|\overline{M}_r=m\right)=0 $ iff $p_{\overline{A}_r|\overline{M}_r=m}$ is a delta distribution.

Then, due to $H\left(\overline{A}_r|\overline{M}_r\right)>0$, there exists a memory state $m'_r\in\mathcal{M}$ with $p_{\overline{M}_r}(m'_r)>0$ such that $p_{\overline{A}_r|\overline{M}_r}(0|m'_r)>0$ and $p_{\overline{A}_r|\overline{M}_r}(1|m'_r)>0$. We have
\begin{align}
    I\left(\overline{A}_r;\overline{S}_r|\overline{M}_r\right)= \sum_{m_r\in\mathcal{M}}p_{\overline{M}_r}(m_r)I\left[\overline{A}_r,\overline{S}_r|\overline{M}_r=m_r\right] \label{eq:proof_efficent_but_not_predictive_2}
\end{align}
where $I\left[\overline{A}_r,\overline{S}_r|\overline{M}_r=m_r\right]$ is the mutual information $I\left[\overline{A}_r,\overline{S}_r\right]$ with $\overline{A}_r$, $\overline{S}_r$ distributed as $p_{\overline{A}_r \overline{S}_r|\overline{M}_r=m_r}$. The expansion in \cref{eq:proof_efficent_but_not_predictive_2} can be obtained by writing out mutual information, \cref{eq:def_cond_mut_info}, in terms of probabilities.

Now, by the nonnegativity of mutual information, for left-hand side of \cref{eq:proof_efficent_but_not_predictive_2} to vanish, each summand on the right-hand side of \cref{eq:proof_efficent_but_not_predictive_2} must vanish individually. In particular, for the summand corresponding to $\overline{M}_r=m'_r$ to vanish, $I\left[\overline{A}_r,\overline{S}_r|\overline{A}_r=m'_r\right]$ must be zero. Further, using basic properties of mutual information, $I\left[\overline{A}_r,\overline{S}_r|\overline{M}_r=m'_r\right]=0$ iff $p_{\overline{A}_r \overline{S}_r|\overline{M}_r=m'_r}$ is a product distribution. However, note that for percept-action loops with memoryless environment channel we have
\begin{align}
    p_{\overline{A}_r \overline{S}_r|\overline{M}_r=m'_r} = p_{\overline{S}_r|\overline{A}_r}p_{\overline{A}_r|\overline{M}_r=m'_r}
\end{align}
where $p_{\overline{S}_r|\overline{A}_r}(s|a) = \phi^{\mathrm{env}}(s|a)$ is given by the memoryless environment which is chosen such that $\phi^{\mathrm{env}}(s|0)\neq \phi^{\mathrm{env}}(s|1)$ for all $s\in\mathcal{S}$ and, thus, $p_{\overline{A}_r \overline{S}_r|\overline{M}_r=m'_r}$ is not a product distribution. By this contradiction,
we have shown, for any $r\in\{1,2,\dotsc, d\}$, that $I\left[\overline{A}_r; \overline{S}_r|\overline{M}_r\right]=0$ implies $H\left(\overline{A}_r|\overline{M}_r\right)=0$. By \cref{eq:pred_asymptotic_period_representation,eq:pred_asymptotic_period_representation_2} it then follows that $\braket{I\left[A_t; S_t|M_t\right]}_t=0$, implies $\cesaro{H\left(A_t|M_t\right)}_t=0$. \hfill $\square$\\

\begin{theorem} \label{th:efficient_but_not_predictive}
    There exist environment channels $\normalfont\texttt{env}$ such that  the sets $\normalfont\mathbb{A}^{\lrstacksmall \texttt{env}}_{\mathrm{pred}}$, $\normalfont\mathbb{A}^{\lrstacksmall \texttt{env}}_{\mathrm{mea}}$, and $\normalfont\mathbb{A}^{\lrstacksmall \texttt{env}}_{\mathrm{eff}}$ are all nonempty and mutually exclusive.
\end{theorem}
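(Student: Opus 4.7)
The plan is to use the memoryless environment from \Cref{fig:5} as the witness, exhibit one explicit agent model in each of the three sets, and verify pairwise disjointness by combining \Cref{lem:env_example} with the numerical value $C^{\mathrm{work}}(\texttt{env}) \simeq 0.272$ bits computed in the main text.

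For nonemptiness I would exhibit three one-state (memoryless) agent models: (a) the ``always $0$'' agent, for which $\bm{A}$, $\bm{S}$, and $\bm{M}$ are all deterministic and hence $I[A_{0:t+1}S_{0:t};S_t|M_t]=0$ at every $t$, placing it in $\mathbb{A}^{\lrstacksmall \texttt{env}}_{\mathrm{pred}}$; (b) the i.i.d.\ uniform action agent, which gives $H(A_t|M_t) = H(A_t) = \log 2$ at every step and so lies in $\mathbb{A}^{\lrstacksmall \texttt{env}}_{\mathrm{mea}}$; and (c) the biased-coin agent with $p_{A_t}(0) = 1/\sqrt{2}$, identified in the main text as saturating the work capacity and therefore lying in $\mathbb{A}^{\lrstacksmall \texttt{env}}_{\mathrm{eff}}$.

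Two of the three disjointness claims are short. For any $\texttt{agtM}\in\mathbb{A}^{\lrstacksmall \texttt{env}}_{\mathrm{pred}}$, the chain rule gives $I[A_{0:t+1}S_{0:t};S_t|M_t] \ge I[A_t;S_t|M_t]$, so a.m.\,predictivity forces $\cesaro{I[A_t;S_t|M_t]}_t = 0$, and \Cref{lem:env_example} then yields $\cesaro{H(A_t|M_t)}_t = 0$. This excludes such an agent from $\mathbb{A}^{\lrstacksmall \texttt{env}}_{\mathrm{mea}}$ (which demands the Cesàro average to equal $\log 2$) and from $\mathbb{A}^{\lrstacksmall \texttt{env}}_{\mathrm{eff}}$, because the work rate collapses to $-\cesaro{H(S_t|M_t)}_t \le 0 < C^{\mathrm{work}}(\texttt{env})$.

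The main obstacle is the remaining case $\mathbb{A}^{\lrstacksmall \texttt{env}}_{\mathrm{mea}} \cap \mathbb{A}^{\lrstacksmall \texttt{env}}_{\mathrm{eff}} = \emptyset$. I intend to show that every mea agent has work rate exactly $\log 2 - h_2(3/4) \simeq 0.189$ bits, which is strictly below $C^{\mathrm{work}}(\texttt{env}) \simeq 0.272$ bits. The key claim is that $\cesaro{H(S_t|M_t)}_t = h_2(3/4)$ for any mea agent. To make this rigorous I would pass to the asymptotically periodic subsequences $(\overline{A}_r,\overline{S}_r,\overline{M}_r)$ from \Cref{cor:Markov_chains_asymptotic}: since each summand $H(\overline{A}_r|\overline{M}_r) \le \log 2$ but the average equals $\log 2$, each must saturate, so $\overline{A}_r$ is uniform and independent of $\overline{M}_r$. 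Combined with the memoryless-environment conditional independence $\overline{S}_r \perp \overline{M}_r \mid \overline{A}_r$, expanding $I[\overline{M}_r;\overline{A}_r\overline{S}_r]$ two ways via the chain rule gives
\[
0 = I[\overline{M}_r;\overline{A}_r] + I[\overline{M}_r;\overline{S}_r|\overline{A}_r] = I[\overline{M}_r;\overline{S}_r] + I[\overline{M}_r;\overline{A}_r|\overline{S}_r],
\]
forcing $\overline{S}_r \perp \overline{M}_r$. Hence $\overline{S}_r \sim \mathrm{Bernoulli}(3/4)$ and $H(\overline{S}_r|\overline{M}_r) = h_2(3/4)$; averaging over $r$ via \Cref{cor:Markov_chains_asymptotic}(iii) gives the claimed work rate and a direct numerical comparison with $\tfrac{1}{2}\log_2(3/4 + 1/\sqrt{2})$ closes the argument.
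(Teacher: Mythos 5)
Your proposal is correct and follows essentially the same route as the paper: the same witness environment from \Cref{fig:5}, the same reduction of a.m.\,predictivity to $\cesaro{I[A_t;S_t|M_t]}_t=0$ followed by \Cref{lem:env_example}, and the same key step for $\normalfont\mathbb{A}^{\lrstacksmall \texttt{env}}_{\mathrm{mea}}\cap \mathbb{A}^{\lrstacksmall \texttt{env}}_{\mathrm{eff}}=\emptyset$, namely that maximum-entropy actions force asymptotic independence of $A_t$ and $M_t$, which together with $S_t \perp M_t \mid A_t$ yields $S_t \perp M_t$ and hence $\cesaro{H(S_t|M_t)}_t = \cesaro{H(S_t)}_t$ (your $h_2(3/4)$ equals the paper's $\ln[256/27]/\ln 16$). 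Your double chain-rule expansion of $I[\overline{M}_r;\overline{A}_r\overline{S}_r]$ on the limiting periodic distributions is the same identity the paper phrases via the interaction information $I[A_t;M_t;S_t]$, and your explicit one-state witnesses for nonemptiness are a harmless substitute for the paper's general remarks.
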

\emph{Proof.} 
We start with noticing that $\mathbb{A}^{\lrstacksmall \texttt{env}}_{\mathrm{eff}}$ and $\mathbb{A}^{\lrstacksmall \texttt{env}}_{\mathrm{mea}}$ are not empty for any environment. Further, a.m.\,predictive agents (\cref{def:predictive}) must fulfill
\begin{align}
    0&=\cesaro{I\left[A_{{0:t+1}}S_{{0:t}};S_t|M_t\right]}_t\\
    &=\cesaro{I\left[A_t;S_t|M_t\right]}_t+\cesaro{I\left[A_{{0:t}}S_{{0:t}};S_t|M_tA_t\right]}_t \label{eq:proof_efficient_but_not_predictive_1}
\end{align}
where the second line follows from the chain rule of mutual information (\cref{eq:chain_rule_mutual_information_one_step}). Further, here and in the following we make repeated use of the fact that the Ces\`aro limit is linear for terms which converge individually.

From now on, let $\texttt{env}$ be the memoryless environment considered in \cref{lem:env_example}. Then, the second term vanishes because of d-separation,  $I\left[A_{{0:t}}S_{{0:t}};S_t|M_tA_t\right]=0$, depicted in \Cref{fig:Bayesian_net_memoryless_env_proof_2}.
\begin{figure}[h!]
	\centering
        \begin{tikzpicture}[>=stealth',shorten >=1pt,node distance=2.5cm, scale= 0.83,on grid,auto, state/.style={circle, draw, minimum size=0.8cm, inner sep=1pt},font=\tiny]
        
        \def \n {3} 
        
        \def \S {S} 
        \def \A {A}  
        \def \M {M}               
        \def \t {t} 
                
        \def\XCol{{0,0,0,2,0,0,0}} 
        \def\SCol{{1,1,1,3,0,0,0}} 
        \def\ACol{{1,1,1,2,0,0,0}} 
        
        \foreach \step in {0,...,\n} {
          \pgfmathtruncatemacro{\ms}{-\step}
          \pgfmathtruncatemacro{\aftestate}{-\step+1}
          \pgfmathtruncatemacro{\prevstate}{\step-1}
        
          \ifnum \step= 0
            \pgfmathsetmacro{\xc}{\XCol[\n]}
            \pgfmathsetmacro{\sc}{\SCol[\n]}
            \pgfmathsetmacro{\ac}{\ACol[\n]}
        
            \node[state, fill={\ifnum\xc=1 mred!50\else\ifnum\xc=2 mblue!50\else\ifnum\xc=3 mgreen!50\else white\fi\fi\fi}] (Xn-\step) at (0.75,0.75) {\(\M_\t\)};
            \node[state, fill={\ifnum\ac=1 mred!50\else\ifnum\ac=2 mblue!50\else\ifnum\ac=3 mgreen!50\else white\fi\fi\fi}] (An-\step) at (0,-1.5) {\(\A_\t\)};
            \node[state, fill={\ifnum\sc=1 mred!50\else\ifnum\sc=2 mblue!50\else\ifnum\sc=3 mgreen!50\else white\fi\fi\fi}] (Bn-\step) at (1.5,-1.5) {\(\S_\t\)};
            \node[circle, draw=black, fill=black!40, inner sep=2pt] (Fn-0) at (-0.75,-0.75) {};
          \else
        
            \pgfmathsetmacro{\xc}{\XCol[\n+\step]}
            \pgfmathsetmacro{\sc}{\SCol[\n+\step]}
            \pgfmathsetmacro{\ac}{\ACol[\n+\step]}
        
            \node[state, fill={\ifnum\xc=1 mred!50\else\ifnum\xc=2 mblue!50\else\ifnum\xc=3 mgreen!50\else white\fi\fi\fi}] (Xn-\step) [right of = Xn-\prevstate] {\(\M_{\t+\step}\)};
            \node[state, fill={\ifnum\ac=1 mred!50\else\ifnum\ac=2 mblue!50\else\ifnum\ac=3 mgreen!50\else white\fi\fi\fi}] (An-\step) [right of = An-\prevstate] {\(\A_{\t+{\step}}\)};
            \node[circle, draw=black, fill=black!40, inner sep=2pt] (Fn-\step) [right of = Fn-\prevstate] {};
        
            \ifnum \step< \n
              \node[state, fill={\ifnum\sc=1 mred!50\else\ifnum\sc=2 mblue!50\else\ifnum\sc=3 mgreen!50\else white\fi\fi\fi}] (Bn-\step) [right of = Bn-\prevstate] {\(\S_{\t+{\step}}\)};
            \fi
        
            \pgfmathsetmacro{\xc}{\XCol[\n-\step]}
            \pgfmathsetmacro{\sc}{\SCol[\n-\step]}
            \pgfmathsetmacro{\ac}{\ACol[\n-\step]}
            \node[state, fill={\ifnum\xc=1 mred!50\else\ifnum\xc=2 mblue!50\else\ifnum\xc=3 mgreen!50\else white\fi\fi\fi}] (Xn-\ms) [left of = Xn-\aftestate] {\(\M_{\t-\step}\)};
            \node[state, fill={\ifnum\ac=1 mred!50\else\ifnum\ac=2 mblue!50\else\ifnum\ac=3 mgreen!50\else white\fi\fi\fi}] (An-\ms) [left of = An-\aftestate] {\(\A_{\t-{\step}}\)};
            \node[state, fill={\ifnum\sc=1 mred!50\else\ifnum\sc=2 mblue!50\else\ifnum\sc=3 mgreen!50\else white\fi\fi\fi}] (Bn-\ms) [left of = Bn-\aftestate] {\(\S_{\t-{\step}}\)};
            \node[circle, draw=black, fill=black!40, inner sep=2pt] (Fn-\ms) [left of = Fn-\aftestate] {};
        
            \path[->, thick] (Fn-\step) edge (Xn-\step);
            \path[->, thick] (Fn-\step) edge (An-\step);
            \path[->, thick] (An-\ms) edge (Bn-\ms);
            \path[->, thick] (Xn-\ms) edge (Fn-\aftestate);
            \path[->, thick] (Bn-\ms) edge (Fn-\aftestate);
          \fi
        
          \path[->, thick] (Fn-\ms) edge (Xn-\ms);
          \path[->, thick] (Fn-\ms) edge (An-\ms);
        }
        
        \foreach \step in {1,...,\n} {
          \pgfmathtruncatemacro{\prevstate}{\step-1}
          \path[->, thick] (Xn-\prevstate) edge (Fn-\step);
          \path[->, thick] (Bn-\prevstate) edge (Fn-\step);
          \path[->, thick] (An-\prevstate) edge (Bn-\prevstate);
        }
        
        \path[dotted, very thick, black!50] (Xn-\n) edge ++(0.7,-0.7);
        \path[dotted, very thick, black!50] (An-\n) edge ++(1.4,0);
        \path[dotted, very thick, black!50] (Fn--\n) edge ++(-0.7,-0.7);
        \path[dotted, very thick, black!50] (Fn--\n) edge ++(-0.7,0.7);
        
        \end{tikzpicture}
	\caption{Bayesian network for a memoryless environment channel (\cref{cor:Bayes_net_compatibility_memoryless}) with colorized d-separation (blue d-separates red and green) used in the proof of \Cref{th:efficient_but_not_predictive}.}\label{fig:Bayesian_net_memoryless_env_proof_2}
\end{figure}
Further, for the environment under consideration we have by \cref{lem:env_example},
 \begin{align}
        \braket{I\left[A_t; S_t|M_t\right]}_t = 0 \Leftrightarrow \cesaro{H\left(A_t|M_t\right)}_t=0. \label{eq:lemma_1}
    \end{align}
In particular, we have just seen that the left-hand side of \cref{eq:lemma_1} is the condition for an agent to be a.m.\,predictive. 
Then, since there exist agents which remember their actions perfectly in the Ces\`aro sense, i.e., they fulfill the right-hand side of \cref{eq:lemma_1}, such agents are also a.m.\,predictive. For example, take $M_t=A_t$ for all $t\in\mathbb{N}_0$. Thus, $\mathbb{A}^{\lrstacksmall \texttt{env}}_{\mathrm{pred}}\neq \emptyset$. 

Using the expression for work capacity derived for memoryless environments, see \cref{lem:work_capacity_expressions}, after some straightforward algebra we obtain for the  \texttt{env} under consideration \footnote{A memoryless agent which chooses actions with $p_{A_t}(0)=1/\sqrt{2}$ and $p_{A_t}(1)=1-1/\sqrt{2}$ for all $t$ achieves $C^{\mathrm{work}}(\texttt{env})$.}:
\begin{align}
   C^{\mathrm{work}}(\texttt{env}) = \frac{1}{2}\ln\left[\frac{3}{4}+\frac{1}{\sqrt{2}}\right]>0. \label{eq:example_work_capacity}
\end{align}
Further, the extractable work of any a.m.\,predictive agent is (by \cref{eq:extractable_work} and the linearity of the Ces\`aro limit)
\begin{align}
    W(\texttt{agtM}_{\mathrm{pred}}\lrstack \texttt{env}) &=\cesaro{H\left(A_t|M_t\right)}_t-\cesaro{H\left(S_t|M_t\right)}_t\\
    &=-\cesaro{H\left(S_t|M_t\right)}_t\leq 0.
\end{align}
Since $C^{\mathrm{work}}(\texttt{env})>0$, it follows that $\mathbb{A}^{\lrstacksmall \texttt{env}}_{\mathrm{eff}}\cap \mathbb{A}^{\lrstacksmall \texttt{env}}_{\mathrm{pred}}= \emptyset$.

Next, we show that $\mathbb{A}^{\lrstacksmall \texttt{env}}_{\mathrm{eff}}\cap \mathbb{A}^{\lrstacksmall \texttt{env}}_{\mathrm{mea}} = \emptyset$  for the particular environment channel under consideration. For all agent models in $\mathbb{A}^{\lrstacksmall \texttt{env}}_{\mathrm{mea}}$ we have
\begin{align}
   W(\texttt{agtM}_{\mathrm{mea}}\lrstack \texttt{env}) &= \braket{H(A_t|M_t)-H(S_t|M_t)}_t\\
    &= \braket{H(A_t|M_t)}_t-\braket{H(S_t|M_t)}_t\\
    &= \log|\mathcal{A}|-\braket{H(S_t|M_t)}_t,
\end{align}
In the following we will determine $\braket{H(S_t|M_t)}_t$ by showing that $\braket{H(S_t|M_t)}_t=\braket{H(S_t)}_t$ which then is easily computed for the environment under consideration.

First note that we have $\braket{I\left[S_t;A_t;M_t\right]}_t\geq 0$ since
\begin{align}
    \braket{I\left[A_t;M_t;S_t\right]}_t &= \braket{I\left[M_t;S_t\right]-I\left[M_t;S_t|A_t\right]}_t \\
    &= \braket{I\left[M_t;S_t\right]}_t\\
    &\geq 0,\label{eq:threepartite_term_nonnegative}
\end{align}
since $I\left[M_t;S_t|A_t\right]=0$ is a d-separation (shown for $t=0$ in \Cref{fig:Bayesian_net_memoryless_env_proof}). Further, since for all agent models in $\mathbb{A}^{\lrstacksmall \texttt{env}}_{\mathrm{mea}}$ $\braket{H(A_t|M_t)}_t=\log|\mathcal{A}|$ takes its maximum value and since $H(A_t|M_t)\leq H(A_t)\leq \log|\mathcal{A}|$ (see \Cref{supp:1}2), we have $\braket{H(A_t|M_t)}_t=\braket{H(A_t)}_t$ and thus $\braket{I\left[A_t;M_t\right]}_t=0$. Then, we have
\begin{align}
    0&=\braket{I\left[A_t;M_t\right]}_t\\
    &= \braket{I\left[A_t;M_t|S_t\right]}_t+\braket{I\left[A_t;M_t;S_t\right]}_t.
\end{align}
The first term is nonnegative by the nonnegativity of conditional mutual information, the second term by \cref{eq:threepartite_term_nonnegative}. Thus, both terms must vanish individually. With this, using a decomposition into information atoms we find
\begin{align}
    \braket{H(S_t|M_t)}_t &= \braket{H(S_t)-I\left[S_t;A_t;M_t\right]-I\left[S_t;M_t|A_t\right]}_t\\
    &= \braket{H(S_t)}_t-\braket{I\left[S_t;A_t;M_t\right]}_t\\
    &= \braket{H(S_t)}_t.
\end{align}
For the environment under consideration and since agent models in $\mathbb{A}^{\lrstacksmall \texttt{env}}_{\mathrm{mea}}$ actions are uniformly distributed, this is easily computed and found to be $\ln[256/27]/\ln[16]$, which results in a work rate (in units of $\kb T \ln 2$) of
$1-\ln[256/27]/\ln[16]$ for all agent models in $\mathbb{A}^{\lrstacksmall \texttt{env}}_{\mathrm{mea}}$. Since this is smaller than the work capacity, \cref{eq:example_work_capacity}, it follows that $\mathbb{A}^{\lrstacksmall \texttt{env}}_{\mathrm{mea}}\cap \mathbb{A}^{\lrstacksmall \texttt{env}}_{\mathrm{eff}}=\emptyset$.\\

What is left to show is that 
$\mathbb{A}^{\lrstacksmall \texttt{env}}_{\mathrm{mea}}\cap \mathbb{A}^{\lrstacksmall \texttt{env}}_{\mathrm{pred}}= \emptyset$. Above, we showed that for all predictive agent models for the environment under consideration, we have $\cesaro{H\left(A_t|M_t\right)}_t=0$ which contradicts the definition of agent models in $\mathbb{A}^{\lrstacksmall \texttt{env}}_{\mathrm{mea}}$ which concludes the proof.
\hfill $\square$

\end{document}